\newcommand{\normz}[1]{{\norm[0]{#1}}}
\def\sigmin{\sig_{\min}}
\def\hGam{\hat{\Gamma}}
\def\hth{\hat{\theta}}
\def\th{\theta}
\def\T{\ensuremath{\top}}  
\def\dt{{\ensuremath{\delta}}}
\def\sig{\ensuremath{\sigma}}
\def\la{\langle}
\def\ra{\rangle}
\def\wed{\wedge}
\def\lam{\lambda}
\def\sig{\sigma}
\def\dt{\delta}
\def\gam{\gamma}
\def\lam{\lambda}
\def\eps{\varepsilon}
\def\epsilon{\varepsilon}
\def\th{\theta}
\def\Gam{\Gamma}
\def\lcl{\lceil}  
\def\rcl{\rceil}
\def\ddefloop#1{\ifx\ddefloop#1\else\ddef{#1}\expandafter\ddefloop\fi}
\def\ddef#1{\expandafter\def\csname c#1\endcsname{\ensuremath{\mathcal{#1}}}}
\def\ddef#1{\expandafter\def\csname #1#1\endcsname{\ensuremath{\mathbb{#1}}}}
\providecommand{\floor}[2][-1]{
  \ensuremath{\mathinner{
      \ifthenelse{\equal{#1}{-1}}{ 
        \!\left\lfloor#2\right\rfloor}{}
      \ifthenelse{\equal{#1}{0}}{ 
        \lfloor#2\rfloor}{}
      \ifthenelse{\equal{#1}{1}}{ 
        \!\bigl\lfloor#2\bigr\rfloor}{}
      \ifthenelse{\equal{#1}{2}}{ 
        \!\Bigl\lfloor#2\Bigr\rfloor}{}
      \ifthenelse{\equal{#1}{3}}{ 
        \!\biggl\lfloor#2\biggr\rfloor}{}
      \ifthenelse{\equal{#1}{4}}{ 
        \!\Biggl\lfloor#2\Biggr\rfloor}{}
  }} 
}
\providecommand{\ceil}[2][-1]{
  \ensuremath{\mathinner{
      \ifthenelse{\equal{#1}{-1}}{ 
        \!\left\lceil#2\right\rceil}{}
      \ifthenelse{\equal{#1}{0}}{ 
        \lceil#2\rceil}{}
      \ifthenelse{\equal{#1}{1}}{ 
        \!\bigl\lceil#2\bigr\rceil}{}
      \ifthenelse{\equal{#1}{2}}{ 
        \!\Bigl\lceil#2\Bigr\rceil}{}
      \ifthenelse{\equal{#1}{3}}{ 
        \!\biggl\lceil#2\biggr\rceil}{}
      \ifthenelse{\equal{#1}{4}}{ 
        \!\Biggl\lceil#2\Biggr\rceil}{}
  }} 
}
\def\LCB{{\text{LCB}}}
\def\UCB{{\text{UCB}}}
\def\tmin{{\normalfont\min}}
\def\tmax{{\normalfont\max}}
\def\op{{\normalfont\text{op}}}
\def\larrow{\ensuremath{\leftarrow}}
\DeclareMathOperator{\one}{\mathds{1}\hspace{-.1em}}
\providecommand{\onec}[2][-1]{
  \ensuremath{\mathinner{
      \one\cbr[#1]{#2}
    } 
  }  
}
\def\polylog{\normalfont{\text{polylog}}}
\def\lammin{{\lambda_{\normalfont\tmin}}}
\def\gammin{\gamma_{\normalfont\tmin}}
\def\hgammin{\hat{\gam}_{\normalfont\tmin}}
\def\sigmax{\sig_{\normalfont\tmax}}
\def\sigmin{\sig_{\normalfont\tmin}}
\def\LCB{{\normalfont{\text{LCB}}}}
\def\UCB{{\normalfont{\text{UCB}}}}
\DeclareMathOperator{\EE}{\mathbb{E}} 
\tikzset{
    -Latex,auto,node distance =1 cm and 1 cm,semithick,
    state/.style ={ellipse, draw, minimum width = 0.7 cm},
    point/.style = {circle, draw, inner sep=0.04cm,fill,node contents={}},
    bidirected/.style={Latex-Latex,dashed},
    el/.style = {inner sep=2pt, align=left, sloped}
}
\title{Adaptive Experimentation When You Can't Experiment}
\author{%
  Yao Zhao\\
  University of Arizona\\
  \texttt{yaoz@arizona.edu} \\
  \And
  Kwang-Sung Jun\\
  University of Arizona\\
  \texttt{kjun@cs.arizona.edu} \\
  \And
  Tanner Fiez\\
  Amazon\\
  \texttt{fieztann@amazon.com} \\
  \And
  Lalit Jain\\
  University of Washington\\
  \texttt{lalitj@uw.edu} \\
}
\begin{document}

\doparttoc 
\faketableofcontents 

\textfloatsep=.5em

\maketitle

\begin{abstract}
This paper introduces the \emph{confounded pure exploration transductive linear bandit} (\texttt{CPET-LB}) problem. 
As a motivating example,  often online services cannot directly assign users to specific control or treatment experiences either for business or practical reasons. In these settings, naively comparing treatment and control groups that may result from self-selection can lead to biased estimates of underlying treatment effects. 
Instead, online services can employ a properly randomized encouragement that incentivizes users toward a specific treatment. 
Our methodology provides online services with an adaptive experimental design approach for learning the best-performing treatment for such \textit{encouragement designs}. 
We consider a more general underlying model captured by a linear structural equation and formulate pure exploration linear bandits in this setting. Though pure exploration has been extensively studied in standard adaptive experimental design settings, we believe this is the first work considering a setting where noise is confounded.  Elimination-style algorithms using experimental design methods in combination with a novel finite-time confidence interval on an instrumental variable style estimator are presented with sample complexity upper bounds nearly matching a minimax lower bound. Finally, experiments are conducted that demonstrate the efficacy of our approach. 
\end{abstract}

\section{Introduction}
\label{sec:introduction}
In this study, we present a methodology for adaptive experimentation in scenarios characterized by potential confounding.  Online services routinely conduct thousands of A/B tests annually~\cite{kohavi2020trustworthy}. In most online A/B/N experimentation, meticulous user-level randomization is essential to ensure unbiased estimates of \emph{treatment effects at the population level}, commonly known as average treatment effects (ATE). In this setting, firms are often interested in understanding the treatment with the highest average outcome if presented to all members of the population.
However, in many settings firms may not be able to randomize, for example if a feature must be rolled out to all users for various business reasons~\cite{mummalaneni2022producer}.
In such instances,  users may choose to engage with a feature or not based on potentially unobservable preferences. Thus the resulting measured outcome may be correlated with the decision to engage in a specific feature. I.e. the underlying characteristics of the user \emph{confound} the relationship between the decision to use the feature being evaluated and the effect of the feature. Thus, naively comparing the average outcome for users who engage with a feature with those who do not suffers from a (selection) bias. This setting is captured in Figure~\ref{fig:causalgraph}.

\begin{wrapfigure}{R}{0.4\textwidth}
  \begin{tikzpicture}
  \tiny
    \node[state, align=center, rectangle] (x) at (0,0) {$X$\\User Choice};

    \node[state, align=center, rectangle] (y) [right =of x] {$Y$\\Outcome};

    \node[state, align=center, rectangle] (z) [left =of x] {$Z$\\Encouragement};

    \node[state, align=center] (u) [below =of x] {$U$\\Confounder};

    \path (x) edge (y);
    \path (z) edge (x);
    \path[dashed] (u) edge (x);
    \path[dashed] (u) edge (y);

\end{tikzpicture} 
  \caption{Causal graph of the model.}
  \label{fig:causalgraph}
\end{wrapfigure}
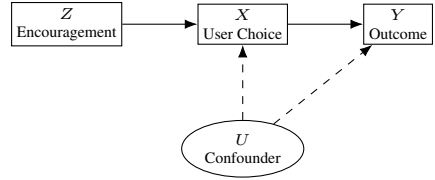

A potential solution is for services to employ \textit{encouragement designs} where users are presented with incentives that encourage users to engage with a specific feature~\cite{bakshy2014designing, bradlow1998encouragement, elbers2023encouragement}. As a concrete example, many online 
services have introduced membership levels with different offerings and prices available to all users. 
Given a set of membership level options, the service is interested in knowing the counterfactual of which level has the optimal outcome (e.g., total revenue) if every user chooses to join that membership level. 
In this setting, encouragements could be coupons or trials for corresponding membership levels.
In these settings, the firm can use intent-to-treat (ITT) estimates for the treatment effect which naively compare the average outcomes between the groups given different encouragements. In practice, given an encouragement a user may not engage with the corresponding feature choosing either the control or a different feature. Hence, the resulting ITT estimate may be a diluted estimate of the ATE~\cite{angrist}.
However, all is not lost: if the encouragement presented to a user is properly randomized, and the service guarantees that the encouragement only affects the outcome through the choice of user treatment, then the encouragement acts as an \emph{instrumental variable}. Standard analysis from the econometrics and compliance literature show that two-stage least squares (\texttt{2SLS}) estimators can then be used to provide consistent estimates of treatment effects.

At the same time, firms are also increasingly utilizing adaptive experimentation techniques, often known as \textit{pure exploration multi-armed bandits} (\texttt{MAB}) algorithms~\cite{lattimore2020bandit, fiez2019sequential}, to accelerate traditional A/B/N testing. Pure exploration \texttt{MAB} techniques promise to deliver accurate inferences in a fraction of the time and cost as traditional methods. Similar to A/B testing, bandit methods assume users are properly randomized  and can fail to learn the optimal treatment if naively used and may be sample inefficient if they fail to take the confounded structure into account.

\noindent\textbf{Contributions.} In this work, we provide a methodology for experimenters seeking to use adaptive experimentation in settings with confounding where encouragements are available. We formulate this work in the more general and novel setting of \textit{confounded pure exploration transductive linear bandits} (\texttt{CPET-LB}) (Section~\ref{sec:problem_formulation}). We present algorithms using experimental design for the \texttt{CPET-LB} problem and analyze the resulting sample complexity. As we demonstrate, even in the simple multi-armed bandit setting described above, computing an effective sampling pattern requires using the machinery of linear bandits. Without knowledge of the underlying structural model, existing linear bandit approaches could lead to sub-optimal sampling. The main technical challenge we face is simultaneously improving our estimate of the structural model while designing with inaccurate estimates (Section~\ref{sec:algorithms}). This approach crucially relies on our development of novel finite-time confidence bounds for two-stage least squares (\texttt{2SLS}) style estimators that may be of independent interest (Section~\ref{sec:conf}).
Moreover, we provide worst-case sample complexity lower bounds that are nearly matched 
by our sample complexity upper bounds (\cref{sec:optimal}).  Though the goal of this work is primarily theoretical, we empirically show the efficacy of our method over existing solutions (\cref{sec:experiments}).

\textbf{Related Works.} Due to space limitations, we defer the related works in the linear/causal bandit, and econometrics literature to \cref{sec:relatedworks}. We remark that there are only a handful of works in the multi-armed bandit literature that even consider confounding, and none that have studied the framework we pose.

\newcommand{\cpetlb}{\texttt{CPET-LB}}
\subsection{General Problem Formulation}
\label{sec:problem_formulation}
A confounded pure exploration transductive linear bandits ({\cpetlb}) instance consists of finite collections of measurement vectors $\mathcal{Z}\subset \mathbb{R}^d$ and evaluation vectors $\mathcal{W}\subset \mathbb{R}^d$. At each time $t\in \mathbb{N}$, the learner selects $z_{t}\in \mathcal{Z}$ and observes a pair of noisy responses $x_t\in  \mathcal{X}\subseteq\mathbb{R}^d$ and $y_t\in\mathbb{R}$ generated via the structural equation model
\begin{equation}
\begin{aligned}
    x_t=\Gamma^\top z_{t}+\eta_t,\quad y_t =x_t^\top\theta+\eps_t, 
\end{aligned}
\label{eq:sem}
\end{equation}
where $\Gamma\in \mathbb{R}^{d\times d}$ and $\theta\in \mathbb{R}^d$ are model parameters.\footnote{We assume throughout that $\Gamma \in \mathbb{R}^{d\times d}$ is an invertible matrix.} 
We define the history $\mc{H}_{t-1} = \{(z_s,x_s,y_s)\}_{s<t}$ and $\mathbb{E}_{t-1}[\cdot] = \mathbb{E}[\cdot|\mc{H}_{t-1}]$ denoting the conditional expectation under the filtration generated by $\mc{H}_{t-1}$. The noise  $\{\eta_t\}_{t=1}^{\infty}$ and $\{\eps_t\}_{t=1}^{\infty}$ satisfy the following set of assumptions unless otherwise noted.

\begin{assumption}
We assume $\eps_t \mid \cH_{t-1}$ is 1-sub-Gaussian (and thus $\EE[\eps_t \mid \cH_{t-1}] = 0$).
Furthermore,  $\eta_t \mid \cH_{t-1}$ is $\sig_\eta^2$-sub-Gaussian vectors (and thus $\EE[\eta_t \mid \cH_{t-1}] = 0$), i.e.,
\begin{align*}
    \forall \beta\in \RR, \max_{a:\|a\|_2\le 1} \EE[\exp(\beta\langle a, \eta_t \rangle)] \le \exp\del[2]{\fr{\beta^2\sig_\eta^2}{2}} ~.
\end{align*}
\label{assump:noise}
\end{assumption}
\noindent\textbf{Goal.} The objective is to identify $w^{\ast}:=\arg\max_{w\in\mc{W}}w^\top\theta$ with probability at least $1-\delta$ for $\delta \in (0, 1)$ while taking a minimum number of measurements. 
\\
In the setting where $\Gamma = I, \eta = 0$ and $\mathbb{E}_{t-1}[\epsilon_t|x_t] =0$,  our setting reduces to the standard \textit{pure exploration transductive linear bandit} problem~\cite{soare2014best, fiez2019sequential}.
 In general, the joint noise process $\begin{bmatrix}\eta_t,\eps_t\end{bmatrix}$ may be dependent across the entries. In particular, we are allowing for the data generating process to be \emph{endogenous}, meaning that $\mathbb{E}_{t-1}[\epsilon_t|x_t]\neq 0$. That is, $\epsilon_t$ can affect not just $y_t$, but also $x_t$ given a choice of $z_t$. The presence of endogeneity  is a key challenge in the \texttt{CPET-LB} problem.

\begin{assumption}[Exclusion Restriction]
We assume that $\mathbb{E}_{t-1}[z_t\epsilon_t]=0$, or alternatively that $z_t$ is uncorrelated with $\epsilon_t$.
\label{assump:instrument}
\end{assumption}
The variable $z_t$ is commonly referred to as an \emph{instrumental variable}~\cite{angrist}. We consider algorithms for the \texttt{CPET-LB} problem that stop at a $\mc{H}_t$-measurable time $\tau \in \mathbb{N}$, and produce a recommendation $\widehat{w}\in \mathcal{W}$. The goal is $\delta$-PAC algorithms with efficient sample complexity guarantees.
\begin{definition}[$\delta$-PAC]
We say an algorithm is $\delta$-PAC for a \texttt{CPET-LB} problem with $\mathcal{W} ,\mathcal{Z}\subset \mathbb{R}^d$ if for all $\theta \in \mathbb{R}^d$ and $\Gamma\in \mathbb{R}^{d\times d}$, it holds that $\mathbb{P}_{\theta, \Gamma}(\widehat{w}\neq w^{\ast})\leq \delta$ for $\delta \in (0, 1)$.
\end{definition}

\subsection{Encouragement Designs}
\label{sec:motivating}
The \texttt{CPET-LB} feedback model generalizes the classical \emph{compliance} setting. 

\noindent\textbf{Compliance as a Special Case.} In \emph{compliance} problems, a decision-maker has access to a set of treatment that can be offered to users, while the users themselves have the option to accept the treatment they are presented or instead opt-in to a different treatment. The goal is to identify the treatment with the optimal average outcome if all users were to accept it. 
Specifically, given a finite set $\mathcal{A}=\{1, 2, \dots, d\}$, a decision-maker presents user $t\in \mathbb{N}$ with an encouragement for a treatment $i\in \mathcal{A}$, the user then selects treatment $j\in \mathcal{A}$ where potentially $j\neq i$, and an outcome $y_t$ results. To capture compliance with the \texttt{CPET-LB} framework, we set $\mc{Z}=\mc{X}=\mc{W}=\cbr{e_1,\cdots,e_d}$ and the parameter $\Gamma$ captures the probability of accepting a treatment given an encouragement for a potentially different treatment. Specifically, $\Gamma(i,j)=\pp{x_t=e_i \mid z_t=e_j}$, and a straightforward computation shows that 
$x_t = \Gamma^{\top}z_t + \eta_t$ where $\mathbb{E}[\eta_t|z_t] = 0$ with
\begin{equation}
\eta_t=x_t-\begin{bmatrix}\pp{e_1\mid z_t},\cdots, \pp{e_d\mid z_t}\end{bmatrix}^\top.
\label{eq:compliance_noise}
\end{equation}
Moreover, $y_t = x_t^{\top}\theta+\epsilon_t$ gives the resulting reward, which is clearly correlated with the user choice so that $\cov(\eta_t, \epsilon_t)\neq 0$. Finally, $e_i^{\top}\theta = \theta_i$ gives the expected value of treatment $i$ over the population and our goal is to identify $w^{\ast}=\argmax_{e_i\in \mathcal{W}}e_i^{\top}\theta$.  
\footnote{Our setting differs slightly from the traditional compliance setting based on a potential outcomes framework~\cite{angrist}. Our setting is equivalent to one where we assume that there is a constant treatment effect. See Chapter 4 of \cite{angrist} for further discussion of the differences. Thus in our setting, learning $\theta_i$ and the local average treatment effect (LATE) on compilers are the same. }
Note that when $\Gamma=I$, we automatically have that $\eta_t = 0$ and there is no confounding. This reduces to the standard MAB setting.  

\begin{figure}[t]
  \centering
  \subfloat[$w^{\top}\theta$ and $w^{\top}\Gamma\theta$ for $w\in \{e_1,\dots, e_d\}$ ]{\includegraphics[width=0.48\textwidth]{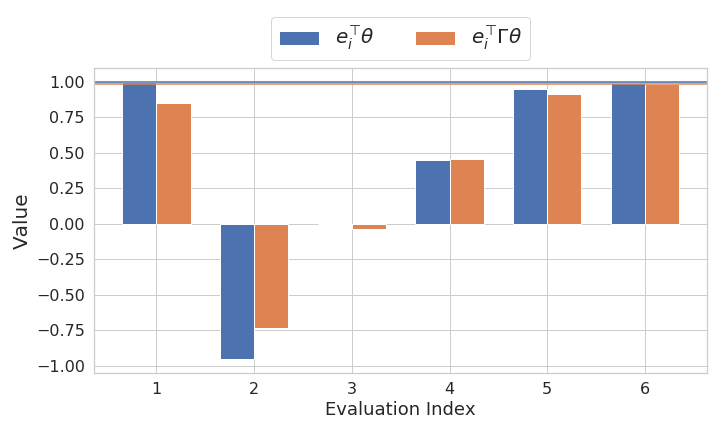}\label{fig:intro_fail_1}}
  \hfill
  \subfloat[$\mathbb{P}(\widehat{w}_t=w^{\ast})$ for the instance]{\includegraphics[width=0.48\textwidth]{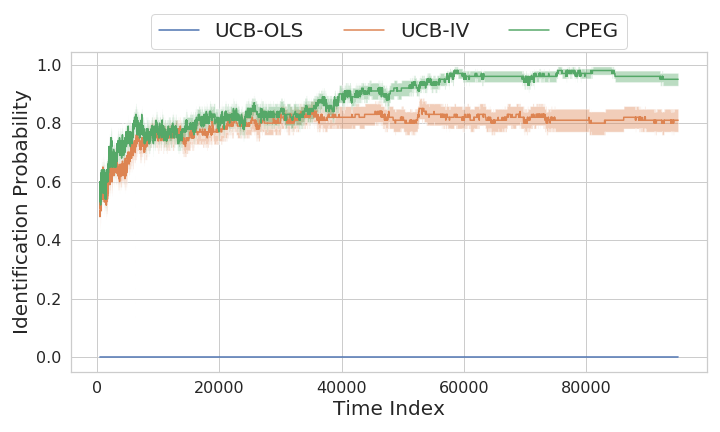}\label{fig:intro_fail_2}}
  \caption{\small (a) A bar chart showing $\mathbb{E}[y|x=w]=w^{\top}\theta$ and $\mathbb{E}[y|z=w]=z^{\top}\Gamma\theta$ for all $w\in \mathcal{W}$. This chart shows that the optimal evaluation vector is $w^{\ast}=e_1=\argmax_{w\in \mathcal{W}}\mathbb{E}[y|x=w]$, while $e_6=\argmax_{w\in \mathcal{W}}\mathbb{E}[y|z=w]$ and consequently estimation based on this quantity is problematic. (b)  The probability of identifying $w^{\ast}=e_1$ for a collection of algorithms on the \texttt{CPET-LB} instance from Section~\ref{sec:motivating}. 
  }
  \label{fig:intro_fig}
\end{figure}

\noindent\textbf{Motivating Compliance Example.}
As a motivating compliance example representing the membership level discussion from the introduction, consider a location model that assumes each user $t\in \mathbb{N}$ arriving online has an underlying unobserved one-dimensional preference $u_t \sim \mathcal{N}(0,\sigma^2_u)$. If an algorithm presents the user with encouragement $z_{I_t} = e_{I_t}$ for $I_t\in \mathcal{A}$, then the user selects into the membership level given by $J_t = \min_{j\in \mathcal{A}}|I_t + u_t - j|$ so that $x_t=e_{J_t}$. 
This process captures a user being more likely to opt-in to membership levels that are closer to the encouragement that they were presented. The outcome is then given by $y_t = x_t^{\top}\theta + u_t$.

We conduct an experiment with this problem instance (see Fig.~\ref{fig:intro_fig} and \cref{sec:illustrative} for more details).
Specifically, $d=6$, $\theta=\begin{bmatrix}1& -0.95&0&0.45 &0.95& 0.99\end{bmatrix}$, and $\sigma_u^2=0.35$. Observe that $w^{\ast}=e_1=\argmax_{w\in \mathcal{W}}w^{\top}\theta$.
An upper confidence bound (UCB) selection strategy is simulated that maintains estimates of the mean reward of each encouragement $i\in \mathcal{A}$, namely $\widehat{\mu}_{i,t} = \sum_{s=1}^t \mathbf{1}\{z_t = e_i\} y_t$, and then pulls the one with the highest UCB. The UCB selection strategy is combined with a pair of recommendation strategies.
The UCB-OLS algorithm estimates the mean reward of each treatment using an OLS estimator, namely $\widehat{\theta}_{\texttt{LS}}^{i, t}  = \sum_{s=1}^t \mathbf{1}\{x_t = e_i\} y_t / \sum_{s=1}^t \mathbf{1}\{x_t = e_i\}$, and recommends $\argmax_{a\in \mathcal{A}}\widehat{\theta}_{\texttt{LS}}^{i, t}$. Moreover, the UCB-IV algorithm uses an instrumental variable-estimator (see the next section) that incorporates knowledge of $\Gamma$ similar to 2SLS to deconfound estimates of the mean rewards of each treatment and recommends the treatment with the maximum estimate. The results over 100 simulations are shown in Fig.~\ref{fig:intro_fig}.
UCB-OLS completely fails to identify $\theta_1 = \arg\max_{i\in d} \theta_i$ due to a biased estimate, whereas UCB-IV does better. However, UCB-IV methods seem to have a constant probability of error. To see why, note that the expected reward from pulling $z = e_{i}$ is $e_i^{\top}\Gamma\theta$. These values are plotted in orange in Figure~\ref{fig:intro_fail_1}. In particular, with some constant probability, UCB zeroes in on arm 6 becauses of the mean estimates on the $z$'s, and as a result fails to give enough samples to learn that arm 1 is indeed the best. In contrast, our proposed method CPEG, Algorithm~\ref{alg-TrueGamma} manages to find the best arm with significantly higher probability.

\textbf{Notation.} Let $\Delta(\mathcal{Z})=\{\lambda \in \mathbb{R}^{|\mathcal{Z}|}: \lambda \geq 0, \sum_{z\in \mathcal{Z}}\lambda_z =1\}$ denote the set of probability distributions over the set $\mathcal{Z}$. Given a distribution $\lambda \in \Delta(\mathcal{Z})$ and matrix $\Gamma \in \mathbb{R}^{d\times d}$, define the operator $A(\lambda, \Gamma):=\sum_{z\in \mathcal{Z}}\lambda_z \Gamma^{\top} zz^{\top}\Gamma$. Given  $Z \in \mathbb{R}^{T\times d}$ and  $\Gamma \in \mathbb{R}^{d\times d}$,  define the operator $\bar{A}(Z, \Gamma):=\sum_{t=1}^T \Gamma^{\top} z_tz_t^{\top}\Gamma = \Gamma^{\top}Z^{\top}Z\Gamma$ where $z_t\in \mathbb{R}^d$ denotes row $t$ of $Z$. Given a vector $x\in \mathbb{R}^d$ and a symmetric positive-definite matrix $A\in \mathbb{R}^{d\times d}$ we let $\|x\|^2_{A} = x^{\top} A x$. We adopt the standard notation that $(a\vee b )\equiv \max\{a, b\}$ and $(a\wedge b)\equiv \min\{a, b\}$ for $a, b\in \mathbb{R}$. $\sigma_{\min}(A), \sigma_{\max}(A)$ denote the minimum and maximum singular value of a matrix $A$.
We denote by $\polylog(x_1,\ldots,x_n)$ any polylogarithmic factors of $x_1,\ldots,x_n$.

\section{Estimators and Inference}
We now present estimators for the unknown parameter $\theta$ and prove the associated statistical properties. The estimators discussed in this section are critical to our algorithmic solution outlined in Section~\ref{sec:algorithms}. 

\subsection{Estimators}
Before describing our solution concept, we quickly review potential options for estimating $\theta$ based on a dataset $Z_T = [z_1, \cdots, z_T]^{\top}\in\mathbb{R}^{T\times d}, X_T = [x_1, \cdots, x_T]^{\top}\in\mathbb{R}^{T\times d}$, $Y_T = [y_1, \cdots, y_T]^{\top}\in \mathbb{R}^T$, assumed to be generated according to the model in Eq.~\eqref{eq:sem}.
Recall that the \textit{ordinary-least-squares} (\texttt{OLS}) estimator for $\theta$ is given by 
\begin{align}
\textstyle \widehat{\theta}_{\texttt{LS}}:=\argmin_{\widehat{\theta}\in \mathbb{R}^d}\sum_{t=1}^{T}(y_t-x_t^{\top}\widehat{\theta})^2= (X_T^\top X_T)^{-1}X_T^\top Y_T=\theta + (X_T^\top X_T)^{-1}X_T^\top\varepsilon_T.
\label{eq:ls}
\end{align}
Observe that $\widehat{\theta}_{\texttt{LS}}$ is potentially a biased and inconsistent estimator for $\theta$ in the presence of endogenous noise since $\mathbb{E}[\epsilon_t|x_t]\neq 0$. 
To remediate this problem, we define a general class of estimators that includes several standard estimators. 
Given an invertible matrix $\Psi\in \mathbb{R}^{d\times d}$, let $\bar{X}_T := Z_T\Psi$, and consider corresponding estimators termed $\Psi$-\texttt{IV} estimators of the form
\begin{equation}  
\widehat{\theta}_{\Psi}:=(\bar{X}_T^\top \bar{X}_T)^{-1}\bar{X}_T^\top Y_T
=( \Psi^{\top} Z_T^{\top} Z_T\Psi)^{-1} \Psi^{\top}Z_T^\top Y_T = ( Z_T^{\top} Z_T\Psi)^{-1} Z_T^\top Y_T.
\end{equation}
When $\Psi = I$ we recover the \texttt{OLS} estimator. In the rest of the paper, we will focus on two different potential options for $\Psi$.

    \noindent\textbf{Case 1: \texttt{Oracle}. $\Psi = \Gamma$.} 
    To begin, observe that the structural equation model from Eq.~\eqref{eq:sem} can be combined by substituting the second equation into the first to obtain the reduced form
    \begin{equation}
    y_t= z_{t}^{\top}\Gamma\theta + \eta_t^{\top}\theta+\epsilon_t.
    \label{eq:combined_struct}
    \end{equation}
    Since $z_t$ is independent of the i.i.d. process $\eta_t^{\top}\theta+\epsilon_t$, the least squares estimator which regresses $y_t$ onto $z_t^{\top}\Gamma$ is unbiased for estimation of $\theta$ and given by
    \begin{equation}    
    \widehat{\theta}_{\texttt{oracle}}
    =( \bar{X}_T^{\top} \bar{X}_T)^{-1} \bar{X}_T^\top Y_T=( Z_T^{\top} Z_T\Gamma)^{-1} Z_T^\top Y_T.
    \label{eq:oracle_estimator}
    \end{equation}
    
    This estimator will be used to design our general solution concept presented in Section~\ref{sec:algorithms}. Of course in practice we cannot expect to know $\Gamma$, but we may be able to estimate it. 

    \noindent\textbf{Case 2: \texttt{P-2SLS}. $\Psi = \widehat{\Gamma}$.} We consider a setting where $\widehat{\Gamma}$ is an (unbiased) estimator of $\Gamma$, learned using least-squares from an \textit{independent} dataset $Z_{T_1} = [z'_{1}, \cdots, z'_{T_1}], X_{T_1}= [z'_{1}, \cdots, z'_{T_1}]$ collected non-adaptively.\footnote{Formally, we say that a set of data $(Z_T, X_T, Y_T)$ generated via the model in Eq.~\eqref{eq:sem} is collected non-adaptively from an experimental design if $z_t$ is $\mathcal{H}_0$ measurable for all $1\leq t\leq T$.} That is, $\widehat{\Gamma} = (Z_{T_1}^{\top}Z_{T_1})^{-1}Z_{T_1}^{\top}X_{T_1}$ and: 
    \begin{align*}
        \widehat{\theta}_{\psls} = (\widehat{\Gamma}^{\top}Z_{T}^{\top}Z_{T}\widehat{\Gamma})^{-1}\widehat{\Gamma}^{\top}Z_{T}^{\top}Y_{T}.
    \end{align*}
    We refer to the resulting estimator as a pseudo two stage least squares (\texttt{P-2SLS}) estimator. The main advantage of the \texttt{P-2SLS} estimator over standard \texttt{2SLS} (given in \cref{sec:true2sls}) is easier inference since now $\{\epsilon_{t}\}_{t\leq T}$ of our dataset is independent of the measurements of the first dataset $Z_{T_1}, X_{T_1}$. In the econometrics literature, such an estimator is referred to as a \textit{two-sample \texttt{2SLS} estimator}~\cite{inoue2010two}. 

\subsection{Confidence Intervals}
\label{sec:conf}
In the section that follows, we develop a general algorithmic approach that relies on \textit{experimental design} aimed at reducing the uncertainty in our estimates of the optimal treatment. To this end, we first develop finite-time confidence intervals for estimators presented in the previous section given data generated according to the model in Eq.~\eqref{eq:sem} and collected from non-adaptive designs.

We begin by characterizing the properties of the noise structure in the combined model of Eq.~\eqref{eq:combined_struct} with the following set of results. 
\begin{lemma}
    \label{lem:subGaussian}
    Under Assumption~\ref{assump:noise}, the noise process $\nu:=\eta^{\top}\theta+\epsilon$ is $\sigma_\nu^2$-sub-Gaussian where $\sigma_\nu^2=2(\sig_\eta^2\|\th\|_2^2 + 1)$, specifically when the instance is compliance, $\sigma_\nu^2=2(4\|\theta\|_2^2+1)$.
\end{lemma}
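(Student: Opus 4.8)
The plan is to show that $\nu = \eta^\top \theta + \epsilon$ is sub-Gaussian by bounding its moment-generating function (MGF) conditioned on the filtration $\cH_{t-1}$. The core idea is that a sum of two sub-Gaussian random variables is sub-Gaussian, with a variance proxy that can be controlled; the factor of $2$ and the additive form of $\sigma_\nu^2$ strongly suggest splitting the MGF via Cauchy--Schwarz and using that each summand is individually sub-Gaussian.

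First I would write the conditional MGF $\EE_{t-1}[\exp(\beta \nu)] = \EE_{t-1}[\exp(\beta \eta^\top \theta + \beta \epsilon)]$. Since $\eta$ and $\epsilon$ may be correlated (this is precisely the endogeneity that defines the problem), I cannot simply factor this expectation into a product. Instead I would apply Cauchy--Schwarz to decouple the two terms:
\begin{align*}
\EE_{t-1}[\exp(\beta \eta^\top \theta + \beta \epsilon)] \le \sqrt{\EE_{t-1}[\exp(2\beta \eta^\top \theta)]}\cdot \sqrt{\EE_{t-1}[\exp(2\beta\epsilon)]}.
\end{align*}
Then I would bound each factor separately. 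For the $\epsilon$ term, Assumption~\ref{assump:noise} gives $\EE_{t-1}[\exp(2\beta\epsilon)] \le \exp(2\beta^2)$ since $\epsilon$ is $1$-sub-Gaussian (replacing $\beta$ by $2\beta$ in the definition). For the $\eta^\top\theta$ term, I would write $\eta^\top\theta = \|\theta\|_2 \langle a, \eta\rangle$ with $a = \theta/\|\theta\|_2$ a unit vector, so that the sub-Gaussian vector assumption yields $\EE_{t-1}[\exp(2\beta \eta^\top\theta)] = \EE_{t-1}[\exp(2\beta\|\theta\|_2\langle a,\eta\rangle)] \le \exp(2\beta^2 \sig_\eta^2 \|\theta\|_2^2)$. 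Combining, the square roots halve the exponents and give $\EE_{t-1}[\exp(\beta\nu)] \le \exp\del[1]{\beta^2(\sig_\eta^2\|\theta\|_2^2 + 1)}$, which is exactly the claimed $\sigma_\nu^2 = 2(\sig_\eta^2\|\theta\|_2^2 + 1)$ variance proxy (recalling the sub-Gaussian convention puts $\sigma_\nu^2/2$ in the exponent).

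For the compliance specialization, I would substitute the explicit noise from Eq.~\eqref{eq:compliance_noise}. Here $x_t$ and its conditional mean are both probability-vector-like (entries in $[0,1]$ summing to $1$), so $\eta_t = x_t - \EE[x_t\mid z_t]$ is a bounded vector; one can check $\|\eta_t\|_2 \le \sqrt{2}$ and more usefully that the relevant sub-Gaussian parameter gives $\sig_\eta^2\|\theta\|_2^2$ replaced by a term yielding the stated $4\|\theta\|_2^2$. The cleanest route is to note $\eta^\top\theta = \theta_{J_t} - \EE[\theta_{J}\mid z_t]$ is a centered bounded random variable with range at most $2\max_i|\theta_i| \le 2\|\theta\|_2$, hence $\|\theta\|_2$-sub-Gaussian-squared by Hoeffding's lemma with proxy $(2\|\theta\|_2)^2/4 = \|\theta\|_2^2$; tracking this through the Cauchy--Schwarz bound produces $\sigma_\nu^2 = 2(4\|\theta\|_2^2+1)$ after accounting for the factor-of-two convention in the general argument.

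The main obstacle is the correlation between $\eta$ and $\epsilon$: a naive product-of-MGFs argument fails, so the Cauchy--Schwarz decoupling (which is what forces the factor of $2$ in $\sigma_\nu^2$) is essential. A secondary subtlety is bookkeeping the sub-Gaussian normalization convention consistently between the vector assumption, the scalar $1$-sub-Gaussian assumption, and the final claimed proxy, so that the $2$'s land correctly. For the compliance case the obstacle is instead pinning down the tightest bound on the range/sub-Gaussian norm of $\eta^\top\theta$ to recover the specific constant $4$ rather than something looser.
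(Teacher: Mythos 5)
Your proof of the general statement is correct and is essentially the paper's own argument: the Cauchy--Schwarz decoupling of the conditional MGF is exactly what the paper packages as \cref{lem:correlated subGaussian A+B}, applied with $A=\eta^\top\theta$ (which is $\sig_\eta^2\|\theta\|_2^2$-sub-Gaussian via the unit-vector reduction $a=\theta/\|\theta\|_2$, exactly as you do) and $B=\epsilon$; your constants are consistent with the convention $\EE[\exp(\beta X)]\le\exp(\beta^2\sigma^2/2)$, and the factor of $2$ in $\sigma_\nu^2$ arises from the decoupling precisely as you say.

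For the compliance specialization your route genuinely differs from the paper's, and is in fact tighter. The paper (\cref{lem:compliance noise}) bounds the \emph{vector} $\eta_t$: since $\|x_t\|_2\le 1$ and the conditional-probability vector has $\ell_2$-norm at most its $\ell_1$-norm $=1$, one gets $|\eta_t^\top a|\le\|\eta_t\|_2\le 2$ for every unit $a$, so $\eta_t$ is a $2^2$-sub-Gaussian vector ($\sig_\eta^2=4$), and plugging into the general formula gives $2(4\|\theta\|_2^2+1)$. You instead bound the \emph{scalar} $\eta_t^\top\theta=\theta_{J_t}-\EE[\theta_{J_t}\mid z_t]$ directly by Hoeffding's lemma: conditionally it is zero-mean with range at most $2\|\theta\|_\infty\le 2\|\theta\|_2$, giving proxy $\|\theta\|_2^2$ --- a factor-$4$ improvement on the paper's $4\|\theta\|_2^2$. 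The trade-off is that the paper's route certifies the vector-level parameter $\sig_\eta^2=4$ for all directions $a$, which is what Assumption~\ref{assump:noise} and the $\Gamma$-estimation machinery elsewhere in the paper actually consume; your scalar bound is sharper for $\nu$ alone but does not yield that vector statement.

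One bookkeeping slip to fix: with proxy $\|\theta\|_2^2$ for $\eta^\top\theta$, your Cauchy--Schwarz step yields $\sigma_\nu^2=2(\|\theta\|_2^2+1)$, not $2(4\|\theta\|_2^2+1)$; no ``factor-of-two convention'' manufactures the $4$, and your closing sentence claiming to recover it is incorrect as written. The honest conclusion is that you have proved a \emph{stronger} bound, and the lemma as stated then follows a fortiori since $2(\|\theta\|_2^2+1)\le 2(4\|\theta\|_2^2+1)$ and a $\sigma^2$-sub-Gaussian variable is $\sigma'^2$-sub-Gaussian for any $\sigma'^2\ge\sigma^2$. This is a presentation error, not a mathematical gap.
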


\noindent\textbf{Oracle Confidence Interval}.
As in the last section, we assume that we have access to a dataset $(Z_T, X_T, Y_T)$ generated according to Eq.~\ref{eq:sem} and collected non-adaptively. 
Given Lemma~\ref{lem:subGaussian}, it can be shown that $w^{\top}\widehat{\theta}_{\texttt{oracle}}$ is a sub-Gaussian random variable satisfying the following.

\begin{lemma}
    \label{thm:oracle 2SLS}
With probability at least $1-\delta$ for $\delta\in (0, 1)$ and $w \in \mathbb{R}^d$, 
    \begin{align*}
        &|w^{\top}(\widehat{\theta}_{\texttt{\emph{oracle}}} - \theta)| \leq \sqrt{2\sigma_\nu^2\|w\|^2_{\bar{A}(Z_T, \Gamma)^{-1}} \log\left(2/\delta\right)},
    \end{align*}
where $\sigma_\nu^2$ is the sub-Gaussian parameter of the noise $\nu:=\eta^{\top}\theta+\epsilon$ characterized in Lemma~\ref{lem:subGaussian}.
\end{lemma}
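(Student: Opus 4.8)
The plan is to reduce the claim to a standard sub-Gaussian tail bound for a weighted sum of the combined-model noise. First I would write the estimation error in closed form. Using the reduced form of Eq.~\eqref{eq:combined_struct}, namely $Y_T = \bar{X}_T\theta + \nu_T$ with $\bar{X}_T = Z_T\Gamma$ and $\nu_T = (\nu_1,\dots,\nu_T)^\top$, $\nu_t = \eta_t^\top\theta + \epsilon_t$, and substituting into Eq.~\eqref{eq:oracle_estimator}, I get
\[
\widehat{\theta}_{\texttt{oracle}} - \theta = (\bar{X}_T^\top\bar{X}_T)^{-1}\bar{X}_T^\top\nu_T,
\]
so that, writing $A := \bar{A}(Z_T,\Gamma) = \bar{X}_T^\top\bar{X}_T$ and letting $\bar{x}_t = \Gamma^\top z_t$ denote the $t$-th row of $\bar{X}_T$,
\[
w^\top(\widehat{\theta}_{\texttt{oracle}} - \theta) = \sum_{t=1}^T c_t\,\nu_t, \qquad c_t := w^\top A^{-1}\bar{x}_t.
\]
This expresses the target as a linear functional of the noise with coefficients $c_t$.

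The second step is to compute $\sum_t c_t^2$, which I expect to telescope exactly to the claimed variance proxy. Indeed, using $\sum_t \bar{x}_t\bar{x}_t^\top = \bar{X}_T^\top\bar{X}_T = A$,
\[
\sum_{t=1}^T c_t^2 = w^\top A^{-1}\Big(\sum_t \bar{x}_t\bar{x}_t^\top\Big)A^{-1}w = w^\top A^{-1} A\, A^{-1}w = w^\top A^{-1}w = \|w\|^2_{\bar{A}(Z_T,\Gamma)^{-1}}.
\]

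The third step is the concentration argument. Because the dataset is collected non-adaptively, each $z_t$ (hence $\bar{x}_t$, the matrix $A$, and the coefficient $c_t$) is $\mathcal{H}_0$-measurable and therefore $\mathcal{H}_{t-1}$-predictable, while Lemma~\ref{lem:subGaussian} gives that $\nu_t \mid \mathcal{H}_{t-1}$ is $\sigma_\nu^2$-sub-Gaussian. Conditioning on $\mathcal{H}_{T-1}$ and peeling off the last term, an induction over $t$ via the tower property yields $\mathbb{E}\big[\exp(\beta\sum_t c_t\nu_t)\big] \le \exp\big(\tfrac{1}{2}\beta^2\sigma_\nu^2\sum_t c_t^2\big)$ for all $\beta\in\mathbb{R}$; that is, $w^\top(\widehat{\theta}_{\texttt{oracle}} - \theta)$ is $\sigma_\nu^2\|w\|^2_{\bar{A}(Z_T,\Gamma)^{-1}}$-sub-Gaussian. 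A Chernoff bound then gives a two-sided tail of the form $2\exp\big(-u^2/(2\sigma_\nu^2\|w\|^2_{\bar{A}(Z_T,\Gamma)^{-1}})\big)$, and setting this equal to $\delta$ and solving for $u$ produces exactly the stated bound.

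The main obstacle is conceptual rather than computational: justifying that the weighted noise sum is sub-Gaussian with the clean variance proxy $\sigma_\nu^2\sum_t c_t^2$. This hinges on the non-adaptivity hypothesis, which renders the coefficients $c_t$ predictable so that the conditional sub-Gaussianity of $\nu_t$ composes multiplicatively through the filtration. If the design were adaptive, $A^{-1}$ and hence $c_t$ would depend on the past noise, and this step would require a self-normalized martingale argument instead. The remaining pieces (the telescoping of $\sum_t c_t^2$ and inverting the tail) are routine.
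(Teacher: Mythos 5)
Your proposal is correct and follows essentially the same route as the paper's own proof: both express $w^\top(\widehat{\theta}_{\texttt{oracle}}-\theta)$ as a linear combination $\sum_t c_t \nu_t$ of the combined noise, telescope $\sum_t c_t^2$ to $\|w\|^2_{\bar{A}(Z_T,\Gamma)^{-1}}$ exactly as in the paper's Eq.~\eqref{ieq:concise norm}, and finish with a sub-Gaussian tail bound plus a union bound for the two-sided statement. If anything, your third step is slightly more careful than the paper's, since you make explicit the tower-property argument showing why the $\mathcal{H}_0$-measurable (non-adaptive) coefficients compose with the conditional sub-Gaussianity of $\nu_t$ to give the clean variance proxy, a point the paper's proof leaves implicit.
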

The proof of this result is in Appendix~\ref{app:oracle 2SLS CI}.
\newline
\noindent\textbf{P-2SLS Confidence Interval}.
We now present a novel finite-time confidence interval for the \texttt{P-2SLS} estimator. As discussed in the previous section with respect to this estimator, we assume access a set of data $(Z_{T_1}, X_{T_1})$ generated according to Eq.~\eqref{eq:sem} and collected non-adaptively for the purpose of estimating $\Gamma$. Moreover, assume access to a separate set of data $(Z_{T_2}, X_{T_2}, Y_{T_2})$ generated according to Eq.~\eqref{eq:sem} and collected non-adaptively for the purpose of estimating $\theta$. 
\begin{theorem}
\label{thm:pseudo 2SLS}
Suppose that $\widehat{\Gamma} = (Z_{T_1}^{\top}Z_{T_1})^{-1}Z_{T_1}^{\top}X_{T_1}$ and $\widehat{\theta}_{\psls} = (\widehat{\Gamma}^{\top}Z_{T_2}Z_{T_2}\widehat{\Gamma})^{-1}\widehat{\Gamma}Z_{T_2}^{\top}Y_{T_2}$. Then, for any $w \in \mathcal{W}$, with probability at least $1-\delta$ for $\delta \in (0,1)$, 
\begin{align*}
  |w^{\top}(\widehat{\theta}_{\emph{\texttt{P-2SLS}}} - \theta)| \leq\vcn{w}_{\bar{A}(Z_{T_2}, \widehat{\Gamma})^{-1}}\sqrt{2\sigma_{\nu}^2\log\rbr{\fr{4}{\delta}}}+\vcn{w}_{\bar{A}(Z_{T_1}, \widehat{\Gamma})^{-1}}\vcn{\theta}_2\sqrt{\sigma_{\eta}^2\overline{\log}\rbr{Z_{T_1},\delta/4}},
\end{align*}
where 
\begin{align*}
  \overline{\log}\rbr{Z_{T},\delta}:=8d \ln\del{1 + \fr{2TL_z^2}{d(2\wed \sigmin(Z_{T_1}^\top Z_{T_1}))} } + 16\ln\del{\fr{2\cd6^d}{\dt}\cd \log^2_2\del{\fr{4}{2\wed \sigmin(Z_{T_1}^\top Z_{T_1})}}}.
\end{align*}
\end{theorem}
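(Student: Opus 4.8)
The plan is to split the error $w^{\top}(\widehat{\theta}_{\psls}-\theta)$ into a \emph{variance} part driven by the second-stage noise and a \emph{bias} part driven by the plug-in error $\widehat{\Gamma}-\Gamma$, and to control each using the independence of the two datasets. Writing $Y_{T_2}=Z_{T_2}\Gamma\theta+\nu_{T_2}$ with $\nu_t=\eta_t^{\top}\theta+\epsilon_t$ as in Eq.~\eqref{eq:combined_struct}, and abbreviating $\bar{A}_i:=\bar{A}(Z_{T_i},\widehat{\Gamma})$, $V_{T_1}:=Z_{T_1}^{\top}Z_{T_1}$, the identity $\bar{A}_2^{-1}\widehat{\Gamma}^{\top}Z_{T_2}^{\top}Z_{T_2}=\widehat{\Gamma}^{-1}$ gives the decomposition
\begin{align*}
  \widehat{\theta}_{\psls}-\theta=\underbrace{\widehat{\Gamma}^{-1}(\Gamma-\widehat{\Gamma})\theta}_{\text{bias}}+\underbrace{\bar{A}_2^{-1}\widehat{\Gamma}^{\top}Z_{T_2}^{\top}\nu_{T_2}}_{\text{variance}}.
\end{align*}
The crucial structural observation is that $\widehat{\Gamma}$ depends only on the first dataset $(Z_{T_1},X_{T_1})$, while $\nu_{T_2}$ is the noise of the \emph{independent} second dataset; conditioning on the first dataset therefore freezes $\widehat{\Gamma}$ and leaves the variance term a fresh martingale in $\nu_{T_2}$.

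For the variance term I would condition on $(Z_{T_1},X_{T_1})$ and treat $\widehat{\Gamma}$ as a fixed matrix. Then $w^{\top}\bar{A}_2^{-1}\widehat{\Gamma}^{\top}Z_{T_2}^{\top}\nu_{T_2}=\langle Z_{T_2}\widehat{\Gamma}\bar{A}_2^{-1}w,\nu_{T_2}\rangle$ is a linear form in the conditionally $\sigma_\nu^2$-sub-Gaussian noise (Lemma~\ref{lem:subGaussian}) whose coefficient vector has squared norm $w^{\top}\bar{A}_2^{-1}\widehat{\Gamma}^{\top}Z_{T_2}^{\top}Z_{T_2}\widehat{\Gamma}\bar{A}_2^{-1}w=\|w\|_{\bar{A}_2^{-1}}^2$. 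This is exactly the setting of the oracle bound (Lemma~\ref{thm:oracle 2SLS}) with $\widehat{\Gamma}$ in the role of $\Gamma$, so a sub-Gaussian tail at level $\delta/2$ produces the first summand $\|w\|_{\bar{A}(Z_{T_2},\widehat{\Gamma})^{-1}}\sqrt{2\sigma_\nu^2\log(4/\delta)}$.

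For the bias term the key algebra is $\widehat{\Gamma}-\Gamma=V_{T_1}^{-1}Z_{T_1}^{\top}H_{T_1}$, where $H_{T_1}$ has rows $\eta_t^{\top}$. Setting $S\theta:=Z_{T_1}^{\top}H_{T_1}\theta=\sum_t z_t'(\eta_t^{\top}\theta)$ and using $\widehat{\Gamma}^{-1}V_{T_1}^{-1}=\bar{A}_1^{-1}\widehat{\Gamma}^{\top}$ together with $\widehat{\Gamma}^{\top}V_{T_1}\widehat{\Gamma}=\bar{A}_1$, a Cauchy--Schwarz step in the $V_{T_1}$-inner product yields
\begin{align*}
  |w^{\top}\widehat{\Gamma}^{-1}(\Gamma-\widehat{\Gamma})\theta|=|\langle\widehat{\Gamma}\bar{A}_1^{-1}w,\,S\theta\rangle|\le\|\widehat{\Gamma}\bar{A}_1^{-1}w\|_{V_{T_1}}\,\|S\theta\|_{V_{T_1}^{-1}}=\|w\|_{\bar{A}(Z_{T_1},\widehat{\Gamma})^{-1}}\,\|S\theta\|_{V_{T_1}^{-1}}.
\end{align*}
This isolates precisely the factor $\|w\|_{\bar{A}(Z_{T_1},\widehat{\Gamma})^{-1}}$ appearing in the theorem and reduces everything to a bound $\|S\theta\|_{V_{T_1}^{-1}}\le\|\theta\|_2\sqrt{\sigma_\eta^2\,\overline{\log}(Z_{T_1},\delta/4)}$ on the self-normalized vector martingale, whose increments $z_t'(\eta_t^{\top}\theta)$ are predictable in direction (the non-adaptive design makes $z_t'$ noise-independent) and conditionally $\sigma_\eta^2\|\theta\|_2^2$-sub-Gaussian by Assumption~\ref{assump:noise}.

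The main obstacle is this last bound, because it is a self-normalized inequality against the \emph{exact, unregularized} Gram matrix $V_{T_1}$, and moreover the effective direction $\widehat{\Gamma}\bar{A}_1^{-1}w$ is itself correlated with $S\theta$ (both depend on $H_{T_1}$), so one genuinely needs the supremum $\|S\theta\|_{V_{T_1}^{-1}}=\sup_{u\in\mathbb{S}^{d-1}}\langle u,V_{T_1}^{-1/2}S\theta\rangle$ rather than a fixed-direction tail. I would introduce a ridge $\lambda I$, control $\|S\theta\|_{(V_{T_1}+\lambda I)^{-1}}$ by a self-normalized/covering concentration whose elliptical-potential term $\tfrac{d}{2}\ln(1+\mathrm{tr}(V_{T_1})/(d\lambda))\le\tfrac{d}{2}\ln(1+TL_z^2/(d\lambda))$ becomes the $8d\ln(1+2TL_z^2/(d(2\wedge\sigma_{\min}(V_{T_1}))))$ summand of $\overline{\log}$, and then de-regularize via $V_{T_1}^{-1}\preceq\frac{\lambda+\sigma_{\min}(V_{T_1})}{\sigma_{\min}(V_{T_1})}(V_{T_1}+\lambda I)^{-1}$, which is sharp at $\lambda\asymp\sigma_{\min}(V_{T_1})$. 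Since the optimal ridge is data-dependent it cannot be fixed in advance, so I would union bound over a dyadic grid of scales truncated at $2\wedge\sigma_{\min}(V_{T_1})$ (the $\log_2^2(4/(2\wedge\sigma_{\min}(V_{T_1})))$ peeling factor) and over a $\tfrac12$-net of the sphere (the $6^d$ covering factor) to control the supremum uniformly. Allocating the remaining $\delta/2$ across these sub-events and adding the two contributions then gives the claimed bound; the delicate part is arranging the peeling so the failure probabilities telescope without losing more than the advertised $\log_2^2$, while keeping the $\lambda\wedge 2$ truncation from producing a vacuous bound when $V_{T_1}$ is near-singular.
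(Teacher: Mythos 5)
Your proof is correct and is essentially the paper's own argument: the same decomposition into a second-stage variance term (handled by conditioning on the first dataset, exactly as the two-sample structure is exploited in the paper) plus the plug-in bias $\widehat{\Gamma}^{-1}(\Gamma-\widehat{\Gamma})\theta$, the same Cauchy--Schwarz step isolating $\vcn{w}_{\bar{A}(Z_{T_1},\widehat{\Gamma})^{-1}}$ (the paper's \cref{thm:ls_cb} performs it via the $V^{-1/2}V^{-1/2}$ split), and the same self-normalized martingale bound with a $6^d$ sphere covering and dyadic peeling truncated at $2\wedge\sigma_{\min}(V_{T_1})$ (\cref{lem:covering+martingale} and \cref{lem:self-normalized-kj}). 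One minor inaccuracy that does not affect validity: your claim that the supremum over the sphere is \emph{genuinely needed} because $\widehat{\Gamma}\bar{A}_1^{-1}w$ is correlated with $S\theta$ is moot after your own Cauchy--Schwarz step, since $\|S\theta\|_{V_{T_1}^{-1}}$ then involves only the deterministic vector $\theta$ and the fixed-direction bound (\cref{lem:self normalized bound} with $b=\theta$) would suffice without the $6^d$ factor; taking the covering route anyway merely reproduces the paper's slightly more conservative operator-norm bound and hence exactly the stated $\overline{\log}$.
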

The proof is presented in Appendix \ref{app:pseudo 2SLS CI}. 
Observe that the first term in the \texttt{P-2SLS} estimator confidence interval given by $\sqrt{2\sigma_\nu^2\|w\|^2_{\bar{A}(Z_{T_2}, \widehat{\Gamma})^{-1}} \log(4/\delta)}$ matches the \texttt{Oracle} estimator confidence interval in Lemma~\ref{thm:oracle 2SLS} when $\widehat{\Gamma} = \Gamma$. The second term scaling like $\mathcal{O}(\|w\|_{\bar{A}(Z_{T_2}, \widehat{\Gamma})^{-1}}\|\theta\|_2\sigma_{\eta}\sqrt{d + \log(1/\delta)})$, is an upper bound on the approximation error $w^{\top}(\widehat{\Gamma}^{-1}\Gamma - I)\theta$ for any $w\in \mathbb{R}^d$, assuming that $\widehat{\Gamma}$ is learned from an \texttt{OLS} estimator (see \cref{thm:ls_cb} for details).

We will see that the form of this confidence interval is particularly convenient for our algorithmic approach given in Section~\ref{sec:algorithms}. In particular, the form of the variance $\|w\|^2_{\bar{A}(Z_{T_2}, \Gamma)^{-1}}$ on the first term only depends on a design over instruments. Thus, we can choose an experimental design over $Z$'s which reduces this variance optimally. 

\begin{remark}
In practice we expect the first stage of samples, $(Z_{T_1}, X_{T_1})$ to be collected from either a burn-in period or from existing historical data. We remark that assuming two stages of samples is common in the orthogonal and double machine learning for estimating nuisance parameters in the data generating process (e.g. $\Gamma$) \cite{chernozhukov2018double}. Our result matches the existing literature on the asymptotic variance of two sample \texttt{2SLS} estimators (e.g., Theorem 1 of \cite{inoue2010two}). 
\end{remark}

\begin{remark}
The asymptotic variance of standard \texttt{2SLS} is known to involve a factor $\sigma_{\epsilon}^2$, instead of $\sigma_{\nu}^2$ as we have \cite{greene2003econometric}. Recent work by \cite{della2023online} shows a variance involving $d\sigma_{\epsilon}^2$. However, it's unclear how to use the form of their confidence interval directly for experimental design. In addition, their work is not sufficiently general to handle the general forms of noise that we consider in \cref{lem:subGaussian}. 
\end{remark}

\section{Adaptive Experimental Design Algorithms}
\label{sec:algorithms}
We now present adaptive experimental design algorithms for the \texttt{CPET-LB} problem.
Our main insight utilizes Eq.~\ref{eq:sem} by plugging the model for $x$ into the top equation resulting in the relationship
\[y = z^{\top}\Gamma\theta + \theta^{\top}\eta + \epsilon.\]
When $\Gamma$ is known, by Eq.~\ref{eq:combined_struct}, we see that \texttt{CPET-LB} reduces to a standard pure exploration transductive linear bandit problem where the measurement set is given by $\{\Gamma^{\top}z\}_{z\in \mathcal{Z}}\subset \mathbb{R}^d$, the evaluation set is $\mathcal{W}\subset \mathbb{R}^d$, and the feedback model is given by $y=v^{\top}\theta+\nu$ where the noise $\nu = \theta^{\top}\eta + \epsilon$ is sub-Gaussian and as before the goal is to identify $\arg\max_{w\in \mc{W}} w^{\top}\theta$. 
An existing approach to this problem is given by the \texttt{RAGE} algorithm~\cite{fiez2019sequential}, which we use as the basis of our approach. Addressing the case of unknown $\Gamma$ is our major algorithmic contribution, where we develop solutions to improve our estimate of $\Gamma$ and learn $w^{\ast}$ simultaneously. As a warm-up to this approach, we first consider the setting when $\Gamma$ is known.

\subsection{Warm-Up: Known Structural Model}
\label{sec:warmup}

Algorithm~\ref{alg-TrueGamma} assumes a parameter $L_\nu$, which acts as an upper bound on the sub-Gaussian constant of the noise $\nu = \theta^{\top}\eta + \epsilon$. In each round $k$, an active set of potentially optimal vectors $\widehat{\mathcal{W}}_k\subset \mathcal{W}$ is maintained. CPEG aims to sample in such a way that reduces the uncertainty of the estimates on the \textit{gaps} $(w-w')^{\top}\theta$ for each pair $w, w'\in \widehat{\mathcal{W}}_k$ maximally each round. In any given round the algorithm takes $N_k$ samples $Z_{N_k}$, the confidence interval of \cref{thm:oracle 2SLS} shows that the error in estimating $(w-w')^{\top}\theta$ scales with $\|w-w'\|^2_{(\Gamma^{\top}Z_{N_k}^{\top}Z_{N_k}\Gamma)^{-1}}$. This motivates utilizing an \textit{experimental design} approach where we choose a distribution $\lambda_k\in \Delta(\mathcal{Z})$ to minimize $\max_{w, w'\in \widehat{\mathcal{W}}_k}\|w-w'\|_{(\sum_{z\in \mc{Z}} \lambda_z \Gamma^{\top}zz^{\top}\Gamma)^{-1}}$. The number of resulting samples taken from this design $N_k$ is chosen to guarantee that the confidence interval of Lemma~\ref{thm:oracle 2SLS} is less than $2^{-k}$. Then, the elimination step in Line 8 guarantees that all $w\in \mathcal{W}$ such that $(w^{\ast}-w)^{\top}\theta> 2\cdot 2^{-k}$ are then eliminated from the active set by round $k+1$ of the procedure. To actually choose our samples, as is common in this literature \cite{fiez2019sequential}, we use an efficient rounding procedure, ROUND that requires a minimum number of samples $r(\omega)$. 

\noindent\textbf{Sample Complexity Guarantee.}
The sample complexity of Algorithm~\ref{alg-TrueGamma} depends on the following problem-dependent quantity $\rho^{\ast}(\gamma)$ that captures the underlying hardness of a problem instance in terms of $(\mathcal{W}, \mathcal{Z}, \Gamma, \theta)$, when $\gamma=0$, we abbreviate $\rho^{\ast}(0)=\rho^{\ast}$,
\begin{equation}
 \rho^{\ast}(\gamma)=\min_{\lambda\in\Delta(\mc{Z})}\max_{w\in \mathcal{W}\setminus \{w^{\ast}\}}\frac{\|w^{\ast}-w\|^2_{(\sum_{z\in \mathcal{Z}}\lambda_z\Gamma^{\top} zz^{\top}\Gamma)^{-1}}}{\langle w^{\ast}-w, \theta\rangle ^2\vee\gamma^2}.
\label{eq:rhostar}
\end{equation}

\begin{wrapfigure}{R}{0.6\textwidth}
    \begin{minipage}{0.58\textwidth}
\begin{algorithm}[H]\small
    \caption{CPEG:Confounded pure exploration with $\Gamma$}
        \label{alg-TrueGamma}
    \begin{algorithmic}[1]
    \State \textbf{Input} $\mc{Z}, \mc{W}, \Psi=\Gamma, \delta, L_\nu\ge\sigma_\nu^2, \omega$, 
    \State \textbf{Initialize:} $k=1, \mc{W}_1=\mc{W}, \zeta_1=1$
    \State \textbf{Set}
    $f(w,w',\Gamma,\lambda):=\|w-w'\|^2_{(\sum_{z\in \mathcal{Z}}\lambda_z\Gamma^{\top} zz^{\top}\Gamma)^{-1}}$
    \While{$\abr{\mc{W}_k}>1$}
    \State $\lambda_k=\arg\min_{\lambda\in\Delta(\mc{Z})}\max_{w,w'\in\mc{W}_k}f(w,w',\Gamma,\lambda)$.
    \State $\rho(\mc{W}_k)=\min_{\lambda\in\Delta(\mc{Z})}\max_{w,w'\in\mc{W}_k}f(w,w',\Gamma,\lambda)$
    \State $N_k:=\lceil{2(1+\omega)2^{2k}\rho(\mc{W}_k)L_\nu\log\rbr{4k^2\abr{\mc{W}}/\delta}}\rceil\vee r(\omega)$
    \State Pull arms in $Z_{N_k}=\text{ROUND}(\lambda_k,N_k)$ and observe $Y_{N_k}$. \State Compute $\widehat{\theta}_{\Gamma}^k=\rbr{Z_{N_k}^\top Z_{N_k}\Gamma}^{-1}Z_{N_k}^\top Y_{N_k}$
    \State $\mc{W}_{k+1}=\mc{W}_k\backslash\{w\in\mc{W}_k|\exists w'\in\mc{W}_k,\langle w'-w, {\widehat{\theta}_{\Gamma}^k\rangle }>2^{-k}\}$, $k\gets k+1$
    \EndWhile
    \State \textbf{Output:} $w\in\mc{W}_k$
    \end{algorithmic}
\end{algorithm}
\end{minipage}
\end{wrapfigure}

\begin{theorem}
\label{thm:complexity knownGamma} 
Algorithm~\ref{alg-TrueGamma} is $\delta$-PAC and terminates in at most $c(1+\omega)L_\nu\rho^{\ast}\log(1/\delta)+cr(\omega)$ samples, where $c$ hides logarithmic factors of $\Delta:=\min_w\langle w^{\ast}-w, \theta\rangle$ and $\abr{\mc{W}}$, as well as constants.
\end{theorem}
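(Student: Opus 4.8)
The plan is to prove the two claims of Theorem~\ref{thm:complexity knownGamma} separately: first $\delta$-PAC correctness via a union bound over the confidence intervals of Lemma~\ref{thm:oracle 2SLS}, and second the sample complexity bound by controlling how quickly the active set $\mc{W}_k$ shrinks. I would begin by defining the good event $\mc{E}$ on which, for every round $k$ and every pair $w,w'\in\mc{W}_k$, the estimate $\langle w'-w,\widehat{\theta}_\Gamma^k\rangle$ is within $2^{-k}$ of the true gap $\langle w'-w,\theta\rangle$. By construction, $N_k$ is chosen so that the confidence width of Lemma~\ref{thm:oracle 2SLS} applied to the direction $w-w'$ is at most $2^{-k}$ (this is precisely why $N_k$ scales with $2^{2k}\rho(\mc{W}_k)L_\nu$ and the $\log(4k^2|\mc{W}|/\delta)$ factor). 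Here I would use that $L_\nu\ge\sigma_\nu^2$ and that ROUND guarantees $\bar A(Z_{N_k},\Gamma)\succeq \tfrac{N_k}{1+\omega}A(\lambda_k,\Gamma)$, so the empirical design realizes the planned variance up to the $(1+\omega)$ factor. Summing the per-round, per-pair failure probabilities $\delta/(2k^2|\mc{W}|^2)$ over all $k\ge 1$ and all pairs, the factor $\sum_k k^{-2}<2$ and the pair count $\binom{|\mc{W}|}{2}$ give total failure probability at most $\delta$, so $\P(\mc{E})\ge 1-\delta$.

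On the event $\mc{E}$ I would show by induction that $w^\ast\in\mc{W}_k$ for all $k$ and that any surviving $w$ has a controlled gap. For the first: if $w^\ast$ were eliminated at round $k$, some $w'$ would satisfy $\langle w'-w^\ast,\widehat\theta_\Gamma^k\rangle>2^{-k}$, but on $\mc{E}$ this forces $\langle w'-w^\ast,\theta\rangle>2^{-k}-2^{-k}=0$, contradicting optimality of $w^\ast$. For the second: any $w$ with $\langle w^\ast-w,\theta\rangle>2\cdot 2^{-k}$ satisfies, on $\mc{E}$, $\langle w^\ast-w,\widehat\theta_\Gamma^k\rangle>2\cdot 2^{-k}-2^{-k}=2^{-k}$ and is therefore eliminated by the Line~10 rule (taking $w'=w^\ast$). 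Hence after round $k$ every surviving suboptimal arm has gap at most $2\cdot 2^{-k}$. Since gaps are bounded below by $\Delta>0$, once $2\cdot 2^{-k}<\Delta$, i.e. $k>\log_2(2/\Delta)$, only $w^\ast$ remains and the algorithm outputs it; this gives correctness and bounds the total number of rounds by $K:=\lceil\log_2(2/\Delta)\rceil$.

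For the sample complexity I would sum $N_k$ over $k\le K$. The key step is to bound $\rho(\mc{W}_k)$ by the instance-dependent quantity $\rho^\ast(\gamma)$ of Eq.~\eqref{eq:rhostar}. Because every surviving $w\in\mc{W}_k$ has gap at most $2\cdot 2^{-(k-1)}$, restricting the min-max design problem to $\mc{W}_k$ and pairing each $w$ against $w^\ast$ lets me relate $\rho(\mc{W}_k)$ to $\rho^\ast(2^{-k})$, and then standard manipulations (the $\vee\gamma^2$ truncation in the denominator matched against the $2^{2k}$ prefactor in $N_k$) collapse the geometric sum $\sum_k 2^{2k}\rho(\mc{W}_k)$ into $O(\rho^\ast)$ up to $\polylog(1/\Delta,|\mc{W}|)$ factors. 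Carrying the $L_\nu$, the $(1+\omega)$, and the $\log(4k^2|\mc{W}|/\delta)$ through yields the leading term $c(1+\omega)L_\nu\rho^\ast\log(1/\delta)$, while the $\vee\, r(\omega)$ in each $N_k$ contributes the additive $cr(\omega)$ term (noting $K=O(\log(1/\Delta))$ rounds each cost at least $r(\omega)$). The main obstacle I anticipate is this last step: cleanly bounding $\sum_k 2^{2k}\rho(\mc{W}_k)$ by $\rho^\ast$ requires carefully matching the adaptive radius $2^{-k}$ of the surviving set to the truncation parameter $\gamma$ in $\rho^\ast(\gamma)$ and verifying that the design restricted to $\mc{W}_k$ is never worse than the global design against $w^\ast$; the correctness union bound, by contrast, is routine once the good event is set up.
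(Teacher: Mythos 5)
Your proposal is correct in substance and follows essentially the same route as the paper's proof: a good event built from Lemma~\ref{thm:oracle 2SLS} with a union bound over rounds, the two elimination facts (the optimal arm never triggers the rule since $\langle w'-w^{\ast},\theta\rangle\le 0$; any arm with gap above $2\zeta_k$ is removed by round $k$), a round count of $O(\log(1/\Delta))$, and the crux of lower-bounding $\rho^{\ast}$ by a $1/\log(1/\Delta)$ fraction of $\sum_k 2^{2k}\rho(S_k)$ — using that survivors entering round $k$ have gap at most $4\cdot 2^{-k}$ together with the triangle inequality to pass from pairwise directions to directions against $w^{\ast}$, which is exactly the chain in Eq.~\eqref{ieq:rho*}. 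One accounting slip: you define the good event over \emph{all pairs} at per-event level $\delta/(2k^2|\mathcal{W}|^2)$, but $N_k$ in Algorithm~\ref{alg-TrueGamma} carries $\log(4k^2|\mathcal{W}|/\delta)$, which guarantees width $\zeta_k$ only at per-event level $\delta/(2k^2|\mathcal{W}|)$; at your finer level the width inflates by a factor $\sqrt{2}$, and the strict threshold $>\zeta_k$ in the elimination rule then no longer matches (so $w^{\ast}$ could in principle be eliminated). Since both of your elimination arguments use only directions of the form $w-w^{\ast}$, restricting the union bound to those $|\mathcal{W}|$ directions per round, as the paper does, repairs this at no cost. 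Finally, note that the $\vee\,\gamma^2$ truncation in $\rho^{\ast}(\gamma)$ plays no role in this theorem: with $\gamma=0$ the matching of the $2^{2k}$ prefactor against the denominators follows directly from $\langle w^{\ast}-w,\theta\rangle\le 4\zeta_k$ on $S_k$; the truncation is only needed for the estimated-$\Gamma$ variant in Appendix~\ref{app:alg with estimated Gamma}.
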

The proof of this result is in Appendix~\ref{app:complexity knownGamma}.
In the unconfounded case when $\Gamma=I$ and $\eta=0, \mathbb{E}_{t-1}[\epsilon_t|x_t] = 0$ this matches the sample complexity of ~\cite{fiez2019sequential}.  In particular, for the case where $\mc{Z} = \mc{X} = \mc{W}$, the problem further reduces to a standard multi-armed bandit, and if $\epsilon$ is 1-sub-Gaussian noise, \cite{soare2014best} shows that $\rho^{\ast} = O(\sum_{i=2}^d (\theta_{1} - \theta_i)^{-2}))$, which is the optimal sample complexity of best-arm identification for multi-armed bandits. The following lemma shows that the conditioning of $\Gamma$ can have a strong impact on the resulting sample complexity. 
\begin{lemma}\label{lem:compliance}
    For the compliance setting, we have
    $\min_{\lambda\in \Delta^d}\max_{j,j'} \|e_{j} - e_{j'}\|^2_{(\sum_{i=1}^d \lambda_i \Gamma^{\top} e_i e_i^{\top}\Gamma)^{-1}} \leq d\max_{j,j'}\|\Gamma^{-1}(e_{j} - e_{j'})\|_2^2$. Furthermore, 
    $\rho^{\ast} \leq \frac{d \sigma_{\min}^2(\Gamma)^{-1}}{\Delta_{\min}^2}$.
\end{lemma}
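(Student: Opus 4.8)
The plan is to prove both inequalities by evaluating the design objective at the \emph{uniform} distribution $\lambda^{\mathrm{unif}} = (1/d,\dots,1/d)$ over $\mathcal{Z} = \{e_1,\dots,e_d\}$, since the minimum over $\lambda \in \Delta^d$ is always bounded above by the value attained at any fixed feasible $\lambda$. The key simplification specific to the compliance case is that $\mathcal{Z}$ is the standard basis, so that $\Gamma^\top e_i e_i^\top \Gamma = (\Gamma^\top e_i)(\Gamma^\top e_i)^\top$ and, crucially, $\sum_{i=1}^d e_i e_i^\top = I_d$. Hence the uniform design collapses to a clean closed form,
\begin{align*}
  \sum_{i=1}^d \tfrac1d\, \Gamma^\top e_i e_i^\top \Gamma = \tfrac1d\, \Gamma^\top\Bigl(\sum_{i=1}^d e_i e_i^\top\Bigr)\Gamma = \tfrac1d\, \Gamma^\top\Gamma,
\end{align*}
and inverting gives $A(\lambda^{\mathrm{unif}},\Gamma)^{-1} = d(\Gamma^\top\Gamma)^{-1} = d\,\Gamma^{-1}\Gamma^{-\top}$.

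For the first inequality I would then write, for any pair $(j,j')$ with $v := e_j - e_{j'}$,
\begin{align*}
  \|v\|^2_{A(\lambda^{\mathrm{unif}},\Gamma)^{-1}} = d\, v^\top \Gamma^{-1}\Gamma^{-\top} v = d\,\|\Gamma^{-\top} v\|_2^2,
\end{align*}
and conclude by taking the maximum over $j,j'$. Since $\min_{\lambda}$ is dominated by the uniform value, this yields the stated bound $d\max_{j,j'}\|\Gamma^{-1}(e_j-e_{j'})\|_2^2$ (the transpose here is a harmless convention, since the singular-value bound used for the second part is insensitive to it).

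For the second inequality I would substitute the same uniform design into the definition of $\rho^\ast$ in Eq.~\eqref{eq:rhostar}. In the compliance setting $w^\ast = e_{j^\ast}$ and each competitor is $w = e_j$, so the denominator satisfies $\langle w^\ast - w, \theta\rangle = \theta_{j^\ast} - \theta_j \ge \Delta_{\min}$, which lower-bounds every term uniformly. For the numerator I would reuse the previous display together with the operator-norm estimate $\|\Gamma^{-\top} v\|_2^2 \le \sigma_{\max}^2(\Gamma^{-\top})\,\|v\|_2^2 = \sigma_{\min}^{-2}(\Gamma)\,\|v\|_2^2$ and the fact that $\|e_{j^\ast} - e_j\|_2^2 = 2$. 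Combining these gives $\rho^\ast \le 2d/(\sigma_{\min}^2(\Gamma)\,\Delta_{\min}^2)$, i.e.\ the claimed bound up to the absolute constant absorbed into the statement.

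There is no deep obstacle; the argument is a short computation. The two points requiring care are (i) the identity $\sum_i e_i e_i^\top = I_d$, which is precisely what makes the uniform design reduce to $\tfrac1d\Gamma^\top\Gamma$ and is special to the compliance case, and (ii) correctly relating $\sigma_{\max}(\Gamma^{-1}) = 1/\sigma_{\min}(\Gamma)$ when passing from the weighted norm to the singular-value bound. I would also emphasize that the uniform distribution is invoked only as a feasible point furnishing an upper bound on the minimax design value, so no optimality argument for $\lambda^{\mathrm{unif}}$ is needed.
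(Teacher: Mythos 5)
Your proof is correct and takes essentially the same route as the paper: the paper likewise upper bounds the minimum by plugging in the uniform design $\lambda = \tfrac{1}{d}\mathbf{1}$ (writing the resulting quadratic form coordinatewise as $\sum_{k}(\Gamma^{-1}_{k,j}-\Gamma^{-1}_{k,j'})^2/\lambda_k$ instead of via your identity $\sum_i \tfrac1d e_ie_i^\top = \tfrac1d I$, a purely cosmetic difference), and then states that the $\rho^\ast$ bound follows immediately. The two caveats you flag --- the $\Gamma^{-1}$ versus $\Gamma^{-\top}$ convention and the factor $2$ from $\|e_j-e_{j'}\|_2^2=2$ --- are looseness already present in the paper's own statement and proof, not gaps in your argument.
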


To further illustrate the impact of $\Gamma$, imagine an extreme setting where $\Gamma = (1-\epsilon)/d \mathbf{11}^{\top} + \epsilon I$ and $\epsilon\approx 0$, i.e. $\Gamma$ is a perturbation of $1/d \mathbf{11}^{\top}$. It's straightforward to show that the upper bound in the first display of Lemma~\ref{lem:compliance} is of the order $O(d\epsilon^{-2})$ (this is also a lower bound - see Appendix~\ref{sec:11t}). In particular, the upper bound on the sample complexity is of the form $d\epsilon^{-2}/\Delta_{\min}^2$. This is in sharp contrast to the linear bandit case, when $\Gamma=I$ and we are guaranteed a sample complexity of no more than $d/\Delta_{\min}^2$ samples.
To gain some intuition, regardless of the choice of $\lambda$, $\sum_{i \leq d} \lambda_i \Gamma^{\top}e_ie_i^{\top}\Gamma \approx \Gamma$. As a result, $\rho^{\ast} \rightarrow \infty$ as $\epsilon\rightarrow 0$. Intuitively in the limit, regardless of which instrument $i \leq d$ is being pulled, the resulting distribution on the treatments is uniform (the instruments are \textit{weak}). Thus, it is impossible to deconfound the measurement noise, and recover an estimate of $\theta$. This is a phenomenon which does not arise in the standard multi-armed bandit case with unconfounding. 

\begin{remark}
We also consider a setting where instead of given $\Gamma$ directly, we are given an estimate $\widehat{\Gamma}$ of $\Gamma$ based on offline data. We discuss such an adaptation of Algorithm~\ref{alg-TrueGamma} to this setting in Appendix~\ref{app:alg with estimated Gamma} and provide a sample complexity which reflects the error in $\Gamma$ (scaling with $\rho^{\ast}(\gamma)$ for $\gamma>0$). We remark that this result is subsumed by the approach of Section~\ref{sec:unknown} and so we omit it in the main text. 
\end{remark}

\textbf{Lower bound.} Due to the noise model from confounding and the dependence of the noise $\theta^{\top}\eta + \epsilon$, the instance-dependent lower bounds of~\cite{fiez2019sequential} do not immediately apply. We develop a lower bound tailored for the confounding setting \textbf{that nearly match the upper bounds} of our algorithms. What's more, our lower bound illustrates the additional difficulty that arises from confounding by an additional factor of $d^2$ compared to the standard transductive linear bandit problem in the most general setting where entries of $\eta$ are sub-Gaussian, but not necessarily independent nor bounded. Due to space limit, we defer it to \cref{sec:optimal}.

\subsection{Fully Unknown Structural Model}
\label{sec:unknown}

We now consider the setting where $\Gamma$ is fully unknown. The difficulty of this setting is that the data collection process needs to support both estimation of $\Gamma$ and $\theta$ simultaneously. 

\newcommand{\kGam}{\widehat{\Gamma}_{k}}
Our algorithm, built upon Algorithm \ref{alg-TrueGamma}, is summarized in Algorithm \ref{alg:learningGamma-main}. 
At its core, each phase of the algorithm is divided into two sub-phases, for estimating $\Gamma$ and $\theta$ respectively. 
Specifically, the second sub-phase is essentially same as Algorithm \ref{alg-TrueGamma} with $\widehat{\Gamma}_{k}$ in place of $\Gamma$ where $\widehat{\Gamma}_k$ is estimated from the first sub-phase.
The main novelty of our algorithmic design lies in the first sub-phase, which resolves the challenge of performing the optimal design for estimating $\Gamma$.
To explain this challenge, the confidence interval for P-2SLS estimators of \cref{thm:pseudo 2SLS} indicates that one should pull arms so that we control both $D_2 := \max_{w,w'}\|w-w'\|^2_{\bar A(Z_{T_2},\kGam)}$ (error from $\hth_{\mathsf{P-2SLS}}$) and $D_1 := \max_{w,w'}\|w-w'\|^2_{\bar A(Z_{T_1},\kGam)}$ (error from $\kGam$) to be below the target error $O(\zeta_k^2)$ at each phase (ignoring unimportant factors for discussion).
Controlling $D_2$ is trivial, which is done in the second sub-phase as we described above.

However, for $D_1$, a similar strategy cannot be done because the estimate $\kGam$ is computed directly by sampling arms in $Z_{T_1}$. That is, the ideal design, based on which we \textit{will} collect data points $z_1,\ldots, z_n$, requires access to the random matrix $\kGam$ that can only be computed \textit{after} sampling $z_1,\ldots,z_n$,.
This creates a cycle that seems impossible to resolve.
Such an issue, to our knowledge, has not been seen in existing work on pure exploration, and thus resolving it is our key technical contribution.

Our solution is to compute the design based on $\kGam$ from the previous phase.
We then perform a doubling trick where we double the sample size (while following the computed design) until $D_1$ becomes smaller than the target error $O(\zeta_k^2)$.
The intuition is that in later phases the estimate $\kGam$ from the previous phase will be accurate enough to ensure that the design is efficient. 
Note that this novel algorithm induces extra randomness in how many samples we end up collecting in the first sub-phase, which remains random even after conditioning on the history, unlike the second sub-phase.
This makes the analysis challenging, which we describe after the main result.

\begin{wrapfigure}{R}{0.6\textwidth}
    \begin{minipage}{0.58\textwidth}
        \begin{algorithm}[H]\small
          \caption{$\theta-\mathtt{estimator}$}
          \label{alg:elimination}
          \begin{algorithmic}
            \State \textbf{Input} $\mc{W}, \delta, \zeta, \widehat{\Gamma}, \omega,L_\nu$
            \State $\widehat{\lambda}=\arg\min_{\lambda\in\Delta(\mc{Z})}\max_{w,w'\in\mc{W}}f(w,w',\widehat{\Gamma},\lambda)$
            \State $\rho(\mc{W})=\min_{\lambda\in\Delta(\mc{Z})}\max_{w,w'\in\mc{W}}f(w,w',\widehat{\Gamma},\lambda)$
            \State $N_{2}=\ur{2(1+\omega)\zeta^{-2}\rho(\mc{W})L_\nu\log\rbr{\fr{4\abr{\mc{W}}}{\delta}}}\vee r(\omega)$
            \State get $N_{2}$ samples per design $\widehat{\lambda}$ denoted as $\cbr{Z_{2},X_{2},Y_{2}}$ \Comment{via ROUND}
            \State update $\widehat{\theta}_{\texttt{P-2SLS}}=(\widehat{\Gamma}^{\top}Z_{2}^{\top}Z_{2}\widehat{\Gamma})^{-1}\widehat{\Gamma}^{\top}Z_{2}^{\top}Y_{2}$
            \State \textbf{Output:} $\widehat{\theta}_{\texttt{P-2SLS}}$
          \end{algorithmic}
        \end{algorithm}
    \end{minipage}
\end{wrapfigure}

Our algorithm additionally employs the so-called E-optimal design to ensure that the covariance matrix of the collected data used to estimate $\Gamma$ is well-conditioned.
This conditioning is required to ensure that $\kGam$ concentrates fast enough to $\Gamma$ as shown in the analysis.
The E-optimal design is a well-known design objective in experimental design that aims to maximize the smallest singular value: $\lam^*_E := \arg \min_{\lam\in \Delta(\cZ)} \sig_\tmax(V^{-1}(\lam))$, where $V = \sum_{z\in\cZ} \lam_z z z^\T$.
We denote $\kappa_0^{-1} := \sigmax(V^{-1}(\lam^*_E)) = \sigmin^{-1}(V(\lam^*_E))$ as the smallest singular value achieved by the E-optimal design.
\looseness=-1

\begin{algorithm}[t]\small
  \caption{CPEUG: Confounded pure exploration with with \textbf{unknown} $\Gamma$}
  \label{alg:learningGamma-main}
  \begin{algorithmic}
    \State \textbf{Input} $\mc{Z}, \mc{W}, \delta, L_\nu\ge\sigma_\nu^2, L_\eta\ge\sigma_{\eta}^2, \omega, \gammin\le\lammin(\Gam), \lambda_E,\kappa_0$
    \State \textbf{Initialize:} $k=1, \mc{W}_1=\mc{W},\widehat{\Gamma}_0=\perp, \zeta_1=1$
    \State \textbf{Define} $f(w,w',\Gamma,\lambda):=\|w-w'\|^2_{(\sum_{z\in \mathcal{Z}}\Gamma^{\top}\lambda_z zz^{\top}\Gamma)^{-1}}$, 
    $M:=\fr{32L_\eta}{\gammin^2\sigma_{\min}\rbr{A(\lambda_E, I)}}\vee1$, $\dt_{\ell}:=\fr{\delta}{4\ell^2}$
    \While{$\abr{\mc{W}_k}>1$}
    \State $\widehat{\Gamma}_k=\Gamma-\mathtt{estimator}\rbr{\mc{W}_k, \widehat{\Gamma}_{k-1}, \zeta_k, \delta/k^2, \omega,\lambda_E, M, L_\eta}$ \Comment{Step 1: update $\widehat{\Gamma}$}
    \State $\widehat{\theta}_{\texttt{P-2SLS}}^k=\theta-\mathtt{estimator}\rbr{\mc{W}_k, \delta/k^2, \zeta_k, \widehat{\Gamma}_{k}, \omega,L_\nu}$ \Comment{Step 2: update $\widehat{\theta}$}
    \State $\mc{W}_{k+1}=\mc{W}_k\backslash\cbr{w\in\mc{W}_k\mid \exists w'\in\mc{W}_k,\text{s.t.},\ibr{w'-w}{\widehat{\theta}_{\texttt{P-2SLS}}^k}>\zeta_k}$ \Comment{Step 3: elimination}
    \State $k\leftarrow k+1$, $\zeta_k=2^{-k}$
    \EndWhile
    \State \textbf{Output:} $\mc{W}_k$
  \end{algorithmic}
\end{algorithm}
\begin{algorithm}[t]\small
  \caption{$\Gamma-\mathtt{estimator}$}
  \begin{algorithmic}
    \State \textbf{Input} $\mc{W}, \widehat{\Gamma}, \zeta, \delta, \omega,\lambda_E, M, L_\eta$
    \State \textbf{Define} $\mathtt{Stop}\rbr{\mc{W},Z,\Gamma,\dt}:=\max_{w,w'\in\mc{W}}\vcn{w-w'}_{\bar{A}(Z, \Gamma)^{-1}}\vcn{\theta}_2\sqrt{L_\eta\overline{\log}\rbr{Z,\delta}}$
    \State \textbf{Initialize} $\ell=1$, $N_{0,0}=0$ \Comment{doubling trick initialization}
    \If{$\widehat{\Gamma}=\perp$}
    \While{$\ell=1$ or $\mathtt{Stop}\rbr{\mc{W},Z_{0,\ell},\widehat{\Gamma}',\dt_{\ell}}>1$}
    \State get $2^{\ell-1}\rbr{r(\omega)\vee\fr{2}{\kappa_0}}$ samples denoted as $\cbr{Z_{0,\ell},X_{0,\ell},Y_{0,\ell}}$ per design $\lambda_E$ \Comment{via ROUND}
    \State Update $\widehat{\Gamma}'$ by \texttt{OLS} on $\cbr{Z_{0,\ell},X_{0,\ell}}$, $\ell\leftarrow \ell+1$ 
    \EndWhile
    \Else
    \State $\tilde{\lambda}=\arg\min_{\lambda\in\Delta(\mc{Z})}\max_{w,w'\in\mc{W}}f(w,w',\widehat{\Gamma},\lambda)$
    \State $N'=\drd{4gdM\ln\del{1+2M\rbr{d+L_z^2}+2M2gdM}+8M\ln\del{\fr{2\cd6^d}{\dt}}\vee r(\omega)}$
    \While{$\ell=1$ or $\mathtt{Stop}\rbr{\mc{W},Z_{0,\ell}\cup Z_{1,\ell},\widehat{\Gamma},\dt_{\ell}}>\zeta$}
    \State $N_{1,\ell}=2^{\ell}N'$ \Comment{doubling trick update} 
    \State get $N_{1,\ell}$ samples per $\tilde{\lambda}$ denoted as $\cbr{Z_{1,\ell},X_{1,\ell},Y_{1,\ell}}$ \Comment{via ROUND}
    \State $N_{0,\ell}=\urd{2gdM\ln\del{M\rbr{d+N_{1,\ell}+L_z^2}}+4M\ln\del{\fr{2\cd6^d}{\dt_{\ell}}}\vee r(\omega)\vee\fr{2}{\kappa_0}}$
    \State get $(N_{0,\ell}-N_{0,\ell-1})$ samples per $\lambda_E$ augmented to $\cbr{Z_{0,\ell-1},X_{0,\ell-1}}$ and get $\cbr{Z_{0,\ell},X_{0,\ell}}$
    \State Update $\widehat{\Gamma}'$ by \texttt{OLS} on $\cbr{Z_{0,\ell}\cup Z_{1,\ell}, X_{0,\ell}\cup X_{1,\ell}}$, $\ell\leftarrow \ell+1$
    \EndWhile
    \EndIf
    \State \textbf{Output:} $\widehat{\Gamma}'$
  \end{algorithmic}
\end{algorithm}

We present our analysis result \cref{thm:learningGamma} where we show that, even without knowledge of $\Gamma$, the sample complexity scales with the key problem difficulty $\rho^*$ almost matching the sample complexity of Algorithm~\ref{alg-TrueGamma} which relies on knowledge of $\Gamma$.

\begin{theorem}\label{thm:learningGamma}
  Algorithm \ref{alg:learningGamma-main} is $\delta$-PAC and terminates in at most 
  \begin{align*}
    (1+\omega)((L_{\nu}\log(1/\delta)+L_{\eta}\vcn{\theta}^2_2(d+\log(1/\delta)))\rho^*+(d+\log(1/\delta))(L_{\eta}\vcn{\theta}^2_2\rho_0+M))
  \end{align*}
  pulls, ignoring both of the additive and multiplicative logarithms of $\Delta, \abr{\mc{W}}, \rho^*, \rho_0, M$, where
  \begin{align*}
    \rho_0=\max_{w\in \mathcal{W}\setminus \{w^{\ast}\}}\|w^{\ast}-w\|^2_{(\sum_{z\in \mathcal{Z}}\lambda_{E,z}\Gamma^{\top} zz^{\top}\Gamma)^{-1}},\text{ and }
    M=\fr{32L_\eta}{\gammin^2\sigma_{\min}\rbr{A(\lambda_E, I)}}\vee1.
  \end{align*}
  Note that $\rho_0$ does not get hurt by $\langle w^{\ast}-w, \theta\rangle$, ($\rho^*$ does). It comes from the fact that in the first phase, we initialize that algorithm with E-optimal design.
\end{theorem}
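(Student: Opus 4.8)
The plan is to run the elimination-style analysis behind \cref{thm:complexity knownGamma}, but layered on top of the two-term \texttt{P-2SLS} confidence interval of \cref{thm:pseudo 2SLS}, and then to separately charge the extra samples and extra randomness introduced by the first sub-phase. I would begin by fixing a \emph{good event} $\mathcal{E}$ on which every confidence bound invoked by the algorithm holds simultaneously: instantiate \cref{thm:pseudo 2SLS} at phase $k$ with confidence $\delta/k^2$ and, inside the doubling loop of the $\Gamma$-estimator, at each iteration $\ell$ with confidence $\delta_\ell=\delta/(4\ell^2)$, so that a union bound over $k$ and $\ell$ using $\sum_k k^{-2}=\pi^2/6$ gives $\mathbb{P}(\mathcal{E}^c)\le\delta$. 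The crucial structural point is that the two terms of \cref{thm:pseudo 2SLS} are controlled by the two sub-phases \emph{separately}: the $\widehat\Gamma$-error term is exactly the $\mathtt{Stop}$ functional, which the $\Gamma$-estimator drives below $\zeta_k$, while the $\widehat\theta$-error term is driven below $\zeta_k$ by the choice of $N_2$ in the $\theta$-estimator (using $L_\nu\ge\sigma_\nu^2$ from \cref{lem:subGaussian} and the \texttt{ROUND} guarantee $\bar A(Z_2,\widehat\Gamma_k)\succeq\frac{N_2}{1+\omega}A(\widehat\lambda,\widehat\Gamma_k)$). Hence on $\mathcal{E}$ one has $|\langle w-w',\widehat\theta^k_{\texttt{P-2SLS}}-\theta\rangle|\lesssim\zeta_k$ for all surviving $w,w'$, and the usual two-sided elimination argument applies: since $\langle w'-w^*,\theta\rangle\le0$ the optimum $w^*$ is never eliminated, while any $w$ with $\langle w^*-w,\theta\rangle>2\zeta_k$ is removed by the end of phase $k$. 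Once $2\zeta_k<\Delta_{\min}$ only $w^*$ remains, which yields the $\delta$-PAC property and bounds the number of phases by $K=O(\log(1/\Delta_{\min}))$, a factor the theorem suppresses.

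For the second sub-phase I would reproduce the accounting of \cref{thm:complexity knownGamma}. Because an arm surviving into $\mathcal{W}_k$ was not eliminated at phase $k-1$, its gap obeys $\Delta_w\le2\zeta_{k-1}=4\cdot2^{-k}$; plugging the minimizer $\lambda^*$ of \eqref{eq:rhostar} into the design value and using a triangle inequality over pairs gives $2^{2k}\rho(\mathcal{W}_k;\Gamma)\le64\rho^*$, where $\rho(\mathcal{W}_k;\Gamma)$ is the design value computed with the \emph{true} $\Gamma$. The remaining ingredient is a perturbation step showing that once $\widehat\Gamma_k$ is accurate enough — which $\mathtt{Stop}$ guarantees, as it controls precisely the error directions $w^\top(\widehat\Gamma_k^{-1}\Gamma-I)\theta$ flagged after \cref{thm:pseudo 2SLS} — the design value computed with $\widehat\Gamma_k$ satisfies $\rho(\mathcal{W}_k;\widehat\Gamma_k)\le2\rho(\mathcal{W}_k;\Gamma)$; here I would use $A(\lambda,\widehat\Gamma)^{-1}=\widehat\Gamma^{-1}V(\lambda)^{-1}\widehat\Gamma^{-\top}$ so that the gap reduces to controlling $\|\widehat\Gamma^{-\top}-\Gamma^{-\top}\|$ via $\sigma_{\min}(\Gamma)$. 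Summing $N_2^{(k)}\lesssim(1+\omega)2^{2k}\rho(\mathcal{W}_k;\widehat\Gamma_k)L_\nu\log(|\mathcal{W}|/\delta_k)$ over the $K$ phases and absorbing $\log k$, $\log|\mathcal{W}|$, and $K$ produces the leading term $(1+\omega)L_\nu\log(1/\delta)\rho^*$.

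The first sub-phase is analyzed analogously but against the $\mathtt{Stop}$ target rather than the $\widehat\theta$-term. With the design $\widetilde\lambda$ (computed from the previous phase's $\widehat\Gamma$) and $N$ samples, \texttt{ROUND} gives $\max_{w,w'}\|w-w'\|^2_{\bar A(Z,\widehat\Gamma)^{-1}}\lesssim\rho(\mathcal{W}_k;\widehat\Gamma_k)/N$, and $\overline{\log}(Z,\delta_\ell)=O(d+\log(1/\delta_\ell))$ provided the design matrix is well conditioned, so $\mathtt{Stop}\le\zeta_k$ requires $N\gtrsim2^{2k}\rho(\mathcal{W}_k;\widehat\Gamma_k)\|\theta\|_2^2L_\eta(d+\log(1/\delta_\ell))\lesssim\rho^*\|\theta\|_2^2L_\eta(d+\log(1/\delta))$ samples, giving the term $(1+\omega)L_\eta\|\theta\|_2^2(d+\log(1/\delta))\rho^*$ after summation. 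The conditioning hypothesis is exactly what forces the algorithm to interleave E-optimal samples: $\overline{\log}$ blows up when $\sigma_{\min}(Z^\top Z)$ is small, while the E-optimal design guarantees $\sigma_{\min}(Z_{0,\ell}^\top Z_{0,\ell})\gtrsim\kappa_0N_{0,\ell}$, which keeps $\overline{\log}$ logarithmic and simultaneously makes the \texttt{OLS} estimate $\widehat\Gamma'$ concentrate through \cref{thm:ls_cb}. The $\widehat\Gamma=\perp$ branch and the conditioning samples do not benefit from the gap-adaptive design, so they are charged instead to the E-optimal value $\rho_0$ and to the burn-in constant $M=32L_\eta/(\gamma_{\min}^2\sigma_{\min}(A(\lambda_E,I)))$, producing the additive $(1+\omega)(d+\log(1/\delta))(L_\eta\|\theta\|_2^2\rho_0+M)$ term; this also explains why $\rho_0$ appears without the gap denominator that hurts $\rho^*$.

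I expect the main obstacle to be the data-dependent stopping of the doubling loop in the $\Gamma$-estimator: the number of samples it draws is random even after conditioning on the history, since $\widehat\Gamma'$ — and hence the $\mathtt{Stop}$ test — is recomputed from the very samples being collected. Two points need care. For \emph{validity}, the per-iteration allocation $\delta_\ell=\delta/(4\ell^2)$ combined with a union bound over all possible values of $\ell$ makes every invoked confidence interval hold simultaneously, so the analysis may condition on the random terminal $\ell$ without extra failure probability. For the \emph{sample bound}, I would show that on $\mathcal{E}$ the $\mathtt{Stop}$ functional is, up to lower-order fluctuations, a deterministic and monotonically decreasing function of the sample count — because $\bar A(Z,\widehat\Gamma)$ concentrates around $N\,A(\widetilde\lambda,\Gamma)$ and $\overline{\log}$ is already controlled — so once $N$ exceeds the deterministic threshold $N^\ast$ computed above, the loop must terminate, and the doubling overshoots $N^\ast$ by at most a factor of two, absorbed into constants. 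Combining the per-phase counts of both sub-phases over the $K=O(\log(1/\Delta_{\min}))$ phases and collecting the leading and additive terms then yields the claimed complexity.
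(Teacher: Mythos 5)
You have the right scaffolding --- per-$(k,\ell)$ good events with $\delta_\ell=\delta/(4\ell^2)$ and a union bound, the standard elimination argument for $\delta$-PAC, E-optimal interleaving to keep $\overline{\log}$ controlled, and charging the first-phase branch and burn-in to $\rho_0$ and $M$ --- but the central step of your sample-complexity argument fails. You claim the $\mathtt{Stop}$ condition guarantees $\rho(\mathcal{W}_k;\widehat{\Gamma}_k)\le 2\rho(\mathcal{W}_k;\Gamma)$, and likewise that the first sub-phase terminates after $N\gtrsim 2^{2k}\rho(\mathcal{W}_k;\widehat{\Gamma}_k)\|\theta\|_2^2L_\eta(d+\log(1/\delta))\lesssim\rho^*\cdots$ samples. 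But $\mathtt{Stop}$ only certifies that the bias confidence width $\max_{w,w'}\|w-w'\|_{\bar{A}(Z,\widehat{\Gamma})^{-1}}\|\theta\|_2\sqrt{L_\eta\overline{\log}}$ is below $\zeta_k$; since $\bar{A}(Z,\widehat{\Gamma})^{-1}$ shrinks like $1/N$ under \emph{any} design, this can be satisfied with an arbitrarily inaccurate $\widehat{\Gamma}_k$ simply by taking $N$ large, so it does not certify multiplicative comparability of the quadratic forms $A(\lambda,\widehat{\Gamma})^{-1}$ and $A(\lambda,\Gamma)^{-1}$ under a \emph{new} design $\lambda$ for the next phase. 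Your fallback --- controlling $\|\widehat{\Gamma}^{-\top}-\Gamma^{-\top}\|$ in operator norm via $\sigma_{\min}(\Gamma)$ --- would need the absolute estimation error to be small relative to the unknown scale of $\rho(\mathcal{W}_k;\Gamma)$, which introduces condition-number and $L_z$ factors absent from the theorem; the stated burn-in of order $M(d+\log(1/\delta))$ is not enough for that route. A related omission: the design $\tilde\lambda_k$ driving the $\Gamma$-estimation at phase $k$ is computed from $\widehat{\Gamma}_{k-1}$, so its quality depends on the error of $\widehat{\Gamma}_{k-1}$, which in turn depends on the design of phase $k-1$, and so on back to phase $1$; nothing in your proposal breaks this chain.

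The paper closes exactly this hole with a device your proposal lacks: a cross-phase error-propagation recursion. Using \cref{lem:approx any Gamma} together with \cref{lem:upperbound Gamma inv_hat}, \cref{lem:upperbound Gamma inv_hat 2}, and \cref{lem:trueGamma2approx} (where the E-optimal fraction $\alpha_{k,\ell}$ of the mixed design supplies the needed $\sigma_{\min}$ lower bound), the realized design value at phase $k$ is bounded by a constant times $\max_{w,w'}\|w-w'\|^2_{A(\lambda^*_k,\Gamma)^{-1}}$ plus a \emph{relative} error term that is converted --- through the \emph{previous} phase's stopping condition --- into a multiple of $D_{k-1}:=N_{k-1,0,L_{k-1}}+N_{k-1,1,L_{k-1}}$. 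This yields a recursion of the form $D_k/\widetilde{\log}_{k,L_k}\le\max\bigl\{M\ln(dM),\,576\,\zeta_k^{-2}\|\theta\|_2^2L_\eta\max_{w,w'}\|w-w'\|^2_{A(\lambda^*_k,\Gamma)^{-1}}+(72/g)\,D_{k-1}/\widetilde{\log}_{k-1,L_{k-1}}\bigr\}$, and choosing $g$ so that $72/g\le 1/2$ makes the sum over phases telescope, with $D_1$ bounded by the E-optimal value (whence $\rho_0$ without gap denominators). The same charging reappears in the second sub-phase: the paper bounds $N_{k,2}\lesssim(1+\omega)\zeta_k^{-2}L_\nu\rho^*\log(\cdot)+(1+\omega)D_k$, i.e.\ part of the $\theta$-estimation cost is charged to the first sub-phase's count, rather than your pure $\rho^*$ term. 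Without this recursion (or an equivalent mechanism for certifying designs computed from the previous phase's estimate), neither sub-phase's accounting goes through, so the proposal has a genuine gap at its core step even though the surrounding structure matches the paper.
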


The challenge of the analysis can be summarized in two-fold.
First, since the concentration result in \cref{thm:pseudo 2SLS} is w.r.t. $\kGam$, we need to analyze how the random matrix $\kGam$ concentrates around $\Gam$ and how this impacts the sample complexity.
For this, we develop a novel concentration inequality that relates the confidence width involving $\hGam$ from \cref{thm:pseudo 2SLS} with the same quantity involving $\Gam$ in place of $\hGam$.
Second, our algorithm creates a long-range error propagation, which is highly nontrivial to analyze.
To see this, the quality of $\hGam_k$ is affected by the design objective function $\max_{w,w'} f(w,w',\hGam_{k-1},\lam)$, which depends on the error of the estimate $\hGam_{k-1}$ from the previous phase.
This error is, in turn, affected by the error of $\hGam_{k-2}$ by the same mechanism.
This is repeated all the way back to the first phase.
Thus, any abnormal behavior from the first iteration will have a cumulative impact to even the end.
In our analysis, we successfully analyze how the error is propagated from the previous iterations, which forms a complicated recursion. 
Resolving this recursion is our key novelty in the analysis.

\begin{remark} 
Our algorithm requires knowledge of a lower bound $\gammin$ of $\lammin(\Gam)$. The knowledge of $\gammin$ is for simplicity only as one can obtain such a lower bound that is at least half of the true value $\lammin(\Gam)$ via an efficient sampling procedure that we describe in \cref{sec:estimating-lammin-Gam}.
\end{remark}

\textbf{Experiments.} We provide experiments for the instance of Section~\ref{sec:motivating} in the Appendix~\ref{sec:experiments}. The experiments show that our approach is more sample efficient than natural passive baselines (e.g. A/B testing), or naively applying existing Pure-Exploration linear bandit methods and performs similarly to the oracle complexity.
\section{Conclusion}
This work introduces the \texttt{CPET-LB} problem in which the learning protocol is characterized by a linear structural equation model governed by parameters $\Gamma$ and $\theta$. We provide a general solution that simultaneously estimates the structural model while optimally designing to learn the best-arm.
The key ideas behind our approach are based on linear experimental design techniques, an instrumental variable estimator whose variance can be controlled by the design, and novel finite-time confidence intervals on this estimator. This paper presents a number of directions for future work including considering situations where the $d_z\neq d_x$, analysis to improve the dependence on the underlying noise variance, and the pursuit of a tight information-theoretic instance-dependent lower-bound.
We hope that this line of work motivates increased discussion of the real impact of confounding on applicability of adaptive experimentation.

\section*{Acknowledgements}
Kwang-Sung Jun and Yao Zhao were supported in part by the National Science Foundation under grant CCF-2327013.


\bibliographystyle{abbrvnat_lastname_first}
\bibliography{neurips_ref}


\newpage
\appendix

\part{Appendix} 

\parttoc 



\section{Broader Impacts}
This work is algorithmic and not tied to a particular application that would have immediate negative impact.

\section{Related Works}
\label{sec:relatedworks}
Our work is at the intersection of several parallel tracks of literature, pure exploration linear bandits, causal bandits, and econometrics. 
The most relevant work on pure exploration in linear bandits is the \texttt{RAGE} algorithm of \cite{fiez2019sequential} which builds upon the early work of \cite{soare2014best} which proposed experimental design combined with a phased elimination algorithm. \texttt{RAGE} is nearly instance optimal for linear bandits in the non-confounded setting - that is it has a sample complexity which is within constant factors of the instance optimal lower bounds. Extensions of \texttt{RAGE} to various noise models including logistic and heteroskedastic noise have been considered \citep{weltz2023experimental, jun2021improved}. Other algorithms for pure exploration linear bandits have been proposed - and we leave it for future work to extend the ideas of this paper to those settings~\cite{li2023optimal, degenne2020gamification}.

Confounding in bandits was first considered in the regret minimization setting by the works of \cite{bareinboim2015bandits}. The former work introduces the Multi-armed bandit with unobserved confounders (MABUC) problem. Similar to this work, they demonstrate that MAB techniques may be sub-optimal when latent unobserved confounders are present. Under a structural causal model formulation, they assume that they have additional to additional information in the form of ``intuition'' which can be observed at each time. The intuition captures a known distribution over the reward given latent factors and rewards. Based on this notion, they measure regret relative to the reward observed relative to playing in favor or against intuition.  They empirically demonstrate traditional bandit algorithms can have linear regret in this setting and provide an algorithm that effectively employs observed intuition. 

A few works have considered the specific problem of compliance. The early work of \cite{kallus2018instrument} assumes there is an additional unobserved latent class at each time that determines confounding. They provide novel notions of regret, relative to the instrument with the highest reward ($\arg\max Z^{\top}\Gamma^{\top}\theta$ in our notation), the highest treatment ($\arg\max_w w^{\top}\theta$), regret relative to the best latent class at each time, and regret on the set of ``compliers''. They discuss the suitability of these various notions of regret, and discuss when sublinear regret is possible. 
We remark that their approach is similar to ours in the sense that they assume a form of homogeneous effects across the population, and use an estimate of $\Gamma$. 
Recently \cite{kveton2023non} also consider the problem of compliance, however they don't take explicit non-confounding into account and assume an explicit parametric model that determines the non-compliance. This is analogous to the Heckman selection model considered in econometrics~\cite{greene2003econometric}. 

The recent works of \cite{della2023online, zhang2022causal, gong2022dual} considered an online setting where at each time they observe a set $\{(x_t, z_t)\}$ where $x_t$ is the action of interest and $z_t$ is an associated instrument. If action $I_t$ is selected, the reward observed is $y_t = x_{I_t}^{\top}\theta_t + \epsilon_t$, where $x_t$ may be endogeneous. Similar to the standard linear bandit setting~\cite{abbasi2011improved, lattimore2020bandit}, the goal is to minimize regret relative to the best action at each time. We remark that this setting is very different from ours. Effectively, we are choosing which instrument to select at each time to learn the best-performing treatment - in particular we can't choose a particular intervention. In their setting, they are choosing an intervention at each time and using the instrument purely for de-confounding the result. Experimental design for instruments to have more effective estimation has been considered by \cite{chandak2023adaptive}.

In the causal bandit problem, an underlying causal graph 
between a set of interventions and a reward value is assumed. 
Actions correspond to intervening (i.e. a ``do'' operation~\cite{pearl2009causality}) at one or more specific nodes in the causal graph and then observing the corresponding value at the reward node. 
Causal bandits have been studied extensively in the regret setting~\cite{lattimore2016causal, lu2020regret, bilodeau2022adaptively} and the pure exploration setting~\cite{sen2017identifying}. Though past works have allowed for unobserved confounders in the graph e.g.~\cite{malek2023additive}, their goal is to learn the best performing intervention, which in our setting would be  $\arg\max_{z\in \mathcal{Z}} z^{\top}\Gamma\theta$ instead of $w^{\ast}$.

Encouragement designs have been considered in many applications in online and offline settings. One of the earliest works on encouragement designs is \cite{bradlow1998encouragement}, which considers the problem of using encouragements to determine the impact of coupons at a grocery store. More recent applications include ~\cite{bakshy2014designing, mummalaneni2022producer, spotify2023} all in the context of online services and treatments that are required to be served to all users. Most of these works consider a small number of treatments and a heterogeneous treatment effect - hence are interested in LATE estimator.
As far as we are aware, we are the only work that considers adaptive encouragement design in the context of the model given in Equation~\ref{eq:sem} and for multiple treatments.

\section{Illustrative Example}
\label{sec:illustrative}
We now present an illustrative experiment that highlights the challenges of endogenous noise and the insufficiency of standard experimentation approaches used in the absence of confounding.

\noindent\textbf{Instance Definition.}
Toward connecting back to membership example in \cref{sec:introduction}, consider that a service has $d$ membership options given by the set $\mathcal{A}=\{1,\dots, d\}$. Let the set $\mc{Z}=\cbr{e_1,\cdots,e_d}$ represent encouragements (incentives or advertisements) for the corresponding membership options given by $\mc{W}=\mc{X}=\cbr{e_1,\cdots,e_d}$. We consider a location model that assumes each user $t\in \mathbb{N}$ arriving online has an underlying unobserved one-dimensional preference $u_t \sim \mathcal{N}(0,\sigma^2_u)$. If an algorithm presents the user with encouragement $z_{I_t} = e_{I_t}$ for $I_t\in \mathcal{A}$, then the user selects into the membership level given by $J_t = \min_{j\in \mathcal{A}}|I_t + u_t - j|$ so that $x_t=e_{J_t}$. For a visual depiction, see Figure~\ref{fig:ja_visual}. This process captures a user being more likely to opt-in to membership levels that are closer to the encouragement that they were presented. The outcome is then given by $y_t = x_t^{\top}\theta + u_t$. This problem instance is a specific compliance instance. For this experiment, we take $d=6$, let $\theta=\begin{bmatrix}1& -0.95&0&0.45 &0.95& 0.99\end{bmatrix}$ and $\sigma_u^2=0.35$. Observe that the optimal evaluation vector is $w^{\ast}=e_1=\argmax_{w\in \mathcal{W}}w^{\top}\theta$.

We simulate a UCB strategy which maintains estimates of the average reward of each of the possible $d$ incentives, namely $\widehat{\mu}_{i,t} = \sum_{s=1}^t \mathbf{1}\{z_t = e_i\} y_t$ and then pulls the one with the highest upper confidence bound. This models current practice of using a bandit algorithm to select which incentive to show a user. Our results averaged over 100 simulations are in Figure~\ref{fig:exp_fail_2}. At each round we estimate the average reward of each level using an OLS estimator, i.e. $\widehat{\theta}^{OLS}_{i,t} = \sum_{s=1}^t \mathbf{1}\{x_t = e_i\} y_t / \sum_{s=1}^t \mathbf{1}\{x_t = e_i\}$, and check whether it matches the true value (denoted as UCB-OLS). We also consider an instrumental variable-estimator (see the next Section) which incorporates knowledge of $\Gamma$ similar to 2SLS to deconfound our estimate (UCB-IV). 
As the plot demonstrates, UCB-OLS completely fails to identify $\theta_1 = \arg\max_{i\in d} \theta_i$ (this line is hard to see it is at 0) due to a biased estimate, whereas UCB-IV does better. However, UCB-IV methods seem to have a constant probability of error. To see why, note that the expected reward from pulling $z = e_{i}$ is $e_i^{\top}\Gamma\theta$. These values are plotted in orange in Figure~\ref{fig:fail_1}. In particular, with some constant probability, UCB runs on the empirical rewards from pulling $z$'s zeroes in on arm 6, and as a result fails to give enough samples to learn that arm 1 is indeed the best. In contrast, our proposed method CPEG, Algorithm~\ref{alg-TrueGamma} manages to find the best arm with significantly higher probability (the algorithm was run with $\delta = .1$) in the given time horizon.

\begin{figure}[t]
  \centering
  \subfloat[Depiction of the problem instance.]{
     \begin{tikzpicture}[scale=0.8]
        \draw[->] (0,0) -- (7,0) node[right] {};
        
        \foreach \x in {1,...,6} {
            \draw (\x,-0.3) -- (\x,-1) node[below] {$\x$};
        }
        
        \fill[red] (4,0) circle (3pt) node[above right] {$I_t$};
        
        \fill[black] (2,0) circle (3pt);
        \fill[black] (3,0) circle (3pt);
        \fill[black] (5,0) circle (3pt);
        \fill[black] (6,0) circle (3pt);

        \fill[orange] (1,0) circle (3pt) node[above left] {$J_t$};
        \fill[blue] (1.49,0) circle (3pt) node[above right] {$I_t+u_t$};
        \draw[blue, dashed] (4,0) to[out=90,in=90] (1.4,0);
        \draw[orange] (1.4,0) to[out=0,in=0] (1,0);
        
        \begin{scope}[shift={(0,1)}]
            \begin{axis}[
                no markers,
                domain=-2:2,
                samples=100,
                axis lines=none,
                axis line style={<->},
                height=3cm,
                width=9cm,
                ]
                \addplot [thick, blue] {exp(-x^2)};
                \node[above, blue, font=\normalsize] at (axis cs: 0, 0.4) {$u_t$};
            \end{axis}
        \end{scope}
    \end{tikzpicture}
\label{fig:ja_visual}}
  \hfill
  \subfloat[Heat-map of $\Gamma$.]{\includegraphics[width=0.48\textwidth]{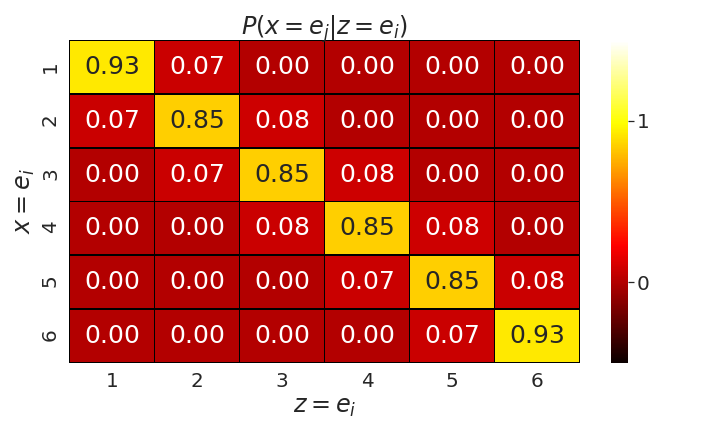}\label{fig:exp_fail_1}}
  
  \subfloat[$w^{\top}\theta$ and $w^{\top}\Gamma\theta$ for $w\in \{e_1,\dots, e_d\}$ ]{\includegraphics[width=0.48\textwidth]{Figures/Values.png}\label{fig:fail_1}}
  \hfill
  \subfloat[$\mathbb{P}(\widehat{w}_t=w^{\ast})$ for the instance]{\includegraphics[width=0.48\textwidth]{Figures/plot1.png}\label{fig:exp_fail_2}}
  \caption{(a) A visual depiction of the problem instance from Section~\ref{sec:illustrative}. The user is presented with encouragement $I_t\in \mathcal{A}$ and the user choice is given by $J_t$ where $J_t = \min_{j\in \mathcal{A}}|I_t + u_t - j|$ and $u_t \sim \mathcal{N}(0,\sigma^2_u)$. b) A heat-map showing the structural parameter $\Gamma$ for the problem instance from Section~\ref{sec:illustrative}. (c) A bar chart showing $\mathbb{E}[y|x=w]=w^{\top}\theta$ and $\mathbb{E}[y|z=w]=z^{\top}\Gamma\theta$ for all $w\in \mathcal{W}$. This chart shows that the optimal evaluation vector is $w^{\ast}=e_1=\argmax_{w\in \mathcal{W}}\mathbb{E}[y|x=w]$, while $e_6=\argmax_{w\in \mathcal{W}}\mathbb{E}[y|z=w]$ and consequently estimation based on this quantity is problematic. (d)  The probability of identifying $w^{\ast}=e_1$ for a collection of algorithms on the \texttt{CPET-LB} instance described in Section~\ref{sec:illustrative}. Standard optimistic sampling approaches in combination with an ordinary least squares estimator leads to faulty inferences. Given an instrumental variable estimator, these experimental designs eventually give high probability identification but do so inefficiently compared to our proposed approach (see Section~\ref{sec:algorithms}). }
  \label{fig:intro_fig_appendix}
\end{figure}

\section{Standard 2SLS estimator}
\label{sec:true2sls}
Consider $\Psi = \widehat{\Gamma}_{2SLS} = (Z_T^{\top}Z_T)^{-1}Z_T^{\top}X_T$. In this setting, we recover the standard two-stage-least-squares (\texttt{2SLS}) estimator, 
    \begin{align*}
        \widehat{\theta}_{\texttt{2SLS}} = (X_T^{\top}Z_T(Z_T^{\top}Z_T)^{-1}Z_T^{\top}X_T)^{-1}X_T^{\top}(Z_T^{\top}Z_T)^{-1}Z_T^{\top}Y_T
        =(Z_T^{\top}X_T)^{-1}Z_T^{\top}Y_T.
    \end{align*}
    Note that the  \texttt{2SLS} estimator is a biased, but consistent estimator of the parameter $\theta$~\cite{angrist, greene2003econometric}.

Note that in particular, the asymptotic variance of \texttt{2SLS} is known to be $\sigma_{\epsilon}^2\|w\|_{(\widehat{\Gamma}_{\texttt{2SLS}}^{\top}Z_{T}^{\top}Z_{T}\widehat{\Gamma}_{\texttt{2SLS}} )^{-1}}$\cite{greene2003econometric}. Recent work by \cite{della2023online} provides a confidence interval of the form $|w^{\top}(\widehat{\theta}_{\texttt{2SLS}} - \theta)| \leq O(d\sigma_{\epsilon}^2\|w\|_{(\widehat{\Gamma}_{\texttt{2SLS}}^{\top}Z_{T}^{\top}Z_{T}\widehat{\Gamma}_{\texttt{2SLS}} )^{-1}}\sqrt{\log(T/\delta)})$. However,  it's unclear how to use the form of their confidence interval directly for experimental design due to the dependence of $\hat{\Gamma}_{2SLS}$ on the random quantity $X$. In addition, their work is not sufficiently general to handle the general forms of noise that we consider in \cref{lem:subGaussian}. 

\section{A non-interactive lower bound}
\label{sec:optimal}
Due to the noise model from confounding and the dependence of the noise $\theta^{\top}\eta + \epsilon$, the instance-dependent lower bounds of~\cite{fiez2019sequential} do not immediately apply. In this section, we develop a lower bound tailored for the confounding setting.

Toward characterizing the optimal sample complexity, we develop a lower bound for a specific non-adaptive algorithm $\mathcal{A}$ that has access to the matrix $\Gamma$ governing the structural equation model. In particular, suppose that the non-adaptive algorithm $\mathcal{A}$ is allowed to select a sequence of $T$ measurements $\{z_{I_1},\dots z_{I_t} \dots, z_{I_T}\}$ to query prior to collecting any observations, where $I_t$ represents the index of the vector $z\in \mathcal{Z}$ chosen at time $t\in \{1, \dots, T\}$. Then, given the observations $\{y_{1}, \dots, \dots y_{t}, \dots, y_{T}\}$ generated by the environment, a candidate optimal vector $\widehat{w}\in \mathcal{W}$ is returned by the algorithm. We are interested in the necessary number of observations $T$ that must be collected in order to ensure $\mathbb{P}(\widehat{w}\neq w^{\ast})\leq \delta$ for some $\delta\in (0, 1)$. Thus, it is natural that the optimal non-adaptive algorithm $\mathcal{A}$ using the estimator $\widehat{\theta}_{\texttt{oracle}}$ forms a recommendation rule such that $\widehat{w}=\argmax_{w\in \mathcal{W}}w^{\top}\widehat{\theta}_{\texttt{oracle}}$. 
We now state our lower bound result with respect to the non-adaptive oracle algorithm. 
\begin{theorem}[Non-Adaptive Oracle Lower Bound]
Consider a problem instance characterized by $\mathcal{W}\subset \mathbb{R}^d, \mathcal{Z}\subset \mathbb{R}^d$, $\Gamma \in \mathbb{R}^{d\times d}$, and $\theta \in \mathbb{R}^d$. Assume $\Gamma$ is known, $\theta$ is unknown, and the noise process is jointly Gaussian and defined by $\gamma := \begin{bmatrix}\eta & \epsilon \end{bmatrix}\sim \mathcal{N}(0, \Sigma)$ where $\Sigma\in \mathbb{R}^{(d+1)\times (d+1)}$ is an arbitrary correlation matrix. For $\delta \in (0, 0.05]$, if the non-adaptive oracle algorithm acquires $T\leq \sigma^2\rho^{\ast}\log(1/\delta)/2$ samples on the problem instance where $\sigma^2:=v^{\top}\Sigma v$ and $v:=\begin{bmatrix}\theta & 1\end{bmatrix}\in \mathbb{R}^{d+1}$, then $\mathbb{P}(\widehat{w}\neq w^{\ast})\geq \delta$.
\label{thm:lower_bnd}
\end{theorem}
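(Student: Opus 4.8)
The plan is to exploit that, for a non-adaptive algorithm, the query matrix $Z_T$ (equivalently the empirical allocation $\lambda_T\in\Delta(\mathcal{Z})$ with $\lambda_{T,z}=\frac1T\sum_{t=1}^T\mathbf{1}\{z_{I_t}=z\}$) is fixed before any observation is seen, so the oracle recommendation rule produces an error whose probability can be computed \emph{exactly}, without any change-of-measure argument. First I would record the error event: since $\widehat{w}=\argmax_{w}w^\top\widehat{\theta}_{\texttt{oracle}}$, we have $\widehat{w}\neq w^{\ast}$ whenever there is some $w\neq w^{\ast}$ with $\langle w^{\ast}-w,\widehat{\theta}_{\texttt{oracle}}\rangle\le 0$, and hence for any single fixed competitor $w$ the bound $\mathbb{P}(\widehat{w}\neq w^{\ast})\ge \mathbb{P}(\langle w^{\ast}-w,\widehat{\theta}_{\texttt{oracle}}\rangle\le 0)$ holds. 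This reduces the problem to a one-dimensional Gaussian tail for a well-chosen $w$.

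Next I would pin down the law of $\langle w^{\ast}-w,\widehat{\theta}_{\texttt{oracle}}\rangle$. Using the reduced form $Y_T=Z_T\Gamma\theta+\nu_T$ from Eq.~\eqref{eq:combined_struct}, the oracle estimator is unbiased with $\widehat{\theta}_{\texttt{oracle}}-\theta=(Z_T^\top Z_T\Gamma)^{-1}Z_T^\top\nu_T$, where $\nu_t=\eta_t^\top\theta+\epsilon_t=v^\top\gamma_t$ and $v=[\theta,1]$. Because $\gamma_t\sim\mathcal{N}(0,\Sigma)$, each $\nu_t$ is exactly $\mathcal{N}(0,\sigma^2)$ with $\sigma^2=v^\top\Sigma v$, so the same covariance computation underlying Lemma~\ref{thm:oracle 2SLS} gives $\langle w^{\ast}-w,\widehat{\theta}_{\texttt{oracle}}\rangle\sim\mathcal{N}(\Delta_w,\sigma_w^2)$ with $\Delta_w:=\langle w^{\ast}-w,\theta\rangle>0$ and $\sigma_w^2:=\sigma^2\|w^{\ast}-w\|^2_{\bar{A}(Z_T,\Gamma)^{-1}}$. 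The single-competitor error is then exactly $\Phi(-\Delta_w/\sigma_w)$, with $\Phi$ the standard normal CDF. Since $\bar{A}(Z_T,\Gamma)=T\,A(\lambda_T,\Gamma)$, the minimax definition of $\rho^{\ast}$ forces $\max_{w\neq w^{\ast}}\|w^{\ast}-w\|^2_{A(\lambda_T,\Gamma)^{-1}}/\Delta_w^2\ge\rho^{\ast}$ for the fixed design $\lambda_T$; choosing $w$ to attain this maximum yields $\Delta_w^2/\sigma_w^2=\Delta_w^2/(\sigma^2\|w^{\ast}-w\|^2_{\bar{A}(Z_T,\Gamma)^{-1}})\le T/(\sigma^2\rho^{\ast})$.

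Finally I would substitute the budget: if $T\le\sigma^2\rho^{\ast}\log(1/\delta)/2$ then $\Delta_w/\sigma_w\le\sqrt{\log(1/\delta)/2}$, so $\mathbb{P}(\widehat{w}\neq w^{\ast})\ge\Phi(-\sqrt{\log(1/\delta)/2})$, and it remains to verify $1-\Phi(u)\ge e^{-2u^2}$ at $u=\sqrt{\log(1/\delta)/2}$ (equivalently $\delta=e^{-2u^2}$). This is precisely where the restriction $\delta\le 0.05$ enters: applying the standard lower tail bound $1-\Phi(u)\ge\frac{1}{\sqrt{2\pi}}\frac{u}{1+u^2}e^{-u^2/2}$ and noting $u^2=\frac12\log(1/\delta)\ge\frac12\log 20$, one checks the inequality holds for all such $u$, since the left side decays like a polynomial times $e^{-u^2/2}$ while the right side decays like $e^{-2u^2}$. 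This gives $\mathbb{P}(\widehat{w}\neq w^{\ast})\ge\delta$. I expect the two delicate points to be (i) the passage through the minimax characterization of $\rho^{\ast}$, which requires arguing that no non-adaptive allocation $\lambda_T$ can simultaneously beat $\rho^{\ast}$ across all competitors, and (ii) the explicit Gaussian-quantile calculation converting the exact quantile $\Phi^{-1}(1-\delta)$ into the clean surrogate $\sqrt{\log(1/\delta)/2}$ under $\delta\le 0.05$; both are elementary but carry the constants that make the stated bound sharp up to the factor $\tfrac12$.
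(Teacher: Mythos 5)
Your proposal is correct and follows essentially the same route as the paper's proof: both fix the empirical allocation $\overline{\lambda}$ of the non-adaptive design, use the exact Gaussian law $\widehat{\theta}_{\texttt{oracle}}-\theta \sim \mathcal{N}(0,\sigma^2 \bar{A}(Z_T,\Gamma)^{-1})$, pick the competitor $\widetilde{w}$ attaining $\rho^{\ast}(\overline{\lambda})\geq \rho^{\ast}$, and finish with a Mills-ratio-type Gaussian lower-tail bound (your $1-\Phi(u)\geq \frac{u}{1+u^2}\frac{1}{\sqrt{2\pi}}e^{-u^2/2}$ is the same estimate as the paper's Proposition~2.1.2 of Vershynin), with the restriction $\delta\leq 0.05$ entering at exactly the same numerical verification. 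The only differences are cosmetic: you argue directly via $\mathbb{P}(\langle w^{\ast}-\widetilde{w},\widehat{\theta}_{\texttt{oracle}}\rangle\leq 0)=\Phi(-\Delta_{\widetilde{w}}/\sigma_{\widetilde{w}})$, whereas the paper runs a proof by contradiction with an auxiliary constant $c\in(1,1.15]$ to conclude $(\widetilde{w}-w^{\ast})^{\top}\widehat{\theta}_{\texttt{oracle}}>0$, which yields the same conclusion.
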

\begin{corollary}
There exists a problem instance characterized by $\mathcal{W}\subset \mathbb{R}^d, \mathcal{Z}\subset \mathbb{R}^d$, $\Gamma \in \mathbb{R}^{d\times d}$, and $\theta \in \mathbb{R}^d$ with a noise process satisfying Assumption~\ref{assump:noise} such that if the non-adaptive oracle algorithm acquires $T\leq \max\{d\|\theta\|_2^2, \sqrt{d}\|\theta\|_2\}\rho^{\ast}\log(1/\delta)/2$ samples, then $\mathbb{P}(\widehat{w}\neq w^{\ast})\geq \delta$ for $\delta\in (0, 0.05]$.
\label{cor:lower_bnd}
\end{corollary}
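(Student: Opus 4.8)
The plan is to obtain the corollary as a direct instantiation of Theorem~\ref{thm:lower_bnd}. That theorem already shows that for a jointly Gaussian instance, taking fewer than $\sigma^2\rho^*\log(1/\delta)/2$ samples forces error probability at least $\delta$, where $\sigma^2=v^\top\Sigma v$ and $v=[\theta;1]$. Hence it suffices to construct a single valid instance whose noise obeys Assumption~\ref{assump:noise} and for which $\sigma^2\ge\max\{d\|\theta\|_2^2,\sqrt d\|\theta\|_2\}$: since then $\{T:T\le\max\{\cdots\}\rho^*\log(1/\delta)/2\}\subseteq\{T:T\le\sigma^2\rho^*\log(1/\delta)/2\}$, the error-probability lower bound transfers verbatim.

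For the construction I would take $\mathcal Z=\{e_1,\dots,e_d\}$ and $\Gamma=I$ (invertible, as required), and drive the entire noise vector by a single scalar: let $\xi\sim\mathcal N(0,1)$ and set $\eta=\xi\mathbf 1$, $\epsilon=\xi$, so that $[\eta;\epsilon]\sim\mathcal N(0,\Sigma)$ with $\Sigma=\mathbf 1\mathbf 1^\top$ the $(d+1)\times(d+1)$ all-ones matrix. This $\Sigma$ has unit diagonal and is positive semidefinite, hence is a valid correlation matrix. It is compatible with Assumption~\ref{assump:noise}: $\epsilon$ has unit variance and is $1$-sub-Gaussian, while $\langle a,\eta\rangle=\xi\,a^\top\mathbf 1$ has variance $(a^\top\mathbf 1)^2\le d$ for $\|a\|_2\le1$, so $\eta$ is $\sigma_\eta^2$-sub-Gaussian with $\sigma_\eta^2=d$. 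The perfectly correlated coordinates are precisely what push the vector sub-Gaussian constant up to $d$. With this noise, $\nu=\eta^\top\theta+\epsilon=\xi(\mathbf 1^\top\theta+1)$, giving $\sigma^2=v^\top\Sigma v=(\mathbf 1^\top\theta+1)^2$.

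It remains to align $\theta$ with $\mathbf 1$ to saturate this quantity. I would take $\theta=c\mathbf 1$ for $c>0$ and choose $\mathcal W=\{e_1,e_1+e_2\}$, so that $w^*=e_1+e_2$ is the unique maximizer of $w^\top\theta$. Then $\mathbf 1^\top\theta=cd$, $\|\theta\|_2^2=c^2d$, and $\sigma^2=(cd+1)^2$. Two elementary checks finish the argument: $\sigma^2\ge(cd)^2=c^2d^2=d\|\theta\|_2^2$, and $\sigma^2=c^2d^2+2cd+1\ge cd=\sqrt d\,\|\theta\|_2$ (the latter because $c^2d^2+cd+1>0$). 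Thus $\sigma^2\ge\max\{d\|\theta\|_2^2,\sqrt d\|\theta\|_2\}$, and Theorem~\ref{thm:lower_bnd} yields the stated bound for every $\delta\in(0,0.05]$.

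The one genuinely creative step is the choice of $\Sigma$. \textbf{The dimension factor cannot arise from independent noise}: had the coordinates of $\eta$ been uncorrelated with unit variance ($\Sigma_{\eta\eta}=I$), then $\theta^\top\Sigma_{\eta\eta}\theta=\|\theta\|_2^2$ and a Schur-complement bound would cap the cross term at $O(\|\theta\|_2)$, forcing $\sigma^2=O(\|\theta\|_2^2)$ with no dimension dependence. The dimension enters only by making the coordinates of $\eta$ perfectly correlated and aligned with $\theta$, which saturates Cauchy--Schwarz $(\mathbf 1^\top\theta)^2=d\|\theta\|_2^2$; the further correlation $\epsilon=\xi$ then supplies the cross term $2\mathbf 1^\top\theta=2\sqrt d\,\|\theta\|_2$ that controls the small-$\|\theta\|_2$ branch of the maximum, so a single instance covers both regimes. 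The only bookkeeping subtleties are keeping $w^*$ unique (handled by the choice of $\mathcal W$, or by an arbitrarily small perturbation of $\theta$ off the $\mathbf 1$ direction that changes $\sigma^2$ negligibly) and, if a nondegenerate covariance is preferred, replacing $\Sigma$ by $(1-\epsilon_0)\mathbf 1\mathbf 1^\top+\epsilon_0 I$ for tiny $\epsilon_0>0$, which remains a valid correlation matrix with $\sigma^2\to(\mathbf 1^\top\theta+1)^2$.
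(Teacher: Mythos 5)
Your proof is correct and takes essentially the same route as the paper: both instantiate Theorem~\ref{thm:lower_bnd} with the all-ones correlation matrix $\Sigma=\mathbf{1}\mathbf{1}^\top$ and $\theta$ aligned with $\mathbf{1}$, giving $\sigma^2=(\mathbf{1}^\top\theta+1)^2=d\|\theta\|_2^2+2\sqrt{d}\,\|\theta\|_2+1\geq \max\{d\|\theta\|_2^2,\sqrt{d}\,\|\theta\|_2\}$ (the paper phrases the alignment via the top eigenvector $q_{\ast}=\mathbf{1}_d/\sqrt{d}$ of $\Sigma_\theta=\mathbf{1}_d\mathbf{1}_d^\top$, which is exactly your choice). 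Your extra bookkeeping --- the single-scalar realization $\eta=\xi\mathbf{1}$, $\epsilon=\xi$, the explicit verification of Assumption~\ref{assump:noise}, and the choice of $\mathcal{W}=\{e_1,e_1+e_2\}$ guaranteeing a unique $w^{\ast}$ --- is sound and, if anything, slightly more careful than the paper's proof, which leaves those details implicit.
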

The proof of Theorem~\ref{thm:lower_bnd} is in Appendix~\ref{proof:lower_bnd}. Notably, the result is reminiscent of lower bounds for the standard pure exploration transductive linear bandit problem without confounding~\cite{fiez2019sequential, katz2020} when given the measurement set $\{\Gamma^{\top}z\}_{z\in \mathcal{W}}$, evaluation set $\mathcal{Z}$, and parameter $\theta$.

Notably, the upper bounds for our algorithms nearly match the lower bound of Theorem~\ref{thm:lower_bnd}. However, it is interesting to observe that the sample complexity incurs an additional factor of $d^2$ relative to the standard transductive linear bandit problem in the most general setting where entries of $\eta$ are sub-Gaussian, but not necessarily independent nor bounded. This illustrates the additional difficulty that arises from confounding. We point out that this is not likely to be a tight lower bound. In particular, it is a lower bound with respect to a non-adaptive algorithm that uses the particular choice of estimator. We leave improved lower bounds to future work.

\section{Experiments}
\label{sec:experiments}

We now present experiments a collection of experiments on \texttt{CPET-LB} problem instances. The experiments demonstrate that our approach produces efficient designs for inference and estimation.

\subsection{Comparison Algorithms}
The baselines that our approaches are compared with are discussed below. We run experiments both when $\Gamma$ is known and when $\Gamma$ is fully unknown.
\subsubsection{Known $\Gamma$.}
 To standardize the experiments, the baselines considered run in rounds mirroring the structure of Algorithm~\ref{alg-TrueGamma}. Specifically, in round $k\in \mathbb{N}$ a sampling algorithm selects a design $\lambda_k\in \Delta(\mathcal{Z})$, collects  $N_k$ samples from the design, and forms a $\Psi-\texttt{IV}$ estimate of $\theta$ with $\Psi=\Gamma$ that is combined with a confidence interval (Lemma~\ref{thm:oracle 2SLS} to either eliminate evaluation vectors or validate a stopping condition. The number of samples $N_k$ taken in round $k\in \mathbb{N}$ by any of the algorithms is given by 
$N_k =\lceil 2(1+\omega)\zeta_k^{-2}\rho(\mc{W}_k)L_\nu\log(4k^2|\mc{W}|/\delta)\rceil \vee r(\omega)$
where $\rho(\mc{W}_k) =\max_{w,w'\in\mc{W}_k}f(w,w',\Gamma,\lambda)$ for design $\lambda_k\in \Delta(\mathcal{Z})$ and an active set of evaluation vectors $\mc{W}_k$.
The round sample count guarantees that given any experimental design, all vectors $w\in \mathcal{W}$ such that  $(w^{\ast}-w)^{\top}\theta> 2\cdot 2^{-k}$ can be determined to be suboptimal by the end of round $k$. 
The sampling methods we consider are now described.
\begin{itemize}
\item \emph{Static Oracle.} This design selects $\lambda_k=\argmin_{\lambda \in \Delta(\mc{Z})}\max_{w\in \mathcal{W}\setminus \{w^{\ast}\}}f(w^{\ast},w',\Gamma,\lambda)$.

\item \emph{Static XY-Optimal.} This design selects $\lambda_k=\argmin_{\lambda \in \Delta(\mc{Z})}\max_{w, w'\in \mathcal{W}}f(w,w',\Gamma,\lambda)$.

\item \emph{Static Uniform.} This design selects $\lambda_{k, z} =1/|\mathcal{Z}|\ \forall \ z\in \mathcal{Z}$.
\item \emph{Adaptive Uniform (SE).} This design selects $\lambda_{k, w} =1/|\mathcal{W}_k| \ \forall \ w\in \mathcal{W}_k$. Note that this algorithm is effectively an adaption of action-elimination~\cite{even2006action} .
\end{itemize}
The static designs are independent of the round and simply terminate when all evaluation vectors can be eliminated except for a recommended optimal vector $\widehat{w}$. 
\subsubsection{Unknown $\Gamma$.} For this set of experiments, we compare Algorithm~\ref{alg:learningGamma-main} against a collection of variations of the sampling procedures. Specifically, we compare against methods that either replace only the experimental design for estimating $\Gamma$, or only the experimental design for estimating $\theta$, or both with uniform sampling. We label the approaches as $N-N$, where $N$ represents the sampling approaches (XY or uniform) for $\Gamma$ and $\theta$ respectively. Moreover, to make our approach more practical, we modify the algorithm so that $\overline{\log}\rbr{Z_{T},\delta}=4d+\log(1/\delta)$. We find that even with this modification to the algorithm, correctness is maintained empirically.

\begin{figure*}[t]
  \centering
  \subfloat{\includegraphics[width=0.5\textwidth]{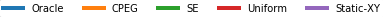}\label{fig:leg1}}
   \setcounter{subfigure}{0}
   
  \subfloat[Experiment 1: Known $\Gamma$]{\includegraphics[width=0.4\textwidth]{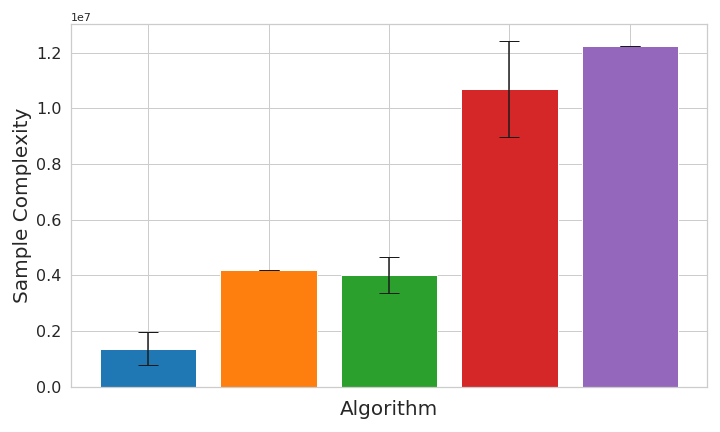}\label{fig:exp_subfig1}}
  \hfill
  \subfloat[Experiment 2: Known $\Gamma$]{\includegraphics[width=0.4\textwidth]{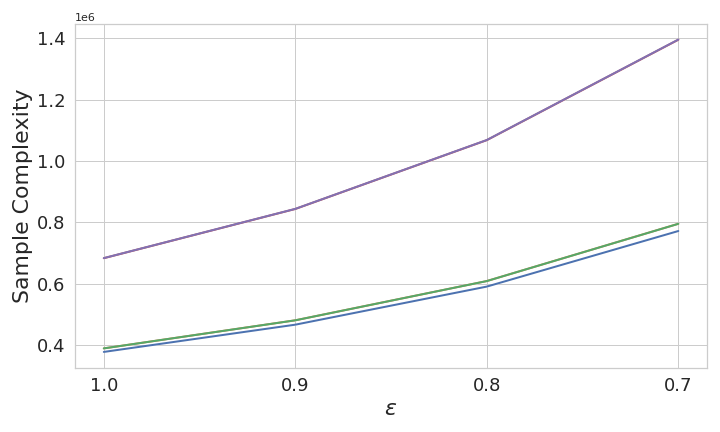}\label{fig:exp_subfig2}}

  \subfloat{\includegraphics[width=0.5\textwidth]{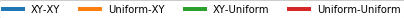}\label{fig:leg2}}
   \setcounter{subfigure}{2}
   
  \subfloat[Experiment 1: Unknown $\Gamma$]{\includegraphics[width=0.4\textwidth]{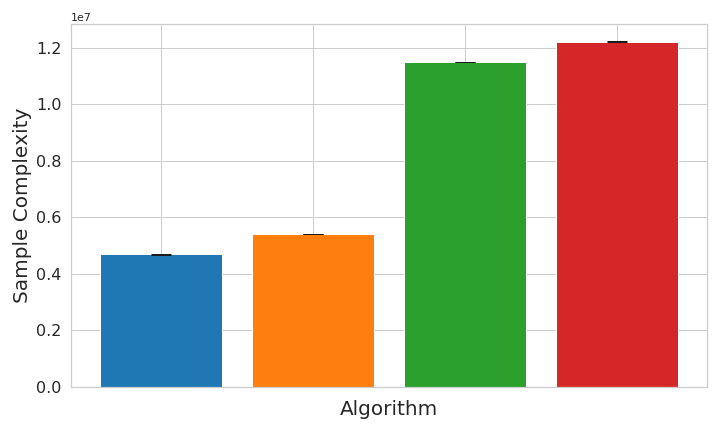}\label{fig:exp_subfig4}}
  \hfill
  \subfloat[Experiment 2: Unknown $\Gamma$]{\includegraphics[width=0.4\textwidth]{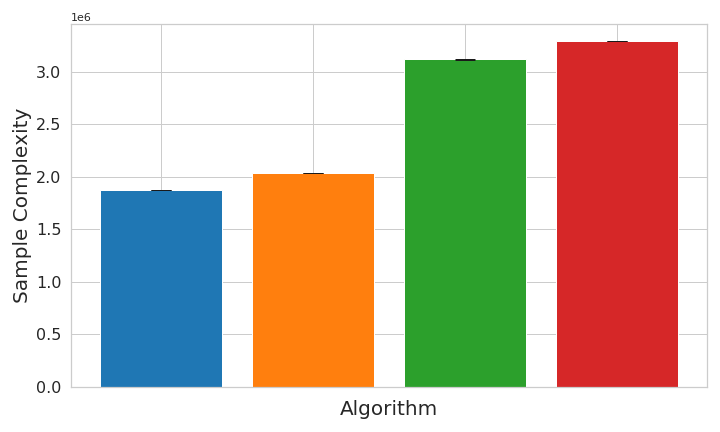}\label{fig:exp_subfig5}}
  \caption{Sample complexity for algorithms on \texttt{CPET-LB} problems. Our approach is consistently competitive across the experiments.}
  \label{fig:exp_fig}
\end{figure*}

\subsection{Experiment 1: Jump-Around Instance}

We first return back to the location model of Section~\ref{sec:illustrative}. Recall that $\mc{Z}=\mc{W}=\mc{X}=\cbr{e_1,\cdots,e_d}$. For this experiment, we take $d=6$, let $\theta=\begin{bmatrix}1& -0.95&0.45&0.45 &0.95& 0.45\end{bmatrix}$ and $\sigma_u^2=0.275$. The results of the experiment are shown in Figure~\ref{fig:exp_subfig1} for the case of known $\Gamma$. We see that Algorithm~\ref{alg-TrueGamma} performs much better than the baselines and nearly matches the oracle design. Delving into the approach, it is able to quickly eliminate all but $w_1$ and $w_{d-1}$ and then puts more mass on $z_1$ and $z_{d-1}$ to reduce the uncertainty on $w_1$ and $w_{d-1}$. For the case of unknown $\Gamma$, the results are shown in Figure~\ref{fig:exp_subfig2}, where $\theta_5$ is reduced to $0.9$ so that all approaches could finish.

\subsection{Experiment 2: Interpolation Instance}

Let $\mc{Z}=\mc{W}=\mc{X}=\cbr{e_1,\cdots,e_d}$ define the measurement, evaluation, and observation sets. Consider that $\Gamma:=\frac{(1-\varepsilon)}{d}1_d1_d^{\top} + \varepsilon I_{d}$ for a parameter $\varepsilon\in (0, 1)$ where $1_d$ is a $d$-dimensional vector of 1's and $I_d$ is the $d$-dimensional identity matrix. For this experiment, we take $d=4$ and let $\theta=\begin{bmatrix}0.5& 0.583&0.67&0.75\end{bmatrix}$. As in all compliance instances, $\eta_t=x_t-\Gamma^{\top}z_t$, and in this simulation $\eta_t=0.4 \eta_t^{\top}v_t$, where $v_t=\bar{v}_t/\|\bar{v}_t\|_2$ and $\bar{v}_t\sim \mathcal{N}(0, I_d)$. The results of the experiment are shown in Figure~\ref{fig:exp_subfig2} for $\varepsilon\in \{1, 0.9, 0.8, 0.7\}$ with $\Gamma$ known. Note that Static-XY and Uniform overlap, and SE and CPEG overlap. We see that Algorithm~\ref{alg-TrueGamma} and the adaptive uniform strategy perform similarly and near optimally. This is to be expected since the most efficient way to gather observations for treatments is to encourage that treatment, given that if the encouragement is not followed each of the alternatives is equally likely and provides no additional information of interest. Moreover, as discussed earlier, the problem gets more challenging as $\Gamma \rightarrow 1_d1_d^{\top}/d$. Note that the identity matrix could be replaced with a permutation matrix, in which case uniform sampling with elimination becomes highly suboptimal. The results for the case of $\Gamma$ unknown are shown in Figure~\ref{fig:exp_subfig5} with $\varepsilon=0.99$. This shows the value that comes from the experimental design for estimating both $\Gamma$ and $\theta$.

\section{Proofs of the lower bound}

\subsection{Proof of Theorem~\ref{thm:lower_bnd}}
\label{proof:lower_bnd}
\newtheorem*{app-thm:lower_bnd}{Theorem~\ref{thm:lower_bnd}}
\begin{app-thm:lower_bnd}
  Consider a problem instance characterized by $\mathcal{W}\subset \mathbb{R}^d, \mathcal{Z}\subset \mathbb{R}^d$, $\Gamma \in \mathbb{R}^{d\times d}$, and $\theta \in \mathbb{R}^d$. Assume $\Gamma$ is known, $\theta$ is unknown, and the noise process is jointly Gaussian and defined by $\gamma := \begin{bmatrix}\eta & \epsilon \end{bmatrix}\sim \mathcal{N}(0, \Sigma)$ where $\Sigma\in \mathbb{R}^{(d+1)\times (d+1)}$ is an arbitrary correlation matrix. For $\delta \in (0, 0.05]$, if the non-adaptive oracle algorithm acquires $T\leq \sigma^2\rho^{\ast}\log(1/\delta)/2$ samples on the problem instance where $\sigma^2:=v^{\top}\Sigma v$ and $v:=\begin{bmatrix}\theta & 1\end{bmatrix}\in \mathbb{R}^{d+1}$, then $\mathbb{P}(\widehat{w}\neq w^{\ast})\geq \delta$.
\end{app-thm:lower_bnd}
\begin{proof}
  We begin by recalling the framework of the non-adaptive oracle algorithm and discussing the properties of its estimator for the noise structure described in the statement of the result.
  \paragraph{Non-Adaptive Oracle and Instance Definition.}
  The non-adaptive oracle algorithm $\mathcal{A}$ selects $T$ measurements to query prior to collecting any data. Let $I_t$ represent the index of the vector $z\in \mathcal{Z}$ chosen at time $t\in \{1, \dots, T\}$. 
  The noise process for the instance under consideration is assumed to be jointly Gaussian and defined by $\gamma_t := \begin{bmatrix}\eta_t & \epsilon_t \end{bmatrix}\sim \mathcal{N}(0, \Sigma)$ where $\Sigma\in \mathbb{R}^{(d+1)\times (d+1)}$ is an arbitrary positive semidefinite matrix. Defining $\bar{x}_{I_t}:=\Gamma^{\top}z_{I_t}$, $v:=\begin{bmatrix}\theta & 1\end{bmatrix}\in \mathbb{R}^{d+1}$ and $\nu_t:=v^{\top}\gamma_t$, the feedback model can be described as follows: 
  \begin{align*}
    y_t &=x_t\theta + \epsilon_t \\
    &=(\Gamma^{\top}z_{I_t})^{\top}\theta+\eta_t^{\top}\theta + \epsilon_t \\
    &=:(\Gamma^{\top}z_{I_t})^{\top}\theta+v^{\top}\gamma_t \\
    &=:\bar{x}_{I_t}^{\top}\theta+\nu_t.
  \end{align*}
  Observe that the noise is independent and identically distributed as $\nu_t \sim \mathcal{N}(0, \sigma^2)$ where $\sigma^2:=v^{\top}\Sigma v$ since $\gamma_t \sim \mathcal{N}(0, \Sigma)$. Moreover, the noise process is exogeneous with $\mathbb{E}[\nu_t|\bar{x}_{I_t}] =0$ since $\bar{x}_{I_t}$ is deterministic given the index choice $I_t$.
  
  Let  $\{z_{I_t}\}_{t=1}^T, \{\bar{x}_{I_t}\}_{t=1}^T$, and $\{y_t\}_{t=1}^T$ denote the observations collected by the non-adaptive oracle algorithm $\mathcal{A}$ and define $Z_{T}\in \mathbb{R}^{T\times d}$, $\bar{X}_{T}\in \mathbb{R}^{T\times d}$, and $Y_{T}\in \mathbb{R}^{T}$ to contain the respective stacked observations. Algorithm $\mathcal{A}$ obtains an estimate
  $\widehat{\theta}_{\texttt{oracle}}$ by minimizing the sum of squares as follows:
  \begin{equation*}
    \textstyle \widehat{\theta}_{\texttt{oracle}}:=\argmin_{\widehat{\theta}\in \mathbb{R}^d}\sum_{t=1}^{T}(y_t-\bar{x}_{I_t}\widehat{\theta})^2=( \bar{X}_T^{\top} \bar{X}_T)^{-1} \bar{X}_T^\top Y_T=( Z_T^{\top} Z_T\Gamma)^{-1} Z_T^\top Y_T.
  \end{equation*}
  Given $\widehat{\theta}_{\texttt{oracle}}$, the non-adaptive oracle algorithms $\mathcal{A}$ returns a recommendation defined by $\widehat{w}=\argmax_{w\in \mathcal{W}}w^{\top}\widehat{\theta}_{\texttt{oracle}}$.
  Note that since $\widehat{\theta}_{\texttt{oracle}}$ is obtained by least squares with exogeneous, independent and identically distributed mean-zero Gaussian noise, it is straightforward to verify the estimator is distributed as 
  \begin{equation}
    \widehat{\theta}_{\texttt{Oracle}}-\theta \sim \mathcal{N}\Big(0, \sigma^2\cdot \bar{A}(Z_T, \Gamma)^{-1}\Big),
    \label{eq:dist_normal}
  \end{equation}
  where 
  \begin{equation*}
    \bar{A}(Z_T, \Gamma):=\Big(\sum_{t=1}^T\Gamma^{\top}z_{I_t}z_{I_t}\Gamma\Big)=\bar{X}_T^{\top} \bar{X}_T= \Gamma^{\top}Z_T^{\top} Z_T\Gamma.
  \end{equation*}
  
  \paragraph{Proof by Contradiction.} To begin, recall that 
  \begin{equation*}
    \rho^{\ast} := \min_{\lambda \in \Delta(\mc{Z})}\max_{w\in \mathcal{W}\setminus \{w^{\ast}\}} \frac{\|w^{\ast}-w\|^2_{A(\lambda, \Gamma)^{-1}}}{\langle w^{\ast}-w, \theta\rangle ^2}.
  \end{equation*}
  Suppose for the sake of contradiction that the number of samples collected by the non-adaptive oracle algorithm $\mathcal{A}$ is $T\leq \sigma^2\rho^{\ast}\log(1/\delta)/2$ and $\mathbb{P}(\widehat{w}\neq w^{\ast})<\delta$ for $\delta \in (0, 0.05]$. To reach a contradiction, we analyze the distribution of $(w-w^{\ast})^{\top}\widehat{\theta}$ for some $w\neq w^{\ast}$ and show that with probability at least $\delta$ it is positive. We remark that this proof follows similar techniques to that of the proof of Theorem~3 of \cite{katz2020}.
  
  Let $\overline{\lambda} \in \Delta(\mathcal{Z})$ represent the empirical sampling distribution of the algorithm $\mathcal{A}$, which is defined such that $\overline{\lambda}_z=\frac{1}{T}\sum_{t=1}^T\mathbbm{1}\{z_{I_t} = z\}$ for each $z\in \mathcal{Z}$. Moreover, define
  \begin{equation*}
    \rho^{\ast}(\overline{\lambda}) := \max_{w\in \mathcal{W}\setminus \{w^{\ast}\}} \frac{\|w^{\ast}-w\|^2_{A(\overline{\lambda}, \Gamma)^{-1}}}{\langle w^{\ast}-w, \theta\rangle ^2}\quad \text{and} \quad \widetilde{w}\in\argmax_{w\in \mathcal{W}\setminus \{w^{\ast}\}} \frac{\|w^{\ast}-w\|^2_{A(\overline{\lambda}, \Gamma)^{-1}}}{\langle w^{\ast}-w, \theta\rangle ^2}.
  \end{equation*}
  Note that $\rho^{\ast}(\overline{\lambda}) \geq \rho^{\ast}$ and observe by definition,
  \begin{equation*}
    \frac{A(\overline{\lambda}, \Gamma)^{-1}}{T}:=\frac{(\sum_{z\in \mathcal{Z}}\overline{\lambda}_z \Gamma^{\top} zz^{\top}\Gamma)^{-1}}{T}=\Big(\sum_{t=1}^T\Gamma^{\top}z_{I_t}z_{I_t}\Gamma\Big)^{-1}:=\bar{A}(Z_T, \Gamma)^{-1}.
  \end{equation*}
  Thus, by Eq.~\eqref{eq:dist_normal},
  \begin{equation*}
    \widehat{\theta}_{\texttt{Oracle}}-\theta \sim \mathcal{N}\Big(0, \sigma^2\cdot \frac{A(\overline{\lambda}, \Gamma)^{-1}}{T}\Big),
  \end{equation*}
  and
  \begin{equation*}
    \frac{(\widetilde{w}-w^{\ast})^{\top}(\widehat{\theta}_{\texttt{Oracle}}-\theta)}{\langle w^{\ast}-\widetilde{w}, \theta\rangle}\sim \mathcal{N}\Big(0, \sigma^2\cdot \frac{\|w^{\ast}-\widetilde{w}\|_{A(\overline{\lambda}, \Gamma)^{-1}}^2}{T\cdot \langle w^{\ast}-\widetilde{w}, \theta\rangle^2}\Big).
  \end{equation*}
  Furthermore, by the definition of $\rho^{\ast}(\overline{\lambda})$,  the assumption $T\leq \sigma^2\rho^{\ast}\log(1/\delta)/2$, and the fact $\rho^{\ast}(\overline{\lambda}) \geq \rho^{\ast}$, we obtain
  \begin{equation}
    \mathbb{V}\Big(\frac{(\widetilde{w}-w^{\ast})^{\top}(\widehat{\theta}_{\texttt{Oracle}}-\theta)}{\langle w^{\ast}-\widetilde{w}, \theta\rangle}\Big)= \sigma^2\cdot \frac{\|w^{\ast}-\widetilde{w}\|_{A(\overline{\lambda}, \Gamma)^{-1}}^2}{T\cdot \langle w^{\ast}-\widetilde{w}, \theta\rangle^2}:=\sigma^2\cdot \frac{\rho^{\ast}(\overline{\lambda})}{T}\geq  \frac{2}{\log(1/\delta)}.
    \label{eq:var_ordering}
  \end{equation}
  Now, consider a random variable $W\sim \mc{N}(0, 1)$. Proposition 2.1.2 of \citet{vershynin2018high} gives an anti-concentration result showing that for all $\zeta >0$,
  \begin{equation}
    \mathbb{P}(W \geq \zeta)\geq \Big(\frac{1}{\zeta}-\frac{1}{\zeta^3}\Big)\frac{1}{\sqrt{2\pi}}e^{-\zeta^2/2}.
    \label{eq:anti_concentration}
  \end{equation}
  We apply this result to the quantity $(\widetilde{w}-w^{\ast})^{\top}(\widehat{\theta}_{\texttt{Oracle}}-\theta)/\langle w^{\ast}-\widetilde{w}, \theta\rangle$ to conclude that $(\widetilde{w}-w^{\ast})^{\top}\widehat{\theta}_{\texttt{Oracle}}>0$ with probability at least $\delta$. Toward doing so,
  let $c\in (1, 1.15]$ be a constant and define
  \begin{equation*}
    \widetilde{W}\sim \mc{N}\Big(0, \frac{2}{\log(1/\delta)}\Big)\quad \text{and} \quad W:=\frac{\widetilde{W}}{\sqrt{2/\log(1/\delta)}} \quad \text{and} \quad \gamma:=\frac{c}{\sqrt{2/\log(1/\delta)}}.
  \end{equation*}
  Observe that $W\sim \mathcal{N}(0, 1)$. The following analysis holds for $\delta \in (0, 0.05]$ given that $c\in (1, 1.15]$ as assumed:
  \begin{align*}
    \mathbb{P}\Big(\frac{(\widetilde{w}-w^{\ast})^{\top}(\widehat{\theta}_{\texttt{Oracle}}-\theta)}{\langle w^{\ast}-\widetilde{w}, \theta\rangle}\geq c\Big) 
    &\geq  \mathbb{P}(\widetilde{W}\geq c) \tag{By Eq.~\ref{eq:var_ordering}}\\
    &=  \mathbb{P}\Big(\frac{\widetilde{W}}{\sqrt{2/\log(1/\delta)}}\geq \frac{c}{\sqrt{2/\log(1/\delta)}}\Big) \\
    &:=  \mathbb{P}(W\geq \gamma) \\
    &\geq \Big(\frac{1}{\gamma}-\frac{1}{\gamma^3}\Big)\frac{1}{\sqrt{2\pi}}e^{-\gamma^2/2} \tag{Proposition 2.1.2~\citealt{vershynin2018high}} \\
    &= \Big(\frac{\sqrt{2}}{c\sqrt{\log(1/\delta)}}-\frac{\sqrt{2}^3}{c^3\sqrt{\log(1/\delta)}^3}\Big)\frac{1}{\sqrt{2\pi}}\delta^{c^2/4}\\
    &\geq  \delta. 
  \end{align*}
  The final inequality can be verified computationally.
  Thus, with probability at least $\delta$ for $\delta \in (0, 0.05]$, we obtain
  \begin{align*}
    (\widetilde{w}-w^{\ast})^{\top}\widehat{\theta}_{\texttt{Oracle}}
    &\geq c(w^{\ast}-\widetilde{w})^{\top}\theta + (\widetilde{w}-w^{\ast})^{\top}\theta \\
    &= c(w^{\ast}-\widetilde{w})^{\top}\theta - (w^{\ast}-\widetilde{w})^{\top}\theta \\
    &= (c-1)(w^{\ast}-\widetilde{w})^{\top}\theta \\
    &> 0.
  \end{align*}
  Observe that the final inequality holds since $c>1$ and $(w^{\ast}-\widetilde{w})^{\top}\theta> 0$ by definition.
  
  This result directly implies that with probability at least $\delta$ for $\delta \in (0, 0.05]$, the vector $\widehat{w}$ returned by algorithm $\mathcal{A}$ is not $w^{\ast}$. This is a contradiction,  so we conclude that if the non-adaptive oracle algorithm $\mathcal{A}$ acquires $T\leq \sigma^2\rho^{\ast}\log(1/\delta)/2$ samples, then $\mathbb{P}(\widehat{w}\neq w^{\ast})\geq \delta$ for $\delta \in (0, 0.05]$.

\end{proof}

\subsection{Proof of Corollary~\ref{cor:lower_bnd}}
\label{proof:lower_bnd_cor}
\newtheorem*{app-cor:lower_bnd_cor}{Corollary~\ref{cor:lower_bnd}}
\begin{app-cor:lower_bnd_cor}
  There exists a problem instance characterized by $\mathcal{W}\subset \mathbb{R}^d, \mathcal{Z}\subset \mathbb{R}^d$, $\Gamma \in \mathbb{R}^{d\times d}$, and $\theta \in \mathbb{R}^d$ with a noise process satisfying Assumption~\ref{assump:noise} such that if the non-adaptive oracle algorithm acquires $T\leq \max\{d\|\theta\|_2^2, \sqrt{d}\|\theta\|_2\}\rho^{\ast}\log(1/\delta)/2$ samples, then $\mathbb{P}(\widehat{w}\neq w^{\ast})\geq \delta$ for $\delta\in (0, 0.05]$.
\end{app-cor:lower_bnd_cor}
\begin{proof}
  To begin, consider the specifications of Theorem~\ref{thm:lower_bnd} and its result. That is, a problem an arbitrary instance characterized by $\mathcal{W}\subset \mathbb{R}^d, \mathcal{Z}\subset \mathbb{R}^d$, $\Gamma \in \mathbb{R}^{d\times d}$, and $\theta \in \mathbb{R}^d$ where the noise process is jointly Gaussian and defined by $\gamma := \begin{bmatrix}\eta & \epsilon \end{bmatrix}\sim \mathcal{N}(0, \Sigma)$ where $\Sigma\in \mathbb{R}^{(d+1)\times (d+1)}$ is an arbitrary correlation matrix. Observe that the noise process defined by $\gamma$ satisfies Assumption~\ref{assump:noise}.  
  The result states that if the non-adaptive oracle algorithm acquires $T\leq \sigma^2\rho^{\ast}\log(1/\delta)/2$ samples on the problem instance where $\sigma^2:=v^{\top}\Sigma v$ and $v:=\begin{bmatrix}\theta & 1\end{bmatrix}\in \mathbb{R}^{d+1}$, then $\mathbb{P}(\widehat{w}\neq w^{\ast})\geq \delta$ for $\delta \in (0, 0.05]$. From this point, we show that there exists a parameter $\theta$ and correlation matrix $\Sigma$ such that $\sigma^2:=v^{\top}\Sigma v\geq \max\{d\|\theta\|_2^2, \sqrt{d}\|\theta\|_2\}$ in order to reach the stated conclusion.
  
  \paragraph{Notation.} 
  Let $\zeta_{\eta_i, \epsilon}\in [-1, 1]$ denote the correlation between $\eta_i$ and $\epsilon$ for $i\in \{1, \dots, d\}$. Similarly, let $\zeta_{\eta_i, \eta_j}=\zeta_{\eta_j, \eta_i}\in [-1, 1]$ denote the correlation between $\eta_i$ and $\eta_j$ for $i\neq j\in \{1, \dots, d\}$. Note that the correlation of $\eta_i$ with itself for $i\in \{1, \dots, d\}$ is $\sigma_{\eta_i}^2=\zeta_{\eta_i, \eta_i}=1$ and similarly the correlation of $\epsilon$ with itself is $\sigma_{\epsilon}^2=\zeta_{\epsilon, \epsilon}=1$. The correlation matrix $\Sigma$ is then given by
  \begin{equation*}
    \Sigma = 
    \begin{bmatrix}
      1& \zeta_{\eta_2, \eta_1} & \cdots & \zeta_{\eta_d, \eta_1} &\zeta_{\epsilon, \eta_{1}}  \\
      \zeta_{\eta_1, \eta_2} & 1 & \cdots & \zeta_{\eta_{d}, \eta_{2}} & \zeta_{\epsilon, \eta_{2}} \\
      \vdots & \vdots &\ddots & \vdots & \vdots \\
      \zeta_{\eta_1, \eta_d} &  \zeta_{\eta_{2}, \eta_{d}} & \cdots & 1 & \zeta_{\epsilon, \eta_{d}}   \\
      \zeta_{\eta_1, \epsilon} &  \zeta_{\eta_2, \epsilon} & \cdots & \zeta_{\eta_d, \epsilon} & 1
    \end{bmatrix} := 
    \begin{bmatrix}
      \Sigma_{\theta} & \zeta_{\eta, \epsilon} \\
      \zeta_{\eta, \epsilon}^{\top} & 1
    \end{bmatrix},
  \end{equation*}
  where 
  \begin{equation*}
    \Sigma_{\theta} = 
    \begin{bmatrix}
      1& \zeta_{\eta_2, \eta_1} & \cdots & \zeta_{\eta_d, \eta_1}   \\
      \zeta_{\eta_1, \eta_2} & 1 & \cdots & \zeta_{\eta_{d}, \eta_{2}}  \\
      \vdots & \vdots &\ddots & \vdots  \\
      \zeta_{\eta_1, \eta_d} &  \zeta_{\eta_{2}, \eta_{d}} & \cdots & 1 
    \end{bmatrix} \in [-1, 1]^{d\times d}
    \quad \text{and} \quad  \zeta_{\eta, \epsilon}=
    \begin{bmatrix}
      \zeta_{\eta_1, \epsilon}  \\
      \zeta_{\eta_2, \epsilon}   \\
      \vdots   \\
      \zeta_{\eta_{d}, \epsilon} 
    \end{bmatrix} \in [-1, 1]^{d}.
  \end{equation*}
  Moreover, we use the notation $\Sigma_{\theta, i}^{\top}\in \mathbb{R}^{d}$ to denote the $i$--th row of $\Sigma_{\theta}^{\top}$, or equivalently the $i$--th column of $\Sigma_{\theta}$, for any $i\in \{1, \dots, d\}$.
  \paragraph{Lower Bounding Noise Variance.} Given the above notation, we now work toward lower bounding $\sigma^2:=v^{\top}\Sigma v$.
  Observe that by algebraic manipulations,  
  \begin{align*}
    v^{\top}\Sigma v &:= 
    \begin{bmatrix}
      \theta \\ 1
    \end{bmatrix}^{\top}
    \begin{bmatrix}
      \Sigma_{\theta} & \zeta_{\eta, \epsilon} \\
      \zeta_{\eta, \epsilon}^{\top} & 1
    \end{bmatrix}
    \begin{bmatrix}
      \theta \\  1
    \end{bmatrix} \\
    &= 
    \begin{bmatrix}
      \theta^{\top}\Sigma_{\theta, 1}^{\top} +\zeta_{\eta_1, \epsilon} & \theta^{\top}\Sigma_{\theta, 2}^{\top} +\zeta_{\eta_2, \epsilon} & \cdots & \theta^{\top}\Sigma_{\theta, d}^{\top} +\zeta_{\eta_d, \epsilon} & \theta^{\top}\zeta_{\eta, \epsilon}+1
    \end{bmatrix}
    \begin{bmatrix}
      \theta \\  1
    \end{bmatrix} \\
    &= \theta^{\top}\sum_{i=1}^d\Sigma_{\theta, i}^{\top}\theta_i + \sum_{i=1}^d\zeta_{\eta_{i}, \epsilon}\theta_i+\theta^{\top}\zeta_{\eta, \epsilon} + 1 \\
    &= \theta^{\top}\Sigma_{\theta}\theta + 2\theta^{\top}\zeta_{\eta, \epsilon} + 1.
  \end{align*}
  Since $\Sigma_{\theta}$ is a real symmetric matrix, an eigendecomposition exists such that $\Sigma_{\theta} = Q\Lambda Q^{\top}$ where $\Lambda=\text{diag}(\lambda_1,\dots, \lambda_d)\in \mathbb{R}^{d\times d}$ is a diagonal matrix containing the eigenvalues of $\Sigma_{\theta}$ and $Q\in \mathbb{R}^{d\times d}$ is an orthogonal matrix with columns corresponding to the eigenvectors of $\Sigma_{\theta}$. Let $q_i:=Q_i^{\top}$ denote column $i$ of the matrix $Q$ for $i=\{1, \dots, d\}$, which is equivalently eigenvector $i$ of $\Sigma_{\theta}$ for $i=\{1, \dots, d\}$. Without loss of generality, assume that the eigenvectors are of unit length so that $\|q_i\|_2=1$ for all $i=\{1, \dots, d\}$.
  
  Given this information, suppose that the parameter $\theta$ in the instance is equal to a scalar multiple of the eigenvector of $\Sigma_{\theta}$ corresponding to the maximum eigenvalue. Note this is equivalent to the statement that $\theta$ is equal to some scalar multiple of the column $q_{\ast}\in \mathbb{R}^d$ of the matrix $Q$ where $q_{\ast}$ is the eigenvector of $\Sigma_{\theta}$ corresponding to the maximum eigenvalue $\lambda^{\ast}$. Thus, we take $\theta = c\cdot q_{\ast}$ for some $c\in \mathbb{R}$ and observe that $\|\theta\|_2=c$. Toward quantifying the value of $v^{\top}\Sigma v$ for the problem instance, we begin by characterizing $\theta^{\top}\Sigma_{\theta}\theta$ for the choice of $\theta$. Consider the following analysis:
  \begin{align*}
    \theta^{\top}\Sigma_{\theta}\theta 
    &= \theta^{\top}Q\Lambda Q^{\top}\theta \\
    &:= (cq_{\ast})^{\top}Q\Lambda Q^{\top}(cq_{\ast}) \tag{$\theta:=cq_{\ast}$} \\
    &=\|\theta\|_2^2q_{\ast}^{\top}
    \begin{bmatrix}
      q_1&\cdots &q_{\ast}&\cdots & q_d
    \end{bmatrix}
    \Lambda
    \begin{bmatrix}
      q_1&\cdots &q_{\ast}&\cdots & q_d
    \end{bmatrix}^{\top}q_{\ast} \\
    &=\|\theta\|_2^2
    \begin{bmatrix}
      0&\cdots &\|q_{\ast}\|_2^2&\cdots & 0
    \end{bmatrix}
    \text{diag}(\lambda_1, \dots, \lambda^{\ast}, \dots, \lambda_1)
    \begin{bmatrix} \tag{$q_{i}^{\top}q_j=0 \ \forall i\neq j$}
      0&\cdots &\|q_{\ast}\|_2^2&\cdots & 0
    \end{bmatrix}^{\top} \\
    &= \|\theta\|_2\lambda^{\ast}.\tag{$\|q_{\ast}\|_2=1$}
  \end{align*}
  Thus, in general for this choice of $\theta$,
  \begin{equation*}
    v^{\top} \Sigma_{\theta}v=\|\theta\|_2^2\lambda^{\ast} + 2\theta^{\top}\zeta_{\eta, \epsilon} + 1.
  \end{equation*}
  To conclude, take $\Sigma_{\theta}:=\mathbf{1}_d\mathbf{1}_d^{\top}$ where $\mathbf{1}_d$ represents the $d$-dimensional vector of all ones. Since this is a rank-1 matrix, the maximum eigenvalue is $\lambda^{\ast}=\mathbf{1}_d^{\top}\mathbf{1}_d=d$ and the remainder of the eigenvalues are zero. Observe that $q_{\ast}=\mathbf{1}_d/\sqrt{d}$ is an eigenvector corresponding to the maximum eigenvalue since $\mathbf{1}_d\mathbf{1}_d^{\top} q_{\ast}=dq_{\ast}$. Thus, 
  \begin{align*}
    v^{\top} \Sigma_{\theta}v &=\|\theta\|_2^2\lambda^{\ast} + 2\theta^{\top}\zeta_{\eta, \epsilon} + 1\\
    &:= \|\theta\|_2^2\lambda^{\ast} + 2\|\theta\|_2\mathbf{1}_d^{\top}\mathbf{1}_d/\sqrt{d} + 1 \tag{$\theta:=cq_{\ast}:=\|\theta\|_2\mathbf{1}_d/\sqrt{d}=$ and $\zeta_{\eta, \epsilon}:=\mathbf{1}_d$} \\
    &= d\|\theta\|_2^2 + 2\sqrt{d}\|\theta\|_2 + 1 \\
    &\geq \max\{d\|\theta\|_2^2, \sqrt{d}\|\theta\|_2\}.
  \end{align*}
  This completes the proof since we have shown that there exists a parameter $\theta$ and correlation matrix $\Sigma$ such that $\sigma^2:=v^{\top}\Sigma v\geq \max\{d\|\theta\|_2^2, \sqrt{d}\|\theta\|_2\}$, which by Theorem~\ref{thm:lower_bnd} allows us to make the stated conclusion.
\end{proof}

\section{Proofs of the confidence interval}

\subsection{Proof of \cref{lem:subGaussian}}

The first statement is an immediate consequence of \cref{lem:correlated subGaussian A+B} and the second statement is proven in \cref{lem:compliance noise}.
\begin{lemma}
  \label{lem:compliance noise}
  In the compliance model, the noise $\eta^\top\theta+\eps$ follows a $\rbr{8\vcn{\theta}_2^2+2}$-sub-Gaussian distribution.
\end{lemma}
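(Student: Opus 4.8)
The plan is to reduce the statement to an application of \cref{lem:correlated subGaussian A+B}: it suffices to show that in the compliance model the vector noise $\eta_t$ is $4$-sub-Gaussian in the sense of Assumption~\ref{assump:noise} (that is, that the bound holds with $\sigma_\eta^2 = 4$), since feeding $\sigma_\eta^2 = 4$ and $\sigma_\epsilon^2 = 1$ into that lemma gives $\sigma_\nu^2 = 2(4\|\theta\|_2^2 + 1) = 8\|\theta\|_2^2 + 2$, exactly the claimed constant. So the entire content of the lemma lies in pinning down the sub-Gaussian parameter of $\eta_t$ for the compliance instance.

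To do this, I would recall from Eq.~\eqref{eq:compliance_noise} that here $x_t = e_{j_t}$ is a random standard basis vector and $\eta_t = x_t - \mathbb{E}_{t-1}[x_t]$, where $\mathbb{E}_{t-1}[x_t] = \Gamma^\top z_t$ is the conditional distribution over treatments. Fixing any $a$ with $\|a\|_2 \le 1$, I would write $\langle a, \eta_t\rangle = a_{j_t} - \mathbb{E}_{t-1}[a_{j_t}]$, which is conditionally mean-zero, and bound it uniformly: since $|a_{j_t}| \le \|a\|_\infty \le \|a\|_2 \le 1$ and likewise $|\mathbb{E}_{t-1}[a_{j_t}]| \le 1$, we get $|\langle a, \eta_t\rangle| \le 2$ almost surely. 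Hoeffding's lemma then shows that a conditionally mean-zero variable bounded in magnitude by $B$ is $B^2$-sub-Gaussian, i.e.\ $\mathbb{E}_{t-1}[\exp(\beta\langle a, \eta_t\rangle)] \le \exp(\beta^2 B^2/2)$; taking $B = 2$ yields the bound $\exp(2\beta^2)$. Because this holds uniformly over unit $a$, it is precisely the statement that $\eta_t \mid \cH_{t-1}$ is $4$-sub-Gaussian.

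Combining $\sigma_\eta^2 = 4$ with $\sigma_\epsilon^2 = 1$ in \cref{lem:correlated subGaussian A+B} then finishes the argument. The only delicate point — and the only place where a constant could slip — is the envelope step: one should use the crude bound $|\langle a, \eta_t\rangle| \le 2$ coming from $|a_{j_t}| \le 1$ applied separately to the term and to its conditional mean, rather than the tighter bound obtainable from the actual range $\max_i a_i - \min_i a_i$ of $a_{j_t}$, since it is exactly this crude envelope that produces $\sigma_\eta^2 = 4$ and hence the advertised factor $8\|\theta\|_2^2$. Everything else is a routine invocation of Hoeffding's lemma together with the earlier correlated-sum lemma.
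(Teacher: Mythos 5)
Your proposal is correct and follows essentially the same route as the paper: both establish the envelope $|\langle a,\eta_t\rangle|\le 2$ for unit $a$ (the paper via Cauchy--Schwarz and $\|\eta\|_2\le\|x\|_2+\|p\|_1=2$, you via $\|a\|_\infty\le\|a\|_2\le 1$ applied to $a_{j_t}$ and its conditional mean), conclude that $\eta_t$ is $4$-sub-Gaussian from boundedness (the paper implicitly, you by explicitly invoking Hoeffding's lemma), scale to get $4\|\theta\|_2^2$ for $\eta_t^\top\theta$, and finish with \cref{lem:correlated subGaussian A+B} to obtain $2(4\|\theta\|_2^2+1)=8\|\theta\|_2^2+2$. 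Your closing remark that the crude envelope (rather than the tighter range $\max_i a_i-\min_i a_i$) is what reproduces the advertised constant is an accurate observation, not a gap.
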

\begin{proof}
  In compliance, we have $z,x\in\cbr{e_1,\cdots,e_d}$, and 
  \begin{align*}
    \eta=x-\rbr{\pp{e_1\mid z},\cdots, \pp{e_d\mid z}}^\top.
  \end{align*}
  Let us figure out the sub-Gaussian parameter of the random vector $\eta$.
  Fix any unit vector $a$.
  First, we have $\EE[\la \eta, a\ra] = 0$.
  Second, we have
  \begin{align*}
    \abr{\eta^\top a}
    &\le \vcn{\eta}_2\tag{Cauchy–Schwarz inequality}\\
    &\le\vcn{  x-\rbr{\pp{e_1\mid  z},\cdots, \pp{e_d\mid  z}}^\top}_2\\
    &\le\rbr{\vcn{x}_2+\vcn{\rbr{\pp{e_1\mid  z},\cdots, \pp{e_d\mid  z}}^\top}_2}\\
    &\le\rbr{1+\vcn{\rbr{\pp{e_1\mid  z},\cdots, \pp{e_d\mid  z}}^\top}_1}\tag{$x\in\cbr{e_1,\cdots,e_d}$ and $\vcn{x}_2\le\vcn{x}_1, \forall x$}\\
    &=2.
  \end{align*}
  Thus, $\eta^\T a$ is bounded and zero-mean and thus $2^2$-sub-Gaussian.
  This implies that
  \begin{align*}
      \forall \beta, \max_{a: \|a\|\le 1}\EE[\exp(\beta \la  \eta,a \ra)] \le \exp(\frac{\beta^2 2^2 }{2})~.
  \end{align*}
  and thus $\eta$ is a $2^2$-sub-Gaussan random vector.
  Then, $\eta^\T\th$ is $(2 \|\th\|)^2$-sub-Gaussian.

  Using \cref{lem:correlated subGaussian A+B}, we have that $\eta^\T\th + \eps$ is $2(4\|\th\|^2 + 1) $-sub-Gaussian.
  
\end{proof}

\begin{lemma}
  \label{lem:correlated subGaussian A+B}
  Let $A$ and $B$ random variables that are each $\sigma^2_A$- and $\sigma^2_B$-sub-Gaussian but are correlated.
  Then, $A+B$ is $2(\sigma_A^2 + \sigma_B^2)$-sub-Gaussian.
\end{lemma}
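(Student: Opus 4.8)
The plan is to control the moment generating function (MGF) of $A+B$ directly, since the defining property of $\tau^2$-sub-Gaussianity is the bound $\EE[\exp(\beta(A+B))] \le \exp(\beta^2\tau^2/2)$ for all $\beta\in\RR$. We are given the individual MGF bounds $\EE[\exp(\beta A)]\le \exp(\beta^2\sigma_A^2/2)$ and $\EE[\exp(\beta B)]\le \exp(\beta^2\sigma_B^2/2)$. The only obstacle — and the entire reason this lemma is nontrivial — is that $A$ and $B$ are correlated, so we cannot simply factor $\EE[\exp(\beta A)\exp(\beta B)]$ into a product of the two MGFs as we would in the independent case.

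The key step to decouple the correlated variables is the Cauchy--Schwarz inequality (equivalently, Hölder's inequality with conjugate exponents $2$ and $2$), which gives
\begin{align*}
  \EE[\exp(\beta(A+B))] = \EE[\exp(\beta A)\exp(\beta B)] \le \sqrt{\EE[\exp(2\beta A)]}\cdot\sqrt{\EE[\exp(2\beta B)]}.
\end{align*}
This bound holds regardless of the dependence structure, which is exactly what we need. The price paid is that each MGF is now evaluated at the doubled argument $2\beta$ rather than $\beta$.

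Next I would invoke the individual sub-Gaussian bounds at argument $2\beta$, yielding $\EE[\exp(2\beta A)]\le \exp((2\beta)^2\sigma_A^2/2) = \exp(2\beta^2\sigma_A^2)$ and similarly for $B$. Taking square roots and multiplying gives
\begin{align*}
  \EE[\exp(\beta(A+B))] \le \exp(\beta^2\sigma_A^2)\cdot\exp(\beta^2\sigma_B^2) = \exp\del{\beta^2(\sigma_A^2+\sigma_B^2)}.
\end{align*}
Matching the right-hand side against the form $\exp(\beta^2\tau^2/2)$ in the definition of sub-Gaussianity, we read off $\tau^2/2 = \sigma_A^2+\sigma_B^2$, i.e. $\tau^2 = 2(\sigma_A^2+\sigma_B^2)$, which is precisely the claimed constant. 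Since the chain of inequalities holds for every $\beta\in\RR$, this establishes that $A+B$ is $2(\sigma_A^2+\sigma_B^2)$-sub-Gaussian, and the extra factor of $2$ is seen to be an artifact of the doubling of the argument forced by Cauchy--Schwarz when handling the correlation.
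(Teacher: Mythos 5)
Your proof is correct and follows essentially the same route as the paper: decouple the correlated variables via Cauchy--Schwarz, apply each sub-Gaussian MGF bound at the doubled argument $2\beta$, and match the resulting exponent to read off the constant $2(\sigma_A^2+\sigma_B^2)$. If anything, your final bookkeeping $\EE[\exp(\beta(A+B))]\le \exp(\beta^2(\sigma_A^2+\sigma_B^2))$ is tighter than the paper's last displayed line, which loosely states $\exp(2\gamma^2(\sigma_A^2+\sigma_B^2))$ though the same claimed constant follows either way.
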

\begin{proof}
  By definition of sub-Gaussian, we have for any $\gamma\in\mathbb{R}$,
  \begin{align*}
    \ee{\ep{\gamma(A+B)}}
    &= \ee{\ep{\gamma A} \ep{\gamma B}}
    \\&\le \sqrt{\ee{\ep{2\gamma A}}} \sqrt{\ee{\ep{2\gamma B}}} \tag{Cauchy-Schwarz}
    \\&\le \sqrt{\exp\left(2\gamma^2 \sigma_A^2\right)} \sqrt{\exp\left(2\gamma^2 \sigma_B^2\right)}
    \\&\le \exp\left(2\gamma^2 (\sigma_A^2+\sigma_B^2)\right).
  \end{align*}
\end{proof}

\subsection{Proof of \cref{thm:oracle 2SLS}}
\label{app:oracle 2SLS CI}

\newtheorem*{app-thm:oracle 2SLS}{Lemma~\ref{thm:oracle 2SLS}}
\begin{app-thm:oracle 2SLS}
  Suppose that $T$ observations are collected non-adaptively from the structural equation model in Eqs.~\eqref{eq:sem} and $\Gamma\in \mathbb{R}^{d\times d}$ is known. Then, with probability at least $1-\delta$ for $\delta\in (0, 1)$ and $w \in \mathbb{R}^d$, 
  \begin{align*}
    &|w^{\top}(\widehat{\theta}_{\texttt{oracle}} - \theta)| \leq \sqrt{2\sigma_{\nu}^2\|w\|_{\bar{A}(Z_T, \Gamma)^{-1}} \log\left(2/\delta\right)}.
  \end{align*}
  where $\sigma_{\nu}^2$ is the sub-Gaussian parameter of the noise process $\nu:=\eta^{\top}\theta+\epsilon$ as characterized in Lemma~\ref{lem:subGaussian}.
\end{app-thm:oracle 2SLS}
\begin{proof}
  Given the knowledge of $\Gamma$, we have the oracle 2SLS estimator
  \begin{align*}
    \widehat{\theta}_{\texttt{oracle}}=\rbr{\sum_{t=1}^{T} z_s\rbr{\Gamma^\top z_s}^\top}^{-1}\sum_{t=1}^{T} z_sy_t=\rbr{\sum_{t=1}^{T} z_s z_s^\top\Gamma}^{-1}\sum_{t=1}^{T} z_sy_t.
  \end{align*}
  Note that
  \begin{align*}
    y_t=x_t^\top\theta+\eps_t=\rbr{\Gamma^\top z_t}^\top\theta+\eta_t^\top\theta+\eps_t.
  \end{align*}
  Denote $\nu_t:=\eta_t^\top\theta+\eps_t$. For any $w\in\mc{W}$, we have
  \begin{align*}
    \ibr{\widehat{\theta}_{\texttt{oracle}}-\theta}{w}&=\ibr{\rbr{\sum_{t=1}^{T} z_s z_s^\top\Gamma}^{-1}\sum_{t=1}^{T} z_sy_t-\theta}{w}\\
    &=\ibr{\rbr{\sum_{t=1}^{T} z_s z_s^\top\Gamma}^{-1}\sum_{t=1}^{T} z_s\rbr{ z_s^\top\Gamma\theta+\nu_t}-\theta}{w}\\
    &=\ibr{\rbr{\sum_{t=1}^{T} z_s z_s^\top\Gamma}^{-1}\rbr{\sum_{t=1}^{T} z_s z_s^\top\Gamma\theta+\sum_{t=1}^{T} z_s\nu_t}-\theta}{w}\\
    &=\ibr{\rbr{\sum_{t=1}^{T} z_s z_s^\top\Gamma}^{-1}\sum_{t=1}^{T} z_s\nu_t}{w}\\
    &=\sum_{q=1}^{t}\ibr{\rbr{\sum_{t=1}^{T} z_s z_s^\top\Gamma}^{-1} z_q}{w}\nu_q.
  \end{align*}
  By Lemma \ref{lem:compliance noise}, the noise $\nu_t$ is $\sigma_{\nu}^2$-sub-Gaussian, we have 
  \begin{align*}
    \ibr{\rbr{\sum_{t=1}^{T} z_s z_s^\top\Gamma}^{-1} z_q}{w}\nu_q
  \end{align*}
  is $\ibr{\rbr{\sum_{t=1}^{T} z_s z_s^\top\Gamma}^{-1} z_q}{w}^2\sigma_{\nu}^2$-sub-Gaussian. Thus
  \begin{align*}
    \pp{\ibr{\widehat{\theta}_{\texttt{oracle}}-\theta}{w}\ge\sqrt{2\sum_{q=1}^{t}\ibr{\rbr{\sum_{t=1}^{T} z_s z_s^\top\Gamma}^{-1} z_q}{w}^2\sigma_{\nu}^2\log\rbr{\fr{1}{\delta}}}}\le\delta.
  \end{align*}
  Concisely,
  \begin{align}
    \sum_{q=1}^{t}\ibr{\rbr{\sum_{t=1}^{T} z_s z_s^\top\Gamma}^{-1} z_q}{w}^2&=w^\top\rbr{\sum_{t=1}^{T} z_s z_s^\top\Gamma}^{-1}\rbr{\sum_{q=1}^{t} z_q z_q^\top}\rbr{\rbr{\sum_{t=1}^{T} z_s z_s^\top\Gamma}^{-1}}^\top w \nn\\
    &=w^\top\Gamma^{-1}\rbr{\rbr{\sum_{t=1}^{T} z_s z_s^\top\Gamma}^{-1}}^\top w \nn\\
    &=w^\top\rbr{\Gamma^\top\rbr{\sum_{t=1}^{T} z_s z_s^\top}\Gamma}^{-1}w. \label{ieq:concise norm}
  \end{align}
  Thus,
  \begin{align*}
    \pp{\ibr{\widehat{\theta}_{\texttt{oracle}}-\theta}{w}\ge\sqrt{2w^\top\rbr{\Gamma^\top\rbr{\sum_{t=1}^{T} z_s z_s^\top}\Gamma}^{-1}w\sigma_{\nu}^2\log\rbr{\fr{1}{\delta}}}}\le\delta.
  \end{align*}
  We can further write it as
  \begin{align*}
    \pp{\ibr{\widehat{\theta}_{\texttt{oracle}}-\theta}{w}\ge\sqrt{2\vcn{w}^2_{\rbr{\Gamma^\top\rbr{\sum_{t=1}^{T} z_s z_s^\top}\Gamma}^{-1}}\sigma_{\nu}^2\log\rbr{\fr{1}{\delta}}}}\le\delta.
  \end{align*}
  By taking a union bound over, we have the confidence interval for the absolute value as in the statement of the lemma.
\end{proof}

\subsection{Proof of \cref{thm:pseudo 2SLS}}
\label{app:pseudo 2SLS CI}
\newtheorem*{app-thm:pseudo 2SLS}{Theorem~\ref{thm:pseudo 2SLS}}
\begin{app-thm:pseudo 2SLS}
  Suppose that $\widehat{\Gamma}$ is estimated through a design matrix $Z_{T_1}\in \mathbb{R}^{T_1\times d}$ and $\widehat{\theta}_{\texttt{P-2SLS}}$ is estimated through a design matrix $Z_{T_2}\in \mathbb{R}^{T_2\times d}$. Then, for any $w \in \mathcal{W}$, with probability at least $1-\delta$,
  \begin{align*}
    |w^{\top}(\widehat{\theta}_{\texttt{P-2SLS}} - \theta)| \leq\vcn{w}_{\bar{A}(Z_{T_2}, \widehat{\Gamma})^{-1}}\sqrt{2\sigma_{\nu}^2\log\rbr{\fr{4}{\delta}}}+\vcn{w}_{\bar{A}(Z_{T_1}, \widehat{\Gamma})^{-1}}\vcn{\theta}_2\sqrt{\sigma_{\eta}^2\overline{\log}\rbr{Z_{T_1},\delta/4}}.
  \end{align*}
  where $\sigma_{\nu}^2$ is the sub-Gaussian parameter of the noise $\nu:=\eta^{\top}\theta+\epsilon$, $\sigma_{\eta}^2$ is the sub-Gaussian parameter of the noise $\eta$, and 
  \begin{align*}
    \overline{\log}\rbr{Z_{T},\delta}:=8d \ln\del{1 + \fr{2TL_z^2}{d(2\wed \sigmin(Z_{T_1}^\top Z_{T_1}))} } + 16\ln\del{\fr{2\cd6^d}{\dt}\cd \log^2_2\del{\fr{4}{2\wed \sigmin(Z_{T_1}^\top Z_{T_1})} }  }
  \end{align*}
\end{app-thm:pseudo 2SLS}
\begin{proof}
  For the pseudo 2SLS estimator, we have
  \begin{align*}
    \widehat{\theta}_{\texttt{P-2SLS}} - \theta&=\rbr{\sum_{t=1}^{T_2}z_{I_t}z_{I_t}^\top\widehat{\Gamma}}^{-1}\sum_{t=1}^{T_2}z_{I_t}y_t-\theta\\
    &=\rbr{\sum_{t=1}^{T_2}z_{I_t}z_{I_t}^\top\widehat{\Gamma}}^{-1}\sum_{t=1}^{T_2}z_{I_t}\rbr{z_{I_t}^\top\Gamma\theta+\nu_t}-\theta\\
    &=\rbr{\sum_{t=1}^{T_2}z_{I_t}z_{I_t}^\top\widehat{\Gamma}}^{-1}\rbr{\sum_{t=1}^{T_2}z_{I_t}z_{I_t}^\top\Gamma\theta+\sum_{t=1}^{T_2}z_{I_t}\nu_t}-\theta\\
    &=\rbr{\sum_{t=1}^{T_2}z_{I_t}z_{I_t}^\top\widehat{\Gamma}}^{-1}\sum_{t=1}^{T_2}z_{I_t}\nu_t+\rbr{\widehat{\Gamma}^{-1}\Gamma-I}\theta.
  \end{align*}
  For any $w\in\mc{W}$, we have
  \begin{align}
    \ibr{\widehat{\theta}_{\texttt{P-2SLS}}-\theta}{w}&=\ibr{\rbr{\sum_{t=1}^{T_2}z_{I_t}z_{I_t}^\top\widehat{\Gamma}}^{-1}\sum_{t=1}^{T_2}z_{I_t}\nu_t+\rbr{\widehat{\Gamma}^{-1}\Gamma-I}\theta}{w} \nn\\
    &=\ibr{\rbr{\sum_{t=1}^{T_2}z_{I_t}z_{I_t}^\top\widehat{\Gamma}}^{-1}\sum_{t=1}^{T_2}z_{I_t}\nu_t}{w}+\ibr{\rbr{\widehat{\Gamma}^{-1}\Gamma-I}\theta}{w} \nn\\
    &=\sum_{q=1}^{T_2}\ibr{\rbr{\sum_{t=1}^{T_2}z_{I_t}z_{I_t}^\top\widehat{\Gamma}}^{-1}z_{I_q}}{w}\nu_q+\ibr{\rbr{\widehat{\Gamma}^{-1}\Gamma-I}\theta}{w} \nn\\
    &=\sum_{q=1}^{T_2}\ibr{\rbr{\sum_{t=1}^{T_2}z_{I_t}z_{I_t}^\top\widehat{\Gamma}}^{-1}z_{I_q}}{w}\nu_q+\ibr{\rbr{\widehat{\Gamma}^{-1}-\Gamma^{-1}}\Gamma\theta}{w}. \label{ieq:confidence interval breakdown}
  \end{align}
  We upper bound the first term and the second term separately. For the first term, by \cref{lem:subGaussian}, we know $\nu_t$ is $\sigma_{\nu}^2$-subGaussian, we have $\sum_{q=1}^{T_2}\ibr{\rbr{\sum_{t=1}^{T_2}z_{I_t}z_{I_t}^\top\widehat{\Gamma}}^{-1}z_{I_q}}{w}\nu_q$ is $\sum_{q=1}^{T_2}\ibr{\rbr{\sum_{t=1}^{T_2}z_{I_t}z_{I_t}^\top\widehat{\Gamma}}^{-1}z_{I_q}}{w}^2\sigma_{\nu}^2$-subGaussian. Thus by the concentration inequality of subGaussian random variables, we have
  \begin{align*}
    \pp{\sum_{q=1}^{T_2}\ibr{\rbr{\sum_{t=1}^{T_2}z_{I_t}z_{I_t}^\top\widehat{\Gamma}}^{-1}z_{I_q}}{w}\nu_q\ge\sqrt{2\sum_{q=1}^{T_2}\ibr{\rbr{\sum_{t=1}^{T_2}z_{I_t}z_{I_t}^\top\widehat{\Gamma}}^{-1}z_{I_q}}{w}^2\sigma_{\nu}^2\log\rbr{\fr{2}{\delta}}}}\le\fr{\delta}{2}.
  \end{align*}
  By similar calculation as \eqref{ieq:concise norm}, we have with probability at least $1-\fr{\delta}{2}$,
  \begin{align}
    \sum_{q=1}^{T_2}\ibr{\rbr{\sum_{t=1}^{T_2}z_{I_t}z_{I_t}^\top\widehat{\Gamma}}^{-1}z_{I_q}}{w}\nu_q\le&\sqrt{2w^\top\rbr{\widehat{\Gamma}^\top\rbr{\sum_{t=1}^{T_2}z_{I_t}z_{I_t}^\top}\widehat{\Gamma}}^{-1}w\sigma_{\nu}^2\log\rbr{\fr{2}{\delta}}}\nn\\
    \le&\sqrt{2\vcn{w}^2_{\bar{A}(Z_{T_2}, \widehat{\Gamma})^{-1}}\sigma_{\nu}^2\log\rbr{\fr{2}{\delta}}}. \label{ieq:confidence interval p2sls 1}
  \end{align}
  Thus with probability at least $1-\fr{\delta}{2}$,
  \begin{align*}
    \ibr{\widehat{\theta}_{\texttt{P-2SLS}}-\theta}{w}\le\vcn{w}_{\bar{A}(Z_{T_2}, \widehat{\Gamma})^{-1}}\sqrt{2\sigma_{\nu}^2\log\rbr{\fr{2}{\delta}}}+\ibr{\rbr{\widehat{\Gamma}^{-1}-\Gamma^{-1}}\Gamma\theta}{w}
  \end{align*}
  By Theorem \ref{thm:ls_cb}, we have with probability at least $1-\fr{\delta}{2}$,
  \begin{align}
    \ibr{\rbr{\widehat{\Gamma}^{-1}-\Gamma^{-1}}\Gamma\theta}{w}\le\vcn{w}_{\bar{A}(Z_{T_1}, \widehat{\Gamma})^{-1}}\vcn{\theta}\sqrt{\sigma_{\eta}^2\overline{\log}\rbr{Z_{T_1},\delta/2}}\label{ieq:confidence interval p2sls 2}
  \end{align}
  Combining \eqref{ieq:confidence interval p2sls 1} and \eqref{ieq:confidence interval p2sls 2}, we have with probability at least $1-\delta$, for any $w\in\mc{W}$,
  \begin{align*}
    \ibr{\widehat{\theta}_{\texttt{P-2SLS}}-\theta}{w}\le\vcn{w}_{\bar{A}(Z_{T_2}, \widehat{\Gamma})^{-1}}\sqrt{2\sigma_{\nu}^2\log\rbr{\fr{2}{\delta}}}+\vcn{w}_{\bar{A}(Z_{T_1}, \widehat{\Gamma})^{-1}}\vcn{\theta}\sqrt{\sigma_{\eta}^2\overline{\log}\rbr{Z_{T_1},\delta/2}}.
  \end{align*}
  By a union bound, we have the confidence interval for the absolute value of the inner product,
  \begin{align*}
    |w^{\top}(\widehat{\theta}_{\texttt{P-2SLS}} - \theta)| \leq\vcn{w}_{\bar{A}(Z_{T_2}, \widehat{\Gamma})^{-1}}\sqrt{2\sigma_{\nu}^2\log\rbr{\fr{4}{\delta}}}+\vcn{w}_{\bar{A}(Z_{T_1}, \widehat{\Gamma})^{-1}}\vcn{\theta}\sqrt{\sigma_{\eta}^2\overline{\log}\rbr{Z_{T_1},\delta/4}}.
  \end{align*}
\end{proof}

\begin{theorem}
  \label{thm:ls_cb}
  Suppose that the least square estimator $\widehat{\Gamma}$ is estimated through a design matrix $Z_{T_1}\in \mathbb{R}^{T_1\times d}$, then it satisfies, with probability at least $1-\delta$,
  \begin{align*}
    \ibr{\rbr{\widehat{\Gamma}^{-1}-\Gamma^{-1}}\Gamma\theta}{w}\le\vcn{w}_{\bar{A}(Z_{T_1}, \widehat{\Gamma})^{-1}}\vcn{\theta}\sqrt{\sigma_{\eta}^2\overline{\log}\rbr{Z_{T_1},\delta}}
  \end{align*}
  for any $w\in\mc{W}$.
\end{theorem}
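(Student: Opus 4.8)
The plan is to reduce the statement to a self-normalized concentration bound on the least-squares noise that drives the estimate of $\Gamma$, and then to establish that bound. Write $S := Z_{T_1}^{\top}Z_{T_1}$ and let $H_{T_1}\in\mathbb{R}^{T_1\times d}$ denote the matrix whose $t$-th row is $\eta_t^{\top}$, so that $X_{T_1}=Z_{T_1}\Gamma+H_{T_1}$ and hence $\widehat{\Gamma}-\Gamma = S^{-1}Z_{T_1}^{\top}H_{T_1}$. First I would carry out a purely algebraic reduction: using the resolvent identity $\widehat{\Gamma}^{-1}-\Gamma^{-1}=\widehat{\Gamma}^{-1}(\Gamma-\widehat{\Gamma})\Gamma^{-1}$, the left-hand side collapses, since $\Gamma^{-1}\Gamma\theta=\theta$, to $\langle(\widehat{\Gamma}^{-1}-\Gamma^{-1})\Gamma\theta, w\rangle = -\,w^{\top}\widehat{\Gamma}^{-1}S^{-1}g$, where $g := Z_{T_1}^{\top}H_{T_1}\theta = \sum_{t\le T_1}z_t\,\xi_t$ and $\xi_t := \langle\eta_t,\theta\rangle$. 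The telescoping is precisely what removes the unknown $\Gamma$ from the target vector, leaving only the noise-driven term.

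Second, I would split off the $w$-dependence by Cauchy--Schwarz in the $S^{-1}$ geometry. Writing $w^{\top}\widehat{\Gamma}^{-1}S^{-1}g = (S^{-1/2}\widehat{\Gamma}^{-\top}w)^{\top}(S^{-1/2}g)$ and applying Cauchy--Schwarz gives $|\langle(\widehat{\Gamma}^{-1}-\Gamma^{-1})\Gamma\theta, w\rangle| \le \|w\|_{(\widehat{\Gamma}^{\top}S\widehat{\Gamma})^{-1}}\,\|g\|_{S^{-1}}$. Since $\widehat{\Gamma}^{\top}S\widehat{\Gamma}=\bar{A}(Z_{T_1},\widehat{\Gamma})$, the first factor is exactly the $\|w\|_{\bar{A}(Z_{T_1},\widehat{\Gamma})^{-1}}$ appearing in the claim. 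The theorem thus reduces to showing $\|g\|_{S^{-1}}^2 \le \sigma_\eta^2\|\theta\|_2^2\,\overline{\log}(Z_{T_1},\delta)$ with probability at least $1-\delta$; by Assumption~\ref{assump:noise} each $\xi_t=\langle\eta_t,\theta\rangle$ is conditionally mean-zero and $\sigma_\eta^2\|\theta\|_2^2$-sub-Gaussian, so $g$ is a self-normalized sum with variance proxy $\sigma^2 := \sigma_\eta^2\|\theta\|_2^2$.

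Third, and this is where the real work lies, I would bound $\|g\|_{S^{-1}}^2 = \|\sum_t z_t\xi_t\|_{(\sum_t z_t z_t^{\top})^{-1}}^2$. The subtlety is twofold: the normalizer $S$ is data-dependent (and, inside Algorithm~\ref{alg:learningGamma-main}, $T_1$ is a random stopping time produced by the doubling rule, so the bound must hold uniformly in $T_1$), and we want the inverse of $S$ itself rather than a ridge-regularized version. I would therefore (i) pass to $S+\lambda I$ and invoke a self-normalized tail bound for sub-Gaussian martingale noise whose determinant potential $\det(S+\lambda I)^{1/2}\det(\lambda I)^{-1/2}\le(1+T_1L_z^2/(d\lambda))^{d/2}$, together with a union over a $1/2$-net of the unit sphere, yields the $8d\ln(1+2T_1L_z^2/(d\,\cdots))$ term and the $6^d$ covering factor; (ii) convert back via $S^{-1}\preceq(1+\lambda/\sigmin(S))\,(S+\lambda I)^{-1}$, choosing the truncated regularizer $\lambda = 2\wed\sigmin(S)$ so that the inflation factor is at most a constant; and (iii) since $\sigmin(S)$ is random, peel over a dyadic grid of its values in $(0,2]$, which contributes the $\log_2^2\!\big(4/(2\wed\sigmin(S))\big)$ factor and the occurrences of $2\wed\sigmin(Z_{T_1}^{\top}Z_{T_1})$ throughout $\overline{\log}$. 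Combining (i)--(iii) and absorbing constants gives exactly the claimed width $\sqrt{\sigma_\eta^2\,\overline{\log}(Z_{T_1},\delta)}$.

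The main obstacle is step three: the coupling between the estimator (hence both $\widehat{\Gamma}$ and the normalizer $S^{-1}$) and the very noise $\{\xi_t\}$ being concentrated rules out a fixed-design whitening argument, forcing a genuinely self-normalized (anytime-valid) inequality, after which the unknown conditioning of $S$ must be handled by peeling over $\sigmin(S)$. Making the truncation at the constant $2$ interlock cleanly with the dyadic peeling --- so that the final bound is expressed in $S^{-1}$ with only logarithmic overhead and a benign $\sigmin$-dependence --- is the delicate part; by contrast, the algebraic reduction and Cauchy--Schwarz in the first two steps are routine.
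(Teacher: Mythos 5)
Your proposal is correct and follows essentially the same route as the paper's proof: the matrix identity $\widehat{\Gamma}^{-1}-\Gamma^{-1}=-\widehat{\Gamma}^{-1}V^{-1}Z_{T_1}^{\top}S\,\Gamma^{-1}$ (the paper's Lemma~\ref{lem:estimateGamma inverse}), Cauchy--Schwarz in the $V^{-1}$ geometry to extract $\vcn{w}_{\bar{A}(Z_{T_1},\widehat{\Gamma})^{-1}}$, and a self-normalized martingale bound combined with a $1/2$-net and dyadic peeling over $\sigma_{\min}(V)$ (the paper's Lemmas~\ref{lem:covering+martingale} and~\ref{lem:self-normalized-kj}). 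The only cosmetic deviation is that you contract with $\theta$ before concentrating, reducing to the scalar noise $\langle\eta_t,\theta\rangle$, which would in fact let you skip the $6^d$ covering step entirely (the paper instead bounds the operator norm $\vcn{V^{-1/2}Z_{T_1}^{\top}S}_{\mathrm{op}}$ uniformly over directions, whence the net); invoking the net anyway only loosens constants and still yields the stated $\overline{\log}$ width.
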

\begin{proof}
  Define $V=Z_{T_1}^\top Z_{T_1}$, $S\in\mathbb{R}^{T_1\times d}$ as the matrix with $i$-th row being $\eta_i^\top$, the stacked noise, i.e., the data collection process of the design matrix $Z_{T_1}$ is $X=Z_{T_1}\Gamma+S$. Then we have
  \begin{align*}
    \ibr{\rbr{\widehat{\Gamma}^{-1}-\Gamma^{-1}}\Gamma\theta}{w}=&w^\top\rbr{\widehat{\Gamma}^{-1}-\Gamma^{-1}}\Gamma\theta\\
    =&w^\top\rbr{\Gamma+V^{-1} Z_{T_1}^\top S}^{-1}V^{-1} Z_{T_1}^\top S\Gamma^{-1}\Gamma\theta \tag{\cref{lem:estimateGamma inverse}}\\
    =&w^\top\widehat{\Gamma}^{-1}V^{-1/2}V^{-1/2}Z_{T_1}^\top S\theta\\
    \le&\vcn{w^\top\widehat{\Gamma}^{-1}V^{-1/2}}\vcn{V^{-1/2}Z_{T_1}^\top S\theta}\\
    \le&\vcn{w}_{\bar{A}(Z_{T_1}, \widehat{\Gamma})^{-1}}\vcn{V^{-1/2}Z_{T_1}^\top S}_{\mathrm{op}}\vcn{\theta}\\
    \le&\vcn{w}_{\bar{A}(Z_{T_1}, \widehat{\Gamma})^{-1}}\vcn{\theta}\sqrt{\sigma_{\eta}^2\overline{\log}\rbr{Z_{T_1},\delta}},
  \end{align*}
  where the last inequality is due to Lemma \ref{lem:covering+martingale}.
\end{proof}

\begin{lemma}
  \label{lem:covering+martingale}
    Suppose we have $z_1,\ldots, z_T\in\RR^d$ and $\eta_1,\ldots,\eta_T\in\RR^d$ such that $\eta_T \mid z_1,\eta_1,\ldots,z_{T-1},\eta_{T-1},z_T$ is $\sigma_{\eta}^2$-sub-Gaussian vector (defined in \cref{assump:noise}).
  Let $Z, S\in\mathbb{R}^{T\times d}$ be matrices whose $t$-th row is $z_t^\T$ and $\eta_t^\T$ respectively.
  Suppose $\|z_t\| \le L_z,\forall t$.
  Let $V= Z^\T Z$.
  Then, $\forall \delta\in(0,1)$, we have, with probability at least $1-\delta$, 
  \begin{align*}
    \vcn{V^{-1/2}Z_{T}^\top S}_{\text{op}}\le\sigma_{\eta}\sqrt{8d \ln\del{1 + \fr{2TL_z^2}{d(2\wed \sigmin(V))} } + 16\ln\del{\fr{2\cd6^d}{\dt}\cd \log^2_2\del{\fr{4}{2\wed \sigmin(V)} }  } }.
  \end{align*}
  We abbreviate $\overline{\log}\rbr{Z_{T},\delta}:=8d \ln\del{1 + \fr{2TL_z^2}{d(2\wed \sigmin(V))} } + 16\ln\del{\fr{2\cd6^d}{\dt}\cd \log^2_2\del{\fr{4}{2\wed \sigmin(V)} }  } $.
\end{lemma}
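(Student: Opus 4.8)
The plan is to control the operator norm by reducing it to a one-sided supremum over a single unit sphere, discretizing that sphere, and then applying a self-normalized (method-of-mixtures) martingale bound at each net point, followed by a peeling argument over the unknown random scale $\sigma_{\min}(V)$ to pass from a regularized inverse $(V+\lambda I)^{-1}$ to the unregularized $V^{-1}$ that appears in the statement.

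\textbf{Step 1: reduce the operator norm and cover one sphere.} Writing $M := V^{-1/2}Z^\top S = V^{-1/2}\sum_{t=1}^T z_t\eta_t^\top$, I would use $\|M\|_{\mathrm{op}} = \sup_{\|b\|=1}\|Mb\|$ together with the identity $\|Mb\| = \|\sum_t z_t(\eta_t^\top b)\|_{V^{-1}}$. Fixing a $\tfrac12$-net $\mathcal N$ of the unit sphere with $|\mathcal N|\le 6^d$, a standard approximation argument gives $\|M\|_{\mathrm{op}}\le 2\max_{b\in\mathcal N}\|\sum_t z_t(\eta_t^\top b)\|_{V^{-1}}$. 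This turns the matrix problem into $6^d$ scalar-noise self-normalized problems; crucially, the $V^{-1}$-norm already encodes the supremum over the left sphere, so only one net is required, which is where the $6^d$ factor originates.

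\textbf{Step 2: self-normalized bound at a fixed direction.} For a fixed $b\in\mathcal N$, the increments $\eta_t^\top b$ are conditionally $\sigma_\eta^2$-sub-Gaussian (since $\eta_t$ is a $\sigma_\eta^2$-sub-Gaussian vector and $\|b\|=1$) while $z_t$ is predictable; this is exactly the setting of the self-normalized tail inequality via the method of mixtures \cite{abbasi2011improved}, which correctly handles the adaptivity of the $z_t$ allowed by the conditioning in the hypothesis. For any fixed $\lambda>0$ it yields, with probability $\ge 1-\delta'$, the bound $\|\sum_t z_t(\eta_t^\top b)\|_{(V+\lambda I)^{-1}}^2 \le 2\sigma_\eta^2\ln\!\big(\det(V+\lambda I)^{1/2}\det(\lambda I)^{-1/2}/\delta'\big)$. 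Bounding the determinant ratio by $\mathrm{tr}(V)\le TL_z^2$ and AM--GM gives $\det(V+\lambda I)/\det(\lambda I)\le (1+TL_z^2/(d\lambda))^d$, which produces the first term $8d\ln(1+\tfrac{2TL_z^2}{d(2\wedge\sigma_{\min}(V))})$ of $\overline{\log}$ once $\lambda$ is taken near $\sigma_{\min}(V)$.

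\textbf{Step 3: de-regularize via peeling, and the main obstacle.} Since the statement involves $V^{-1}$ rather than $(V+\lambda I)^{-1}$, I would take $\lambda = \tfrac12(2\wedge\sigma_{\min}(V))$, for which $\lambda I\preceq V$ and hence $V+\lambda I\preceq 2V$, giving $\|\cdot\|_{V^{-1}}^2\le 2\|\cdot\|_{(V+\lambda I)^{-1}}^2$. The delicate part is that this ideal $\lambda$ is data-dependent and cannot be fixed in advance, so I would instead run the method-of-mixtures bound along the dyadic grid $\lambda_j = 2^{-j}$, allocate failure probability $\delta'_j\propto \delta/(6^d j^2)$ across all $j\ge 1$ and across the net, and evaluate at the active scale $j_\star\approx\log_2(4/(2\wedge\sigma_{\min}(V)))$. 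The term $\ln(1/\delta'_{j_\star})$ then contributes $\ln\!\big(\tfrac{2\cdot 6^d}{\delta}\log_2^2(\tfrac{4}{2\wedge\sigma_{\min}(V)})\big)$, matching the second term of $\overline{\log}$, while the factor-$2$ de-regularization, the factor-$2$ from the net, and the $2\sigma_\eta^2$ combine into the constants $8d$ and $16$. Getting this peeling right---keeping the $6^d$-net union bound, tracking the $\log_2^2$ overhead, and pinning down the $2\wedge\sigma_{\min}(V)$ cap and the explicit constants---is the crux of the argument.
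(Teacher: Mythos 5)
Your proposal is correct and follows essentially the same route as the paper's proof: the paper likewise reduces $\|V^{-1/2}Z^\top S\|_{\mathrm{op}}$ to $\sup_{\|x\|_2=1}\|Z^\top S x\|_{V^{-1}}$, applies a per-direction self-normalized martingale bound with dyadic peeling over $\lambda_k=2^{-k+1}$ (which is exactly what produces the $2\wedge\sigma_{\min}(V)$ cap and the $\log_2^2$ overhead, packaged in \cref{lem:self-normalized-kj}), and union-bounds over a $\tfrac12$-net of size $6^d$ with the self-bounding rearrangement yielding the factor $2$ and the stated constants $8d$ and $16$. The only differences are cosmetic: the paper de-regularizes via $\|\cdot\|_{V^{-1}}\le\sqrt{2}\,\|\cdot\|_{(V+\sigma_{\min}(V)I)^{-1}}$ and then rounds $\sigma_{\min}(V)$ down to the dyadic grid, rather than plugging in $\lambda=\tfrac12(2\wedge\sigma_{\min}(V))$ directly as you do.
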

\begin{proof}
  By the definition of operator norm, we have
  \begin{align*}
    \vcn{V^{-1/2} Z_{T}^\top  S}_{\text{op}} &= \sup_{\cbr{x\mid\vcn{x}_2=1}} \vcn{V^{-1/2} Z_{T}^\top Sx}_2\\
    &= \sup_{\cbr{x\mid\vcn{x}_2=1}} \sqrt{x^{\top}S^{\top}Z_{T} V^{-1}Z_{T}^{\top}S x} \\
    &= \sup_{\cbr{x\mid\vcn{x}_2=1}} \vcn{Z_{T}^{\top}S x}_{V^{-1}}.
  \end{align*}
  Considering a fixed $w\in\mathbb{R}^d$, by \cref{lem:self normalized bound}, we have with probability at least $1-\delta$,
  \begin{align*}
    \vcn{Z_{T}^{\top}S x}_{V^{-1}}&\le\sqrt{2}\sigma_{\eta}\sqrt{d \ln\del{1 + \fr{2TL_z^2}{d(2\wed \sigmin(V))} } + 2\ln\del{\fr{2}{\dt}\cd \log^2_2\del{\fr{4}{2\wed \sigmin(V)} }  } }.
  \end{align*}
  By Lemma \ref{lem:covering} and a union bound, for the $\eps$-covering $\cC_\eps$, we have the following event happens with probability no more than $\delta$:
  \begin{align*}
    \mc{E}:=\cbr{\exists x\in \cC_\eps, \vcn{Z_{T}^{\top}S x}_{V^{-1}}\geq \sqrt{2}\sigma_{\eta}\sqrt{d \ln\del{1 + \fr{2TL_z^2}{d(2\wed \sigmin(V))} } + 2\ln\del{\fr{2\abr{\cC_\eps}}{\dt}\cd \log^2_2\del{\fr{4}{2\wed \sigmin(V)} }  } }}.
  \end{align*}
  We abbreviate $\widehat{\log}\rbr{Z_{T},\delta}:=\sqrt{2}\sqrt{d \ln\del{1 + \fr{2TL_z^2}{d(2\wed \sigmin(V))} } + 2\ln\del{\fr{2\abr{\cC_\eps}}{\dt}\cd \log^2_2\del{\fr{4}{2\wed \sigmin(V)} }  } }$
  When $\mc{E}$ does not happen, we have 
  \begin{align*} 
    \vcn{V^{-1/2} Z^\top  S}_{\text{op}}&= \sup_{\cbr{x\mid\vcn{x}=1}} \vcn{Z_{T}^{\top}S x}_{V^{-1}} \\
    &= \sup_{\cbr{x\mid\vcn{x}=1}} \min_{\cbr{y\mid y\in C_\eps}}\vcn{Z_{T}^{\top}S (x-y+y)}_{V^{-1}} \\
    &\le \sup_{\cbr{x\mid\vcn{x}=1}} \min_{\cbr{y\mid y\in C_\eps}}\rbr{\vcn{Z_{T}^{\top}S (x-y)}_{V^{-1}}+\vcn{Z_{T}^{\top}S y}_{V^{-1}}} \\
    &\le \sup_{\cbr{x\mid\vcn{x}=1}} \min_{\cbr{y\mid y\in C_\eps}}\rbr{\vcn{V^{-1/2}Z^\top S}_{\text{op}}\vcn{x-y}_2+\sigma_{\eta}\widehat{\log}\rbr{Z_{T},\delta}} \\
    &\le \eps\vcn{V^{-1/2} Z_{T}^\top  S}_{\text{op}}+\sigma_{\eta}\widehat{\log}\rbr{Z_{T},\delta}.
  \end{align*}
  Thus,
  \begin{align*}
    \vcn{V^{-1/2}Z_{T}^\top S}_{\text{op}}\le\fr{\sigma_{\eta}}{1-\eps}\widehat{\log}\rbr{Z_{T},\delta}.
  \end{align*}
  By choosing $\eps=\fr{1}{2}$ and \cref{lem:covering}, we have
  \begin{align*}
    \vcn{V^{-1/2}Z_{T}^\top S}_{\text{op}}\le\sigma_{\eta}\sqrt{8d \ln\del{1 + \fr{2TL_z^2}{d(2\wed \sigmin(V))} } + 16\ln\del{\fr{2\cd6^d}{\dt}\cd \log^2_2\del{\fr{4}{2\wed \sigmin(V)} }  } }.
  \end{align*}
\end{proof}

\begin{lemma}
  \label{lem:self normalized bound}
  [Self-Normalized Bound for Vector-Valued Martingales]  
  Suppose we have $z_1,\ldots, z_t\in\RR^d$ and $\eta_1,\ldots,\eta_t\in\RR^d$ such that $\eta_t \mid z_1,\eta_1,\ldots,z_{t-1},\eta_{t-1},z_t$ is $\sigma_{\eta}^2$-sub-Gaussian vector (defined in \cref{assump:noise}).
  Let $Z, S\in\mathbb{R}^{t\times d}$ be matrices whose $s$-th row is $z_s^\T$ and $\eta_s^\T$ respectively. Suppose $\|z_s\| \le L_z,\forall s$.
  Let $V_t= Z^\T Z$.
  Then, $\forall b\in\mathbb{R}^t, \delta\in(0,1)$, we have, with probability at least $1-\delta$, 
  \begin{align*}
    \forall t\ge1, \vcn{ Z^\top  Sb}_{V_t^{-1}}\le\sqrt{2}\vcn{b}_2\sigma_{\eta}\sqrt{d \ln\del{1 + \fr{2tL_z^2}{d(2\wed \sigmin(V_t))} } + 2\ln\del{\fr{2}{\dt}\cd \log^2_2\del{\fr{4}{2\wed \sigmin(V_t)} }  } }.
  \end{align*}
\end{lemma}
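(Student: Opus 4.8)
The plan is to reduce this vector-valued statement to the classical self-normalized tail bound for a \emph{scalar}-noise martingale, and then to remove the dependence on a fixed ridge regularizer by a peeling argument over a dyadic grid of regularizers. First I would write, for $b \in \RR^d$, the identity $Z^\top S b=\sum_{s=1}^t z_s\ibr{\eta_s}{b}$ and set $\xi_s := \ibr{\eta_s}{b}$. Since $z_s$ is predictable and $\eta_s$ is conditionally $\sig_\eta^2$-sub-Gaussian given $z_1,\eta_1,\ldots,\eta_{s-1},z_s$, the scalar $\xi_s$ is conditionally $\vcn{b}_2^2\sig_\eta^2$-sub-Gaussian and forms a martingale difference sequence. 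Hence $S_t := \sum_{s\le t} z_s\xi_s$ is exactly a vector-valued martingale of the form treated by the self-normalized bound of \cite{abbasi2011improved}, with effective noise parameter $R^2=\vcn{b}_2^2\sig_\eta^2$ and Gram matrix $V_t=Z^\top Z$.

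Next, for any fixed deterministic regularizer $\lambda>0$, that bound gives, with probability at least $1-\delta'$ and simultaneously for all $t\ge1$,
\[
\vcn{S_t}^2_{(V_t+\lambda I)^{-1}}\le 2\vcn{b}_2^2\sig_\eta^2\rbr{\tfrac12\ln\tfrac{\det(V_t+\lambda I)}{\lambda^d}+\ln\tfrac{1}{\delta'}}.
\]
I would then control the log-determinant by AM--GM together with the trace bound $\mathrm{tr}(V_t)=\sum_{s\le t}\vcn{z_s}_2^2\le tL_z^2$, which yields $\det(V_t+\lambda I)\le(\lambda+tL_z^2/d)^d$ and hence a determinant term at most $\tfrac{d}{2}\ln(1+tL_z^2/(d\lambda))$. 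The important feature is that for each fixed $\lambda$ this holds for all $t$ at once (by the method-of-mixtures/stopping-time argument underlying the bound), so no union bound over $t$ is needed.

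The main obstacle is that the regularizer one would \emph{like} to use, $\lambda\approx 2\wed\sigmin(V_t)$, is random and varies with $t$, so it cannot be fed into the fixed-$\lambda$ bound. I resolve this by a dyadic peeling over $\lambda$: apply the display above along the grid $\lambda_j=2^{1-j}$ (for $j\ge0$) with confidence levels $\delta_j\propto\delta/(j+1)^2$ chosen so that $\sum_j\delta_j\le\delta$. On the intersection event I set $\lambda_{\min}:=2\wed\sigmin(V_t)$ and pick the index $j^\star$ with $\lambda_{j^\star}\in(\lambda_{\min}/2,\lambda_{\min}]$, which obeys $j^\star\le\log_2(4/\lambda_{\min})$. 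Converting back to the unregularized norm uses $V_t\preceq V_t+\lambda_{j^\star}I\preceq(1+\lambda_{j^\star}/\sigmin(V_t))V_t$, so that $\vcn{S_t}^2_{V_t^{-1}}\le(1+\lambda_{j^\star}/\sigmin(V_t))\vcn{S_t}^2_{(V_t+\lambda_{j^\star}I)^{-1}}\le 2\,\vcn{S_t}^2_{(V_t+\lambda_{j^\star}I)^{-1}}$, since $\lambda_{j^\star}\le\lambda_{\min}\le\sigmin(V_t)$. Substituting $\lambda_{j^\star}\ge\lambda_{\min}/2$ into the log-determinant term produces the factor $2tL_z^2/(d(2\wed\sigmin(V_t)))$ inside the logarithm, while $\ln(1/\delta_{j^\star})\lesssim\ln\big(\tfrac{2}{\delta}\log_2^2(4/(2\wed\sigmin(V_t)))\big)$ supplies the doubly-logarithmic correction. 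Taking square roots and folding the overall factor of $2$ from the norm conversion into the $\sqrt2$ prefactor then gives exactly the claimed bound; the only delicate bookkeeping is matching the constants in this final peeling step, which is where I expect the real work to lie.
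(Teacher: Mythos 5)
Your proposal is correct and follows essentially the same route as the paper: the paper's proof first reduces to the scalar noise $\eps_s = \langle \eta_s, b\rangle/\|b\|_2$ exactly as you do, and then invokes its Lemma~\ref{lem:self-normalized-kj}, whose proof is precisely your argument — the anytime fixed-$\lambda$ self-normalized bound (Lattimore--Szepesv\'ari Theorem~20.4 with the trace/determinant simplification), a union bound over the dyadic grid $\lam_k = 2^{-k+1}$ with $\dt_k \propto \dt/k^2$, the choice $\lam_{k^\star}\in(\sigmin(V_t)/2,\, \sigmin(V_t)]$ (capped at $2$), and the factor-$2$ conversion $\normz{\cdot}^2_{V_t^{-1}} \le 2\normz{\cdot}^2_{(V_t+\lam_{k^\star} I)^{-1}}$. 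The only differences are cosmetic (grid indexing starting at $\lam_0=2$ and the resulting constant inside the doubly-logarithmic term), which your closing remark about constant bookkeeping already covers.
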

\begin{proof}
  Since each row of $S$ is a $\sigma_{\eta}^2$-subGaussian vector, we have that $(Sb)_i / \|b\|$ is $\sigma_{\eta}^2$-subGaussian.
  Using \cref{lem:self-normalized-kj} with $\eps_s = (Sb)_i / \|b\|$ completes the proof. 
\end{proof}

\begin{lemma}\cite{lattimore2020bandit}[Lemma 20.1] 
  \label{lem:covering}    
  There exists a set $\cC_\eps\subset \mathbb{R}^d$ with $\abr{\cC_\eps}\leq \rbr{\fr{3}{\eps}}^d$ such that for any $x\in\cbr{x\mid x\in\mathbb{R}^d, \vcn{x}_2=1}$, there exists a $y\in\cC_\eps$ such that $\vcn{x-y}_2\leq \eps$.
\end{lemma}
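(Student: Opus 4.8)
The plan is to construct $\cC_\eps$ explicitly as a maximal $\eps$-separated subset of the unit sphere $\mathbb{S}^{d-1}:=\cbr{x\in\mathbb{R}^d\mid\vcn{x}_2=1}$ and then to bound its cardinality by a standard volumetric packing argument. Concretely, I would build $\cC_\eps$ greedily: pick any point of $\mathbb{S}^{d-1}$, and then repeatedly add a point $x\in\mathbb{S}^{d-1}$ whenever $\vcn{x-y}_2>\eps$ for every $y$ already selected, halting when no further such point exists. Since every newly added point lies at distance strictly greater than $\eps$ from all previous ones and the sphere is bounded, the volume bound derived below forces this procedure to terminate with a finite set $\cC_\eps$.

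The covering property then follows immediately from maximality. If some $x\in\mathbb{S}^{d-1}$ satisfied $\vcn{x-y}_2>\eps$ for all $y\in\cC_\eps$, it could have been appended, contradicting that the construction had already terminated. Hence every $x$ on the sphere admits a $y\in\cC_\eps$ with $\vcn{x-y}_2\le\eps$, which is exactly the claimed net property.

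For the cardinality bound I would compare Euclidean volumes. The balls $B(y,\eps/2)$ for $y\in\cC_\eps$ are pairwise disjoint, since any two centers are separated by more than $\eps$, and each is contained in $B(0,1+\eps/2)$ because every center has norm $1$ (for $z\in B(y,\eps/2)$ one has $\vcn{z}_2\le\vcn{y}_2+\vcn{z-y}_2\le 1+\eps/2$). Using that the volume of a ball of radius $r$ in $\mathbb{R}^d$ is proportional to $r^d$, disjointness together with containment gives $\abr{\cC_\eps}\rbr{\eps/2}^d\le\rbr{1+\eps/2}^d$, i.e. $\abr{\cC_\eps}\le\rbr{1+2/\eps}^d$.

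The only delicate point is converting $1+2/\eps$ into the stated $3/\eps$, which holds precisely in the regime $\eps\in(0,1]$ that is of interest here (the lemma is invoked with $\eps=1/2$): there $1\le 1/\eps$, so $1+2/\eps\le 3/\eps$ and therefore $\abr{\cC_\eps}\le\rbr{3/\eps}^d$, as required. I do not anticipate any genuine obstacle in this argument; the only things requiring care are correctly setting up the disjoint half-radius balls and their common enclosing ball, and the final constant bookkeeping in the last step.
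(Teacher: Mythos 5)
Your proof is correct: the paper does not prove this lemma itself but cites it as Lemma 20.1 of \cite{lattimore2020bandit}, and your maximal $\eps$-separated set plus volumetric packing argument (disjoint radius-$\eps/2$ balls inside a ball of radius $1+\eps/2$, giving $\abr{\cC_\eps}\le(1+2/\eps)^d$) is precisely the standard proof used there. Your final observation that $(1+2/\eps)^d\le(3/\eps)^d$ requires $\eps\le 1$ is also the right bookkeeping, and it is harmless here since the paper only invokes the lemma with $\eps=1/2$.
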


\begin{lemma}\label{lem:self-normalized-kj}
  Let $z_1,z_2,\ldots \in \{z\in\RR^d: \|z\|_2 \le L_z\}$ and $\eps_1,\eps_2,\ldots \in \RR$ be random variables such that $\eps_k \mid z_1,\eps_1,\ldots,z_{t-1},\eps_{t-1},z_{t}$ is $\sig^2_\eps$-sub-Gaussian.
  Let $V_t = \sum_{s=1}^t z_s z_s^\T$.
  Then,
  \begin{align*}
    1-\dt \le \PP\del{\forall t\ge1, \norm{\sum_{s=1}^t z_s \eps_s}_{V_t^{-1}} 
      \le \sqrt{2} \sig_\eps\sqrt{d \ln\del{1 + \fr{2tL_z^2}{d(2\wed \sigmin(V_t))} } + 2\ln\del{\fr{2}{\dt}\cd \log^2_2\del{\fr{4}{2\wed \sigmin(V_t)} }  } } }
  \end{align*}
\end{lemma}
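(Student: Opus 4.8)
The plan is to prove this time-uniform, self-normalized bound via the \emph{method of mixtures} (pseudo-maximization), following the template of \citet{abbasi2011improved} but with an \emph{adaptive regularizer} so that no ridge term is fixed in advance and instead the data-dependent quantity $\sigmin(V_t)$ appears in the bound. Write $S_t = \sum_{s=1}^t z_s \eps_s$ and $V_t = \sum_{s=1}^t z_s z_s^\top$. First I would fix an arbitrary $\lambda \in \RR^d$ and show that
\[
M_t^\lambda := \exp\left(\langle \lambda, S_t\rangle - \tfrac{\sigma_\eps^2}{2}\|\lambda\|_{V_t}^2\right)
\]
is a nonnegative supermartingale with $M_0^\lambda=1$. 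This is immediate from the conditional sub-Gaussianity assumption: conditioning on the past and on $z_t$, the factor $\exp(\langle\lambda,z_t\rangle\eps_t - \tfrac{\sigma_\eps^2}{2}\langle\lambda,z_t\rangle^2)$ has conditional expectation at most $1$, and one has $\sum_{s\le t}\langle\lambda,z_s\rangle^2 = \|\lambda\|_{V_t}^2$ and $\sum_{s\le t}\langle\lambda,z_s\rangle\eps_s = \langle\lambda,S_t\rangle$.

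Next I would mix $M_t^\lambda$ over a Gaussian prior $\lambda \sim \mathcal{N}(0,(\sigma_\eps^2\rho)^{-1}I)$ for a fixed $\rho>0$. Since mixing against a fixed measure preserves the supermartingale property, $\bar M_t^\rho := \int M_t^\lambda\, d\pi_\rho(\lambda)$ is again a nonnegative supermartingale with mean at most $1$, and completing the square in the Gaussian integral yields the closed form
\[
\bar M_t^\rho = \left(\frac{\det(\rho I)}{\det(\rho I + V_t)}\right)^{1/2}\exp\left(\frac{1}{2\sigma_\eps^2}\|S_t\|_{(\rho I + V_t)^{-1}}^2\right).
\]
Ville's maximal inequality then gives, for each fixed $\rho$ and level $\delta'$, that with probability at least $1-\delta'$, simultaneously for all $t\ge 1$,
\[
\|S_t\|_{(\rho I + V_t)^{-1}}^2 \le 2\sigma_\eps^2\ln\frac{1}{\delta'} + \sigma_\eps^2\ln\frac{\det(\rho I + V_t)}{\det(\rho I)}.
\]

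The crux, and the main obstacle, is that the target bound is in $\|\cdot\|_{V_t^{-1}}$ rather than the regularized norm, and the ideal ridge $\rho$ would be $\sigmin(V_t)$, which is random and time-varying and so cannot be fixed inside the supermartingale. My plan is to resolve this by a \emph{stitching} union bound over a dyadic grid $\rho_j = 2^{-j}$, $j\ge 0$, allocating failure probability $\delta_j \propto \delta/j^2$ so that $\sum_j \delta_j \le \delta$. On the realized path I would then pick the index $j^\star$ with $\rho_{j^\star}\in(\tfrac12(2\wed\sigmin(V_t)),\,2\wed\sigmin(V_t)]$; since $\rho_{j^\star} I \preceq V_t$ we get $\rho_{j^\star}I + V_t \preceq 2V_t$ and hence $\|S_t\|_{V_t^{-1}}^2 \le 2\|S_t\|_{(\rho_{j^\star}I+V_t)^{-1}}^2$, which is the source of the overall $\sqrt2$ prefactor. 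Substituting $\delta_{j^\star}$ into the Ville bound turns $\ln(1/\delta_{j^\star})$ into $\ln(\tfrac{2}{\delta}\log_2^2(\tfrac{4}{2\wed\sigmin(V_t)}))$, since $j^\star \asymp \log_2(1/(2\wed\sigmin(V_t)))$, matching the stated second term; the $1/j^2$ allocation is precisely what produces the squared logarithm. Finally I would bound the log-determinant by the trace bound $\ln\frac{\det(\rho I + V_t)}{\det(\rho I)} \le d\ln(1 + \tfrac{\operatorname{tr}(V_t)}{d\rho}) \le d\ln(1 + \tfrac{2tL_z^2}{d(2\wed\sigmin(V_t))})$, using $\operatorname{tr}(V_t)=\sum_s\|z_s\|^2 \le tL_z^2$ and concavity of $\log\det$. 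Collecting these pieces and taking a square root gives exactly the claimed inequality. The delicate points to get right are the $1/j^2$ allocation and the capping at $2\wed\sigmin(V_t)$, which keeps the grid from needing indices below $0$ when $V_t$ is already well-conditioned.
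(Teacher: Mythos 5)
Your proposal is correct and takes essentially the same route as the paper: the paper likewise pays a factor $\sqrt{2}$ to pass from $\|S_t\|_{V_t^{-1}}$ to a regularized norm, invokes the standard regularized self-normalized inequality (Theorem 20.4 and Note 20.2 of \citet{lattimore2020bandit}, which is exactly the method-of-mixtures bound you re-derive from scratch) on the dyadic grid $\lam_k = 2^{-k+1}$ with a $\dt/k^2$ allocation, and then selects the grid index adapted to $\sigmin(V_t)$, with the case $\sigmin(V_t)\ge 1$ handled by capping at the largest grid point. The only substantive differences are cosmetic: you unpack the mixture argument the paper cites as a black box, and two small indexing fixes are needed ($\dt_j \propto \dt/j^2$ is undefined at $j=0$, and when $\sigmin(V_t)\ge 1$ the interval $\bigl(\tfrac12(2\wed\sigmin(V_t)),\, 2\wed\sigmin(V_t)\bigr]$ contains no grid point, so one should simply take $j^\star = 0$, i.e.\ $\rho_{j^\star}=1$, exactly as in the paper's case split).
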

\begin{proof}
  Let $Z_t \in \RR^{t\times d}$ be the design matrix and define $\eps_t := (\eps_1,\ldots, \eps_t)^\T \in \RR^t$.
  Let us omit the subscript $t$ from $Z_t$, $\eps_t$ and $V_t$.
  Note that
  \begin{align*}
    \normz{Z^\T\eps}_{V^{-1}} 
    =\normz{Z^\T\eps}_{(\fr12 V + \fr12 V)^{-1}} 
    ~\le\normz{Z^\T\eps}_{(\fr12 V + \fr12 \sigmin(V) I)^{-1}} 
    ~\le\sqrt{2}\normz{Z^\T\eps}_{( V +  \sigmin(V) I)^{-1}} 
  \end{align*}
  It remains to bound $\normz{Z^\T\eps}_{( V +  \sigmin(V) I)^{-1}}$.
  We use union bound with the standard self-normalized inequality of~\citet{lattimore2020bandit}[Theorem 20.4 and Note 20.2].
  Specifically, let $\lam_k = 2^{-k+1}$ for $k\ge1$.
  Then,
  \begin{align*}
    1-\dt \le \PP\del{\cE := \cbr{\forall k \in \NN_+, \forall t\ge1, \normz{Z^\T\eps}_{(V + \lam_k I)^{-1}} \le \sig_\eps\sqrt{d \ln\del{1 + \fr{tL^2}{d \lam_k} } + 2\ln\del{\fr{(\pi^2/6) k^2}{\dt} } } }  }
  \end{align*}
  Under the event $\cE$, we have two cases:
  \begin{itemize}
    \item $\sigmin(V) \ge 1$: choose $k=1$. Then, 
    \begin{align*}
      \normz{Z^\T\eps}_{( V +  \sigmin(V) I)^{-1}} 
      &\le \normz{Z^\T\eps}_{( V + I)^{-1}} 
      \\&\le \sig_\eps\sqrt{d \ln\del{1 + \fr{tL^2}{d} } + 2\ln\del{\fr{(\pi^2/6)}{\dt} } }
    \end{align*}
    \item $\sigmin(V) < 1$: choose $k = \lcl  \log_2(2\sigmin(V)^{-1}) \rcl \ge 2$, which is the $k$ that satisfies $\lam_k \le \sigmin(V) < \lam_{k-1}$. Then,
    \begin{align*}
      \normz{Z^\T\eps}_{( V +  \sigmin(V) I)^{-1}} 
      &\le \normz{Z^\T\eps}_{( V +  \lam_k I)^{-1}} 
      \\&\le \sig_\eps\sqrt{d \ln\del{1 + \fr{tL^2}{d \lam_k} } + 2\ln\del{\fr{(\pi^2/6) k^2}{\dt} } }
      \\&\le \sig_\eps\sqrt{d \ln\del{1 + \fr{tL^2}{d  \sigmin(V)/2} } + 2\ln\del{\fr{(\pi^2/6) \lcl  \log_2(2\sigmin(V)^{-1}) \rcl^2}{\dt} } } \tag{$\lam_k = \fr12 \lam_{k-1} > \fr12 \sigmin(V)$; def'n of $k$ }~.
    \end{align*}
  \end{itemize}
  Altogether, we have
  \begin{align*}
    \normz{Z^\T\eps}_{( V +  \sigmin(V) I)^{-1}} 
    \le \sig_\eps\sqrt{d \ln\del{1 + \fr{tL^2(1 \vee 2\sigmin^{-1}(V))}{d} } + 2\ln\del{\fr{(\pi^2/6) (1 \vee \lcl  \log_2(2\sigmin^{-1}(V)) \rcl^2)}{\dt} } } ~.
  \end{align*}
  We conclude the proof by simplifying the RHS.
\end{proof}

\section{Proofs of sample complexity when given \texorpdfstring{$\Gamma$}{}}

\subsection{Proof of \cref{thm:complexity knownGamma}}
\label{app:complexity knownGamma}
\begin{algorithm}[H]
  \caption{Confounded pure exploration with known $\Gamma$}
  \begin{algorithmic}
    \State \textbf{Input} $\mc{Z}, \mc{W}, \Gamma, \delta, \epsilon, L_{\nu}\ge\sigma_{\nu}^2$
    \State \textbf{Initialize:} $k=1, \hat{\mc{W}}_k=\mc{W}$
    \State \textbf{Define}
    $f(w,w',\Gamma,\lambda):=\|w-w'\|^2_{(\sum_{z\in \mathcal{Z}}\Gamma^{\top}\lambda_z zz^{\top}\Gamma)^{-1}}$
    \While{$\abr{\hat{\mc{W}}_k}>1$}
    \State $\lambda_k=\argmin_{\lambda\in\Delta(\mc{Z})}\max_{w,w'\in\mc{W}_k}f(w,w',\Gamma,\lambda)$
    \State $\rho(\mc{W}_k)=\min_{\lambda\in\Delta(\mc{Z})}\max_{w,w'\in\mc{W}_k}f(w,w',\Gamma,\lambda)$
    \State $\zeta_k=2^{-k}$
    \State $N_k=\ur{2(1+\omega)\zeta_k^{-2}\rho(\mc{W}_k)L_{\nu}\log\rbr{\fr{4k^2\abr{\mc{W}}}{\delta}}}\vee r(\omega)$
    \State $Z_{N_k}=\text{ROUND}(\lambda_k,N_k)$
    \State Pull arms in $Z_{N_k}$ and observe $Y_{N_k}$
    \State Compute $\widehat{\theta}_{\texttt{oracle}}^k=\rbr{Z_{N_k}^\top Z_{N_k}\Gamma}^{-1}Z_{N_k}^\top Y_{N_k}$
    \State $\mc{W}_{k+1}=\mc{W}_k\backslash\cbr{w\in\mc{W}_k\mid \exists w'\in\mc{W}_k,\text{s.t.},\ibr{w'-w}{\widehat{\theta}_{\texttt{oracle}}^k}>\zeta_k}$
    \State $k=k+1$
    \EndWhile
    \State \textbf{Output:} $\mc{W}_k$
  \end{algorithmic}
\end{algorithm}

\newtheorem*{app-thm:complexity knownGamma}{Theorem~\ref{thm:complexity knownGamma}}
\begin{app-thm:complexity knownGamma}
  Algorithm~\ref{alg-TrueGamma} is $\delta$-PAC for the \texttt{CPET-LB} problem and terminates in at most $c(1+\omega)L_{\nu}\rho^{\ast}\log(1/\delta)+cr(\omega)$ samples, 
  where $c$ hides logarithmic factors of $\Delta$ and $\abr{\mc{W}}$.
\end{app-thm:complexity knownGamma}
\begin{proof}
  \textbf{Part 1 $\delta$-PAC}
  
  By the confidence interval in \cref{thm:oracle 2SLS}, we have, with probability at least $1-\fr{\delta}{2 k^2\abr{\mc{W}}}$,
  \begin{align*}
    \abr{\ibr{\widehat{\theta}_{\texttt{oracle}}^k-\theta}{w-w^*}}&\le\vcn{w-w^*}_{\rbr{\Gamma^\top\rbr{\sum_{s=1}^{N_k }z_{I_t}z_{I_t}^\top}\Gamma}^{-1}}\sqrt{2\sigma_{\nu}^2\log\rbr{\fr{4 k^2\abr{\mc{W}}}{\delta}}}\\
    &\le\fr{\sqrt{1+\omega}\vcn{w-w^*}_{\rbr{\Gamma^\top\lambda_z zz^{\top}\Gamma}^{-1}}}{\sqrt{N_k }}\sqrt{2\sigma_{\nu}^2\log\rbr{\fr{4 k^2\abr{\mc{W}}}{\delta}}}\\
    &\le\fr{\sqrt{1+\omega}\vcn{w-w^*}_{\rbr{\Gamma^\top\lambda_z zz^{\top}\Gamma}^{-1}}}{\sqrt{\ur{2(1+\omega)\zeta_k^{-2}\rho(\mc{W}_k)L_{\nu}\log\rbr{\fr{4k^2\abr{\mc{W}}}{\delta}}}\vee r(\omega)}}\sqrt{2\sigma_{\nu}^2\log\rbr{\fr{4 k^2\abr{\mc{W}}}{\delta}}}\\
    &\le\zeta_k.
  \end{align*}
  Define a good event $\mc{E}_{k, w }$ for each $k$ and $\mc{W}_k$ as
  \begin{align*}
    \mc{E}_{k, w }=\cbr{\abr{\ibr{\widehat{\theta}_{\texttt{oracle}}^k-\theta}{w-w^*}}\le\zeta_k }.
  \end{align*}
  We claim that with probability at least $1-\delta$, the event $\bigcap_{k=1}^{\infty}\bigcap_{w\in\mc{W}_k}\mc{E}_{k,w}$ holds. It can be proved by a union bound.
  \begin{align}
    \pp{\rbr{\bigcap_{k=1}^{\infty}\bigcap_{w\in\mc{W}_k}\mc{E}_{w,k}}^c}\le&\sum_{k=1}^{\infty}\sum_{w\in\mc{W}_k}\pp{\mc{E}_{k,w}^c}\nn\\
    \le&\sum_{k=1}^{\infty}\sum_{w\in\mc{W}_k}\fr{\delta}{2k^2\abr{\mc{W}}}\nn\\
    \le&\fr{\delta}{2}\sum_{k=1}^{\infty}\fr{1}{k^2}\nn\\
    \le&\delta, \label{ieq:good event}
  \end{align}
  where the last step is by the fact that $\sum_{k=1}^{\infty}\fr{1}{k^2}=\fr{\pi^2}{6}<2$. Under the event $\bigcap_{k=1}^{\infty}\bigcap_{w\in\mc{W}_k}\mc{E}_{w,k}$, to show that the best arm is never eliminated, it suffices to show that for any sub-optimal arm $w\in\mc{W}_k$,
  \begin{align*}
    \ibr{w-w^*}{\widehat{\theta}_{\texttt{oracle}}^k}&=\ibr{w-w^*}{\widehat{\theta}_{\texttt{oracle}}^k-\theta}+\ibr{w-w^*}{\theta}\\
    &\le\ibr{w-w^*}{\widehat{\theta}_{\texttt{oracle}}^k-\theta}\\
    &\le\zeta_k.
  \end{align*}
  Thus the best arm $w^*$ never satisfies the elimination condition. Next we show that at the end of stage $k$, any suboptimal arm $w$ that satisfies
  \begin{align*}
    \ibr{\theta}{w^*-w}\ge&2\zeta_k
  \end{align*}
  is eliminated. To show this, we need to show that $w$ satisfies the elimination condition,
  \begin{align*}
    \max_{w'\in \mc{W}_k}\ibr{\widehat{\theta}_{\texttt{oracle}}^k}{w'-w}\ge&\ibr{\widehat{\theta}_{\texttt{oracle}}^k}{w^*-w}\\
    =&\ibr{\widehat{\theta}_{\texttt{oracle}}^k-\theta}{w^*-w}+\ibr{\theta}{w^*-w}\\
    \ge&-\zeta_k+2\zeta_k\\
    =&\zeta_k.
  \end{align*}
  This implies that with probability at least $1-\delta$, $ w ^*$ always survives. 
  
  \textbf{Part 2 sample complexity}
  
  Define $S_k =\cbr{ w \in\mc{W}\mid \ibr{ w ^*- w }{\theta}\le4\zeta_k }$. Thus with probability at least $1-\delta$, we have $\bigcap_{k}\cbr{\mc{W}_k \subseteq S_k }$. This implies the following is true with probability at least $1-\delta$ for all $k$
  \begin{align*}
    \rho(\mc{W}_k )&=\min_{\lambda\in\Delta(\mc{Z})}\max_{ w, w'\in \mc{W}_k }\vcn{w-w'}^2_{\rbr{\sum_{z\in \mathcal{Z}}\Gamma^{\top}\lambda_z zz^{\top}\Gamma}^{-1}}\\
    &\le\min_{\lambda\in\Delta(\mc{Z})}\max_{ w, w'\in S_k }\vcn{w-w'}^2_{\rbr{\sum_{z\in \mathcal{Z}}\Gamma^{\top}\lambda_z zz^{\top}\Gamma}^{-1}}\\
    &=\rho(S_k ).
  \end{align*}
  Define $\Delta$ to be the minimum gap between $ w ^*$ and any other $ w \in\mc{W}$, i.e., $\Delta:=\min_{ w \in\mc{W}\setminus\cbr{ w ^*}}\ibr{ w ^*- w }{\theta}$. Then for $k\ge\ur{\log\rbr{4\Delta^{-1}}}$, we have $S_k =\cbr{ w ^*}$ with probability at least $1-\delta$. The total sample complexity is the summation of the number of samples in each round,
  \begin{align}
    &\sum_{k=1}^{\ur{\log\rbr{4\Delta^{-1}}}}N_k \nn\\
    =&\sum_{k=1}^{\ur{\log\rbr{4\Delta^{-1}}}}\rbr{\ur{4(1+\omega)\zeta_k^{-2}\rho(\mc{W}_k)L_{\nu}\log\rbr{\fr{4k^2\abr{\mc{W}}}{\delta}}}\vee r(\omega)} \nn\\
    \le&\sum_{k=1}^{\ur{\log\rbr{4\Delta^{-1}}}}\rbr{8(1+\omega)2^{2k}\rho(\mc{W}_k)L_{\nu}\log\rbr{\fr{4k^2\abr{\mc{W}}}{\delta}}\vee r(\omega)} \nn\\
    \le&\sum_{k=1}^{\ur{\log\rbr{4\Delta^{-1}}}}\rbr{8(1+\omega)2^{2k}\rho(\mc{S}_k)L_{\nu}\log\rbr{\fr{4k^2\abr{\mc{W}}}{\delta}}\vee r(\omega)}. \label{ieq:sum of Nk}
  \end{align}
  On the other hand, we have
  \begin{align}
    \rho^{\ast} =& \min_{\lambda \in \Delta(\mc{Z})}\max_{w\in \mathcal{W}\setminus \{w^{\ast}\}} \frac{\|w^{\ast}-w\|^2_{(\sum_{z\in \mathcal{Z}}\Gamma^{\top}\lambda_z zz^{\top}\Gamma)^{-1}}}{\langle w^{\ast}-w, \theta\rangle ^2} \nn\\
    =&\min_{\lambda\in\Delta(\mc{Z})}\max_{k}\max_{ w \in S_k }\frac{\|w^{\ast}-w\|^2_{(\sum_{z\in \mathcal{Z}}\Gamma^{\top}\lambda_z zz^{\top}\Gamma)^{-1}}}{\langle w^{\ast}-w, \theta\rangle ^2} \nn\\
    \ge&\fr{1}{\ur{\log\rbr{4\Delta^{-1}}}}\min_{\lambda\in\Delta(\mc{Z})}\sum_{k=1}^{\ur{\log\rbr{4\Delta^{-1}}}}\max_{ w \in S_k }\frac{\|w^{\ast}-w\|^2_{(\sum_{z\in \mathcal{Z}}\Gamma^{\top}\lambda_z zz^{\top}\Gamma)^{-1}}}{\langle w^{\ast}-w, \theta\rangle ^2} \nn\\
    \ge&\fr{1}{16\ur{\log\rbr{4\Delta^{-1}}}}\sum_{k=1}^{\ur{\log\rbr{4\Delta^{-1}}}}2^{2k }\min_{\lambda\in\Delta(\mc{Z})}\max_{ w \in S_k }\|w^{\ast}-w\|^2_{(\sum_{z\in \mathcal{Z}}\Gamma^{\top}\lambda_z zz^{\top}\Gamma)^{-1}} \nn\\
    \ge&\fr{1}{64\ur{\log\rbr{4\Delta^{-1}}}}\sum_{k=1}^{\ur{\log\rbr{4\Delta^{-1}}}}2^{2k }\min_{\lambda\in\Delta(\mc{Z})}\max_{ w , w '\in S_k }\|w-w'\|^2_{(\sum_{z\in \mathcal{Z}}\Gamma^{\top}\lambda_z zz^{\top}\Gamma)^{-1}} \nn\\
    =&\fr{1}{64\ur{\log\rbr{4\Delta^{-1}}}}\sum_{k=1}^{\ur{\log\rbr{4\Delta^{-1}}}}2^{2k }\rho(S_k ), \label{ieq:rho*}
  \end{align}
  where the last inequality is by the triangle inequality, i.e., 
  \begin{align*}
    \max_{w,w'\in S_k}\vcn{w-w'}^2_{(\sum_{z\in \mathcal{Z}}\Gamma^{\top}\lambda_z zz^{\top}\Gamma)^{-1}}\le&\max_{w,w'\in S_k}\rbr{\vcn{w-w^*}_{(\sum_{z\in \mathcal{Z}}\Gamma^{\top}\lambda_z zz^{\top}\Gamma)^{-1}}+\vcn{w^*-w'}_{(\sum_{z\in \mathcal{Z}}\Gamma^{\top}\lambda_z zz^{\top}\Gamma)^{-1}}}^2\\
    \le&4\max_{w\in S_k}\vcn{w-w^*}^2_{(\sum_{z\in \mathcal{Z}}\Gamma^{\top}\lambda_z zz^{\top}\Gamma)^{-1}}.
  \end{align*}
  Combining \eqref{ieq:sum of Nk} and \eqref{ieq:rho*}, we have
  \begin{align*}
    \sum_{k=1}^{\ur{\log\rbr{4\Delta^{-1}}}}N_k\le c(1+\omega)L_{\nu}\log\rbr{4\Delta^{-1}}\log\rbr{\fr{\log\rbr{4\Delta^{-1}}^2\abr{\mc{W}}}{\delta}}\rho^*+\log\rbr{4\Delta^{-1}}r(\omega).
  \end{align*}
\end{proof}

\section{Proofs of sample complexity when given \texorpdfstring{$\widehat{\Gamma}$}{}}
\label{app:alg with estimated Gamma}
\begin{algorithm}[H]
  \caption{Confounded pure exploration with known $\widehat{\Gamma}$}
  \label{alg-pluginHatGamma}
  \begin{algorithmic}
    \State \textbf{Input} $\mc{Z}, \mc{W}, \widehat{\Gamma}, \delta, \epsilon, L_{\nu}\ge\sigma_{\nu}^2, L_{\eta}\ge\sigma_{\eta}^2$
    \State \textbf{Initialize:} $k=1, \hat{\mc{W}}_k=\mc{W}$, calculate $\gamma$ using \eqref{eq:gamma}
    \State \textbf{Define}
    $f(w,w',\Gamma,\lambda):=\|w-w'\|^2_{(\sum_{z\in \mathcal{Z}}\Gamma^{\top}\lambda_z zz^{\top}\Gamma)^{-1}}$
    \While{$\exists w, w'\in\mc{W}_k,\text{s.t.},\ibr{w'-w}{\widehat{\theta}_{\texttt{oracle}}^k}>4\gamma$ or $\zeta_k>\gamma$}
    \State $\lambda_k=\argmin_{\lambda\in\Delta(\mc{Z})}\max_{w,w'\in\mc{W}_k}f(w,w',\widehat{\Gamma},\lambda)$
    \State $\rho(\mc{W}_k)=\min_{\lambda\in\Delta(\mc{Z})}\max_{w,w'\in\mc{W}_k}f(w,w',\widehat{\Gamma},\lambda)$
    \State $\zeta_k=2^{-k}$
    \State $N_k=\ur{2(1+\omega)\rbr{\zeta_k^{-2}\wedge\fr{1}{\gamma^2}}\rho(\mc{W}_k)L_{\nu}\log\rbr{\fr{4k^2\abr{\mc{W}}}{\delta}}}\vee r(\omega)$
    \State $Z_{N_k}=\text{ROUND}(\lambda_k,N_k)$
    \State Pull arms in $Z_{N_k}$ and observe $Y_{N_k}$
    \State Compute $\widehat{\theta}_{\texttt{P-2SLS}}^k=\rbr{Z_{N_k}^\top Z_{N_k}\widehat{\Gamma}}^{-1}Z_{N_k}^\top Y_{N_k}$
    \State $\mc{W}_{k+1}=\mc{W}_k\backslash\cbr{w\in\mc{W}_k\mid \exists w'\in\mc{W}_k,\text{s.t.},\ibr{w'-w}{\widehat{\theta}_{\texttt{P-2SLS}}^k}>\zeta_k+\gamma}$
    \State $k=k+1$
    \EndWhile
    \State \textbf{Output:} any $w\in\mc{W}_k$
  \end{algorithmic}
\end{algorithm}

\begin{theorem}
  Suppose that we have $\widehat{\Gamma}$ that is an OLS estimate from an offline dataset $\cbr{Z_T, X_T}$ collected non-adaptively through a fixed design $\xi$ and the efficient rounding procedure $\text{ROUND}$,as well as $T\ge\fr{4\sigma_\eta^2}{\sigma_{\min}\rbr{A(\xi, \Gamma)}}\overline{\log}\rbr{Z_{T},\delta/2}$. Algorithm \ref{alg-pluginHatGamma} guarantees that with probability at least $1-\delta$, a $6\gamma$-good arm is returned, where
  \begin{align}
    \gamma:= \max_{w,w'\in\mc{W}}\vcn{w-w'}_{\bar{A}(Z_T, \widehat{\Gamma})^{-1}}\vcn{\theta}_2\sqrt{L_\eta\overline{\log}\rbr{Z_T,\delta}}. \label{eq:gamma}
  \end{align}
  Also, the algorithm terminates in at most
  \begin{align*}
    c(1+\omega)L_{\nu}\log(1/\delta)\rho^{\ast}(\gamma)+c(1+\omega)^2L_{\nu}\log(1/\delta)\fr{\sigma_\eta^2\overline{\log}\rbr{Z_{T},\delta}}{\sigma_{\min}\rbr{A(\xi, \Gamma)}}\fr{\rho(\xi,\gamma)}{T}\vee cr(\omega)
  \end{align*}
  samples, where $c$ hides logarithmic factors of $\Delta$ and $\abr{\mc{W}}$ and $\gamma$.
  \begin{align*}
    \rho(\lambda,\gamma):=\max_{w\in \mathcal{W}\setminus \{w^{\ast}\}}\frac{\|w^{\ast}-w\|^2_{A(\lambda, \Gamma)^{-1}}}{\langle w^{\ast}-w, \theta\rangle ^2\vee\gamma^2}
  \end{align*}
  and
  \begin{align*}
    \rho^{\ast}(\gamma)=\min_{\lambda\in\Delta(\mc{Z})}\max_{w\in \mathcal{W}\setminus \{w^{\ast}\}}\frac{\|w^{\ast}-w\|^2_{A(\lambda, \Gamma)^{-1}}}{\langle w^{\ast}-w, \theta\rangle ^2\vee\gamma^2}.
  \end{align*}
\end{theorem}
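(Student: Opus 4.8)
The plan is to mirror the two-part structure of the proof of \cref{thm:complexity knownGamma}, replacing the oracle interval by the \texttt{P-2SLS} interval of \cref{thm:pseudo 2SLS} and carefully tracking the \emph{irreducible} bias $\gamma$ introduced by the plug-in $\widehat\Gamma$. The key preliminary observation is that, since $\widehat\Gamma$ is computed once from the fixed offline dataset $(Z_T,X_T)$, the second (``bias'') term of \cref{thm:pseudo 2SLS} is a fixed quantity: applying \cref{thm:ls_cb} to each gap vector $w-w'$ and using $L_\eta\ge\sigma_\eta^2$ together with \eqref{eq:gamma} yields, on a single high-probability event, $|\langle(\widehat\Gamma^{-1}\Gam-I)\theta,\,w-w'\rangle|\le\gamma$ for all $w,w'\in\mc{W}$ simultaneously. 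I would then declare the round-$k$ good event to be that $|\langle\widehat\theta^k_{\psls}-\theta,\,w-w'\rangle|\le(\zeta_k\vee\gamma)+\gamma$ for every surviving pair, where the first (``variance'') term of \cref{thm:pseudo 2SLS} is bounded by $\zeta_k\vee\gamma$ through the choice of $N_k$ (the $\zeta_k^{-2}\wedge\gamma^{-2}$ cap, the ROUND guarantee, and $L_\nu\ge\sigma_\nu^2$), exactly as in the variance computation in \cref{thm:complexity knownGamma}. A union bound over rounds and pairs weighted by $\delta/(2k^2|\mc{W}|)$ makes this hold with probability $1-\delta$.

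For the correctness claim I would show on this event that $w^\ast$ is never eliminated---the elimination threshold is inflated by $\gamma$ precisely to absorb the bias---and that the returned arm is $6\gamma$-good. The latter follows by writing, for the output $w$ and the survivor $w^\ast$, $\langle w^\ast-w,\theta\rangle=\langle w^\ast-w,\widehat\theta^k_{\psls}\rangle+\langle w^\ast-w,\theta-\widehat\theta^k_{\psls}\rangle$; the termination test caps the first summand at $4\gamma$, while the good event together with $\zeta_k\le\gamma$ at termination caps the second at $2\gamma$, for a total of $6\gamma$.

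The sample-complexity bound is where the real work lies. Following \eqref{ieq:sum of Nk}--\eqref{ieq:rho*}, I would bound the number of rounds by $O(\log(1/\gamma))$ (absorbing logarithmic factors of $\Delta$ into $c$), set $S_k=\{w:\langle w^\ast-w,\theta\rangle\le 4(\zeta_k\vee\gamma)\}$, verify $\mc{W}_k\subseteq S_k$, and sum $N_k$ so that the $\gamma$-truncated sum $\sum_k(\zeta_k^{-2}\wedge\gamma^{-2})\rho(S_k)$ telescopes into a multiple of $\rho^\ast(\gamma)$. The genuinely new ingredient, absent from \cref{thm:complexity knownGamma}, is that every per-round design value $\rho(\mc{W}_k)=\min_\lambda\max_{w,w'}\|\widehat\Gamma^{-\top}(w-w')\|^2_{V(\lambda)^{-1}}$ is computed with $\widehat\Gamma$ in place of $\Gam$. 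I would control the resulting discrepancy through a perturbation bound on $\widehat\Gamma^{-\top}-\Gamma^{-\top}$: \cref{lem:covering+martingale} gives $\|V(\xi)^{-1/2}Z_T^\top S\|_{\op}\le\sigma_\eta\sqrt{\overline{\log}(Z_T,\delta)}$, and the hypothesis $T\ge 4\sigma_\eta^2\,\overline{\log}(Z_T,\delta/2)/\sigma_{\min}(A(\xi,\Gam))$ forces $\widehat\Gamma$ to be within a constant factor of $\Gam$ in the relevant norm, keeping $\widehat\Gamma$ invertible and well-conditioned. Propagating this relative error, which is of order $\sigma_\eta^2\,\overline{\log}(Z_T,\delta)/(\sigma_{\min}(A(\xi,\Gam))\,T)$, through the design objective converts $\rho(\mc{W}_k)$ into $\rho^\ast(\gamma)$ plus a correction proportional to the offline-design hardness $\rho(\xi,\gamma)$, which assembles into the two terms of the stated bound.

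The step I expect to be the main obstacle is exactly this perturbation analysis. Unlike the known-$\Gam$ case, the functional $\lambda\mapsto\max_{w,w'}\|\widehat\Gamma^{-\top}(w-w')\|^2_{V(\lambda)^{-1}}$ is evaluated at a \emph{random} $\widehat\Gamma$, so I must ensure its concentration holds uniformly across all rounds (folded into the same high-probability event) and that the first-order expansion of the inverse $\widehat\Gamma^{-\top}$ about $\Gamma^{-\top}$ does not blow up---which is precisely what the lower bound on $T$ secures. Relating the optimal \emph{design value} under $\widehat\Gamma$ to the value under the true $\Gam$, while the active set $\mc{W}_k$ and hence the optimizing $\lambda$ vary with $k$, is the crux that distinguishes this argument from the oracle analysis.
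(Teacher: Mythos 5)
Your proposal is correct and follows essentially the same route as the paper's proof: the same $\gamma$-inflated good events and elimination thresholds, the same $6\gamma$-goodness decomposition at termination (termination test capping the empirical gap at $4\gamma$, the confidence bound contributing $\zeta_k+\gamma\le 2\gamma$), the same $S_k$-telescoping against $\rho^{\ast}(\gamma)$, and the same perturbation of the design value via the first-order expansion of $\widehat{\Gamma}^{-\top}$ about $\Gamma^{-\top}$, kept under control by the stated lower bound on $T$ (\cref{lem:approx any Gamma}, \cref{lem:upperbound Gamma inv_hat 2}). The one step you leave implicit---how the correction term, measured in the norm $A(\lambda,I)^{-1}$ of the current round's design, gets related back to the offline design---is resolved in the paper by comparing against the mixture $\lambda'_k=\frac{1}{2}\lambda^*_k+\frac{1}{2}\xi$, so that $\sigma_{\min}(A(\lambda'_k,\Gamma))\ge\frac{1}{2}\sigma_{\min}(A(\xi,\Gamma))$ and the correction becomes exactly the $\frac{\sigma_\eta^2\overline{\log}(Z_T,\delta)}{\sigma_{\min}(A(\xi,\Gamma))}\cdot\frac{\rho(\xi,\gamma)}{T}$ term you predict.
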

\begin{proof}
  The proof can be divided into four steps:
  \begin{itemize}
    \item The best arm $w^*$ is never eliminated.   
    \item At the end of stage $k$, any suboptimal arm $w$ that satisfies $\ibr{\theta}{w^*-w}\ge2\zeta_k+2\gamma$ is eliminated.
    \item The stopping condition is met in finite time.
    \item When the stopping condition is met, there are only $6\gamma$-good arms left.
    \item The upper bound of the sample complexity.
  \end{itemize}
  \textbf{Step 1: The best arm $w^*$ is never eliminated.}
  
  By the confidence interval in \cref{thm:pseudo 2SLS}, we have, with probability at least $1-\fr{\delta}{2 k^2\abr{\mc{W}}}$, for any $k$ and $w\in\mc{W}_k$,
  \begin{align*}
    \abr{\ibr{\widehat{\theta}_{\texttt{P-2SLS}}^k-\theta}{w-w^*}}&\le\vcn{w-w^*}_{\bar{A}(Z_{N_k}, \widehat{\Gamma})^{-1}}\sqrt{2\sigma_{\nu}^2\log\rbr{\fr{4 k^2\abr{\mc{W}}}{\delta}}}+\gamma\\
    &\le\fr{\sqrt{1+\omega}\vcn{w-w^*}_{A(\lambda_k, \widehat{\Gamma})^{-1}}}{\sqrt{N_k }}\sqrt{2\sigma_{\nu}^2\log\rbr{\fr{4 k^2\abr{\mc{W}}}{\delta}}}+\gamma\\
    &\le\fr{\sqrt{1+\omega}\vcn{w-w^*}_{A(\lambda_k, \widehat{\Gamma})^{-1}}}{\sqrt{\ur{2(1+\omega)\zeta_k^{-2}\rho(\mc{W}_k)L_{\nu}\log\rbr{\fr{4k^2\abr{\mc{W}}}{\delta}}}\vee r(\omega)}}\sqrt{2\sigma_{\nu}^2\log\rbr{\fr{4 k^2\abr{\mc{W}}}{\delta}}}+\gamma\\
    &\le\zeta_k+\gamma.
  \end{align*}
  Define a good event $\mc{E}_{k, w }$ for each $k$ and $ w \in\mc{W}_k$ as
  \begin{align*}
    \mc{E}_{k, w }=\cbr{\abr{\ibr{\widehat{\theta}_{\texttt{P-2SLS}}^k-\theta}{w-w^*}}\le\zeta_k+\gamma }.
  \end{align*}
  By the same calculation as \eqref{ieq:good event}, we claim that with probability at least $1-\delta$, the event $\bigcap_{k=1}^{\infty}\bigcap_{w\in\mc{W}_k}\mc{E}_{k,w}$ holds. Under the event $\bigcap_{k=1}^{\infty}\bigcap_{w\in\mc{W}_k}\mc{E}_{w,k}$, to show that the best arm is never eliminated, it suffices to show that for any sub-optimal arm $w\in\mc{W}_k$,
  \begin{align*}
    \ibr{w-w^*}{\widehat{\theta}_{\texttt{P-2SLS}}^k}&=\ibr{w-w^*}{\widehat{\theta}_{\texttt{P-2SLS}}^k-\theta}+\ibr{w-w^*}{\theta}\\
    &\le\ibr{w-w^*}{\widehat{\theta}_{\texttt{P-2SLS}}^k-\theta}\\
    &\le\zeta_k+\gamma.
  \end{align*}
  Thus the best arm $w^*$ never satisfies the elimination condition. 
  
  \textbf{Step 2: At the end of stage $k$, any suboptimal arm $w$ that satisfies $\ibr{\theta}{w^*-w}\ge2\zeta_k+2\gamma$ is eliminated.}
  
  To prove this, we show that such arm $w$ must satisfy the elimination condition,
  \begin{align*}
    \max_{w'\in \mc{W}_k}\ibr{\widehat{\theta}_{\texttt{P-2SLS}}^k}{w'-w}\ge&\ibr{\widehat{\theta}_{\texttt{P-2SLS}}^k}{w^*-w}\\
    =&\ibr{\widehat{\theta}_{\texttt{P-2SLS}}^k-\theta}{w^*-w}+\ibr{\theta}{w^*-w}\\
    \ge&-\zeta_k-\gamma+2\zeta_k+2\gamma\\
    =&\zeta_k+\gamma.
  \end{align*}
  Thus the arm $w$ is eliminated.
  
  \textbf{Step 3: The stopping condition is met in finite time.}
  
  Given the result in Step 2 and the fact that $\zeta_k$ is an exponentially decreasing sequence, we know that all of the arms $w$ satisfying $\ibr{\theta}{w^*-w}>2\gamma$ will be eliminated in finite time. This means that only the arms $w$ satisfying $\ibr{\theta}{w^*-w}\le2\gamma$ will remain. We need to show that $\forall w, w'\in\mc{W}_k,\ibr{w'-w}{\widehat{\theta}_{\texttt{P-2SLS}}^k}\le4\gamma$ can be achieved in finite time. When $\zeta_k\le\gamma$, (which will happen in finite time), 
  \begin{align*}
    \ibr{w'-w}{\widehat{\theta}_{\texttt{P-2SLS}}^k}=&\ibr{w'-w+w^*-w^*}{\widehat{\theta}_{\texttt{P-2SLS}}^k}\\
    =&\ibr{w^*-w}{\widehat{\theta}_{\texttt{P-2SLS}}^k}+\ibr{w'-w^*}{\widehat{\theta}_{\texttt{P-2SLS}}^k}\\
    =&\ibr{w^*-w}{\widehat{\theta}_{\texttt{P-2SLS}}^k-\theta+\theta}+\ibr{w'-w^*}{\widehat{\theta}_{\texttt{P-2SLS}}^k-\theta+\theta}\\
    =&\ibr{w^*-w}{\widehat{\theta}_{\texttt{P-2SLS}}^k-\theta}+\ibr{w^*-w}{\theta}+\ibr{w'-w^*}{\widehat{\theta}_{\texttt{P-2SLS}}^k-\theta}+\ibr{w'-w^*}{\theta}\\
    \le&\zeta_k+\gamma+2\gamma+\zeta_k+\gamma\\
    =&4\gamma.
  \end{align*}
  
  \textbf{Step 4: When the stopping condition is met, there are only $6\gamma$-good arms left.}
  
  For any $w\in\mc{W}_k$, we have
  \begin{align*}
    \ibr{w^*-w}{\theta}=&\ibr{w^*-w}{\theta-\widehat{\theta}_{\texttt{P-2SLS}}^k}+\ibr{w^*-w}{\widehat{\theta}_{\texttt{P-2SLS}}^k}\\
    \le&\zeta_k+\gamma+4\gamma\\
    =&6\gamma.
  \end{align*}
  
  \textbf{Step 5: The upper bound of the sample complexity.}
  
  Define $\mc{W}\rbr{2\gamma}$ as the set of $2\gamma$-good arms, i.e., $\mc{W}\rbr{2\gamma}:=\cbr{w\in\mc{W}\mid \ibr{\theta}{w^*-w}\le2\gamma}$. Then the best arm in the set $\mc{W}\backslash\mc{W}\rbr{2\gamma}$ has a suboptimality gap $\min_{w\in\mc{W}\backslash\mc{W}\rbr{2\gamma}}\ibr{\theta}{w^*-w}$. We define $\Delta_{\min}\rbr{2\gamma}:=\min_{w\in\mc{W}\backslash\mc{W}\rbr{2\gamma}}\ibr{\theta}{w^*-w}-2\gamma$. By the result in Step 2, we know that after $k^*:=\ur{\log\rbr{4\Delta_{\min}\rbr{2\gamma}^{-1}}}$ stages, all of the arms in $\mc{W}\backslash\mc{W}\rbr{2\gamma}$ are eliminated. Define $S_k =\cbr{ w \in\mc{W}\mid \ibr{ w ^*- w }{\theta}\le4\zeta_k+2\gamma}$. Thus with probability at least $1-\delta$, we have $\bigcap_{k}\cbr{\mc{W}_k \subseteq S_k }$.
  
  The sample complexity of the algorithm is the total number of samples pulled, which is
  \begin{align}
    &\sum_{k=1}^{k^*}N_k \nn\\
    =&\sum_{k=1}^{k^*}\rbr{\ur{2(1+\omega)\rbr{\zeta_k^{-2}\wedge\fr{1}{\gamma^2}}\rho(\mc{W}_k)L_{\nu}\log\rbr{\fr{4k^2\abr{\mc{W}}}{\delta}}}\vee r(\omega)} \nn\\
    \le&\sum_{k=1}^{k^*}\rbr{3(1+\omega)\rbr{2^{2k }\wedge\fr{1}{\gamma^2}}\rho(\mc{W}_k)L_{\nu}\log\rbr{\fr{4k^2\abr{\mc{W}}}{\delta}}\vee r(\omega)} \nn\\
    \le&\sum_{k=1}^{k^*}\rbr{3(1+\omega)\rbr{2^{2k }\wedge\fr{1}{\gamma^2}}\rho(\mc{S}_k)L_{\nu}\log\rbr{\fr{4k^2\abr{\mc{W}}}{\delta}}\vee r(\omega)}. \label{ieq:sum of rho}
  \end{align}
  Note that the factor of $\min_{\lambda\in\Delta(\mc{Z})}\max_{ w , w '\in S_k }\|w-w'\|^2_{A(\lambda, \Gamma)^{-1}}$ has the underlying true $\Gamma$ in it, while the algorithm uses the plugged-in $\widehat{\Gamma}$. We need to relate it to $\rho(S_k):=\min_{\lambda\in\Delta(\mc{Z})}\max_{w,w'\in S_k}\|w-w'\|^2_{A(\lambda, \widehat{\Gamma})^{-1}}$. By \cref{lem:empirical design vs optimal design}, and defining $\lambda_z^*(S_k)$ as the optimal design for $S_k$, i.e., 
  \begin{align*}
  \lambda_k^*=\argmin_{\lambda\in\Delta(\mc{Z})}\max_{w,w'\in S_k}\|w-w'\|^2_{A(\lambda, \Gamma)^{-1}}.
  \end{align*}
  Define $\lambda'_k=\fr{1}{2}\lambda_k^*+\fr{1}{2}\xi$, we have
  \begin{align}
    &\max_{w,w'\in W_k}\vcn{w-w'}^2_{A(\lambda_k, \widehat{\Gamma})^{-1}} \nn\\
    =&\min_{\lambda\in\Delta(\mc{Z})}\max_{w,w'\in W_k}\vcn{w-w'}^2_{A(\lambda, \widehat{\Gamma})^{-1}} \nn\\
    \le&\min_{\lambda\in\Delta(\mc{Z})}\max_{w,w'\in S_k}\vcn{w-w'}^2_{A(\lambda, \widehat{\Gamma})^{-1}} \nn\\
    \le&\min_{\lambda\in\Delta(\mc{Z})}\max_{w,w'\in S_k}3\vcn{w-w'}^2_{A(\lambda, \Gamma)^{-1}}+2\max_{w,w'\in S_k}\vcn{(\Gamma^{-\top} - \widehat{\Gamma}^{-\top})\rbr{w-w'}}^2_{A(\lambda, I)^{-1}} \nn\\
    \le&\max_{w,w'\in S_k}3\vcn{w-w'}^2_{A(\lambda'_k, \Gamma)^{-1}}+2\max_{w,w'\in S_k}\vcn{(\Gamma^{-\top} - \widehat{\Gamma}^{-\top})\rbr{w-w'}}^2_{A(\lambda'_k, I)^{-1}} \nn\\
    \le&\max_{w,w'\in S_k}6\vcn{w-w'}^2_{A(\lambda_k^*, \Gamma)^{-1}}+\fr{16\sigma_\eta^2\overline{\log}\rbr{Z_{T},\delta/2}}{\sigma_{\min}\rbr{A(\xi, \Gamma)}}\max_{w,w'\in S_k}\vcn{w-w'}^2_{A(\xi, \Gamma)^{-1}}\fr{1}{T}, \label{ieq:lmabdaK2lamdbda*}
  \end{align}
  where the last inequality is due to 
  \begin{align*}
    \vcn{w-w'}^2_{A(\lambda'_k, \Gamma)^{-1}}\le&\vcn{w-w'}^2_{A\rbr{\fr{1}{2}\lambda_k^*+\fr{1}{2}\xi, \Gamma}^{-1}}\\
    \le&\vcn{w-w'}^2_{A\rbr{\fr{1}{2}\lambda_k^*, \Gamma}^{-1}}\\
    =&2\vcn{w-w'}^2_{A(\lambda_k^*, \Gamma)^{-1}},
  \end{align*}
  as well as, by \cref{lem:upperbound Gamma inv_hat 2}, when $T\ge\fr{4\sigma_\eta^2}{\sigma_{\min}\rbr{A(\xi, \Gamma)}}\overline{\log}\rbr{Z_{T},\delta/2}$, we have, with probability at least $1-\delta/2$,
  we have
  \begin{align*}
    &\max_{w,w'\in S_k}\vcn{(\Gamma^{-\top} - \widehat{\Gamma}^{-\top})\rbr{w-w'}}^2_{A(\lambda'_k, I)^{-1}}\\
    \le&\fr{4\sigma_\eta^2\overline{\log}\rbr{Z_{T},\delta/2}}{\sigma_{\min}\rbr{A(\lambda'_k, \Gamma)}}\max_{w,w'\in S_k}\vcn{w-w'}^2_{A(\xi, \Gamma)^{-1}}\fr{1}{T}\\
    \le&\fr{4\sigma_\eta^2\overline{\log}\rbr{Z_{T},\delta/2}}{\sigma_{\min}\rbr{A(\alpha\lambda_k^*+(1-\alpha)\xi, \Gamma)}}\max_{w,w'\in S_k}\vcn{w-w'}^2_{A(\xi, \Gamma)^{-1}}\fr{1}{T}\\
    \le&\fr{8\sigma_\eta^2\overline{\log}\rbr{Z_{T},\delta/2}}{\sigma_{\min}\rbr{A(\xi, \Gamma)}}\max_{w,w'\in S_k}\vcn{w-w'}^2_{A(\xi, \Gamma)^{-1}}\fr{1}{T}.
  \end{align*}
  We can lower bound $\rho^{\ast}(\gamma)$ by
  \begin{align}
    \rho^{\ast}(\gamma)=& \min_{\lambda \in \Delta(\mc{Z})}\max_{w\in \mathcal{W}\setminus \{w^{\ast}\}} \frac{\|w^{\ast}-w\|^2_{A(\lambda, \Gamma)^{-1}}}{\langle w^{\ast}-w, \theta\rangle ^2 \vee \gamma^2} \nn\\
    =&\min_{\lambda\in\Delta(\mc{Z})}\max_{k}\max_{ w \in S_k\setminus \{w^{\ast}\} }\frac{\|w^{\ast}-w\|^2_{A(\lambda, \Gamma)^{-1}}}{\langle w^{\ast}-w, \theta\rangle ^2 \vee \gamma^2} \nn\\
    \ge&\fr{1}{k^*}\min_{\lambda\in\Delta(\mc{Z})}\sum_{k=1}^{k^*}\max_{ w \in S_k\setminus \{w^{\ast}\} }\frac{\|w^{\ast}-w\|^2_{A(\lambda, \Gamma)^{-1}}}{\langle w^{\ast}-w, \theta\rangle ^2 \vee \gamma^2} \nn\\
    \ge&\fr{1}{16k^*}\sum_{k=1}^{k^*}\rbr{2^{2k }\wedge\fr{1}{\gamma^2}}\min_{\lambda\in\Delta(\mc{Z})}\max_{ w \in S_k\setminus \{w^{\ast}\} }\|w^{\ast}-w\|^2_{A(\lambda, \Gamma)^{-1}} \nn\\
    \ge&\fr{1}{64k^*}\sum_{k=1}^{k^*}\rbr{2^{2k }\wedge\fr{1}{\gamma^2}}\min_{\lambda\in\Delta(\mc{Z})}\max_{ w , w '\in S_k }\|w-w'\|^2_{A(\lambda, \Gamma)^{-1}}. \label{ieq:lowerbound rho 1}
  \end{align}
  Given \eqref{ieq:sum of rho}, \eqref{ieq:lmabdaK2lamdbda*} and \eqref{ieq:lowerbound rho 1}, we have
  \begin{align*}
    &\sum_{k=1}^{k^*}N_k\le c(1+\omega)L_{\nu}\log(1/\delta)\rho^{\ast}(\gamma)+c(1+\omega)^2L_{\nu}\log(1/\delta)\fr{\sigma_\eta^2\overline{\log}\rbr{Z_{T},\delta}}{\sigma_{\min}\rbr{A(\xi, \Gamma)}}\fr{\rho(\xi,\gamma)}{T}+cr(\omega),
  \end{align*}
  where $c$ hides logarithmic factors of $\Delta$ and $\abr{\mc{W}}$ and $\gamma$.
\end{proof}

\section{Proofs of sample complexity with unknown \texorpdfstring{$\Gamma$}{}}
\begin{algorithm}[H]
  \caption{Optimal design with unknown $\Gamma$}
  \begin{algorithmic}
    \State \textbf{Input} $\mc{Z}, \mc{W}, \delta, L_{\nu}\ge\sigma_{\nu}^2, L_\eta\ge\sigma_{\eta}^2, \omega, \gammin\le\lammin(\Gam), \lambda_E,\kappa_0$
    \State \textbf{Initialize:} $k=1, \mc{W}_1=\mc{W},\widehat{\Gamma}_0=\perp, \zeta_1=1$
    \State \textbf{Define} $f(w,w',\Gamma,\lambda):=\|w-w'\|^2_{(\sum_{z\in \mathcal{Z}}\Gamma^{\top}\lambda_z zz^{\top}\Gamma)^{-1}}$, 
    $M:=\fr{32L_\eta}{\gammin^2\sigma_{\min}\rbr{A(\lambda_E, I)}}\vee1$, $\dt_{k,\ell}:=\fr{\delta}{4k^2\ell^2}$
    \While{$\abr{\mc{W}_k}>1$}
    \State $\widehat{\Gamma}_k=\Gamma-\mathtt{estimator}\rbr{\mc{W}_k, \widehat{\Gamma}_{k-1}, \zeta_k, \delta, k, \omega,\lambda_E, M, L_\eta}$ \Comment{Step 1: update $\widehat{\Gamma}$}
    \State $\widehat{\theta}_{\texttt{P-2SLS}}^k=\theta-\mathtt{estimator}\rbr{\mc{W}_k, \delta, \zeta_k, \widehat{\Gamma}_{k}, \omega,L_{\nu},k}$ \Comment{Step 2: update $\widehat{\theta}$}
    \State $\mc{W}_{k+1}=\mc{W}_k\backslash\cbr{w\in\mc{W}_k\mid \exists w'\in\mc{W}_k,\text{s.t.},\ibr{w'-w}{\widehat{\theta}_{\texttt{P-2SLS}}^k}>\zeta_k}$ \Comment{Step 3: elimination}
    \State $k\leftarrow k+1$, $\zeta_k=2^{-k}$
    \EndWhile
    \State \textbf{Output:} $\mc{W}_k$
  \end{algorithmic}
\end{algorithm}
The algorithms of $\Gamma-\mathtt{estimator}$ and $\theta-\mathtt{estimator}$ we present below are slightly different from the one in the main text. In the main text, we omit the phase index $k$ in the algorithm for simplicity.
\begin{algorithm}[H]
  \caption{$\Gamma-\mathtt{estimator}$}
  \begin{algorithmic}
    \State \textbf{Input} $\mc{W}_k, \widehat{\Gamma}_{k-1}, \zeta_k, \delta, k, \omega,\lambda_E, M, L_\eta$
    \State \textbf{Define} $\mathtt{Stop}\rbr{\mc{W},Z,\Gamma,\dt}:=\max_{w,w'\in\mc{W}}\vcn{w-w'}_{\bar{A}(Z, \Gamma)^{-1}}\vcn{\theta}_2\sqrt{L_{\eta}\overline{\log}\rbr{Z,\delta}}$
    \State \textbf{Initialize} $\ell=1$, $N_{k,0,0}=0$ \Comment{doubling trick initialization}
    \If{$k=1$}
    \While{$\ell=1$ or $\mathtt{Stop}\rbr{\mc{W}_k,Z_{k,0,\ell},\widehat{\Gamma}_k,\dt_{k,\ell}}>1$}
    \State get $2^{\ell-1}\rbr{r(\omega)\vee\fr{2}{\kappa_0}}$ samples denoted as $\cbr{Z_{k,0,\ell},X_{k,0,\ell},Y_{k,0,\ell}}$ per design $\lambda_E$ \Comment{via ROUND}
    \State Update $\widehat{\Gamma}_k$ by \texttt{OLS} on $\cbr{Z_{k,0,\ell},X_{k,0,\ell}}$, $\ell\leftarrow \ell+1$ 
    \EndWhile
    \Else
    \State $\tilde{\lambda}_k=\arg\min_{\lambda\in\Delta(\mc{Z})}\max_{w,w'\in\mc{W}_k}f(w,w',\widehat{\Gamma}_{k-1},\lambda)$
    \State $N'=\drd{4gdM\ln\del{1+2M\rbr{d+L_z^2}+2M2gdM}+8M\ln\del{\fr{2\cd6^d}{\dt_{k,1}}}\vee r(\omega)}$
    \While{$\ell=1$ or $\mathtt{Stop}\rbr{\mc{W}_k,Z_{k,0,\ell}\cup Z_{k,1,\ell},\widehat{\Gamma}_k,\dt_{k,\ell}}>\zeta_k$}
    \State $N_{k,1,\ell}=2^{\ell}N'$ \Comment{doubling trick update} 
    \State get $N_{k,1,\ell}$ samples per $\tilde{\lambda}_k$ denoted as $\cbr{Z_{k,1,\ell},X_{k,1,\ell},Y_{k,1,\ell}}$ \Comment{via ROUND}
    \State $N_{k,0,\ell}=\urd{2gdM\ln\del{M\rbr{d+N_{k,1,\ell}+L_z^2}}+4M\ln\del{\fr{2\cd6^d}{\dt_{k,\ell}}}\vee r(\omega)\vee\fr{2}{\kappa_0}}$
    \State get $(N_{k,0,\ell}-N_{k,0,\ell-1})$ samples per $\lambda_E$ augmented to $\cbr{Z_{k,0,\ell-1},X_{k,0,\ell-1}}$ and get $\cbr{Z_{k,0,\ell},X_{k,0,\ell}}$
    \State Update $\widehat{\Gamma}_{k}$ by \texttt{OLS} on $\cbr{Z_{k,0,\ell}\cup Z_{k,1,\ell}, X_{k,0,\ell}\cup X_{k,1,\ell}}$, $\ell\leftarrow \ell+1$
    \EndWhile
    \EndIf
    \State \textbf{Output:} $\widehat{\Gamma}_{k}$
  \end{algorithmic}
\end{algorithm}

\begin{algorithm}[H]
  \caption{$\theta-\mathtt{estimator}$}
  \begin{algorithmic}
    \State \textbf{Input} $\mc{W}_k, \delta, \zeta_k, \widehat{\Gamma}_{k}, \omega,L_{\nu},k$
    \State $\widehat{\lambda}_k=\arg\min_{\lambda\in\Delta(\mc{Z})}\max_{w,w'\in\mc{W}_k}f(w,w',\widehat{\Gamma}_k,\lambda)$
    \State $\rho(\mc{W}_k)=\min_{\lambda\in\Delta(\mc{Z})}\max_{w,w'\in\mc{W}_k}f(w,w',\widehat{\Gamma}_k,\lambda)$
    \State $N_{k,2}=\ur{2(1+\omega)\zeta_k^{-2}\rho(\mc{W}_k)L_{\nu}\log\rbr{\fr{4k^2\abr{\mc{W}}}{\delta}}}\vee r(\omega)$
    \State get $N_{k,2}$ samples per design $\widehat{\lambda}_k$ denoted as $\cbr{Z_{k,2},X_{k,2},Y_{k,2}}$ \Comment{via ROUND}
    \State update $\widehat{\theta}_{\texttt{P-2SLS}}^k=(\widehat{\Gamma}_k^{\top}Z_{k,2}^{\top}Z_{k,2}\widehat{\Gamma}_k)^{-1}\widehat{\Gamma}_k^{\top}Z_{k,2}^{\top}Y_{k,2}$
    \State \textbf{Output:} $\widehat{\theta}_{\texttt{P-2SLS}}^k$
  \end{algorithmic}
\end{algorithm}

\begin{lemma}
  \label{lem: Nk0<Nk1}
  Algorithm \ref{alg:learningGamma-main} and $N_{k,1,\ell}, N_{k,0,\ell}$ guarantees three properties:
  \begin{itemize}
    \item Property 1: $N_{k,1,\ell}\ge N_{k,0,\ell}$.
    \item Property 2: $\fr{1}{2}\rbr{N_{k,0,\ell}+N_{k,1,\ell}}\le N_{k,0,\ell-1}+N_{k,1,\ell-1}$.
    \item Property 3: $\fr{N_{k,1,1}}{8d \ln\del{1 + \fr{N_{k,1,1}L_{z}^2}{d} } + 16\ln\del{\fr{2\cd6^dk^2}{\dt}}}\le M\ln\rbr{dM}$.
  \end{itemize}
\end{lemma}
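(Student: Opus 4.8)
All three claims are statements about the explicit integer sequences defined inside $\Gamma-\mathtt{estimator}$, so the plan is to unfold the definitions $N_{k,1,\ell}=2^\ell N'$ and $N_{k,0,\ell}=\lceil 2gdM\ln(M(d+N_{k,1,\ell}+L_z^2))+4M\ln(2\cdot 6^d/\delta_{k,\ell})\rceil \vee r(\omega)\vee 2/\kappa_0$ and verify each inequality by elementary manipulation. The only non-algebraic ingredient is the standard transcendental bound that for $a\ge 1$ and $x\ge 2a\ln(2a)$ one has $x\ge a\ln x$ (equivalently $x/\ln x$ is increasing for $x>e$), which I will use repeatedly since $N_{k,0,\ell}$ is defined self-referentially through a logarithm of $N_{k,1,\ell}$. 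Throughout I use $M\ge 1$, $g\ge 1$, $d\ge 1$, and that $M\ge 32L_\eta/(\gamma_{\min}^2\sigma_{\min}(A(\lambda_E,I)))$ with $\kappa_0=\sigma_{\min}(A(\lambda_E,I))$, so that the lower-bound terms $r(\omega)$ and $2/\kappa_0$ sitting inside the maximum in $N_{k,0,\ell}$ are already dominated by a constant multiple of the leading summand of $N'$; I will absorb them first and then work with the leading logarithmic expression.

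For Property 1 I would argue directly with $x:=N_{k,1,\ell}=2^\ell N'$. Modulo the ceiling and the dominated maximum, $N_{k,0,\ell}$ equals $2gdM\ln(M(d+x+L_z^2))+4M\ln(2\cdot 6^d/\delta_{k,\ell})$, which is concave and logarithmic in $x$. The definition of $N'$ gives $N'\ge 4gdM\ln(1+2M(d+L_z^2)+4gdM^2)$, hence $x\ge 2N'\ge 8gdM\ln(4gdM^2)$; applying the transcendental bound with $a=2gdM$ yields $x\ge 2gdM\ln(M(d+x+L_z^2))$, and the remaining summand $4M\ln(2\cdot 6^d/\delta_{k,\ell})$ is absorbed because the $8M\ln(2\cdot 6^d/\delta_{k,1})$ term inside $N'$ already exceeds it (using $\delta_{k,\ell}\ge \delta_{k,1}/\ell^2$ together with $x\ge 2N'$). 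Thus $N_{k,1,\ell}\ge N_{k,0,\ell}$.

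For Property 2 (which I read as holding for $\ell\ge 2$, since $N_{k,0,0}=0$ forces this) the exact doubling $N_{k,1,\ell}=2N_{k,1,\ell-1}$ reduces the claim to $N_{k,0,\ell}\le 2N_{k,0,\ell-1}$. The dependence of $N_{k,0,\ell}$ on $\ell$ enters only through $\ln(M(d+2^\ell N'+L_z^2))$, which increases by at most an additive $\ln 2$ under $\ell\mapsto\ell+1$, and through $\ln(1/\delta_{k,\ell})=\ln(4k^2\ell^2/\delta)$, which increases by at most $2\ln(\ell/(\ell-1))\le 2\ln 2$. Hence $N_{k,0,\ell}\le N_{k,0,\ell-1}+2gdM\ln 2+8M\ln 2$, and since $N_{k,0,\ell-1}\ge 2gdM\ln(Md)\ge 2gdM\ln 2+8M\ln 2$ (for $d\ge 2$), the increment is bounded by $N_{k,0,\ell-1}$ itself, giving $N_{k,0,\ell}\le 2N_{k,0,\ell-1}$. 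Combining with $\tfrac12 N_{k,1,\ell}=N_{k,1,\ell-1}$ proves Property 2. For Property 3 I substitute $N_{k,1,1}=2N'$, lower-bound the denominator using $N'\ge gdM$ (so $1+N_{k,1,1}L_z^2/d\ge 1+2gML_z^2$), and then split the numerator $2N'$ into its two defining summands, matching the $4gdM\ln(1+2M(d+L_z^2)+4gdM^2)$ term against $M\ln(dM)\cdot 8d\ln(1+N_{k,1,1}L_z^2/d)$ and the $8M\ln(2\cdot 6^d/\delta_{k,1})$ term against $M\ln(dM)\cdot 16\ln(2\cdot 6^d k^2/\delta)$; both match because the two logarithms are of the same order and $\ln(dM)\ge \ln 2$ beats the residual constant and the single extra factor $g$ in the relevant range $d\ge 2$.

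The main obstacle is the constant bookkeeping in Properties 1 and 3 rather than any conceptual difficulty: because $N_{k,0,\ell}$ appears inside its own logarithm, every comparison routes through the inequality $x\ge a\ln x$, and one must check that the constants baked into $N'$ — in particular the $+4gdM^2$ inside its logarithm and the factor $8$ relative to the $2gdM$ appearing in $N_{k,0,\ell}$ — are exactly large enough for the transcendental bound to close. Verifying these constants (and confirming that the dominated maximum with $r(\omega)$ and $2/\kappa_0$ is genuinely subsumed by the leading term, via $2/\kappa_0\le (\gamma_{\min}^2/16 L_\eta)M\le cM$) is the delicate, purely computational part of the argument.
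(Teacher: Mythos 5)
Your proposal is correct in outline and rests on the same machinery as the paper's proof: unfold the explicit definitions $N_{k,1,\ell}=2^{\ell}N'$ and $N_{k,0,\ell}$, and close the self-referential logarithm with an ``$x$ versus $a\ln x$'' inversion. The organization differs in two places worth noting. For Property 1 the paper argues by induction on $\ell$: the base case $N_{k,1,1}\ge N_{k,0,1}$ is obtained from the contrapositive of its inversion lemma ($r\le a+b\ln(c+dr)\Rightarrow r\le 2a+2b\ln(1+2c+2bd)$, with $b=2gdM$, $d=M$), and the step is $N_{k,1,\ell+1}=2N_{k,1,\ell}\ge 2N_{k,0,\ell}\ge N_{k,0,\ell+1}$. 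You instead treat every $\ell$ directly with the transcendental bound, absorbing the $8M\ln\ell$ drift coming from $\delta_{k,\ell}=\delta_{k,1}/\ell^2$ into the exponential growth of $2^{\ell}N'$; this is a genuinely uniform (non-inductive) treatment, and your absorption accounting does close, since $(2^{\ell}-2)N'\ge(2^{\ell}-2)\cdot 8M\ln 2\ge 16M\ln\ell$ for $\ell\ge 2$. For Property 2 both proofs reduce to $N_{k,0,\ell}\le 2N_{k,0,\ell-1}$, but the paper verifies $\tfrac12(a+b\ln(c+2dr))\le a+b\ln(c+dr)$ by the algebraic trick $\sqrt{c+2dr}\le c+dr$, treating the $\delta$-term as $\ell$-independent, whereas your additive-increment argument (each logarithm grows by at most $\ln 2$, resp. $2\ln 2$, per doubling) is more elementary and, unlike the paper, explicitly tracks the $2\ln(\ell/(\ell-1))$ drift in $\delta_{k,\ell}$. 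You also correctly identify that Property 2 can only be meant for $\ell\ge 2$, which the paper leaves implicit. Property 3 you split exactly as the paper does.

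Two constant-level slips deserve naming, neither fatal. First, in Property 2 your floor $N_{k,0,\ell-1}\ge 2gdM\ln(Md)\ge 2gdM\ln 2+8M\ln 2$ fails at, e.g., $M=1$, $d=2$, where $\ln(Md)=\ln 2$ leaves nothing to cover the $8M\ln 2$ residual; the repair is immediate inside your own argument, since $N_{k,0,\ell-1}\ge 2gdM\ln\bigl(M(d+N_{k,1,\ell-1}+L_z^2)\bigr)$ with $N_{k,1,\ell-1}\ge N'$ gives a much larger logarithm (or use the floor $r(\omega)\vee 2/\kappa_0$). Second, in Property 3 your claim that $\ln(dM)\ge\ln 2$ ``beats the single extra factor $g$'' is not literally true: the analysis later sets $g=144$, so matching $4gdM\ln(\cdot)$ against $8dM\ln(dM)\ln(1+2gML_z^2)$ needs either $ML_z^2$ bounded below or $\ln(dM)\gtrsim g$. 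However, the paper's own proof of Property 3 commits the same sin — its displayed numerator silently drops $g$ (it writes $4dM\ln(1+2M(d+L_z^2)+2MdM)$, i.e., $g=1$, and also conflates $N_{k,1,1}=2N'$ with $N'$) and only ``loosely'' applies $N_{k,1,1}\ge dM$ — so at the paper's operative level of rigor, where this lemma feeds a sample-complexity bound stated up to constants and logarithmic factors, your proposal is on par with the published argument.
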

\begin{proof}
For Property 1, recall that $N_{k,0,\ell}$ and $N_{k,1,\ell}$ are defined as
\begin{align*}
N_{k,0,\ell}=\urd{2gdM\ln\del{M\rbr{d+N_{k,1,\ell}+L_z^2}}+4M\ln\del{\fr{2\cd6^d}{\dt_{k,\ell}}}\vee r(\omega)}
\end{align*}
and
\begin{align*}
N_{k,1,\ell}=2^\ell\drd{4gdM\ln\del{1+2M\rbr{d+L_z^2}+2M2gdM}+8M\ln\del{\fr{2\cd6^d}{\dt_{k,1}}}\vee r(\omega)}.
\end{align*}
We prove the result by induction. For the result for $\ell=1$, note that $N_{k,1,1}$ is of the form $N_{k,1,1}=2a + 2b \ln(1 + 2c + 2bd)$ and $N_{k,0,1}$ is of the form $N_{k,0,1}=a + b \ln(c + dr)$, where 
\begin{itemize}
  \item $a = 4M\ln\del{\fr{2\cd6^d}{\dt_{k,1}}}$
  \item $b = 2gdM$
  \item $c = M\rbr{d+L_z^2}$
  \item $d = M$
  \item $r = N_{k,1,\ell}$.
\end{itemize}
By the contraposition of \cref{lem: xlnx}, we have
\begin{align*}
  r > 2a + 2b \ln(1 + 2c + 2bd)\implies   r > a + b\ln(c + dr).
\end{align*}
Thus we have $N_{k,1,1} \ge N_{k,0,1}$. Now we assume the result holds for $\ell$, i.e., $N_{k,1,\ell}\ge N_{k,0,\ell}$ and prove that it holds for $\ell+1$. We have
\begin{align*}
  N_{k,1,\ell+1}=2N_{k,1,\ell}\ge 2N_{k,0,\ell}.
\end{align*}
It suffices to prove that
\begin{align*}
  2N_{k,0,\ell}\ge N_{k,0,\ell+1}.
\end{align*}
We have
\begin{align*}
  &2N_{k,0,\ell}\\
  =&2\urd{2gdM\ln\del{M\rbr{d+N_{k,1,\ell}+L_z^2}}+4M\ln\del{\fr{2\cd6^d}{\dt_{k,\ell}}}\vee r(\omega)}\\
  \ge&2gdM\ln\del{M^2\rbr{d+N_{k,1,\ell}+L_z^2}^2}+8M\ln\del{\fr{2\cd6^d\ell^2}{\dt_{k,1}}}\vee r(\omega)\\
  \ge&2gdM\ln\del{M^2\rbr{d+N_{k,1,\ell}+L_z^2}^2}+8M\rbr{\ln\del{\fr{2\cd6^d}{\dt_{k,1}}}+2\ln\rbr{\ell}}\vee r(\omega)\\
  \ge&2gdM\ln\del{M^2\rbr{d+N_{k,1,\ell}+L_z^2}^2}+4M\rbr{\ln\del{\fr{2\cd6^d}{\dt_{k,1}}}+2\ln\rbr{\ell+1}}\vee r(\omega) \tag{$2\ln{\ell}\ge\ln\rbr{\ell+1}$ for $\ell\ge2$}\\
  \ge&2gdM\ln\del{M\rbr{d+2N_{k,1,\ell}+L_z^2}}+4M\ln\del{\fr{2\cd6^d\rbr{\ell+1}^2}{\dt_{k,1}}}\vee r(\omega)\\
  =&N_{k,0,\ell+1}.
\end{align*}
For Property 2, it suffices to prove that
\begin{align*}
  \fr{1}{2}N_{k,0,\ell} \le N_{k,0,\ell-1}.
\end{align*}
This is equivalent to prove that 
\begin{align*}
  \fr{1}{2}\rbr{a+b\ln(c+2dr)}\le a+b\ln(c+dr).
\end{align*}
We have
\begin{align*}
  &\fr{1}{2}\rbr{a+b\ln(c+2dr)}\le a+b\ln(c+dr)\\
  \iff&\fr{1}{2}b\ln(c+2dr)\le \fr{1}{2}a+b\ln(c+dr)\\
  \iff&b\ln\rbr{\sqrt{c+2dr}}\le\fr{1}{2}a+b\ln(c+dr)\\
  \impliedby&\sqrt{c+2dr}\le c+dr\\
  \iff&c+2dr\le c^2+2cdr+d^2r^2\\
  \iff&dr\rbr{dr+2c-2}+c^2-c\ge0.
\end{align*}
The last inequality holds because $c=M>1$.

For Property 3, we have,
\begin{align*}
  &\fr{4dM\ln\del{1+2M\rbr{d+L_z^2}+2MdM}+8M\ln\del{\fr{2\cd6^dk^2}{\dt}}}{8d \ln\del{1 + \fr{N_{k,1,1}L_{z}^2}{d} } + 16\ln\del{\fr{2\cd6^dk^2}{\dt}}}\\
  \le&\fr{4dM\ln\del{1+2M\rbr{d+L_z^2}+2MdM}}{8d \ln\del{1 + \fr{N_{k,1,1}L_{z}^2}{d} } + 16\ln\del{\fr{2\cd6^dk^2}{\dt}}}+\fr{8M\ln\del{\fr{2\cd6^dk^2}{\dt}}}{8d \ln\del{1 + \fr{N_{k,1,1}L_{z}^2}{d} } + 16\ln\del{\fr{2\cd6^dk^2}{\dt}}}\\
  \le&\fr{M\ln\del{1+2M\rbr{d+L_z^2}+2MdM}}{2 \ln\del{1 + ML_{z}^2 }}+\fr{M}{2} \tag{loosely apply $N_{k,1,1}\ge dM$}\\
  \le&M\ln\rbr{dM}.
\end{align*}
\end{proof}
\begin{theorem}
Algorithm \ref{alg:learningGamma-main} guarantees that with probability at least $1-\delta$, the best arm is returned, and the algorithm terminates in at most 
\begin{align*}
  (1+\omega)\rbr{\rbr{L_{\nu}\log(1/\delta)+L_{\eta}\vcn{\theta}^2_2\rbr{d+\log(1/\delta)}}\rho^*+\rbr{d+\log(1/\delta)}\rbr{L_{\eta}\vcn{\theta}^2_2\rho_0+M}}
\end{align*}
pulls, ignoring both of the additive and multiplicative logarithms of $\Delta, \abr{\mc{W}}, \rho^*, \rho_0, M$, where
\begin{align*}
  \rho^*=\min_{\lambda\in\Delta(\mc{Z})}\max_{w\in \mathcal{W}\setminus \{w^{\ast}\}}\frac{\|w^{\ast}-w\|^2_{(\sum_{z\in \mathcal{Z}}\lambda_z\Gamma^{\top} zz^{\top}\Gamma)^{-1}}}{\langle w^{\ast}-w, \theta\rangle ^2},
\end{align*}
and
\begin{align*}
  \rho_0=\max_{w\in \mathcal{W}\setminus \{w^{\ast}\}}\|w^{\ast}-w\|^2_{(\sum_{z\in \mathcal{Z}}\lambda_{E,z}\Gamma^{\top} zz^{\top}\Gamma)^{-1}},
\end{align*}
and 
\begin{align*}
  M=\fr{32L_\eta}{\gammin^2\sigma_{\min}\rbr{A(\lambda_E, I)}}\vee1.
\end{align*}
Note that $\rho_0$ does not get hurt by $\langle w^{\ast}-w, \theta\rangle$. It comes from the fact that in the first phase, we initialize that algorithm with E-optimal design.
\end{theorem}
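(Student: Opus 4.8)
The plan is to split the argument into a correctness ($\delta$-PAC) part and a sample-complexity part, mirroring the proof of \cref{thm:complexity knownGamma} but now accounting for the randomness of $\widehat\Gamma_k$ and the doubling trick. First I would define the global good event by a union bound over all phases $k$ and all doubling iterations $\ell$ with per-call failure budget $\delta_{k,\ell}=\delta/(4k^2\ell^2)$, which sums to at most $\delta$ since $\sum_k\sum_\ell (4k^2\ell^2)^{-1}<1$. On this event the $\mathtt{Stop}$ condition in the $\Gamma$-estimator guarantees that the second term of the \cref{thm:pseudo 2SLS} confidence interval, $\|w-w'\|_{\bar A(Z,\widehat\Gamma_k)^{-1}}\|\theta\|_2\sqrt{L_\eta\,\overline{\log}(Z,\delta_{k,\ell})}$, stays below $\zeta_k$, while the choice of $N_{k,2}$ in the $\theta$-estimator forces the first term below $\zeta_k$. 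Adding the two gives $|\langle w-w',\widehat\theta^k_{\texttt{P-2SLS}}-\theta\rangle|\le 2\zeta_k$ for all surviving pairs, and the standard elimination argument (best arm never dropped; any arm with gap exceeding $4\zeta_k$ eliminated by phase $k$) delivers $\delta$-PAC exactly as in the known-$\Gamma$ analysis.

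For the $\theta$-estimator count I would sum $N_{k,2}=\lceil 2(1+\omega)\zeta_k^{-2}\rho(\mc{W}_k)L_\nu\log(4k^2|\mc{W}|/\delta)\rceil\vee r(\omega)$ over $k$. The only new ingredient relative to \cref{thm:complexity knownGamma} is that $\rho(\mc{W}_k)$ is computed with $\widehat\Gamma_k$ rather than $\Gamma$; I would invoke the design-comparison lemma (\cref{lem:empirical design vs optimal design}) to bound $\|w-w'\|^2_{A(\lambda,\widehat\Gamma_k)^{-1}}$ by a constant multiple of $\|w-w'\|^2_{A(\lambda,\Gamma)^{-1}}$ plus a correction controlled by $\|(\Gamma^{-\top}-\widehat\Gamma_k^{-\top})(w-w')\|^2$, the latter forced small precisely by the $\mathtt{Stop}$ condition at phase $k$. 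After this substitution the telescoping bound of \cref{thm:complexity knownGamma}, which lower-bounds $\rho^*$ by $\tfrac{1}{64\lceil\log 4\Delta^{-1}\rceil}\sum_k 2^{2k}\rho(S_k)$, carries through and produces the $(1+\omega)L_\nu\log(1/\delta)\rho^*$ contribution.

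The $\Gamma$-estimator is the crux. I would first dispatch the doubling trick: since $N_{k,1,\ell}=2^\ell N'$ doubles each iteration and dominates $N_{k,0,\ell}$ by Property~1 of \cref{lem: Nk0<Nk1}, the geometric sum $\sum_{\ell\le\ell^*}N_{k,1,\ell}$ is within a constant factor of its terminal term, so it suffices to bound the final iteration $\ell^*$ at which $\mathtt{Stop}\le\zeta_k$ first holds. To bound $\ell^*$ I would show that once enough samples are drawn from the active-set design $\tilde\lambda_k$ (computed from $\widehat\Gamma_{k-1}$) together with the E-optimal conditioning samples, the width $\|w-w'\|_{\bar A(Z,\widehat\Gamma_k)^{-1}}$ shrinks at rate $1/\sqrt N$ with constant governed by the design value, after again comparing the $\widehat\Gamma_k$-geometry to the $\Gamma$-geometry. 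The E-optimal samples $Z_{k,0,\ell}$ keep $\sigma_{\min}$ of the covariance above $\kappa_0^{-1}$ (the source of the factor $M$), which via \cref{thm:ls_cb} and \cref{lem:covering+martingale} ensures $\widehat\Gamma_k$ concentrates fast enough that $\overline{\log}(Z,\delta)=O(d+\log(1/\delta))$. In phase $k=1$ the design is the non-adaptive $\lambda_E$, yielding the gap-free quantity $\rho_0=\max_w\|w^*-w\|^2_{A(\lambda_E,\Gamma)^{-1}}$; in later phases $\tilde\lambda_k$ targets the shrinking active set and contributes $\rho^*$.

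The main obstacle is the long-range error propagation coupled with this random stopping. Because $\tilde\lambda_k$ is built from $\widehat\Gamma_{k-1}$, the efficiency of data collection in phase $k$ degrades with the error of $\widehat\Gamma_{k-1}$, whose error in turn depends on $\widehat\Gamma_{k-2}$, and so on back to phase $1$. I would therefore set up a recursion relating the phase-$k$ design value $\max_{w,w'}f(w,w',\widehat\Gamma_{k-1},\lambda)$ to the true-$\Gamma$ value plus a correction proportional to the previous phase's estimation error, and show the recursion is effectively contractive since the targets $\zeta_k=2^{-k}$ halve the tolerated error each phase, so the accumulated distortion remains bounded by a constant. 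Resolving this recursion—proving that no abnormal early-phase behavior inflates later phases—is where the novel concentration inequality relating the $\widehat\Gamma$- and $\Gamma$-confidence widths is needed, and it is the step I expect to be most delicate. Combining the $\theta$- and $\Gamma$-estimator totals then yields the stated bound, with $L_\nu\log(1/\delta)\rho^*$ from $\theta$-estimation, $L_\eta\|\theta\|_2^2(d+\log(1/\delta))\rho^*$ from the active-set $\Gamma$-design, and $(d+\log(1/\delta))(L_\eta\|\theta\|_2^2\rho_0+M)$ from the first phase together with the conditioning samples.
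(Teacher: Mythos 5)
Your plan tracks the paper's proof almost step for step: the same $(k,\ell)$-indexed good events with per-call budgets $\delta/(4k^2\ell^2)$, the same split of the \cref{thm:pseudo 2SLS} width into a term controlled by the choice of $N_{k,2}$ and a term controlled by the $\mathtt{Stop}$ condition, the same elimination argument inherited from the known-$\Gamma$ case, the same use of \cref{lem:approx any Gamma} and \cref{lem:empirical design vs optimal design} to pass between the $\widehat{\Gamma}$- and $\Gamma$-geometries, the same doubling-trick bookkeeping via the satisfied/violated stopping conditions at iterations $L_k$ and $L_k-1$ (Properties 1--2 of \cref{lem: Nk0<Nk1}), the E-optimal conditioning as the source of $M$ and of the gap-free $\rho_0$ in phase one, and a phase-to-phase recursion resolved by contraction and finally inverted with \cref{lem: xlnx}.

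The one point where your plan would fail if executed literally is your stated mechanism for contraction. You argue the recursion is ``effectively contractive since the targets $\zeta_k=2^{-k}$ halve the tolerated error each phase.'' In fact the halving works \emph{against} you: when the phase-$(k-1)$ stopping condition is used to convert the inherited design-error term into a multiple of $D_{k-1}:=N_{k-1,0,L_{k-1}}+N_{k-1,1,L_{k-1}}$, a factor $\zeta_{k-1}^2/\zeta_k^2=4$ appears and inflates the propagated term. The paper's recursion
\begin{align*}
  \frac{D_k}{\widetilde{\log}_{k,L_k}}\le \frac{576}{\zeta_k^2}\|\theta\|_2^2 L_\eta \max_{w,w'\in\mathcal{W}_k}\|w-w'\|^2_{A(\lambda^*_k,\Gamma)^{-1}}+\frac{72}{g}\,\frac{D_{k-1}}{\widetilde{\log}_{k-1,L_{k-1}}}
\end{align*}
is contractive only because an explicit oversampling constant $g$ is built into the $\Gamma$-estimator sample counts ($N'$ and $N_{k,0,\ell}$ both carry $g$, enforcing conditions such as $N_{k,0,\ell}\ge \frac{4g\sigma_\eta^2\widetilde{\log}_{k,\ell}}{\sigma_{\min}(A(\lambda_E,\Gamma))}$), and $g$ is then set to $144$ so that the propagated coefficient $72/g=1/2$ absorbs both the factor of $4$ from the $\zeta$-ratio and the constants arising from \cref{lem:approx any Gamma}, \cref{lem:upperbound Gamma inv_hat 2}, and \cref{lem:trueGamma2approx}. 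So to complete your argument you must make this oversampling factor explicit in the sample-count definitions and verify the resulting coefficient is strictly below one; the $\zeta_k$ schedule alone does not deliver contraction. With that repair, the remainder of your outline (geometric domination of the doubling sum by the terminal iteration, $D_1/\widetilde{\log}_{1,L_1}\lesssim \|\theta\|_2^2 L_\eta \rho_0$ from the E-optimal first phase, and $\sum_k N_{k,2}$ bounded by the $\rho^*\log(1/\delta)$ term plus $(1+\omega)\sum_k D_k$) coincides with the paper's derivation and yields the stated complexity.
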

\begin{proof}

\textbf{Part 1: correctness of the algorithm}

The idea of the proof is similar to the proof of \cref{thm:complexity knownGamma}.

Recall that the confidence interval of \texttt{P-2SLS} can be break down into two terms \eqref{ieq:confidence interval breakdown}.
\begin{align*}
  \ibr{\widehat{\theta}_{\texttt{P-2SLS}}-\theta}{w}=\sum_{q=1}^{T_2}\ibr{\rbr{\sum_{t=1}^{T_2}z_{I_t}z_{I_t}^\top\widehat{\Gamma}}^{-1}z_{I_q}}{w}\nu_q+\ibr{\rbr{\widehat{\Gamma}^{-1}-\Gamma^{-1}}\Gamma\theta}{w}. 
\end{align*}
Given $\widehat{\Gamma}$, for a $w\in\mc{W}$, with probability at least $1-\fr{\delta}{2}$ the first term satisfies
\begin{align*}
  \sum_{q=1}^{T_2}\ibr{\rbr{\sum_{t=1}^{T_2}z_{I_t}z_{I_t}^\top\widehat{\Gamma}}^{-1}z_{I_q}}{w}\nu_q\le\vcn{w}_{\bar{A}(Z_{T_2}, \widehat{\Gamma})^{-1}}\sqrt{2\sigma_{\nu}^2\log\rbr{\fr{4}{\delta}}}.
\end{align*}
For any $w\in\mc{W}$, with probability at least $1-\fr{\delta}{2}$, the second term satisfies
\begin{align*}
  \ibr{\rbr{\widehat{\Gamma}^{-1}-\Gamma^{-1}}\Gamma\theta}{w}\le\vcn{w}_{\bar{A}(Z_{T_1}, \widehat{\Gamma})^{-1}}\vcn{\theta}_2\sqrt{\sigma_{\eta}^2\overline{\log}\rbr{Z_{T_1},\delta/4}}.
\end{align*}
Note that by \cref{lem:covering+martingale}, the above inequality holds for all $w\in\mc{W}$, and the RHS is essentially a result of 
\begin{align*}
  \vcn{V^{-1/2}Z_{T}^\top S}_{\text{op}}\le\sqrt{\sigma_{\eta}^2\overline{\log}\rbr{Z_{T},\delta/4}}.
\end{align*}
In the vanilla form of the confidence of \texttt{P-2SLS}, we can define good events as
\begin{itemize}
  \item for the first term, for any $w\in\mc{W}$, 
  \begin{align*}
  \sum_{q=1}^{T_2}\ibr{\rbr{\sum_{t=1}^{T_2}z_{I_t}z_{I_t}^\top\widehat{\Gamma}}^{-1}z_{I_q}}{w}\nu_q\le\vcn{w}_{\bar{A}(Z_{T_2}, \widehat{\Gamma})^{-1}}\sqrt{2\sigma_{\nu}^2\log\rbr{16k^2\abr{\mc{W}}/\delta}}.
  \end{align*}
  \item for the second term, $\vcn{V^{-1/2}Z_{T}^\top S}_{\text{op}}\le\sqrt{\sigma_{\eta}^2\overline{\log}\rbr{Z_{T},\delta/4}}$.
\end{itemize} 

For our algorithm design, since we use the doubling trick for the first sub-phase, we need to define the good event for the first sub-phase as the samples from each doubling trick iteration satisfies the self-normalized concentration inequality of \cref{lem:covering+martingale}.

We define the good event for $\ell$-th doubling trick iteration in the first sub-phase of phase $k$ as
\begin{align*}
  \mc{E}^1_{k, \ell}=\cbr{\vcn{V_{k,\ell}^{-1/2}Z_{k,\ell}^\top S_{k,\ell}}^2_{\text{op}}\le\sigma_{\eta}^2\overline{\log}\rbr{Z_{k,\ell},\fr{\delta}{4k^2(\ell+1)^2}}},
\end{align*}
where $Z_{k,\ell}=Z_{k,0,\ell}\cup Z_{k,1,\ell}$, $V_{k,\ell}=Z_{k,\ell}^{\top}Z_{k,\ell}$ and $S_{k,\ell}$ is stacked noise matrix during collecting samples $Z_{k,\ell}$ per the model $X=Z\Gamma+S$. By a union bound, we have
\begin{align*}
  \pp{\rbr{\bigcap_{k=1}^{\infty}\bigcap_{\ell=1}^{\infty}\mc{E}_{k,\ell}^1}^c}\le\sum_{k=1}^{\infty}\sum_{\ell=1}^{\infty}\pp{\mc{E}_{k,\ell}^1}\le\delta/2.
\end{align*}
For the second sub-phase in phase $k$, we define the good event for the second sub-phase in phase $k$ and $w\in\mc{W}$ as
\begin{align*}
  \mc{E}^2_{k, w}=\cbr{\sum_{q=1}^{N_{k,2}}\ibr{\rbr{\sum_{t=1}^{N_{k,2}}z_{I_t}z_{I_t}^\top\widehat{\Gamma}}^{-1}z_{I_q}}{w}\nu_q\le\vcn{w}_{\bar{A}(Z_{k,2}, \widehat{\Gamma})^{-1}}\sqrt{2\sigma_{\nu}^2\log\rbr{16k^2\abr{\mc{W}}/\delta}}}.
\end{align*}
By a union bound, we have
\begin{align*}
  \pp{\rbr{\bigcap_{k=1}^{\infty}\bigcap_{w\in\mc{W}_k}\mc{E}_{k,w}^2}^c}\le\sum_{k=1}^{\infty}\sum_{w\in\mc{W}}\pp{\mc{E}_{k,w}^2}\le\delta/2.
\end{align*}
Under the good event $\rbr{\bigcap_{k=1}^{\infty}\bigcap_{\ell=1}^{\infty}\mc{E}_{k,\ell}^1}\bigcap\rbr{\bigcap_{k=1}^{\infty}\bigcap_{w\in\mc{W}_k}\mc{E}_{k,w}^2}$, we have with probability at least $1-\delta$, for all $k$ and $w\in\mc{W}$,
\begin{align*}
  \abr{\ibr{\widehat{\theta}_{\texttt{P-2SLS}}^k-\theta}{w-w^*}}\le\zeta_k.
\end{align*}
The rest of proof is same as the proof of \cref{thm:complexity knownGamma}.

\textbf{Part 2: sample complexity of algorithm}

\textbf{Sample complexity for first sub-phase}

Recall that $\lambda_E = \arg\max_{\lambda\in\Delta(\mc{Z})}\sig_{\min}\rbr{\sum_{z\in \mathcal{Z}}\lambda_z zz^{\top}}$ is the E-optimal design to maximize the minimum singular value of $\sum_{z\in \mathcal{Z}}\lambda_z zz^{\top}$ and $\kappa_0=\max_{\lambda}\sigmin(\sum_{z\in\cZ}\lambda_zzz^\T)$ is the maximum minimum singular value of $\sum_{z\in\cZ}\lambda_zzz^\T$. At the beginning of first sub-phase in phase $k$, the algorithm first samples $N_{k,0,0}$ arms according to $\lambda_E$. 

Before we proceed to the main proof of the sample complexity, we first address a minor technique issue to avoid cumbersomeness. For the logarithmic term that appears in the algorithm and confidence interval,
\begin{align*}
  \overline{\log}\rbr{Z_{T},\delta}:=8d \ln\del{1 + \fr{2TL_z^2}{d(2\wed \sigmin(Z_{T}^{\top}Z_{T}))} } + 16\ln\del{\fr{2\cd6^d}{\dt}\cd \log^2_2\del{\fr{4}{2\wed \sigmin(Z_{T}^{\top}Z_{T})} }  },
\end{align*}
when $2\wed \sigmin(V)=2$, it is equivalent to
\begin{align*}
  \widetilde{\log}\rbr{T,\delta}:=8d \ln\del{1 + \fr{TL_z^2}{d} } + 16\ln\del{\fr{2\cd6^d}{\dt}}.
\end{align*}
Our algorithm design guarantees that $2\wed \sigmin(V)=2$ is always true whenever we need to use the logarithmic term $\overline{\log}\rbr{Z_{T},\delta}$, given that the samples of our interest always includes the E-optimal design samples $Z_{k,0,\ell}$ and the number of samples from the E-optimal design $\abr{Z_{k,0,\ell}}$ is always larger than $\fr{2}{\kappa_0}$. So for the remaining part of the proof, we will use $\widetilde{\log}\rbr{N,\delta}$ instead of $\overline{\log}\rbr{Z_{T},\delta}$.

Denote the samples of E-optimal design that mixed into the samples from $\ell$-th doubling trick iteration in the first sub-phase of phase $k$ as $Z_{k,0,\ell}$ and $\abr{Z_{k,0,\ell}}=N_{k,0,\ell}$. 
By \cref{lem:invert log}, our choice of $N_{k,0,\ell}$
\begin{align*}
  N_{k,0,\ell}\ge\fr{64g d\sigma_{\eta}^2}{\sigma_{\min}\rbr{A(\lambda_E, \Gamma)}}\ln\rbr{\fr{32g\sigma_{\eta}^2}{\sigma_{\min}\rbr{A(\lambda_E, \Gamma)}}\rbr{d+N_{k,1,\ell}L_z^2+L_z^2} }+\fr{128g\sigma_{\eta}^2}{\sigma_{\min}\rbr{A(\lambda_E, \Gamma)}}\ln\rbr{\fr{2\cd6^d}{\dt}},
\end{align*}
is a sufficient condition to guarantee
\begin{align}
  N_{k,0,\ell}\ge \fr{4g\sigma_{\eta}^2\widetilde{\log}\rbr{N_{k,0,\ell}+N_{k,1,\ell},\delta_{k,\ell}}}{\sigma_{\min}\rbr{A(\lambda_E, \Gamma)}}\vee r(\omega). \label{ieq:cond_nk0ell}
\end{align}
Multiply both sides of \eqref{ieq:cond_nk0ell} by $\fr{N_{k,0,\ell}+N_{k,1,\ell}}{N_{k,0,\ell}}$ and we have
\begin{align*}
  N_{k,0,\ell}+N_{k,1,\ell} \ge \rbr{\fr{4g\sigma_{\eta}^2\widetilde{\log}\rbr{N_{k,0,\ell}+N_{k,1,\ell},\delta_{k,\ell}}}{\sigma_{\min}\rbr{A(\lambda_E, \Gamma)}} \vee r(\omega)}\fr{N_{k,0,\ell}+N_{k,1,\ell}}{N_{k,0,\ell}}.
\end{align*}
By Property 1 of \cref{lem: Nk0<Nk1}, we have $\alpha_{k,\ell} := \fr{N_{k,0,\ell}}{N_{k,0,\ell}+N_{k,1,\ell}}<1/2$, then
\begin{align}
  N_{k,0,\ell}+N_{k,1,\ell}&\ge \fr{1}{\alpha_{k,\ell}}\rbr{\fr{4g\sigma_{\eta}^2\widetilde{\log}\rbr{N_{k,0,\ell}+N_{k,1,\ell},\delta_{k,\ell}}}{\sigma_{\min}\rbr{A(\lambda_E, \Gamma)}} \vee r(\omega)}\nn\\
  &=\fr{4g\sigma_{\eta}^2\widetilde{\log}\rbr{N_{k,0,\ell}+N_{k,1,\ell},\delta_{k,\ell}}}{\sigma_{\min}\rbr{A(\alpha_{k,\ell}\lambda_E, \Gamma)}} \vee \fr{r(\omega)}{\alpha_{k,\ell}}\nn\\
  &\ge\fr{4g\sigma_{\eta}^2\widetilde{\log}\rbr{N_{k,0,\ell}+N_{k,1,\ell},\delta_{k,\ell}}}{\sigma_{\min}\rbr{A(\alpha_{k,\ell}\lambda_E+(1-\alpha_{k,\ell})\tilde{\lambda}_k, \Gamma)}} \vee r(\omega). \label{ieq:cond_Nk0Nk1_ell}
\end{align}
These condition on $N_{k,0,\ell}$ and $N_{k,0,\ell}+N_{k,1,\ell}$ are needed for the proof.

Denote the total number of doubling trick iterations as $L_k$ for phase $k$. In the case of $L_k=1$, the samples from the first doubling trick iteration satisfies stopping condition of the first sub-phase already, and the algorithm will not enter the second doubling trick iteration. Thus the total number of samples for the first sub-phase is 
\begin{align*}
  N_{k,0,1}+N_{k,1,1}\le& 2N_{k,1,1}\nn \tag{Property 1 of \cref{lem: Nk0<Nk1}}\\
  \le&8gdM\ln\del{1+2M\rbr{d+L_z^2}+2M2gdM}+16M\ln\del{\fr{2\cd6^d}{\dt_{k,1}}}\vee r(\omega).
\end{align*}
In the case of $L_k>1$, for $\ell\in\cbr{1,\cdots,L_k}$, denote $\widehat{\Gamma}^\ell$ as the estimate of $\Gamma$ at the end of the $\ell$-th doubling trick iteration. With these notations, we have
\begin{itemize}
  \item At the end of $L_k$-th doubling trick iteration, the stopping condition is satisfied, i.e.,
    \begin{align}
      \max_{w,w'\in\mc{W}_k}\vcn{w-w'}^2_{\bar{A}(Z_{k,0,L_k}\cup Z_{k,1,L_k}, \widehat{\Gamma}^{L_k})^{-1}}\vcn{\theta}^2_2L_{\eta}\widetilde{\log}\rbr{N_{k,0,L_k}+N_{k,1,L_k},\delta_{k,L_k}}\le\zeta^2_k. 
    \end{align}
    We short $\widetilde{\log}\rbr{N_{k,0,\ell}+N_{k,1,\ell},\delta_{k,\ell}}=\widetilde{\log}_{k,\ell}$.
  \item At the end of $(L_k-1)$-th doubling trick iteration, the stopping condition is not satisfied, i.e.,
    \begin{align}
      \max_{w,w'\in\mc{W}_k}\vcn{w-w'}^2_{\bar{A}(Z_{k,0,L_k-1}\cup Z_{k,1,L_k-1}, \widehat{\Gamma}^{L_k-1})^{-1}}\vcn{\theta}^2_2L_{\eta}\widetilde{\log}_{k,L_k-1}>\zeta^2_k. 
    \end{align}
\end{itemize}
Denote $\xi_{L_k}$ as the empirical distribution of $Z_{k,0,L_k}\cup Z_{k,1,L_k}$. Then above two conditions imply that the number of samples for $L_k$-th and $(L_k-1)$-th doubling trick iterations respectively satisfy
\begin{align}
  N_{k,0,L_k}+N_{k,1,L_k}&\ge\fr{\vcn{\theta}^2_2L_{\eta}\widetilde{\log}_{k,L_k}}{\zeta^2_k}\max_{w,w'\in\mc{W}_k}\vcn{w-w'}^2_{A(\xi_{L_k}, \widehat{\Gamma}^{L_k})^{-1}}. \label{ieq:b0}
\end{align}
and 
\begin{align}
  N_{k,0,L_k-1}+N_{k,1,L_k-1}&<\fr{\vcn{\theta}^2_2L_{\eta}\widetilde{\log}_{k,L_k-1}}{\zeta^2_k}\max_{w,w'\in\mc{W}_k}\vcn{w-w'}^2_{A(\xi_{L_k-1}, \widehat{\Gamma}^{L_k-1})^{-1}}. \label{ieq:cond_not_stop}
\end{align}
Note that by Property 2 of \cref{lem: Nk0<Nk1},
\begin{align*}
  \fr{1}{2}\rbr{N_{k,0,L_k}+N_{k,1,L_k}}\le N_{k,0,L_k-1}+N_{k,1,L_k-1}.
\end{align*}
Thus 
\begin{align}
  N_{k,0,L_k}+N_{k,1,L_k}<\fr{2\vcn{\theta}^2_2L_{\eta}\widetilde{\log}_{k,L_k-1}}{\zeta^2_k}\max_{w,w'\in\mc{W}_k}\vcn{w-w'}^2_{A(\xi_{L_k-1}, \widehat{\Gamma}^{L_k-1})^{-1}}. \label{ieq:b0_1}
\end{align}
For any $\ell\in\cbr{1,\ldots,L_k}$, by \cref{lem:approx any Gamma}, the factor of $\max_{w,w'\in\mc{W}_k}\vcn{w-w'}^2_{A(\xi_{\ell}, \widehat{\Gamma}^{\ell})^{-1}}$ can be upper bounded by
\begin{align}
  &\max_{w,w'\in\mc{W}_k}\vcn{w-w'}^2_{A(\xi_{\ell}, \widehat{\Gamma}^{\ell})^{-1}} \nn\\
  \le& 3\max_{w,w'\in\mc{W}_k}\vcn{w-w'}^2_{A(\xi_{\ell}, \widehat{\Gamma}^{L_{k-1}})^{-1}} + 2\max_{w,w'\in\mc{W}_k}\vcn{((\widehat{\Gamma}^{\ell})^{-\top}-(\widehat{\Gamma}^{L_{k-1}})^{-\top})\rbr{w-w'}}^2_{\rbr{\sum_{z}\xi_{\ell}zz^\top}^{-1}} \nn\\
  \le& 3\max_{w,w'\in\mc{W}_k}\vcn{w-w'}^2_{A(\xi_{\ell}, \widehat{\Gamma}^{L_{k-1}})^{-1}} + 2\max_{w,w'\in\mc{W}_k}\vcn{(\Gamma^{-\top} - (\widehat{\Gamma}^{\ell})^{-\top})\rbr{w-w'}}^2_{\rbr{\sum_{z}\xi_{\ell}zz^\top}^{-1}} \nn\\
  &+2\max_{w,w'\in\mc{W}_k}\vcn{(\Gamma^{-\top} - (\widehat{\Gamma}^{L_{k-1}})^{-\top})\rbr{w-w'}}^2_{\rbr{\sum_{z}\xi_{\ell}zz^\top}^{-1}}. \label{ieq:b1}
\end{align}
We will upper bound the three terms in the RHS of \eqref{ieq:b1} separately. 

\textbf{For the first term}, by \cref{lem:round Nk0Nk1},
\begin{align}
  \max_{w,w'\in\mc{W}_k}\vcn{w-w'}^2_{A(\xi_{\ell}, \widehat{\Gamma}^{L_{k-1}})^{-1}}\le4\max_{w,w'\in\mc{W}_k}\vcn{w-w'}^2_{A(\tilde{\lambda}_{k}, \widehat{\Gamma}^{L_{k-1}})^{-1}}, \label{ieq:b1_1}
\end{align}
where $\tilde{\lambda}_{k}$ is the optimal design for $\mc{W}_k$ in the first sub-phase of phase $k-1$ (based on the last doubling trick iteration), i.e.,
\begin{align*}
  \tilde{\lambda}_{k}=\arg\min_{\lambda\in\Delta(\mc{Z})}\max_{w,w'\in\mc{W}_k}\vcn{w-w'}^2_{A(\lambda, \widehat{\Gamma}^{L_{k-1}})^{-1}}.
\end{align*}
Then for any $\lambda$,
\begin{align}
  &\max_{w,w'\in\mc{W}_k}\vcn{w-w'}^2_{A(\tilde{\lambda}_{k}, \widehat{\Gamma}^{L_{k-1}})^{-1}} \nn\\
  \le&3\max_{w,w'\in\mc{W}_k}\vcn{w-w'}^2_{A(\lambda, \Gamma)^{-1}}+2\max_{w,w'\in\mc{W}_k}\vcn{(\Gamma^{-\top} - (\widehat{\Gamma}^{L_{k-1}})^{-\top})\rbr{w-w'}}^2_{A(\lambda, I)^{-1}} \tag{\cref{lem:approx any Gamma}}\nn\\
  \stackrel{b_{1}}{\le}&3\max_{w,w'\in\mc{W}_k}\vcn{w-w'}^2_{A(\lambda'_k, \Gamma)^{-1}}+2\max_{w,w'\in\mc{W}_k}\vcn{(\Gamma^{-\top} - (\widehat{\Gamma}^{L_{k-1}})^{-\top})\rbr{w-w'}}^2_{\rbr{\sum_{z}\lambda'_k zz^\top}^{-1}} \nn\\
  \stackrel{b_{2}}{\le}&6\max_{w,w'\in\mc{W}_k}\vcn{w-w'}^2_{A(\lambda^*_k, \Gamma)^{-1}}+2\max_{w,w'\in\mc{W}_k}\vcn{(\Gamma^{-\top} - (\widehat{\Gamma}^{L_{k-1}})^{-\top})\rbr{w-w'}}^2_{\rbr{\sum_{z}\lambda'_k zz^\top}^{-1}}. \label{ieq:b2}
\end{align}
where for $(b_{1})$, we plug in $\lambda'_k:=\alpha^*_k\lambda_E+(1-\alpha^*_k)\lambda^*_k$, with $\alpha^*_k\le1/2$ will be defined later and $\lambda^*_k$ is the optimal design for $\mc{W}_k$ given $\Gamma$, i.e.,
\begin{align*}
  \lambda^*_k=\arg\min_{\lambda\in\Delta(\mc{Z})}\max_{w,w'\in\mc{W}_k}\vcn{w-w'}^2_{A(\lambda, \Gamma)^{-1}}.
\end{align*}
$(b_{2})$ is due to the fact that $\alpha^*_k\le1/2$. For the second term in the RHS of \eqref{ieq:b2}, given the condition \eqref{ieq:cond_Nk0Nk1_ell}, i.e.,
\begin{align*}
  N_{k-1,0,L_{k-1}}+N_{k-1,1,L_{k-1}}\ge\fr{4gL_{\eta}\widetilde{\log}_{k-1,L_{k-1}}}{\sig_{\min}\rbr{A(\xi_{L_{k-1}}, \Gamma)}}
\end{align*}
by \cref{lem:upperbound Gamma inv_hat 2} we have,
\begin{align}
  &\vcn{(\Gamma^{-\top} - (\widehat{\Gamma}^{L_{k-1}})^{-\top})\rbr{w-w'}}^2_{\rbr{\sum_{z}\lambda'_k zz^\top}^{-1}} \nn\\
  \le&\fr{2L_{\eta}\widetilde{\log}_{k-1,L_{k-1}}}{\sig_{\min}\rbr{A(\lambda'_k, \Gamma)}}\fr{1}{N_{k-1,0,L_{k-1}}+N_{k-1,1,L_{k-1}}}\vcn{w-w'}^2_{A(\xi_{L_{k-1}}, \Gamma)^{-1}} \nn\\
  \le&\fr{1}{N_{k-1,0,L_{k-1}}}\fr{2L_{\eta}\widetilde{\log}_{k-1,L_{k-1}}}{\sig_{\min}\rbr{A(\lambda'_k, \Gamma)}}\fr{N_{k-1,0,L_{k-1}}}{N_{k-1,0,L_{k-1}+N_{k-1,1,L_{k-1}}}}\vcn{w-w'}^2_{A(\xi_{L_{k-1}}, \Gamma)^{-1}} \nn\\
  \le&\fr{1}{N_{k-1,0,L_{k-1}}}\fr{2L_{\eta}\widetilde{\log}_{k-1,L_{k-1}}}{\sig_{\min}\rbr{A(\alpha^*_k\lambda_E, \Gamma)}}\fr{N_{k-1,0,L_{k-1}}}{N_{k-1,0,L_{k-1}+N_{k-1,1,L_{k-1}}}}\vcn{w-w'}^2_{A(\xi_{L_{k-1}}, \Gamma)^{-1}} \nn\\
  \le&\fr{1}{N_{k-1,0,L_{k-1}}}\fr{2L_{\eta}\widetilde{\log}_{k-1,L_{k-1}}}{\sig_{\min}\rbr{A(\lambda_E, \Gamma)}}\vcn{w-w'}^2_{A(\xi_{L_{k-1}}, \Gamma)^{-1}} \tag{set $\alpha^*_k=\alpha_{k-1}$} \nn\\
  \stackrel{b_{1}}{\le}&\fr{1}{4g}\vcn{w-w'}^2_{A(\xi_{L_{k-1}}, \Gamma)^{-1}} \nn\\
  \stackrel{b_{2}}{\le}&\fr{6}{4g}\vcn{w-w'}^2_{A(\xi_{L_{k-1}}, \widehat{\Gamma}^{L_{k-1}})^{-1}}. \label{ieq:b3}
\end{align}
where for $(b_{1})$, we use the condition \eqref{ieq:cond_nk0ell} on $N_{k-1,0,L_{k-1}}$, and $(b_{2})$ is due to \cref{lem:trueGamma2approx}. Plug \eqref{ieq:b3} into \eqref{ieq:b2}, we have
\begin{align}
  \max_{w,w'\in\mc{W}_k}\vcn{w-w'}^2_{A(\tilde{\lambda}_{k}, \widehat{\Gamma}^{L_{k-1}})^{-1}}\le6\max_{w,w'\in\mc{W}_k}\vcn{w-w'}^2_{A(\lambda^*_k, \Gamma)^{-1}}+\fr{3}{g}\max_{w,w'\in\mc{W}_k}\vcn{w-w'}^2_{A(\xi_{L_{k-1}}, \widehat{\Gamma}^{L_{k-1}})^{-1}}. \label{ieq:b4}
\end{align}
Plug \eqref{ieq:b4} into \eqref{ieq:b1_1}, we have the first term in the RHS of \eqref{ieq:b1} can be upper bounded by
\begin{align}
  &\max_{w,w'\in\mc{W}_k}\vcn{w-w'}^2_{A(\xi_{\ell}, \widehat{\Gamma}^{L_{k-1}})^{-1}} \nn\\
  &\le24\max_{w,w'\in\mc{W}_k}\vcn{w-w'}^2_{A(\lambda^*_k, \Gamma)^{-1}}+\fr{12}{g}\max_{w,w'\in\mc{W}_k}\vcn{w-w'}^2_{A(\xi_{L_{k-1}}, \widehat{\Gamma}^{L_{k-1}})^{-1}}\nn\\
  &\le 24\max_{w,w'\in\mc{W}_k}\vcn{w-w'}^2_{A(\lambda^*_k, \Gamma)^{-1}}+\fr{12}{g}\fr{\zeta^2_{k-1}}{\vcn{\theta}^2_2L_{\eta}\widetilde{\log}_{k-1,L_{k-1}}}(N_{k-1,0,L_{k-1}}+N_{k-1,1,L_{k-1}}),\label{ieq:b5}
\end{align}
where for the last inequality we use the fact that the stopping condition is satisfied at the end of $(L_{k-1})$-th doubling trick iteration for phase $k-1$ per \eqref{ieq:b0}.

\textbf{For the second term} in the RHS of \eqref{ieq:b1}, by \cref{lem:upperbound Gamma inv_hat}, when the condition \eqref{ieq:cond_Nk0Nk1_ell} is satisfied, i.e., when
\begin{align*}
  N_{k,0,\ell}+N_{k,1,\ell}\ge \fr{4gL_{\eta}\widetilde{\log}_{k,\ell}}{\sig_{\min}\rbr{A(\xi_{\ell}, \Gamma)}},
\end{align*}
we have
\begin{align}
  \max_{w,w'\in\mc{W}_k}\vcn{(\Gamma^{-\top} - (\widehat{\Gamma}^{\ell})^{-\top})\rbr{w-w'}}^2_{\rbr{\sum_{z}\xi_{\ell}zz^\top}^{-1}} \le& \fr{1}{g}\max_{w,w'\in\mc{W}_k}\vcn{w-w'}^2_{A(\xi_{\ell}, \Gamma)^{-1}}\nn\\
  \le&\fr{6}{g}\max_{w,w'\in\mc{W}_k}\vcn{w-w'}^2_{A(\xi_{\ell}, \widehat{\Gamma}^{\ell})^{-1}}, \label{ieq:b6}
\end{align}
where the last inequality is due to \cref{lem:trueGamma2approx}. 

\textbf{For the third term} in the RHS of \eqref{ieq:b1}, by \cref{lem:upperbound Gamma inv_hat 2}, when the condition \eqref{ieq:cond_Nk0Nk1_ell} is satisfied, i.e., when
\begin{align*}
  N_{k-1,0,L_{k-1}}+N_{k-1,1,L_{k-1}}\ge\fr{4gL_{\eta}\widetilde{\log}_{k-1,L_{k-1}}}{\sig_{\min}\rbr{A(\xi_{L_{k-1}}, \Gamma)}}
\end{align*}
we have
\begin{align}
  &\max_{w,w'\in\mc{W}_k}\vcn{(\Gamma^{-\top} - (\widehat{\Gamma}^{L_{k-1}})^{-\top})\rbr{w-w'}}^2_{\rbr{\sum_{z}\xi_{\ell}zz^\top}^{-1}} \nn\\
  \le&\fr{2L_{\eta}\widetilde{\log}_{k-1,L_{k-1}}}{\sig_{\min}\rbr{A(\xi_\ell, \Gamma)}}\fr{1}{N_{k-1,0,L_{k-1}}+N_{k-1,1,L_{k-1}}}\max_{w,w'\in\mc{W}_k}\vcn{w-w'}^2_{A(\xi_{L_{k-1}}, \Gamma)^{-1}} \nn\\
  \le&\fr{2L_{\eta}\widetilde{\log}_{k-1,L_{k-1}}}{\sig_{\min}\rbr{A(\alpha_\ell\lambda_E, \Gamma)}}\fr{1}{N_{k-1,0,L_{k-1}}+N_{k-1,1,L_{k-1}}}\max_{w,w'\in\mc{W}_k}\vcn{w-w'}^2_{A(\xi_{L_{k-1}}, \widehat{\Gamma}^{L_{k-1}})^{-1}} \nn\\
  \stackrel{b_{1}}{\le}&\fr{N_{k,0,\ell}+N_{k,1,\ell}}{N_{k,0,\ell}}\fr{1}{N_{k-1,0,L_{k-1}}+N_{k-1,1,L_{k-1}}}\fr{2\widetilde{\log}_{k-1,L_{k-1}}}{\sig_{\min}\rbr{A(\lambda_E, \Gamma)}}(N_{k-1,0,L_{k-1}}+N_{k-1,1,L_{k-1}})\fr{\zeta^2_{k-1}}{\vcn{\theta}^2_2\widetilde{\log}_{k-1,L_{k-1}}} \nn\\
  \le&\fr{N_{k,0,\ell}+N_{k,1,\ell}}{N_{k,0,\ell}}\fr{2}{\sig_{\min}\rbr{A(\lambda_E, \Gamma)}}\fr{\zeta^2_{k-1}}{\vcn{\theta}^2_2} \nn\\
  \stackrel{b_{2}}{\le}&\fr{N_{k,0,\ell}+N_{k,1,\ell}}{4gL_{\eta}\widetilde{\log}_{k,\ell}}\fr{\zeta^2_{k-1}}{\vcn{\theta}^2_2} \label{ieq:b7} 
\end{align}
where $(b_1)$ is due to \eqref{ieq:b0}, $(b_2)$ is due to \eqref{ieq:cond_nk0ell}. Plug \eqref{ieq:b5}, \eqref{ieq:b6} and \eqref{ieq:b7} into \eqref{ieq:b1}, we have
\begin{align}
  &\max_{w,w'\in\mc{W}_k}\vcn{w-w'}^2_{A(\xi_{\ell}, \widehat{\Gamma}^{\ell})^{-1}} \nn\\
  \le&72\max_{w,w'\in\mc{W}_k}\vcn{w-w'}^2_{A(\lambda^*_k, \Gamma)^{-1}}+\fr{36}{g}\fr{\zeta^2_{k-1}}{\vcn{\theta}^2_2L_{\eta}\widetilde{\log}_{k-1,L_{k-1}}}(N_{k-1,0,L_{k-1}}+N_{k-1,1,L_{k-1}}) \nn\\
  &+\fr{12}{g}\max_{w,w'\in\mc{W}_k}\vcn{w-w'}^2_{A(\xi_{\ell}, \widehat{\Gamma}^{\ell})^{-1}}+\fr{2}{g}\fr{N_{k,0,\ell+1}+N_{k,1,\ell+1}}{L_{\eta}\widetilde{\log}_{k,\ell}}\fr{\zeta^2_{k-1}}{\vcn{\theta}^2_2}.  \label{ieq:b10}
\end{align}
With $\ell=L_{k}-1$ and the fact that $g>24$ whose exact value will be set later, we can rearrange \eqref{ieq:b10} as
\begin{align}
  &\max_{w,w'\in\mc{W}_k}\vcn{w-w'}^2_{A(\xi_{L_{k}-1}, \widehat{\Gamma}^{L_{k}-1})^{-1}} \nn\\
  \le&144\max_{w,w'\in\mc{W}_k}\vcn{w-w'}^2_{A(\lambda^*_k, \Gamma)^{-1}}+\fr{72}{g}\fr{\zeta^2_{k-1}}{\vcn{\theta}^2_2L_{\eta}\widetilde{\log}_{k-1,L_{k-1}}}(N_{k-1,0,L_{k-1}}+N_{k-1,1,L_{k-1}}) \nn\\
  &+\fr{1}{g}\fr{N_{k,0,L_{k}}+N_{k,1,L_{k}}}{L_{\eta}\widetilde{\log}_{k,L_{k}-1}}\fr{\zeta^2_{k-1}}{\vcn{\theta}^2_2}. \label{ieq:b8}
\end{align}
Note that the LHS of above can be lower bounded by \eqref{ieq:b0_1},
\begin{align}
  \fr{\zeta^2_{k}}{2\vcn{\theta}^2_2L_{\eta}\widetilde{\log}_{k,L_k-1}}(N_{k,0,L_k}+N_{k,1,L_k})<\max_{w,w'\in\mc{W}_k}\vcn{w-w'}^2_{A(\xi_{L_k-1}, \widehat{\Gamma}^{L_k-1})^{-1}}. \label{ieq:b8_1}
\end{align}
Rearrange the terms in \eqref{ieq:b8_1} and \eqref{ieq:b8} and setting $g$ to be larger enough and using the fact that $\zeta_k=\zeta_{k-1}/2$, we have
\begin{align*}
  N_{k,0,L_k}+N_{k,1,L_k}\le&\fr{576}{\zeta_k^2}\vcn{\theta}^2_2L_{\eta}\widetilde{\log}_{k,L_k-1}\max_{w,w'\in\mc{W}_k}\vcn{w-w'}^2_{A(\lambda^*_k, \Gamma)^{-1}} + \fr{72}{g}\fr{\widetilde{\log}_{k,L_k-1}}{\widetilde{\log}_{k-1,L_{k-1}}}(N_{k-1,0,L_{k-1}}+N_{k-1,1,L_{k-1}}). 
\end{align*}
Note that by definition $\widetilde{\log}_{k,L_k-1}<\widetilde{\log}_{k,L_k}$, thus
\begin{align*}
  N_{k,0,L_k}+N_{k,1,L_k}\le&\fr{576}{\zeta_k^2}\vcn{\theta}^2_2L_{\eta}\widetilde{\log}_{k,L_k}\max_{w,w'\in\mc{W}_k}\vcn{w-w'}^2_{A(\lambda^*_k, \Gamma)^{-1}} + \fr{72}{g}\fr{\widetilde{\log}_{k,L_k}}{\widetilde{\log}_{k-1,L_{k-1}}}(N_{k-1,0,L_{k-1}}+N_{k-1,1,L_{k-1}}). 
\end{align*}
Denote $D_k:=N_{k,0,L_k}+N_{k,1,L_k}$ and divide both sides of above by $\widetilde{\log}_{k,L_k}$, we have
\begin{align*}
  \fr{D_k}{\widetilde{\log}_{k,L_k}}\le \fr{576}{\zeta_k^2}\vcn{\theta}^2_2L_{\eta}\max_{w,w'\in\mc{W}_k}\vcn{w-w'}^2_{A(\lambda^*_k, \Gamma)^{-1}} + \fr{72}{g}\fr{D_{k-1}}{\widetilde{\log}_{k-1,L_{k-1}}}. 
\end{align*}
In the case of $L_k=1$, by Property 3 of \cref{lem: Nk0<Nk1}, we have
\begin{align*}
  \fr{D_k}{\widetilde{\log}_{k,L_k}}=\fr{N_{k,1,1}}{8d \ln\del{1 + \fr{N_{k,1,1}L_{z}^2}{d} } + 16\ln\del{\fr{2\cd6^dk^2}{\dt}}}\le M\ln\rbr{dM}.
\end{align*}
Thereby we have
\begin{align}
  \fr{D_k}{\widetilde{\log}_{k,L_k}}\le\max\cbr{M\ln\rbr{dM}, \fr{576}{\zeta_k^2}\vcn{\theta}^2_2L_{\eta}\max_{w,w'\in\mc{W}_k}\vcn{w-w'}^2_{A(\lambda^*_k, \Gamma)^{-1}} + \fr{72}{g}\fr{D_{k-1}}{\widetilde{\log}_{k-1,L_{k-1}}}}. \label{ieq:b9}
\end{align}
Taking a summation over $k$ on both sides of \eqref{ieq:b9}, we have
\begin{align}
  \sum_{k=1}^{K^*}\fr{D_k}{\widetilde{\log}_{k,L_k}}=&\fr{D_1}{\widetilde{\log}_{1,L_1}}+\sum_{k=2}^{K^*}\fr{D_k}{\widetilde{\log}_{k,L_k}}\\
  \le&\fr{D_1}{\widetilde{\log}_{1,L_1}}+\sum_{k=2}^{K^*}\max\cbr{M\ln\rbr{dM}, \fr{576}{\zeta_k^2}\vcn{\theta}^2_2L_{\eta}\max_{w,w'\in\mc{W}_k}\vcn{w-w'}^2_{A(\lambda^*_k, \Gamma)^{-1}} + \fr{72}{g}\fr{D_{k-1}}{\widetilde{\log}_{k-1,L_{k-1}}}}\\
  \le&\fr{D_1}{\widetilde{\log}_{1,L_1}}+\sum_{k=2}^{K^*}M\ln\rbr{dM}+\sum_{k=2}^{K^*}\rbr{\fr{576}{\zeta_k^2}\vcn{\theta}^2_2L_{\eta}\max_{w,w'\in\mc{W}_k}\vcn{w-w'}^2_{A(\lambda^*_k, \Gamma)^{-1}} + \fr{72}{g}\fr{D_{k-1}}{\widetilde{\log}_{k-1,L_{k-1}}}}\\
  \le&\fr{D_1}{\widetilde{\log}_{1,L_1}}+\sum_{k=2}^{K^*}M\ln\rbr{dM}+\sum_{k=2}^{K^*}\fr{576}{\zeta_k^2}\vcn{\theta}^2_2L_{\eta}\max_{w,w'\in\mc{W}_k}\vcn{w-w'}^2_{A(\lambda^*_k, \Gamma)^{-1}} + \sum_{k=2}^{K^*}\fr{72}{g}\fr{D_{k-1}}{\widetilde{\log}_{k-1,L_{k-1}}}. \label{ieq:sumD/log}
\end{align}
Thus by setting $g=72\times2$ and rearranging the terms, we have
\begin{align}
  \sum_{k=1}^{K^*}\fr{D_k}{\widetilde{\log}_{k,L_k}}\le \fr{2D_1}{\widetilde{\log}_{1,L_1}}+2\sum_{k=2}^{K^*}M\ln\rbr{dM}+2\sum_{k=2}^{K^*}\fr{576}{\zeta_k^2}\vcn{\theta}^2_2L_{\eta}\max_{w,w'\in\mc{W}_k}\vcn{w-w'}^2_{A(\lambda^*_k, \Gamma)^{-1}}. \label{ieq:b18}
\end{align}
For $D_1$, which corresponds to the first sub-phase where we use E-optimal design with doubling trick, we have the stopping condition,
\begin{align*}
  \max_{w,w'\in\mc{W}_1}\vcn{w-w'}^2_{\bar{A}(Z_{1,0,L_1}, \widehat{\Gamma}^{L_1})^{-1}}\vcn{\theta}^2_2L_{\eta}\widetilde{\log}_{1,L_1}\le\zeta^2_1.
\end{align*}
This implies that when the stopping condition is met
\begin{align}
  N_{1,0,L_1}\ge \fr{\vcn{\theta}^2_2L_{\eta}\widetilde{\log}_{1,L_1}}{\zeta^2_1}\max_{w,w'\in\mc{W}_1}\vcn{w-w'}^2_{A(\xi_{L_1}, \widehat{\Gamma}^{L_1})^{-1}}. \label{ieq:b12}
\end{align}
and
\begin{align}
  N_{1,0,L_1-1}<\fr{2\vcn{\theta}^2_2L_{\eta}\widetilde{\log}_{1,L_1-1}}{\zeta^2_1}\max_{w,w'\in\mc{W}_1}\vcn{w-w'}^2_{A(\xi_{L_1-1}, \widehat{\Gamma}^{L_1-1})^{-1}}.  \label{ieq:b12_1}
\end{align}
Since we use E-optimal design for the first phase, 
the factor of $\vcn{w-w'}^2_{A(\xi_{L_1-1}, \widehat{\Gamma}^{L_1-1})^{-1}}$ can be upper bounded by
\begin{align}
  &\vcn{w-w'}^2_{A(\xi_{L_1-1}, \widehat{\Gamma}^{L_1-1})^{-1}} \nn\\
  \le& 3\vcn{w-w'}^2_{A(\xi_{L_1-1}, \Gamma)^{-1}} + 2\vcn{(\Gamma^{-\top} - (\widehat{\Gamma}^{L_1-1})^{-\top})\rbr{w-w'}}^2_{\rbr{\sum_{z}\xi_{L_1-1}zz^\top}^{-1}} \tag{\cref{lem:approx any Gamma}}\nn\\
  \le& 3\vcn{w-w'}^2_{A(\lambda_E, \Gamma)^{-1}} + 2\vcn{w-w'}^2_{A(\lambda_E, \Gamma)^{-1}} \tag{\cref{lem:upperbound Gamma inv_hat}}\nn\\
  \le& 6\vcn{w-w'}^2_{A(\lambda_E, \Gamma)^{-1}}. \label{ieq:b13}
\end{align}
Plug \eqref{ieq:b13} into \eqref{ieq:b12_1}, and use the fact that $2N_{1,0,L_1-1}=N_{1,0,L_1},\zeta_1=1$, we have
\begin{align}
  N_{1,0,L_1}\le 24\vcn{\theta}^2_2L_{\eta}\widetilde{\log}_{1,L_1-1}\max_{w,w'\in\mc{W}_1}\vcn{w-w'}^2_{A(\lambda_E, \Gamma)^{-1}}\le 24\vcn{\theta}^2_2L_{\eta}\widetilde{\log}_{1,L_1}\max_{w,w'\in\mc{W}_1}\vcn{w-w'}^2_{A(\lambda_E, \Gamma)^{-1}}. \label{ieq:b14}
\end{align}
Thus we have,
\begin{align*}
  \fr{D_1}{\widetilde{\log}_{1,L_1}}\le\fr{2N_{1,0,L_1}}{\widetilde{\log}_{1,L_1}}\le 48\vcn{\theta}^2_2L_{\eta}\max_{w,w'\in\mc{W}_1}\vcn{w-w'}^2_{A(\lambda_E, \Gamma)^{-1}}=:\rho_1.
\end{align*}
By the same calculation as \eqref{ieq:rho*} and \eqref{ieq:sum of rho}, we have
\begin{align*}
  \vcn{\theta}^2_2L_{\eta}\sum_{k=2}^{K^*}\fr{1}{\zeta_k^2}\max_{w,w'\in\mc{W}_k}\vcn{w-w'}^2_{A(\lambda^*_k, \Gamma)^{-1}}\le c\vcn{\theta}^2_2L_{\eta}K^\ast\rho^{\ast}=:\rho_2,
\end{align*}
where $c$ is an absolute constant. Next we lower bound the left hand side of \eqref{ieq:b18}. To do this, we first upper bound $\widetilde{\log}_{k,L_k}$ as
\begin{align}
  \widetilde{\log}_{k,L_k}=&8d \ln\del{1 + \fr{D_kL_{z}^2}{d} } + 16\ln\del{\fr{2\cd6^dk^2L_k^2}{\dt}} \nn\\
  \le&8d \ln\del{1 + \fr{D_kL_{z}^2}{d} } + 32\ln\rbr{L_k}+16\ln\del{\fr{2\cd6^dk^2}{\dt}} \nn\\
  \le&8d \ln\del{1 + \fr{D_kL_{z}^2}{d} } + 32\ln\rbr{\log_2\rbr{\fr{D_k}{d}}}+16\ln\del{\fr{2\cd6^dk^2}{\dt}} \nn\\
  \le&32d \ln\del{1 + \fr{D_kL_{z}^2}{d} }+16\ln\del{\fr{2\cd6^dk^2}{\dt}}, \label{ieq:logkLk}
\end{align}
where the inequality above uses the fact that $L_k$ is the index of the last doubling trick iteration for phase $k$, by the design of the doubling trick, we have $L_k\le\log_2\rbr{\fr{D_k}{d}}$. 
\begin{align*}
  \sum_{k=1}^{K^*}\fr{D_k}{\widetilde{\log}_{k,L_k}}=&\sum_{k=1}^{K^*}\fr{D_k}{8d \ln\del{1 + \fr{D_kL_{z}^2}{d} } + 16\ln\del{\fr{2\cd6^dk^2L_k^2}{\dt}}}\\
  \ge&\sum_{k=1}^{K^*}\fr{D_k}{32d \ln\del{1 + \fr{D_kL_{z}^2}{d} }+16\ln\del{\fr{2\cd6^dK^{\ast^2}}{\dt}}}\tag{due to \eqref{ieq:logkLk}}\\
  \ge&\sum_{k=1}^{K^*}\fr{D_k}{32d \ln\del{1 + \fr{\sum_{k=1}^{K^*}D_kL_{z}^2}{d} }+16\ln\del{\fr{2\cd6^dK^{\ast2}}{\dt}}}\\
  =&\fr{1}{32d \ln\del{1 + \fr{\sum_{k=1}^{K^*}D_kL_{z}^2}{d} }+16\ln\del{\fr{2\cd6^dK^{\ast^2}}{\dt}}}\sum_{k=1}^{K^*}D_k.
\end{align*}
Denote $\rho_3:=K^{\ast}M\ln\rbr{dM}$, looking back at \eqref{ieq:sumD/log}, we have
\begin{align*}
  \sum_{k=1}^{K^*}D_k\le\rbr{32d \ln\del{1 + \fr{\sum_{k=1}^{K^*}D_kL_{z}^2}{d} }+16\ln\del{\fr{2\cd6^dK^{\ast2}}{\dt}}}\rbr{\rho_1+\rho_2+\rho_3}.
\end{align*}
By \cref{lem: xlnx}, we have
\begin{align}
  \sum_{k=1}^{K^*}D_k\le 32\rbr{\rho_1+\rho_2+\rho_3}\ln\rbr{\fr{2\cd6^dK^{\ast2}}{\dt}}+64d\rbr{\rho_1+\rho_2+\rho_3}\ln\del{3 + \rbr{\rho_1+\rho_2+\rho_3}\fr{2L_{z}^2}{d}}. \label{sample complexity 1}
\end{align}

\textbf{Sample complexity for second sub-phase}

The design for the second sub-phase of phase $k$ is based on $\widehat{\Gamma}^{L_k}$, the estimate of $\Gamma$ at the end of the $L_k$-th doubling trick iteration in the first sub-phase of phase $k$,
\begin{align*}
  \hat{\lambda}_{k}=\arg\min_{\lambda\in\Delta(\mc{Z})}\max_{w,w'\in\mc{W}_k}\vcn{w-w'}^2_{A(\lambda, \widehat{\Gamma}^{L_{k}})^{-1}}.
\end{align*}
Then for any $\lambda$, by \cref{lem:approx any Gamma} we have
\begin{align*}
  &\max_{w,w'\in\mc{W}_k}\vcn{w-w'}^2_{A(\hat{\lambda}_{k}, \widehat{\Gamma}^{L_{k}})^{-1}}\\
  \le&3\max_{w,w'\in\mc{W}_k}\vcn{w-w'}^2_{A(\lambda, \Gamma)^{-1}}+2\max_{w,w'\in\mc{W}_k}\vcn{(\Gamma^{-\top} - (\widehat{\Gamma}^{L_{k}})^{-\top})\rbr{w-w'}}^2_{A(\lambda, I)^{-1}}
\end{align*}
We plug in $\lambda''_k:=\alpha^*_k\lambda_E+(1-\alpha^*_k)\lambda^*_k$, with $\alpha^*_k\le1/2$ will be defined later and $\lambda^*_k$ is the optimal design for $\mc{W}_k$,
\begin{align}
  &\max_{w,w'\in\mc{W}_k}\vcn{w-w'}^2_{A(\hat{\lambda}_{k}, \widehat{\Gamma}^{L_{k}})^{-1}} \nn\\
  \le&3\max_{w,w'\in\mc{W}_k}\vcn{w-w'}^2_{A(\lambda''_k, \Gamma)^{-1}}+2\max_{w,w'\in\mc{W}_k}\vcn{(\Gamma^{-\top} - (\widehat{\Gamma}^{L_{k}})^{-\top})\rbr{w-w'}}^2_{\rbr{\sum_{z}\lambda''_k zz^\top}^{-1}} \nn\\
  \le&6\max_{w,w'\in\mc{W}_k}\vcn{w-w'}^2_{A(\lambda^*_k, \Gamma)^{-1}}+2\max_{w,w'\in\mc{W}_k}\vcn{(\Gamma^{-\top} - (\widehat{\Gamma}^{L_{k}})^{-\top})\rbr{w-w'}}^2_{\rbr{\sum_{z}\lambda''_k zz^\top}^{-1}}, \label{ieq:b15}
\end{align}
where the last inequality is due to the fact that $\alpha^*_k\le1/2$. For the second term in the RHS of above, by \cref{lem:upperbound Gamma inv_hat 2} we have,
\begin{align}
  &\vcn{(\Gamma^{-\top} - (\widehat{\Gamma}^{L_{k}})^{-\top})\rbr{w-w'}}^2_{\rbr{\sum_{z}\lambda''_k zz^\top}^{-1}} \nn\\
  \le&\fr{4L_{\eta}\widetilde{\log}_{k,L_{k}}}{\sig_{\min}\rbr{A(\lambda''_k, \Gamma)}}\vcn{w-w'}^2_{\bar{A}(Z_{k,0}\cup Z^{L_k}, \Gamma)^{-1}} \nn\\
  \le&\fr{1}{N_{k,0,L_{k}}}\fr{4L_{\eta}\widetilde{\log}_{k}}{\sig_{\min}\rbr{A(\lambda''_k, \Gamma)}}\fr{N_{k,0,L_{k}}}{N_{k,0,L_{k}}+N_{k,1L_k}}\vcn{w-w'}^2_{A(\xi_{L_{k}}, \Gamma)^{-1}} \nn\\
  \le&\fr{1}{N_{k,0,L_{k}}}\fr{4L_{\eta}\widetilde{\log}_{k,L_{k}}}{\sig_{\min}\rbr{A(\alpha^*_k\lambda_E, \Gamma)}}\fr{N_{k,0,L_{k}}}{N_{k,0,L_{k}}+N_{k,1,L_k}}\vcn{w-w'}^2_{A(\xi_{L_{k}}, \Gamma)^{-1}} \nn\\
  \le&\fr{1}{N_{k,0,L_{k}}}\fr{4L_{\eta}\widetilde{\log}_{k,L_{k}}}{\sig_{\min}\rbr{A(\lambda_E, \Gamma)}}\vcn{w-w'}^2_{A(\xi_{L_{k}}, \Gamma)^{-1}} \tag{set $\alpha^*_k=\alpha_{k}$} \nn\\
  \stackrel{b_{1}}{\le}&\fr{1}{g}\vcn{w-w'}^2_{A(\xi_{L_{k}}, \Gamma)^{-1}} \nn\\
  \stackrel{b_{2}}{\le}&\fr{1}{g}\vcn{w-w'}^2_{A(\xi_{L_{k}}, \widehat{\Gamma}^{L_{k}})^{-1}},  \label{ieq:b16}
\end{align}
where for $(b_{1})$, we have $N_{k,0,L_{k}}\ge \fr{4gL_{\eta}\widetilde{\log}_{k,L_{k}}}{\sig_{\min}\rbr{A(\lambda_E, \Gamma)}}$, and $(b_{2})$ is due to \cref{lem:trueGamma2approx}. Plug \eqref{ieq:b16} into \eqref{ieq:b15}, we have
\begin{align}
  \max_{w,w'\in\mc{W}_k}\vcn{w-w'}^2_{A(\hat{\lambda}_{k}, \widehat{\Gamma}^{L_{k}})^{-1}}\le&6\max_{w,w'\in\mc{W}_k}\vcn{w-w'}^2_{A(\lambda^*_k, \Gamma)^{-1}}+\fr{2}{g}\max_{w,w'\in\mc{W}_k}\vcn{w-w'}^2_{A(\xi_{L_{k}}, \widehat{\Gamma}^{L_{k}})^{-1}}\nn\\
  \le&6\max_{w,w'\in\mc{W}_k}\vcn{w-w'}^2_{A(\lambda^*_k, \Gamma)^{-1}}+\fr{2}{g}\fr{\zeta^2_{k}}{\vcn{\theta}^2_2L_{\eta}\widetilde{\log}_{k,L_{k}}}(N_{k,0,L_{k}}+N_{k,1,L_k}), \label{ieq:b17}
\end{align}
where the last inequality is due to \eqref{ieq:b0}. According to the algorithm design, the number of samples in the second sub-phase of phase $k$ is defined as 
\begin{align*}
  N_{k,2}=\ur{2(1+\omega)\zeta_k^{-2}\rho(\mc{W}_k)L_{\nu}\log\rbr{\fr{4k^2\abr{\mc{W}}}{\delta}}}\vee r(\omega),
\end{align*}
with $\rho(\mc{W}_k)=\min_{\lambda\in\Delta(\mc{Z})}\max_{w,w'\in\mc{W}_k}\|w-w'\|^2_{A(\lambda, \widehat{\Gamma}^{L_{k}})^{-1}}$. Then we have, by setting $g\ge4$,
\begin{align*}
N_{k,2}\lessapprox(1+\omega)\zeta_k^{-2}L_{\nu}\max_{w,w'\in\mc{W}_k}\vcn{w-w'}^2_{A(\lambda^*_k, \Gamma)^{-1}}\log\rbr{\fr{4k^2\abr{\mc{W}}}{\delta}}+(1+\omega)(N_{k,0,L_{k}}+N_{k,1,L_k}),
\end{align*}
where '$\lessapprox$' hides logarithmic factors of $\abr{\mc{W}}$ for the second term and constants for simplicity. Plug \eqref{ieq:b17} into the above inequality. Also note that by \cref{lem:subGaussian}, we can always set $L_{\nu}=2(\vcn{\theta}^2_2L_{\eta}+1)$. 

Thus,
\begin{align*}
\sum_{k=1}^{K^*} N_{k,2}\lessapprox&(1+\omega)\sum_{k=1}^{K^*}\zeta_k^{-2}L_{\nu}\max_{w,w'\in\mc{W}_k}\vcn{w-w'}^2_{A(\lambda^*_k, \Gamma)^{-1}}\log\rbr{\fr{4k^2\abr{\mc{W}}}{\delta}}+(1+\omega)\sum_{k=1}^{K^*}(N_{k,0,L_{k}}+N_{k,1,L_k}) \\
\lessapprox&(1+\omega)K^*L_{\nu}\rho^*\log\rbr{\fr{4K^{*2}\abr{\mc{W}}}{\delta}}+(1+\omega)\sum_{k=1}^{K^*}(N_{k,0,L_{k}}+N_{k,1,L_k}). \label{sample complexity 2}
\end{align*}
This essentially means that the sample complexity for the second sub-phase $\sum N_{k,2}$ can be upper bounded by summation of the sample complexity we pay for Algorithm \ref{alg-TrueGamma} and the sample complexity of the first sub-phase $\sum_{k=1}^{K^*}(N_{k,0,L_{k}}+N_{k,1,L_k})$. Combine \eqref{sample complexity 1} and \eqref{sample complexity 2}, we conclude the result.

\end{proof}

\begin{lemma}
  \label{lem:invert log}
  Denote
  \begin{align*}
    \widetilde{\log}\rbr{N_{k,0},\delta_{k,0}}=8d \ln\del{1 + \fr{N_{k,0}L_z^2}{d} } + 16\ln\del{\fr{2\cd6^d}{\dt}}.
  \end{align*}
  A sufficient condition for 
  \begin{align}
    N_{k,0,\ell}\ge \fr{g\sigma_{\eta}^2\widetilde{\log}\rbr{N_{k,0,\ell}+N_{k,1,\ell},\delta_{k,\ell}}}{\sigma_{\min}\rbr{A(\lambda_E, \Gamma)}}\vee r(\omega). 
  \end{align}
  to hold is
  \begin{align*}
    N_{k,0,\ell}\ge \fr{16gd\sigma_{\eta}^2}{\sigma_{\min}\rbr{A(\lambda_E, \Gamma)}}\ln\rbr{\fr{8g}{\sigma_{\min}\rbr{A(\lambda_E, \Gamma)}}\rbr{d+N_{k,1,\ell}L_z^2+L_z^2} }+\fr{32g\sigma_{\eta}^2}{\sigma_{\min}\rbr{A(\lambda_E, \Gamma)}}\ln\rbr{\fr{2\cd6^d}{\dt}}.
  \end{align*}
\end{lemma}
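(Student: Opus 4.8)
Write $N := N_{k,0,\ell}$, $N_1 := N_{k,1,\ell}$, and abbreviate $C := \frac{g\sigma_\eta^2}{\sigma_{\min}(A(\lambda_E,\Gamma))}$. The displayed target is $N \ge C\,\widetilde{\log}(N+N_1,\delta) \vee r(\omega)$. The constraint $N \ge r(\omega)$ is trivial to enforce and is already baked into the algorithm's choice of $N_{k,0,\ell}$ through its explicit $\vee r(\omega)$, so the entire content is to produce an \emph{explicit} sufficient condition for the logarithmic inequality
\begin{align*}
  N \ge C\left[8d\ln\left(1 + \tfrac{(N+N_1)L_z^2}{d}\right) + 16\ln\left(\tfrac{2\cdot 6^d}{\delta}\right)\right]. \tag{$\star$}
\end{align*}
The difficulty, and the only real obstacle, is that $N$ appears inside the logarithm on the right, so ($\star$) is implicit; the plan is to invert it. First I would rewrite ($\star$) in the canonical shape $N \ge a + b\ln(c + d_0 N)$ (here $d_0$ is written to distinguish it from the dimension $d$) by reading off
\[
  a = 16C\ln\!\big(\tfrac{2\cdot 6^d}{\delta}\big),\quad b = 8Cd,\quad c = 1 + \tfrac{N_1 L_z^2}{d},\quad d_0 = \tfrac{L_z^2}{d},
\]
using the identity $1 + (N+N_1)L_z^2/d = c + d_0 N$.

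\textbf{Inverting the logarithm.} This is exactly the situation handled by \cref{lem: xlnx}, whose content is also recalled inside the proof of \cref{lem: Nk0<Nk1}: for nonnegative constants, $N \ge 2a + 2b\ln(1 + 2c + 2bd_0)$ implies $N \ge a + b\ln(c + d_0 N)$, i.e. ($\star$). Substituting the constants above gives $2a = 32C\ln(2\cdot 6^d/\delta)$, $2b = 16Cd$, and $1 + 2c + 2bd_0 = 3 + \tfrac{2N_1 L_z^2}{d} + 16CL_z^2$, so that
\[
  N \ge 32C\ln\!\big(\tfrac{2\cdot 6^d}{\delta}\big) + 16Cd\,\ln\!\big(3 + \tfrac{2N_1 L_z^2}{d} + 16CL_z^2\big)
\]
is already sufficient for ($\star$). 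If one prefers to avoid citing \cref{lem: xlnx}, the same conclusion follows self-containedly by bounding the offending logarithm by its tangent line, $\ln z \le \ln\tau + z/\tau - 1$, at $\tau \asymp CL_z^2$: this choice turns the $N$-dependent part of the right side into $\tfrac12 N$, which is then absorbed into the left side.

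\textbf{Cleanup to the stated form.} The final step is cosmetic. Since the logarithm is increasing, I would upper bound its argument by the cleaner expression $\tfrac{8g\sigma_\eta^2}{\sigma_{\min}(A(\lambda_E,\Gamma))}\big(d + N_1 L_z^2 + L_z^2\big) = 8C(d + N_1 L_z^2 + L_z^2)$; replacing the argument by a larger quantity only \emph{loosens} the condition and hence preserves sufficiency. Restoring $C = g\sigma_\eta^2/\sigma_{\min}(A(\lambda_E,\Gamma))$ then reproduces the stated bound verbatim, since $32C = \tfrac{32g\sigma_\eta^2}{\sigma_{\min}}$ and $16Cd = \tfrac{16gd\sigma_\eta^2}{\sigma_{\min}}$ match the two advertised coefficients.

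\textbf{Where care is needed.} Beyond the self-reference (the crux, resolved above), the only point requiring verification is that the crude replacement in the cleanup step is genuinely an upper bound, i.e. $3 + \tfrac{2N_1L_z^2}{d} + 16CL_z^2 \le 8C(d + N_1 L_z^2 + L_z^2)$. This holds in the non-degenerate regime guaranteed by the algorithm's parameters — e.g. $C \ge 1$ (which follows once $\sigma_{\min}(A(\lambda_E,\Gamma))$ is small or $g$ is taken large, as in the main analysis) together with $d$ at least a constant multiple of $L_z^2$, so that the surplus $8Cd$ on the right covers both the constant $3$ and the deficit in the $L_z^2$ term. I would check this inequality termwise; the remaining manipulations are routine.
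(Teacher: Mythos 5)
Your reduction to the canonical shape $N \ge a + b\ln(c + d_0 N)$ and the plan of inverting the implicit logarithm with a helper lemma is exactly the paper's strategy, but you invoke the wrong inversion lemma, and your final ``cosmetic'' cleanup step is a genuine gap. The paper's proof applies \cref{lem:invert log 2}, whose sufficient condition $X \ge 2A\ln(AD+AB)+2C$ produces, with the identification $A=8Cd$, $B=L_z^2/d$, $D=1+N_{k,1,\ell}L_z^2/d$ (writing $C := g\sigma_\eta^2/\sigma_{\min}(A(\lambda_E,\Gamma))$ as you do), the log argument $AD+AB = 8C(d+N_{k,1,\ell}L_z^2+L_z^2)$ — i.e.\ the stated threshold verbatim, with nothing left to check. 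Your route via \cref{lem: xlnx} instead yields the argument $3 + 2N_{k,1,\ell}L_z^2/d + 16CL_z^2$, and to conclude you must verify that the stated threshold dominates yours, i.e.\ $8C(d+N_{k,1,\ell}L_z^2+L_z^2) \ge 3 + 2N_{k,1,\ell}L_z^2/d + 16CL_z^2$. This is false in general: as you yourself flag, the $L_z^2$ terms compare as $8CL_z^2$ versus $16CL_z^2$, and covering the deficit requires roughly $8Cd \ge 3 + 8CL_z^2$, i.e.\ $d \gtrsim L_z^2$ together with a lower bound on $C$. Neither is assumed anywhere — the lemma is stated unconditionally — and both fail under a simple rescaling of the arm set: replacing each $z$ by $sz$ sends $L_z^2 \mapsto s^2L_z^2$ and $\sigma_{\min}(A(\lambda_E,\Gamma)) \mapsto s^2\sigma_{\min}(A(\lambda_E,\Gamma))$, hence $C \mapsto C/s^2$, and for large $s$ the term $2N_{k,1,\ell}s^2L_z^2/d$ on the right alone exceeds the entire left-hand side. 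A concrete counterexample to your comparison: $d=1$, $L_z^2=100$, $C=1$, $N_{k,1,\ell}=1$ gives $1608 < 1803$. So in that regime your chain of implications breaks, even though the lemma itself remains true via the direct route.

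Your tangent-line fallback does not rescue the argument either: bounding $\ln z \le \ln\tau + z/\tau - 1$ with $\tau = 2bd_0 = 16CL_z^2$ indeed leaves $N/2$ to be absorbed on the left, but the residual term $bc/\tau = c/(2d_0) = d/(2L_z^2) + N_{k,1,\ell}/2$ is \emph{linear} in $N_{k,1,\ell}$, whereas the stated bound depends on $N_{k,1,\ell}$ only logarithmically; so that route proves a strictly weaker statement. The fix is simple: swap \cref{lem: xlnx} for \cref{lem:invert log 2}, whose threshold $2A\ln(A(D+B))+2C$ is tailored to place the multiplicative factor $A$ inside the logarithm — precisely the feature of the stated bound that makes your comparison step unnecessary. (As a side note, your cleanup target $8C(d+N_{k,1,\ell}L_z^2+L_z^2)$ carries a factor $\sigma_\eta^2$ that the lemma statement's $\tfrac{8g}{\sigma_{\min}(A(\lambda_E,\Gamma))}$ omits; this is an inconsistency between the paper's statement and its own proof, not a defect of your identification of constants.)
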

\begin{proof}
  Given
  \begin{align*}
    \widetilde{\log}\rbr{N_{k,0,\ell}+N_{k,1,\ell},\delta_{k,\ell}}=8d \ln\del{1 + \fr{(N_{k,0,\ell}+N_{k,1,\ell})L_z^2}{d} } + 16\ln\del{\fr{2\cd6^d}{\dt}}.
  \end{align*}
  By \cref{lem:invert log 2}, for the formula
  \begin{align*}
    X\ge A\ln\del{D+BX}+C
  \end{align*}
  we have
  \begin{itemize}
    \item $A=\fr{8gd\sigma_{\eta}^2}{\sigma_{\min}\rbr{A(\lambda_E, \Gamma)}}$,
    \item $B=\fr{L_z^2}{d}$,
    \item $C=\fr{16g\sigma_{\eta}^2}{\sigma_{\min}\rbr{A(\lambda_E, \Gamma)}}\ln\del{\fr{2\cd6^d}{\dt}}$,
    \item $D=1+\fr{N_{k,1,\ell}L_z^2}{d}$.
  \end{itemize}
  Thus a sufficient condition for the inequality to hold is
  \begin{align*}
    N_{k,0,\ell}\ge &\fr{16gd\sigma_{\eta}^2}{\sigma_{\min}\rbr{A(\lambda_E, \Gamma)}}\ln\rbr{\fr{8gd\sigma_{\eta}^2}{\sigma_{\min}\rbr{A(\lambda_E, \Gamma)}}\rbr{1+\fr{N_{k,1,\ell}L_z^2}{d}+\fr{L_z^2}{d}} }+\fr{32g\sigma_{\eta}^2}{\sigma_{\min}\rbr{A(\lambda_E, \Gamma)}}\ln\rbr{\fr{2\cd6^d}{\dt}}\\
    =&\fr{16gd\sigma_{\eta}^2}{\sigma_{\min}\rbr{A(\lambda_E, \Gamma)}}\ln\rbr{\fr{8g\sigma_{\eta}^2}{\sigma_{\min}\rbr{A(\lambda_E, \Gamma)}}\rbr{d+N_{k,1,\ell}L_z^2+L_z^2} }+\fr{32g\sigma_{\eta}^2}{\sigma_{\min}\rbr{A(\lambda_E, \Gamma)}}\ln\rbr{\fr{2\cd6^d}{\dt}}.
  \end{align*}
\end{proof}

\begin{lemma}
  \label{lem:invert log 2}
  Let $X\ge1, A, B\ge0$, then a sufficient condition for $X\ge A\ln\del{D+BX}+C$ is
  \begin{align*}
    X\ge 2A\ln\rbr{AD+AB}+2C.
  \end{align*}
\end{lemma}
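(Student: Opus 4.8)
The plan is to prove the implication directly: assuming the explicit bound $X \ge 2A\ln(AD+AB)+2C$ holds, deduce the implicit inequality $X \ge A\ln(D+BX)+C$. The whole argument rests on the fact that the left-hand side grows linearly in $X$ while the right-hand side grows only logarithmically, so a factor-of-two slack in the hypothesis is enough to close the gap. First I would dispose of the degenerate case $A=0$, where the target reduces to $X\ge C$ and follows immediately from $X\ge 2C$ (using $C\ge 0$, which holds in every invocation); in the rest I assume $A>0$ and $D,B\ge 0$.

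The main body uses two elementary bounds to separate the $X$-dependent and constant parts of the logarithm. First, since $X\ge 1$ and $D\ge 0$ we have $(D+B)X-(D+BX)=D(X-1)\ge 0$, hence $D+BX\le (D+B)X$ and therefore $\ln(D+BX)\le \ln(D+B)+\ln X$. Second, I would invoke the tangent-line bound for the concave function $\ln$ at the point $2A$, namely $\ln X\le \tfrac{X}{2A}+\ln(2A)-1$, and multiply through by $A$ to obtain $A\ln X\le \tfrac{X}{2}+A\ln(2A)-A$. Combining these two gives the central estimate $A\ln(D+BX)+C\le \tfrac{X}{2}+A\ln\bigl(2A(D+B)\bigr)-A+C$.

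It then suffices to check that this right-hand side is at most $X$, i.e.\ that $X\ge 2A\ln\bigl(2A(D+B)\bigr)-2A+2C$. Here I would use the simplification $2A\ln\bigl(2A(D+B)\bigr)=2A\ln 2+2A\ln(AD+AB)$, so that, because $\ln 2-1<0$ and $A\ge 0$, the quantity $2A\ln\bigl(2A(D+B)\bigr)-2A+2C$ is bounded above by $2A\ln(AD+AB)+2C$ — which is exactly the hypothesized lower bound on $X$. Chaining the inequalities yields $A\ln(D+BX)+C\le X$, as required; this is applied in \cref{lem:invert log} with the stated values of $A,B,C,D$.

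The hard part is not any deep idea but the bookkeeping: I must calibrate the tangent point (choosing $2A$ rather than, say, $A$) so that the linear term it produces is precisely $X/2$, leaving the other $X/2$ of the budget to absorb the additive constant, and I must verify that the factor of two in the hypothesis exactly swallows the leftover $2A\ln 2-2A$ term. I would also be careful to state the mild nonnegativity assumptions ($C\ge 0$, $D\ge 0$) that the degenerate cases implicitly need, and to confirm they are met in the application.
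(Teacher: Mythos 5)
Your proof is correct and is essentially the paper's own argument in different notation: the paper splits $D+BX=\frac{X}{2A}\cdot 2A\left(\frac{D}{X}+B\right)$ and applies $\ln x\le x-1$ (its $f=\tfrac12$ parameterized split), which is exactly your tangent-line bound for $\ln$ at the point $2A$ producing the $X/2$ term, with $X\ge 1$ used to remove the residual $X$-dependence inside the logarithm and $\ln 2<1$ absorbing the leftover constant just as the paper's $e>2$ does. Your explicit handling of the degenerate case $A=0$ and of the nonnegativity caveats on $C$ and $D$ (both implicitly needed and satisfied in the paper's application in Lemma~\ref{lem:invert log}) is a minor tidiness improvement over the paper's proof, which leaves these unstated.
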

\begin{proof}
  The proof is motivated by \citet{gales2022norm}. Let $f\in(0,1)$, then
  \begin{align*}
    &X\ge A\ln\del{D+BX}+C\\
    \Leftrightarrow&X\ge A\ln\rbr{\fr{fX}{A}}+A\ln\rbr{\fr{A}{f}\rbr{\fr{D}{X}+B}}+C\\
    \Leftarrow&X\ge A\rbr{\fr{fX}{A}-1}+A\ln\rbr{\fr{A}{f}\rbr{\fr{D}{X}+B}}+C \tag{since $\ln(x)\le x-1$}\\
    \Leftarrow&X(1-f)\ge A\ln\rbr{\fr{A}{2f}\rbr{\fr{D}{X}+B}}+C\\
    \Leftrightarrow&X\ge \fr{1}{1-f}A\ln\rbr{\fr{A}{2f}\rbr{\fr{D}{X}+B}}+\fr{c}{1-f}.
  \end{align*}
  Set $f=1/2$ and by the fact $X\ge1$, we have
  \begin{align*}
    X\ge 2A\ln\rbr{AD+AB}+2C.
  \end{align*}
\end{proof}

\begin{lemma}
  \label{lem:trueGamma2approx}
  Suppose that we have a data set $\cbr{Z_T, X_T}$. Denote the empirical distribution of $Z_T$ as $\xi$. The number of samples satisfies
  \begin{align*}
    T\ge\fr{8\sigma_{\eta}^2}{\sigma_{\min}\rbr{A(\xi, \Gamma)}}\overline{\log}\rbr{Z_T,\delta}\vee r(\omega).
  \end{align*}
  $\widehat{\Gamma}$ is the OLS estimate of $\Gamma$ based on $\cbr{Z_T, X_T}$. Then
  \begin{align*}
    \vcn{w}^2_{A(\xi, \Gamma)^{-1}}\le 6\vcn{w}^2_{A(\xi, \widehat{\Gamma})^{-1}}.
  \end{align*}
\end{lemma}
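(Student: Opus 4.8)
The plan is to transport both Mahalanobis norms into a common geometry determined by $V:=A(\xi,I)=\sum_{z}\xi_z zz^\top$, and then reduce the statement to a single perturbation bound on $\Gamma^{-\top}-\widehat{\Gamma}^{-\top}$. Since $A(\xi,\Gamma)=\Gamma^\top V\Gamma$ and $A(\xi,\widehat{\Gamma})=\widehat{\Gamma}^\top V\widehat{\Gamma}$, we have $\|w\|^2_{A(\xi,\Gamma)^{-1}}=\|\Gamma^{-\top}w\|^2_{V^{-1}}$ and $\|w\|^2_{A(\xi,\widehat{\Gamma})^{-1}}=\|\widehat{\Gamma}^{-\top}w\|^2_{V^{-1}}$. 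Writing $u:=\Gamma^{-\top}w$ and $\hat u:=\widehat{\Gamma}^{-\top}w$, the claim is exactly $\|u\|^2_{V^{-1}}\le 6\|\hat u\|^2_{V^{-1}}$, and crucially $u-\hat u=(\Gamma^{-\top}-\widehat{\Gamma}^{-\top})w$.

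The second step is a triangle inequality in the $V^{-1}$ norm followed by $(a+b)^2\le 2a^2+2b^2$, giving $\|u\|^2_{V^{-1}}\le 2\|\hat u\|^2_{V^{-1}}+2\|u-\hat u\|^2_{V^{-1}}$. The residual is precisely $\|u-\hat u\|^2_{V^{-1}}=\|(\Gamma^{-\top}-\widehat{\Gamma}^{-\top})w\|^2_{A(\xi,I)^{-1}}$, which is governed by how well the \texttt{OLS} estimate $\widehat{\Gamma}$ concentrates around $\Gamma$. I would invoke \cref{lem:upperbound Gamma inv_hat} (itself a consequence of the self-normalized operator-norm control of $V^{-1/2}Z_T^\top S$ in \cref{lem:covering+martingale}) to bound this residual by a factor of order $\sigma_\eta^2\,\overline{\log}(Z_T,\delta)/\bigl(\sigma_{\min}(A(\xi,\Gamma))\,T\bigr)$ times $\|w\|^2_{A(\xi,\Gamma)^{-1}}=\|u\|^2_{V^{-1}}$, holding on an event of probability at least $1-\delta$. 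The sample-size hypothesis $T\ge \tfrac{8\sigma_\eta^2}{\sigma_{\min}(A(\xi,\Gamma))}\overline{\log}(Z_T,\delta)$ is calibrated exactly so that this factor is at most $\tfrac13$, i.e.\ $\|u-\hat u\|^2_{V^{-1}}\le \tfrac13\|u\|^2_{V^{-1}}$ (the $\vee\, r(\omega)$ term only guarantees the rounded design realizes $\xi$ faithfully and does not enter the core estimate).

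Substituting this into the previous inequality yields $\|u\|^2_{V^{-1}}\le 2\|\hat u\|^2_{V^{-1}}+\tfrac23\|u\|^2_{V^{-1}}$, and rearranging gives $\tfrac13\|u\|^2_{V^{-1}}\le 2\|\hat u\|^2_{V^{-1}}$, hence $\|u\|^2_{V^{-1}}\le 6\|\hat u\|^2_{V^{-1}}$. Translating back through $u=\Gamma^{-\top}w$ and $\hat u=\widehat{\Gamma}^{-\top}w$ recovers $\|w\|^2_{A(\xi,\Gamma)^{-1}}\le 6\|w\|^2_{A(\xi,\widehat{\Gamma})^{-1}}$, as desired.

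The main obstacle is the second step: certifying that the prescribed number of samples forces the perturbation of $\widehat{\Gamma}^{-\top}$ to shrink the residual below $\tfrac13\|u\|^2_{V^{-1}}$. This requires (i) that $T$ is large enough for $\widehat{\Gamma}$ to be invertible and for the concentration in \cref{lem:covering+martingale} to hold, and (ii) careful tracking of the constant so that the factor $\sigma_\eta^2\,\overline{\log}/(\sigma_{\min}(A(\xi,\Gamma))\,T)$, times the absolute constant from \cref{lem:upperbound Gamma inv_hat}, is indeed $\le \tfrac13$; the $\sigma_{\min}(A(\xi,\Gamma))$ in the denominator of the hypothesis on $T$ is precisely the quantity that absorbs the ill-conditioning of $\Gamma$. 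Everything else is elementary norm manipulation.
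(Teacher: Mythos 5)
Your proposal is correct in structure and follows the same skeleton as the paper's proof: decompose the $\Gamma$-norm into the $\widehat{\Gamma}$-norm plus a residual $\|(\Gamma^{-\top}-\widehat{\Gamma}^{-\top})w\|^2_{V^{-1}}$, control that residual via \cref{lem:upperbound Gamma inv_hat} (noting, as you implicitly do, that the bound transfers between the unnormalized $\bar{A}(Z_T,\cdot)$ norms of that lemma and the normalized $A(\xi,\cdot)$ norms here because both sides scale as $1/T$), and close with a self-bounding rearrangement. Where you genuinely differ is the decomposition itself: the paper routes through \cref{lem:approx any Gamma}, proved via the matrix-inverse difference identity and AM--GM, which gives the split $\|w\|^2_{A(\xi,\Gamma)^{-1}}\le 3\|w\|^2_{A(\xi,\widehat{\Gamma})^{-1}}+2\|(\Gamma^{-\top}-\widehat{\Gamma}^{-\top})w\|^2_{A(\xi,I)^{-1}}$, whereas your observation that $\|w\|^2_{A(\xi,\Gamma)^{-1}}=\|\Gamma^{-\top}w\|^2_{V^{-1}}$ (valid since $\Gamma$, $\widehat{\Gamma}$, $V$ are invertible) reduces everything to a one-line triangle inequality with the sharper $2$--$2$ split. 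Your derivation is more elementary, bypasses \cref{lem:approx any Gamma} entirely, and reaches the constant $6$ from a weaker residual requirement (factor $\le \tfrac13$) than the paper's route needs (factor $\le \tfrac14$).

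One caveat, which you half-anticipate and which cuts against the paper's own proof equally: with the stated threshold $T\ge \tfrac{8\sigma_\eta^2}{\sigma_{\min}(A(\xi,\Gamma))}\overline{\log}(Z_T,\delta)$, \cref{lem:upperbound Gamma inv_hat} corresponds to $g=2$ and delivers only the factor $\tfrac12$, not your claimed $\tfrac13$. Under your $2$--$2$ split this yields the vacuous $\|u\|^2_{V^{-1}}\le 2\|\hat{u}\|^2_{V^{-1}}+\|u\|^2_{V^{-1}}$, and no weighted split $(1+\alpha,\,1+\alpha^{-1})$ recovers $6$ with factor $\tfrac12$ (the best attainable is $12$); your arithmetic needs $g=3$, i.e.\ $T\ge \tfrac{12\sigma_\eta^2}{\sigma_{\min}(A(\xi,\Gamma))}\overline{\log}(Z_T,\delta)$. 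The paper commits the identical slip: its proof invokes ``$g=2$'' while the displayed step $2\|(\Gamma^{-\top}-\widehat{\Gamma}^{-\top})w\|^2_{A(\xi,I)^{-1}}\le \tfrac12\|w\|^2_{A(\xi,\Gamma)^{-1}}$ actually requires $g=4$, hence a threshold with $16$ in place of $8$. So this is a shared off-by-constant error in the stated hypothesis rather than a gap specific to your argument --- and your route in fact tolerates the milder strengthening ($12$ versus $16$) to certify the same constant $6$.
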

\begin{proof}
  By \cref{lem:approx any Gamma}, we have
  \begin{align*}
  \vcn{w}^2_{A(\xi, \Gamma)^{-1}}\le&3\vcn{w}^2_{A(\xi, \widehat{\Gamma})^{-1}} + 2\vcn{(\Gamma^{-\top} - \widehat{\Gamma}^{-\top})w}^2_{A(\xi, I)^{-1}}\\
  \le&3\vcn{w}^2_{A(\xi, \widehat{\Gamma})^{-1}} + \fr{1}{2}\vcn{w}^2_{A(\xi, \Gamma)^{-1}},
  \end{align*}
  where the last inequality is due to \cref{lem:upperbound Gamma inv_hat} with $g=2$. Rearranging the terms, we have
  \begin{align*}
    \vcn{w}^2_{A(\xi, \Gamma)^{-1}}\le6\vcn{w}^2_{A(\xi, \widehat{\Gamma})^{-1}}.
  \end{align*}
\end{proof}

\begin{lemma}
  \label{lem:round Nk0Nk1}
  Suppose that we use ROUND to sample $N_{0}$ arms according to $\lambda_E$ denoted as $Z_0$, and $N_{1}$ arms according to $\lambda_1$ denoted as $Z_1$, with $N_{1}\ge N_{0}$. Denote the empirical distribution of all the collected samples as $\xi$, then
  \begin{align*}
    \vcn{w}^2_{A(\xi,\Gamma)}\le4\vcn{w}^2_{A(\lambda_1, \Gamma)^{-1}}.
  \end{align*}
\end{lemma}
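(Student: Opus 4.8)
The plan is to argue entirely through the Loewner (PSD) ordering: I would reduce the combined design to its $Z_1$ block and then invoke the rounding guarantee used throughout the paper. First I would expand the empirical covariance as a sum over the two sample blocks. Writing $\bar A(Z,\Gamma)=\sum_{t\in Z}\Gam^\T z_tz_t^\T\Gam$, the empirical distribution $\xi$ of the pooled sample $Z_0\cup Z_1$ satisfies $A(\xi,\Gamma)=\frac{1}{N_0+N_1}\big(\bar A(Z_0,\Gamma)+\bar A(Z_1,\Gamma)\big)$, since each of the $N_0+N_1$ pulls contributes its own rank-one term $\Gam^\T zz^\T\Gam$ with common weight $1/(N_0+N_1)$.

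Next I would simply discard the $Z_0$ block. Because $\bar A(Z_0,\Gamma)\succeq 0$, we have $\bar A(Z_0,\Gamma)+\bar A(Z_1,\Gamma)\succeq \bar A(Z_1,\Gamma)$, hence $A(\xi,\Gamma)\succeq \frac{1}{N_0+N_1}\bar A(Z_1,\Gamma)$. The block $Z_1$ was produced by ROUND from the design $\lambda_1$ with $N_1\ge r(\omega)$ pulls, so the rounding guarantee (in the same multiplicative form used in Part~1 of the proof of \cref{thm:complexity knownGamma}, where the factor $\sqrt{1+\omega}/\sqrt{N_k}$ appears) gives $\bar A(Z_1,\Gamma)\succeq \frac{N_1}{1+\omega}A(\lambda_1,\Gamma)$. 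Chaining these yields $A(\xi,\Gamma)\succeq \frac{N_1}{(1+\omega)(N_0+N_1)}A(\lambda_1,\Gamma)$, and invoking $N_1\ge N_0$ (so $N_0+N_1\le 2N_1$) gives $A(\xi,\Gamma)\succeq \frac{1}{2(1+\omega)}A(\lambda_1,\Gamma)$.

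I would then take inverses. Since $\Gamma$ is invertible and the designs are full-rank, both matrices are positive definite, and the map $M\mapsto M^{-1}$ is antitone on positive definite matrices, so the ordering flips to $A(\xi,\Gamma)^{-1}\preceq 2(1+\omega)A(\lambda_1,\Gamma)^{-1}$. Evaluating the quadratic form at any fixed $w$ and using $1+\omega\le 2$ gives $\vcn{w}^2_{A(\xi,\Gamma)^{-1}}\le 2(1+\omega)\,\vcn{w}^2_{A(\lambda_1,\Gamma)^{-1}}\le 4\,\vcn{w}^2_{A(\lambda_1,\Gamma)^{-1}}$, which is the claim. (I read the left-hand side of the statement as $\vcn{w}^2_{A(\xi,\Gamma)^{-1}}$; the missing inverse is a typo, since the inequality is applied in exactly this inverted form at the step bounding $\vcn{w-w'}^2_{A(\xi_{\ell},\widehat{\Gamma})^{-1}}$.)

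The only genuine obstacle is the constant bookkeeping: reaching exactly $4$ requires both the factor $2$ from $N_0+N_1\le 2N_1$ and the bound $1+\omega\le 2$ on the rounding slack, so I would make the mild assumption $\omega\le 1$ explicit (harmless, since $r(\omega)$ only grows as $\omega$ shrinks and any constant choice of $\omega$ suffices). Everything else is routine Loewner-order manipulation; notably the argument is deterministic and needs no concentration inequality, because the bound is a direct consequence of how the two blocks were allocated by ROUND.
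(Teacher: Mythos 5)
Your proof is correct and follows essentially the same route as the paper's: the same decomposition of the pooled covariance into the $Z_0$ and $Z_1$ blocks, dropping the PSD block $\bar{A}(Z_0,\Gamma)$, using $N_1\ge N_0$ for one factor of $2$ and the ROUND guarantee (with $1+\omega\le 2$, which the paper uses implicitly) for the other, then inverting via the Loewner order. You are also right that the statement's left-hand side is missing an inverse — the paper's own proof concludes with $\vcn{w}^2_{\bar{A}(Z_0\cup Z_1,\Gamma)^{-1}}$ and applies the lemma in exactly the inverted form you describe.
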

\begin{proof}
Denote the empirical distribution of $Z_0$ as $\xi_0$, and the empirical distribution of $Z_1$ as $\xi_1$, then we have
\begin{align*}
  \vcn{w}^2_{\bar{A}(Z_{0}\cup Z_{1}, \Gamma)^{-1}}=&w^\top\rbr{\sum_{z\in Z_{0}\cup Z_{1}}\Gamma^{\top}zz^{\top}\Gamma}^{-1}w\\
  =&w^\top\rbr{\sum_{z\in Z_{0}}\Gamma^{\top}zz^{\top}\Gamma+\sum_{z\in Z_{1}}\Gamma^{\top}zz^{\top}\Gamma}^{-1}w\\
  =&w^\top\rbr{N_{0}\sum_{z\in \mc{Z}}\xi_{z,0}\Gamma^{\top}zz^{\top}\Gamma+N_{1}\sum_{z\in \mc{Z}}\xi_{z,1}\Gamma^{\top}zz^{\top}\Gamma}^{-1}w\\
  =&\fr{1}{N_{0}+N_{1}}w^\top\rbr{\fr{N_{0}}{N_{0}+N_{1}}\sum_{z\in \mc{Z}}\xi_{z,0}\Gamma^{\top}zz^{\top}\Gamma+\fr{N_{1}}{N_{0}+N_{1}}\sum_{z\in \mc{Z}}\xi_{z,1}\Gamma^{\top}zz^{\top}\Gamma}^{-1}w\\
  \le&\fr{1}{N_{0}+N_{1}}w^\top\rbr{\fr{N_{1}}{N_{0}+N_{1}}\sum_{z\in \mc{Z}}\xi_{z,1}\Gamma^{\top}zz^{\top}\Gamma}^{-1}w\\
  \le&\fr{2}{N_{0}+N_{1}}w^\top\rbr{\sum_{z\in \mc{Z}}\xi_{z,1}\Gamma^{\top}zz^{\top}\Gamma}^{-1}w \tag{$N_{1}\ge N_{0}$}\\
  \le&\fr{4}{N_{0}+N_{1}}w^\top\rbr{\sum_{z\in \mc{Z}}\lambda_{1}\Gamma^{\top}zz^{\top}\Gamma}^{-1}w\\
  =&\fr{4}{N_{0}+N_{1}}\vcn{w}^2_{A(\lambda_1, \Gamma)^{-1}}.
\end{align*}
The result follows by noting that 
\begin{align*}
  \vcn{w}^2_{\bar{A}(Z_{0}\cup Z_{1}, \Gamma)^{-1}}=\fr{1}{N_{0}+N_{1}}\vcn{w}^2_{A(\xi,\Gamma)}.
\end{align*}
\end{proof}

\begin{lemma}
  \label{lem:approx any Gamma}
  Suppose that we have an estimate $\widehat{\Gamma}$ that is invertible, for any $x\in\mathbb{R}^d$ and covariance matrix $V$, we have
  \begin{align}
    \vcn{x}^2_{(\Gamma^\top V \Gamma)^{-1}}
    &\leq 3 \vcn{x}^2_{(\widehat{\Gamma}^\top V \widehat{\Gamma})^{-1}} + 2 \vcn{(\Gamma^{-\top} - \widehat{\Gamma}^{-\top})x}^2_{V^{-1}}.
  \end{align}
\end{lemma}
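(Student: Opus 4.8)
The plan is to reduce the matrix-weighted norms to ordinary $V^{-1}$-weighted norms of transformed vectors, after which the claim becomes the elementary fact that $\vcn{a}^2 \le 2\vcn{b}^2 + 2\vcn{a-b}^2$. The key observation is the algebraic identity
\begin{align*}
  (\Gamma^\top V \Gamma)^{-1} = \Gamma^{-1} V^{-1} \Gamma^{-\top},
\end{align*}
which is valid because $\Gamma$ is invertible (by the standing assumption) and $V$ is a covariance matrix, hence positive definite and invertible; the same identity holds with $\widehat{\Gamma}$ in place of $\Gamma$ since $\widehat{\Gamma}$ is assumed invertible. Consequently,
\begin{align*}
  \vcn{x}^2_{(\Gamma^\top V \Gamma)^{-1}} = x^\top \Gamma^{-1} V^{-1} \Gamma^{-\top} x = \vcn{\Gamma^{-\top} x}^2_{V^{-1}},
\end{align*}
and likewise $\vcn{x}^2_{(\widehat{\Gamma}^\top V \widehat{\Gamma})^{-1}} = \vcn{\widehat{\Gamma}^{-\top} x}^2_{V^{-1}}$.

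With this reduction in hand, I would set $a := \Gamma^{-\top} x$ and $b := \widehat{\Gamma}^{-\top} x$, so that $a - b = (\Gamma^{-\top} - \widehat{\Gamma}^{-\top}) x$ and the target inequality reads $\vcn{a}^2_{V^{-1}} \le 3\vcn{b}^2_{V^{-1}} + 2\vcn{a-b}^2_{V^{-1}}$. Since $V^{-1}$ is positive definite, $\vcn{\cdot}_{V^{-1}}$ is a genuine norm induced by an inner product, so the triangle inequality gives $\vcn{a}_{V^{-1}} \le \vcn{b}_{V^{-1}} + \vcn{a-b}_{V^{-1}}$. Squaring and applying the elementary bound $(p+q)^2 \le 2p^2 + 2q^2$ yields
\begin{align*}
  \vcn{a}^2_{V^{-1}} \le 2\vcn{b}^2_{V^{-1}} + 2\vcn{a-b}^2_{V^{-1}} \le 3\vcn{b}^2_{V^{-1}} + 2\vcn{a-b}^2_{V^{-1}},
\end{align*}
which is exactly the claim after substituting back the definitions of $a$ and $b$.

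There is essentially no hard step here: the lemma is a routine decomposition inequality, and the only points requiring care are bookkeeping ones — confirming that the inverses exist (guaranteed by invertibility of $\Gamma$ and $\widehat{\Gamma}$ together with positive-definiteness of $V$) and correctly tracking the transposes in the identity $(\Gamma^\top V \Gamma)^{-1} = \Gamma^{-1} V^{-1} \Gamma^{-\top}$. The slack between the coefficient $2$ that the triangle inequality naturally produces and the coefficient $3$ appearing in the statement is harmless; it is presumably stated loosely to match the form in which the lemma is invoked elsewhere in the sample-complexity analysis.
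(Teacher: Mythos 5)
Your proof is correct, and it takes a genuinely different (and in fact tighter) route than the paper's. You change variables first, using $(\Gamma^\top V \Gamma)^{-1} = \Gamma^{-1} V^{-1} \Gamma^{-\top}$ to rewrite both weighted norms as $V^{-1}$-norms of $a := \Gamma^{-\top}x$ and $b := \widehat{\Gamma}^{-\top}x$, and then conclude by the triangle inequality plus $(p+q)^2 \le 2p^2 + 2q^2$, which actually yields the stronger bound
\begin{align*}
  \vcn{x}^2_{(\Gamma^\top V \Gamma)^{-1}} \le 2\vcn{x}^2_{(\widehat{\Gamma}^\top V \widehat{\Gamma})^{-1}} + 2\vcn{(\Gamma^{-\top}-\widehat{\Gamma}^{-\top})x}^2_{V^{-1}},
\end{align*}
of which the stated inequality with coefficient $3$ is a trivial weakening. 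The paper instead works at the matrix level: it writes $\vcn{x}^2_{(\Gamma^\top V\Gamma)^{-1}} - \vcn{x}^2_{(\widehat{\Gamma}^\top V\widehat{\Gamma})^{-1}}$ via the splitting
\begin{align*}
  (\Gamma^\top V \Gamma)^{-1} - (\widehat{\Gamma}^\top V \widehat{\Gamma})^{-1}
  = \Gamma^{-1} V^{-1} (\Gamma^{-\top} - \widehat{\Gamma}^{-\top}) + (\Gamma^{-1} - \widehat{\Gamma}^{-1})V^{-1} \widehat{\Gamma}^{-\top},
\end{align*}
bounds each cross term by Cauchy--Schwarz in the $V^{-1}$ inner product, applies AM--GM, and absorbs the resulting $\tfrac12\vcn{x}^2_{(\Gamma^\top V\Gamma)^{-1}}$ term back into the left-hand side, which is where the coefficients $(3,2)$ come from. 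The two arguments are really the same computation in different clothing --- the paper's Cauchy--Schwarz steps are exactly the inner products $\langle a, a-b\rangle_{V^{-1}}$ and $\langle a-b, b\rangle_{V^{-1}}$ that your triangle inequality handles implicitly --- but your version is shorter, avoids the self-bounding/absorption step, and sharpens the constant from $3$ to $2$; the paper's expansion-based style would only be preferable if one needed to keep the cross terms explicit or weight the two error terms asymmetrically. The only bookkeeping points, which you handled, are invertibility of $\Gamma$ and $\widehat{\Gamma}$ and positive definiteness of $V$ (needed for $V^{-1}$ to induce a genuine norm).
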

\begin{proof}
  Suppose we have an estimate $\widehat{\Gamma}$.
  Then,
  \begin{align*}
    \vcn{x}^2_{(\Gamma^\top V\Gamma)^{-1}}
    &= \vcn{x}^2_{(\widehat{\Gamma}^\top V\widehat{\Gamma})^{-1}} + \vcn{x}^2_{(\Gamma^\top V\Gamma)^{-1}} - \vcn{x}^2_{(\widehat{\Gamma}^\top V\widehat{\Gamma})^{-1}}.
  \end{align*}
  Note that
  \begin{align*}
    (\Gamma^\top V \Gamma)^{-1} - (\widehat{\Gamma}^\top V \widehat{\Gamma})^{-1}
    &= \Gamma^{-1} V^{-1} \Gamma^{-\top} - \widehat{\Gamma}^{-1} V^{-1} \widehat{\Gamma}^{-\top}
    \\&= \Gamma^{-1} V^{-1} \Gamma^{-\top} - \widehat{\Gamma}^{-1} V^{-1} \widehat{\Gamma}^{-\top} + \Gamma^{-1} V^{-1} \widehat{\Gamma}^{-\top} - \Gamma^{-1} V^{-1} \widehat{\Gamma}^{-\top} 
    \\&= \Gamma^{-1} V^{-1} (\Gamma^{-\top} - \widehat{\Gamma}^{-\top}) + (\Gamma^{-1} - \widehat{\Gamma}^{-1})V^{-1} \widehat{\Gamma}^{-\top}.
  \end{align*}
  Thus,
  \begin{align*}
    \vcn{x}^2_{(\Gamma^\top V \Gamma)^{-1}}
    &= \vcn{x}^2_{(\widehat{\Gamma}^\top V \widehat{\Gamma})^{-1}} + x^\top \Gamma^{-1} V^{-1} (\Gamma^{-\top} - \widehat{\Gamma}^{-\top}) x + x^\top (\Gamma^{-1} - \widehat{\Gamma}^{-1}) V^{-1} \widehat{\Gamma}^{-\top} x
    \\&\leq \vcn{x}^2_{(\widehat{\Gamma}^\top V \widehat{\Gamma})^{-1}} + \vcn{x}_{(\Gamma^\top V \Gamma)^{-1}} \vcn{(\Gamma^{-\top} - \widehat{\Gamma}^{-\top})x}_{V^{-1}} + \vcn{(\Gamma^{-\top} - \widehat{\Gamma}^{-\top})x}_{V^{-1}}\vcn{x}_{(\widehat{\Gamma}^\top V \widehat{\Gamma})^{-1}}
    \\&\leq \vcn{x}^2_{(\widehat{\Gamma}^\top V \widehat{\Gamma})^{-1}} + \frac{1}{2}\vcn{x}^2_{(\Gamma^\top V \Gamma)^{-1}} +  \frac{1}{2}\vcn{(\Gamma^{-\top} - \widehat{\Gamma}^{-\top})x}^2_{V^{-1}} + \frac{1}{2}\vcn{(\Gamma^{-\top} - \widehat{\Gamma}^{-\top})x}^2_{V^{-1}} + \frac{1}{2}\vcn{x}^2_{(\widehat{\Gamma}^\top V \widehat{\Gamma})^{-1}}\tag{AM-GM}.
  \end{align*}
  This implies that
  \begin{align*}
    \vcn{x}^2_{(\Gamma^\top V \Gamma)^{-1}}
    &\leq 3 \vcn{x}^2_{(\widehat{\Gamma}^\top V \widehat{\Gamma})^{-1}} + 2 \vcn{(\Gamma^{-\top} - \widehat{\Gamma}^{-\top})x}^2_{V^{-1}}.\label{ieq:GammaVShat1}
  \end{align*}
\end{proof}

\begin{lemma}
  \label{lem:minf ming}
  Suppose $\lambda_f=\arg\min f(\lambda)$ and $\lambda_g=\arg\min g(\lambda)$ and $f(\lambda)\le g(\lambda)+h(\lambda)$, then
  \begin{align*}
    f(\lambda_f)\le g(\lambda_g)+h(\lambda_g).
  \end{align*}
\end{lemma}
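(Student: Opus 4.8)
The plan is to exploit the two hypotheses directly, since the statement is essentially a one-line chaining of inequalities. The key observation is that the pointwise upper bound $f(\lambda)\le g(\lambda)+h(\lambda)$ holds for \emph{every} $\lambda$, so in particular it holds at the specific point $\lambda_g$, while the minimizing property of $\lambda_f$ lets us compare $f(\lambda_f)$ against $f$ evaluated at that same point $\lambda_g$.

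Concretely, I would proceed in three short steps. First, since $\lambda_f=\arg\min_\lambda f(\lambda)$, the value $f(\lambda_f)$ is the global minimum of $f$, so $f(\lambda_f)\le f(\lambda)$ for all $\lambda$; evaluating this at $\lambda=\lambda_g$ gives $f(\lambda_f)\le f(\lambda_g)$. Second, I invoke the assumed pointwise inequality at $\lambda=\lambda_g$ to get $f(\lambda_g)\le g(\lambda_g)+h(\lambda_g)$. Third, I chain these together:
\begin{align*}
  f(\lambda_f)\le f(\lambda_g)\le g(\lambda_g)+h(\lambda_g),
\end{align*}
which is exactly the claimed bound.

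Note that the minimizing property of $\lambda_g$ (i.e.\ that $\lambda_g=\arg\min_\lambda g(\lambda)$) is not actually needed for this direction of the argument; the result follows purely from $\lambda_f$ being a minimizer of $f$ and from the pointwise domination of $f$ by $g+h$. There is no real obstacle here: the statement is a routine application of ``minimum $\le$ value at a particular point, then substitute the pointwise bound.'' The only thing to be careful about is applying each inequality at the correct argument ($\lambda_g$ in both cases), and the lemma is stated in exactly the form needed to be plugged into the sample-complexity recursion in \cref{thm:learningGamma}, where $f$ and $g$ are design objectives over $\Delta(\mc{Z})$ using $\widehat{\Gamma}$ versus $\Gamma$ and $h$ captures the estimation-error correction term.
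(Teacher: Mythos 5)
Your proof is correct and essentially identical to the paper's: the paper chains $f(\lambda_f)\le \min_{\lambda}(g(\lambda)+h(\lambda))\le g(\lambda_g)+h(\lambda_g)$, while you chain through $f(\lambda_g)$, which is the same one-line argument evaluated at the same point. Your observation that the minimizing property of $\lambda_g$ is not actually used is also accurate.
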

\begin{proof}
  \begin{align*}
    f(\lambda_f)\le \min_{\lambda}\rbr{g(\lambda)+h(\lambda)}\le g(\lambda_g)+h(\lambda_g).
  \end{align*}
\end{proof}

\begin{lemma}
  \label{lem:empirical design vs optimal design}
  Define $\lambda_z^*$ and $\tilde{\lambda}_z$
  \begin{align*}
    \lambda_z^*:=\arg\min_{\lambda\in\Delta(\mc{Z})}\max_{w,w'\in\mc{W}} \vcn{w-w'}^2_{A(\lambda_z, \Gamma)^{-1}},
  \end{align*}
  and
  \begin{align*}
    \tilde{\lambda}_z:=\arg\min_{\lambda\in\Delta(\mc{Z})}\max_{w,w'\in\mc{W}}\vcn{w-w'}^2_{A(\lambda_z, \widehat{\Gamma})^{-1}}
  \end{align*}
  as the optimal design regarding $\Gamma$ and that regarding its estimate $\widehat{\Gamma}$ respectively. Then, we have
  \begin{align*}
    \max_{w,w'\in\mc{W}}\vcn{w-w'}^2_{A(\tilde{\lambda}_z, \widehat{\Gamma})^{-1}}\leq \max_{w,w'\in\mc{W}}3 \vcn{w-w'}^2_{A(\lambda_z^*, \Gamma)^{-1}} + \max_{w,w'\in\mc{W}}2 \vcn{(\Gamma^{-\top} - \widehat{\Gamma}^{-\top})\rbr{w-w'}}^2_{\rbr{\sum_{z}\lambda_z^*zz^\top}^{-1}}.
  \end{align*}
\end{lemma}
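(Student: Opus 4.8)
The plan is to chain together two facts: the defining optimality of $\tilde\lambda$ and the matrix perturbation bound of \cref{lem:approx any Gamma}. The key structural observation is that $A(\lambda,\Gamma)=\sum_{z}\lambda_z\Gamma^\top zz^\top\Gamma=\Gamma^\top V(\lambda)\Gamma$, where $V(\lambda):=\sum_{z}\lambda_z zz^\top$ is the (design-only) covariance, and likewise $A(\lambda,\widehat\Gamma)=\widehat\Gamma^\top V(\lambda)\widehat\Gamma$. This lets me view the two weighted norms as the same covariance $V(\lambda)$ sandwiched by $\Gamma$ versus $\widehat\Gamma$, which is exactly the situation \cref{lem:approx any Gamma} is built for.

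First I would invoke the optimality of $\tilde\lambda$. Since $\tilde\lambda=\arg\min_{\lambda}\max_{w,w'}\vcn{w-w'}^2_{A(\lambda,\widehat\Gamma)^{-1}}$, plugging in the competitor $\lambda^*$ gives immediately
\begin{align*}
  \max_{w,w'\in\mc{W}}\vcn{w-w'}^2_{A(\tilde\lambda,\widehat\Gamma)^{-1}}
  \le \max_{w,w'\in\mc{W}}\vcn{w-w'}^2_{A(\lambda^*,\widehat\Gamma)^{-1}}.
\end{align*}
So it suffices to control the right-hand side, which is now a quantity involving $\widehat\Gamma$ but the \emph{fixed} design $\lambda^*$ tailored to $\Gamma$.

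Next I would apply \cref{lem:approx any Gamma} with the roles of $\Gamma$ and $\widehat\Gamma$ interchanged and with $V=V(\lambda^*)$, $x=w-w'$. This is legitimate because the lemma only requires the ``hatted'' matrix to be invertible and otherwise treats the two matrices symmetrically; swapping them and using $(\widehat\Gamma^{-\top}-\Gamma^{-\top})x=-(\Gamma^{-\top}-\widehat\Gamma^{-\top})x$ (so the squared $V^{-1}$-norms coincide) yields, for each pair $w,w'$,
\begin{align*}
  \vcn{w-w'}^2_{A(\lambda^*,\widehat\Gamma)^{-1}}
  \le 3\vcn{w-w'}^2_{A(\lambda^*,\Gamma)^{-1}}
  + 2\vcn{(\Gamma^{-\top}-\widehat\Gamma^{-\top})(w-w')}^2_{(\sum_z\lambda_z^* zz^\top)^{-1}}.
\end{align*}
Taking $\max_{w,w'}$ of both sides and bounding the max of a sum by the sum of maxes then gives exactly the claimed inequality, after combining with the optimality step above.

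There is essentially no hard obstacle here; the only point requiring care is recognizing that \cref{lem:approx any Gamma} must be used in its ``swapped'' orientation (bounding the $\widehat\Gamma$-norm by the $\Gamma$-norm rather than vice versa), and checking that the perturbation term is invariant under this swap. The remaining manipulations—factoring $A(\lambda,\cdot)$ as $\cdot^\top V(\lambda)\cdot$ and distributing the maximum over the two summands—are routine, so I would state them briefly rather than expand them.
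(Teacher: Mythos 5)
Your proof is correct and follows essentially the same route as the paper: both arguments apply \cref{lem:approx any Gamma} pointwise in the swapped orientation (valid since the perturbation term $\vcn{(\Gamma^{-\top}-\widehat\Gamma^{-\top})x}^2_{V^{-1}}$ is symmetric under exchanging $\Gamma$ and $\widehat\Gamma$), take maxima over pairs, and then exploit optimality of $\tilde\lambda$ by comparing against the competitor $\lambda^*$ — a step the paper packages abstractly as \cref{lem:minf ming} but which is identical to your chain $f(\tilde\lambda)\le f(\lambda^*)\le g(\lambda^*)+h(\lambda^*)$. Your explicit remark that the lemma must be used in its swapped form is in fact more careful than the paper, which states the swapped inequality without comment.
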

\begin{proof}
  By Lemma \ref{lem:approx any Gamma}, for any $w,w'\in\mc{W}$ and $\lambda_z\in\Delta_{\mc{Z}}$,
  \begin{align*}
    \vcn{w-w'}^2_{A(\lambda_z, \widehat{\Gamma})^{-1}}\leq 3 \vcn{w-w'}^2_{A(\lambda_z, \Gamma)^{-1}} + 2 \vcn{(\Gamma^{-\top} - \widehat{\Gamma}^{-\top})\rbr{w-w'}}^2_{\rbr{\sum_{z}\lambda_zzz^\top}^{-1}}.
  \end{align*}
  Thus
  \begin{align*}
    \max_{w,w'\in\mc{W}}\vcn{w-w'}^2_{A(\lambda_z, \widehat{\Gamma})^{-1}}\leq \max_{w,w'\in\mc{W}}3 \vcn{w-w'}^2_{A(\lambda_z, \Gamma)^{-1}} + \max_{w,w'\in\mc{W}}2 \vcn{(\Gamma^{-\top} - \widehat{\Gamma}_k^{-\top})\rbr{w-w'}}^2_{\rbr{\sum_{z}\lambda_zzz^\top}^{-1}}.
  \end{align*}
  By \cref{lem:minf ming},
  \begin{align*}
    \max_{w,w'\in\mc{W}}\vcn{w-w'}^2_{A(\tilde{\lambda}_z, \widehat{\Gamma})^{-1}}\leq \max_{w,w'\in\mc{W}}3 \vcn{w-w'}^2_{A(\lambda_z^*, \Gamma)^{-1}} + \max_{w,w'\in\mc{W}}2 \vcn{(\Gamma^{-\top} - \widehat{\Gamma}^{-\top})\rbr{w-w'}}^2_{\rbr{\sum_{z}\lambda_z^*zz^\top}^{-1}}.
  \end{align*}
\end{proof}

\begin{lemma}
  \label{lem:upperbound Gamma inv_hat}
  Suppose that we have $\widehat{\Gamma}$ that is an OLS estimate from an offline dataset $\cbr{Z_{T_1}, X_{T_1}}$ collected non-adaptively through a fixed design $\lambda$ and the efficient rounding procedure $\text{ROUND}$. Let $V=Z_{T_1}^\top Z_{T_1}$. Then, for any $x\in\mathbb{R}^d$ and $g\ge1$, we have, with probability $1-\delta$,
  \begin{align*}
    \vcn{(\Gamma^{-\top} - \widehat{\Gamma}^{-\top})x}^2_{V^{-1}}\le\fr{1}{g}\vcn{x}^2_{\bar{A}(Z_{_{T_1}}, \Gamma)^{-1}},
  \end{align*}
  when
  \begin{align}
    T_1\ge\fr{4g\sigma_{\eta}^2}{\sigma_{\min}\rbr{A(\lambda, \Gamma)}}\overline{\log}\rbr{Z_{T_1},\delta}\vee2p,
  \end{align}
  where $p$ is the cardinality of support of $\lambda$.
\end{lemma}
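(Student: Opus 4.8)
The plan is to reduce the claim to a lower bound on the smallest singular value of the empirical design matrix weighted by the \emph{estimate} $\widehat{\Gamma}$, and then to close that bound by a singular-value perturbation argument. First I would record the algebra of the \texttt{OLS} estimate. Writing the first-stage data as $X_{T_1}=Z_{T_1}\Gamma+S$, where $S$ stacks the noise vectors $\eta_t^\top$, and $V=Z_{T_1}^\top Z_{T_1}$, the estimator satisfies $\widehat{\Gamma}-\Gamma=V^{-1}Z_{T_1}^\top S$, so that $\|V^{1/2}(\widehat{\Gamma}-\Gamma)\|_{\mathrm{op}}=\|V^{-1/2}Z_{T_1}^\top S\|_{\mathrm{op}}$. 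By \cref{lem:covering+martingale}, on an event of probability at least $1-\delta$ we have $\|V^{-1/2}Z_{T_1}^\top S\|_{\mathrm{op}}\le\sigma_\eta\sqrt{\overline{\log}(Z_{T_1},\delta)}$; I condition on this event throughout.

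Next I would convert the quantity of interest. Using the identity $\Gamma^{-\top}-\widehat{\Gamma}^{-\top}=\widehat{\Gamma}^{-\top}(\widehat{\Gamma}-\Gamma)^\top\Gamma^{-\top}$ together with $(\widehat{\Gamma}-\Gamma)^\top=S^\top Z_{T_1}V^{-1}$, I insert $V^{\pm1/2}$ and split by submultiplicativity of the operator norm:
\begin{align*}
\|(\Gamma^{-\top}-\widehat{\Gamma}^{-\top})x\|_{V^{-1}}
&=\|V^{-1/2}\widehat{\Gamma}^{-\top}\,(S^\top Z_{T_1}V^{-1/2})\,(V^{-1/2}\Gamma^{-\top}x)\|_2\\
&\le\|V^{-1/2}\widehat{\Gamma}^{-\top}\|_{\mathrm{op}}\,\|V^{-1/2}Z_{T_1}^\top S\|_{\mathrm{op}}\,\|V^{-1/2}\Gamma^{-\top}x\|_2 .
\end{align*}
Since $\|V^{-1/2}\widehat{\Gamma}^{-\top}\|_{\mathrm{op}}^2=\sigma_{\min}(\widehat{\Gamma}^\top V\widehat{\Gamma})^{-1}=\sigma_{\min}(\bar{A}(Z_{T_1},\widehat{\Gamma}))^{-1}$ and $\|V^{-1/2}\Gamma^{-\top}x\|_2^2=\|x\|_{\bar{A}(Z_{T_1},\Gamma)^{-1}}^2$, squaring yields
\begin{align*}
\|(\Gamma^{-\top}-\widehat{\Gamma}^{-\top})x\|_{V^{-1}}^2
\le\frac{\sigma_\eta^2\,\overline{\log}(Z_{T_1},\delta)}{\sigma_{\min}(\bar{A}(Z_{T_1},\widehat{\Gamma}))}\,\|x\|_{\bar{A}(Z_{T_1},\Gamma)^{-1}}^2 .
\end{align*}
Thus it suffices to show $\sigma_{\min}(\bar{A}(Z_{T_1},\widehat{\Gamma}))\ge g\,\sigma_\eta^2\,\overline{\log}(Z_{T_1},\delta)$.

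This lower bound is the crux, and the step I expect to be the main obstacle, precisely because $\widehat{\Gamma}$ is random and appears \emph{self-referentially} in the denominator. I would resolve it by first transferring the population guarantee to the empirical design and then perturbing. The efficient rounding procedure together with the sample-size condition (the branch $T_1\ge 2p$ controls the rounding error, and the branch involving $\sigma_{\min}(A(\lambda,\Gamma))$ controls the scale) makes $\bar{A}(Z_{T_1},\Gamma)=\Gamma^\top V\Gamma$ inherit the population lower bound, namely $\sigma_{\min}(\bar{A}(Z_{T_1},\Gamma))\ge 4g\,\sigma_\eta^2\,\overline{\log}(Z_{T_1},\delta)$ (the constant $4$ is where the slack is spent). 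By the Weyl singular-value inequality applied to $V^{1/2}\widehat{\Gamma}=V^{1/2}\Gamma+V^{1/2}(\widehat{\Gamma}-\Gamma)$,
\begin{align*}
\sigma_{\min}(V^{1/2}\widehat{\Gamma})\ge\sqrt{\sigma_{\min}(\bar{A}(Z_{T_1},\Gamma))}-\|V^{1/2}(\widehat{\Gamma}-\Gamma)\|_{\mathrm{op}}\ge(2\sqrt{g}-1)\,\sigma_\eta\sqrt{\overline{\log}(Z_{T_1},\delta)} .
\end{align*}
Squaring and invoking $(2\sqrt{g}-1)^2\ge g$ for all $g\ge1$ (equivalently $3g-4\sqrt{g}+1\ge0$, which factors as $3(\sqrt{g}-1)(\sqrt{g}-\tfrac13)\ge0$) gives exactly $\sigma_{\min}(\bar{A}(Z_{T_1},\widehat{\Gamma}))\ge g\,\sigma_\eta^2\,\overline{\log}(Z_{T_1},\delta)$, which combined with the displayed bound completes the proof. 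The two delicate points are calibrating the constant in the sample-size condition so that the quadratic $(2\sqrt{g}-1)^2\ge g$ closes sharply at $g=1$, and justifying the population-to-empirical transfer through the rounding guarantee; everything else is routine operator-norm manipulation.
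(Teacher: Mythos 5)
Your reduction in the first half is correct and, modulo regrouping, parallels the paper's: the identity $\Gamma^{-\top}-\widehat{\Gamma}^{-\top}=\widehat{\Gamma}^{-\top}(\widehat{\Gamma}-\Gamma)^{\top}\Gamma^{-\top}$ with $\widehat{\Gamma}-\Gamma=V^{-1}Z_{T_1}^{\top}S$, the three-factor operator-norm split, and the event from \cref{lem:covering+martingale} all check out, so the lemma does reduce to showing $\sigma_{\min}(\bar{A}(Z_{T_1},\widehat{\Gamma}))\ge g\,\sigma_{\eta}^{2}\,\overline{\log}(Z_{T_1},\delta)$. The genuine gap is in how you close this, and it sits exactly at the two points you flagged as delicate. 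The population-to-empirical transfer cannot deliver $\sigma_{\min}(\Gamma^{\top}V\Gamma)\ge 4g\,\sigma_{\eta}^{2}\,\overline{\log}(Z_{T_1},\delta)$ from the stated hypothesis: the rounding guarantee underlying \cref{lem:opnorm condition} (\citet[Proposition~2]{fiez2019sequential}, with $\alpha\ge T_1/(T_1+2p)\ge 1/2$ once $T_1\ge 2p$) costs a factor of $2$, so $T_1\ge\fr{4g\sigma_{\eta}^{2}}{\sigma_{\min}(A(\lambda,\Gamma))}\overline{\log}(Z_{T_1},\delta)$ yields only $\sigma_{\min}(\Gamma^{\top}V\Gamma)\ge 2g\,\sigma_{\eta}^{2}\,\overline{\log}(Z_{T_1},\delta)$. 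Feeding $2g$ into your Weyl step gives $\sigma_{\min}(V^{1/2}\widehat{\Gamma})\ge(\sqrt{2g}-1)\,\sigma_{\eta}\sqrt{\overline{\log}(Z_{T_1},\delta)}$, and $(\sqrt{2g}-1)^{2}\ge g$ holds only for $g\ge 3+2\sqrt{2}\approx 5.83$. So your argument fails precisely in the regime where the lemma is actually invoked: the paper applies it with $g=2$ inside \cref{lem:trueGamma2approx}, and the companion result \cref{lem:upperbound Gamma inv_hat 2} runs the same computation at $g=1$.

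The paper's proof avoids this square-root loss because it never passes through $\sigma_{\min}(\bar{A}(Z_{T_1},\widehat{\Gamma}))$ at all. Starting from \cref{lem:estimateGamma inverse}, it expands $(\Gamma+V^{-1}Z_{T_1}^{\top}S)^{-\top}$ once more via $(A+B)^{-1}=A^{-1}-(A+B)^{-1}BA^{-1}$, so that the target quantity $\mathfrak{V}:=\vcn{(\Gamma^{-\top}-\widehat{\Gamma}^{-\top})x}^{2}_{V^{-1}}$ reappears on the right-hand side and satisfies the self-referential inequality $\mathfrak{V}\le\epsilon\bigl(\vcn{x}^{2}_{\bar{A}(Z_{T_1},\Gamma)^{-1}}+\mathfrak{V}\bigr)$ with $\epsilon=\opn{\Gamma^{-1}V^{-1}\Gamma^{-\top}}\sigma_{\eta}^{2}\overline{\log}(Z_{T_1},\delta)\le\fr{1}{2g}$; rearranging gives $\mathfrak{V}\le\fr{1}{2g-1}\vcn{x}^{2}_{\bar{A}(Z_{T_1},\Gamma)^{-1}}\le\fr{1}{g}\vcn{x}^{2}_{\bar{A}(Z_{T_1},\Gamma)^{-1}}$, valid for every $g\ge1$ and closing exactly at $g=1$. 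The bootstrapping stays at the level of squared norms weighted only by the known $\Gamma$, which is where it beats singular-value perturbation. Your route is salvageable either by restricting to $g\ge 3+2\sqrt{2}$ or by strengthening the hypothesis to $T_1\ge\fr{8g\sigma_{\eta}^{2}}{\sigma_{\min}(A(\lambda,\Gamma))}\overline{\log}(Z_{T_1},\delta)\vee 2p$ (under which your factorization $3g-4\sqrt{g}+1\ge0$ does close at $g=1$), but as written against the lemma's stated sample-size condition the constant calibration does not go through.
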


\begin{proof}
We first show that
  \begin{align*}
    &\vcn{(\Gamma^{-\top} - \widehat{\Gamma}^{-\top})x}^2_{V^{-1}}\\
    =&\vcn{\Gamma^{-\top} S^\top Z_{T_1} V^{-1} (\Gamma + V^{-1} Z_{T_1}^\top S)^{-\top}x}^2_{V^{-1}}\tag{Lemma \ref{lem:estimateGamma inverse}}\\
    \stke{1}&\vcn{V^{-\fr{1}{2}}\Gamma^{-\top} S^\top Z_{T_1} V^{-1} (\Gamma + V^{-1} Z_{T_1}^\top S)^{-\top}x}^2_{2}\tag{$\vcn{Ax}_{V}^2=\vcn{V^{\fr{1}{2}}Ax}^2_2$}\\
    \le&\opn{V^{-\fr{1}{2}}\Gamma^{-\top} S^\top Z_{T_1} V^{-\fr{1}{2}} }^2\vcn{V^{-\fr{1}{2}}(\Gamma + V^{-1} Z_{T_1}^\top S)^{-\top}x}^2_2\tag{$\vcn{Ax}\le\vcn{A}_{\mathrm{op}}\vcn{x}$}\\
    \le&\opn{V^{-\fr{1}{2}}\Gamma^{-\top}}^2\opn{S^\top Z_{T_1} V^{-\fr{1}{2}} }^2\vcn{V^{-\fr{1}{2}}(\Gamma + V^{-1} Z_{T_1}^\top S)^{-\top}x}^2_2\tag{$\opn{AB}\le\opn{A}\opn{B}$}\\
    =&\vcn{\Gamma^{-1}V^{-1}\Gamma^{-\top}}_{\mathrm{op}}\vcn{V^{-1/2} Z_{T_1}^\top S}_{\mathrm{op}}^2\vcn{V^{-1/2}\rbr{\Gamma+V^{-1}Z_{T_1}^\top S}^{-\top}x}^2_2\tag{$\opn{A^\top A}=\opn{A}^2$}\\
    =&\vcn{\Gamma^{-1}V^{-1}\Gamma^{-\top}}_{\mathrm{op}}\vcn{V^{-1/2} Z_{T_1}^\top S}^2_{\mathrm{op}}\vcn{V^{-1/2}\Gamma^{-\top}x-V^{-1/2}\Gamma^{-\top}\rbr{V^{-1}Z_{T_1}^\top S}^{\top}\rbr{\Gamma+V^{-1}Z_{T_1}^\top S}^{-\top}x}^2_2\tag{Lemma \ref{lem:(A+B)^{-1}}: $\rbr{A+B}^{-1} 
      =A^{-1} - \rbr{A+B}^{-1} B A^{-1}$}\\
    \stkl{2}&\vcn{\Gamma^{-1}V^{-1}\Gamma^{-\top}}_{\mathrm{op}}\vcn{V^{-1/2} Z_{T_1}^\top S}_{\mathrm{op}}^2\rbr{\vcn{V^{-1/2}\Gamma^{-\top}x}^2_2+\vcn{V^{-1/2}\Gamma^{-\top}\rbr{V^{-1}Z_{T_1}^\top S}^{\top}\rbr{\Gamma+V^{-1}Z_{T_1}^\top S}^{-\top}x}^2_2}.
  \end{align*}
  We can upper bound the the term $\vcn{V^{-1/2}\Gamma^{-\top}\rbr{V^{-1}Z_{T_1}^\top S}^{\top}\rbr{\Gamma+V^{-1}Z_{T_1}^\top S}^{-\top}x}^2_2=:\mathfrak{V}$ by noticing that it appears in both of $a_1,a_2$ above. Thus we have the inequality
  \begin{align*}
    \mathfrak{V}\le \vcn{\Gamma^{-1}V^{-1}\Gamma^{-\top}}_{\mathrm{op}}\vcn{V^{-1/2} Z_{T_1}^\top S}_{\mathrm{op}}^2\rbr{\vcn{V^{-1/2}\Gamma^{-\top}x}^2_2+\mathfrak{V}}.
  \end{align*}
  By rearranging the terms, we have
  \begin{align*}
    \mathfrak{V}\le \fr{1}{1-\vcn{\Gamma^{-1}V^{-1}\Gamma^{-\top}}_{\mathrm{op}}\vcn{V^{-1/2} Z_{T_1}^\top S}_{\mathrm{op}}^2}\vcn{\Gamma^{-1}V^{-1}\Gamma^{-\top}}_{\mathrm{op}}\vcn{V^{-1/2} Z_{T_1}^\top S}_{\mathrm{op}}^2\vcn{V^{-1/2}\Gamma^{-\top}x}^2_2.
  \end{align*}
  By \cref{lem:covering+martingale}, with probability $1-\delta$, we have
  \begin{align*}
    \vcn{V^{-1/2}Z_{T_1}^\top S}_{\text{op}}^2\le\sigma_{\eta}^2\overline{\log}\rbr{Z_{T_1},\delta}.
  \end{align*}
  Thus, with probability $1-\delta$, we have
  \begin{align*}
    \mathfrak{V}\le \fr{1}{1-\vcn{\Gamma^{-1}V^{-1}\Gamma^{-\top}}_{\mathrm{op}}\sigma_{\eta}^2\overline{\log}\rbr{Z_{T_1},\delta}}\vcn{\Gamma^{-1}V^{-1}\Gamma^{-\top}}_{\mathrm{op}}\sigma_{\eta}^2\overline{\log}\rbr{Z_{T_1},\delta}\vcn{V^{-1/2}\Gamma^{-\top}x}^2_2.
  \end{align*}
  To further upper bound $\mathfrak{V}$, we first find a sufficient condition on $T_1$ such that $\vcn{\Gamma^{-1}V^{-1}\Gamma^{-\top}}_{\mathrm{op}}\sigma_{\eta}^2\overline{\log}\rbr{Z_{T_1},\delta}\le\fr{1}{2g}$, $g\ge1$.
  By \cref{lem:opnorm condition}, when $T_1\ge2p$,
  \begin{align*}
    \vcn{\Gamma^{-1}V^{-1}\Gamma^{-\top}}_{\mathrm{op}}\sigma_{\eta}^2\overline{\log}\rbr{Z_{T_1},\delta}\le\fr{2\sigma_{\eta}^2}{T_1\sigma_{\min}\rbr{\sum_{z\in \mathcal{Z}}\lambda_z \Gamma^{\top} zz^{\top}\Gamma}}\overline{\log}\rbr{Z_{T_1},\delta}.
  \end{align*}
  To upper bound the right hand side by $\fr{1}{2g}$, we need
  \begin{align}
    T_1\ge\fr{4g\sigma_{\eta}^2}{\sigma_{\min}\rbr{\sum_{z\in \mathcal{Z}}\lambda_z \Gamma^{\top} zz^{\top}\Gamma}}\overline{\log}\rbr{Z_{T_1},\delta}.\label{ieq:T1 condition}
  \end{align}
  With this condition \eqref{ieq:T1 condition}, we have with probability $1-\delta$,
  \begin{align}
    \mathfrak{V}\le&\fr{1}{1-\vcn{\Gamma^{-1}V^{-1}\Gamma^{-\top}}_{\mathrm{op}}\sigma_{\eta}^2\overline{\log}\rbr{Z_{T_1},\delta}}\vcn{\Gamma^{-1}V^{-1}\Gamma^{-\top}}_{\mathrm{op}}\sigma_{\eta}^2\overline{\log}\rbr{Z_{T_1},\delta}\vcn{V^{-1/2}\Gamma^{-\top}x}^2_2\nn\\
    \le& \fr{1}{1-\fr{1}{2g}}\fr{1}{2g}\vcn{V^{-1/2}\Gamma^{-\top}x}^2_2\nn\\
    \le&\fr{1}{g}\vcn{V^{-1/2}\Gamma^{-\top}x}^2_2. \label{ieq:frakV}
  \end{align}
\end{proof}

\begin{lemma}
  \label{lem:upperbound Gamma inv_hat 2}
  Suppose that we have $\widehat{\Gamma}$ that is an OLS estimate from an offline dataset $\cbr{Z_{T_1}, X_{T_1}}$ collected non-adaptively through a fixed design $\lambda$ and the efficient rounding procedure $\text{ROUND}$. Let $\dot{V}$ be any positive definite matrix. Then, for any $x\in\mathbb{R}^d$, we have, with probability $1-\delta$,
  \begin{align*}
    \vcn{(\Gamma^{-\top} - \widehat{\Gamma}^{-\top})x}^2_{\dot{V}^{-1}}\le2\vcn{\Gamma^{-1}\dot{V}^{-1}\Gamma^{-\top}}_{\mathrm{op}}\sigma_{\eta}^2\overline{\log}\rbr{Z_{T_1},\delta}\vcn{x}^2_{\bar{A}(Z_{_{T_1}}, \Gamma)^{-1}},
  \end{align*}
  when 
  \begin{align*}
    T_1\ge\fr{4\sigma_{\eta}^2}{\sigma_{\min}\rbr{A(\lambda, \Gamma)}}\overline{\log}\rbr{Z_{T_1},\delta}\vee2p,
  \end{align*}
  where $p$ is the cardinality of support of $\lambda$.
\end{lemma}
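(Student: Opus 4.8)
The plan is to mirror the proof of \cref{lem:upperbound Gamma inv_hat} almost verbatim, the only change being that the weight in the outer norm is the arbitrary positive definite $\dot{V}$ rather than $V = Z_{T_1}^\top Z_{T_1}$. Since $\widehat{\Gamma} = \Gamma + V^{-1}Z_{T_1}^\top S$ (where $S$ is the stacked-noise matrix), the identity of \cref{lem:estimateGamma inverse} gives
\begin{align*}
  (\Gamma^{-\top}-\widehat{\Gamma}^{-\top})x = \Gamma^{-\top}S^\top Z_{T_1}V^{-1}\widehat{\Gamma}^{-\top}x.
\end{align*}
Writing the $\dot{V}^{-1}$-norm as a Euclidean norm and applying submultiplicativity of the operator norm, the weight $\dot{V}$ separates out cleanly:
\begin{align*}
  \vcn{(\Gamma^{-\top}-\widehat{\Gamma}^{-\top})x}^2_{\dot{V}^{-1}} \le \opn{\Gamma^{-1}\dot{V}^{-1}\Gamma^{-\top}}\,\opn{V^{-1/2}Z_{T_1}^\top S}^2\,\vcn{x}^2_{\bar{A}(Z_{T_1},\widehat{\Gamma})^{-1}},
\end{align*}
using $\opn{\dot{V}^{-1/2}\Gamma^{-\top}}^2 = \opn{\Gamma^{-1}\dot{V}^{-1}\Gamma^{-\top}}$ and $\opn{S^\top Z_{T_1}V^{-1/2}} = \opn{V^{-1/2}Z_{T_1}^\top S}$. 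This is where the argument diverges from the previous lemma: the arbitrary $\dot{V}$ is peeled off as a standalone operator-norm prefactor, while the two remaining factors are exactly the $V$-dependent quantities already analyzed for $\Gamma$.

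The middle factor is the self-normalized martingale quantity controlled by \cref{lem:covering+martingale}, which yields $\opn{V^{-1/2}Z_{T_1}^\top S}^2 \le \sigma_{\eta}^2\overline{\log}(Z_{T_1},\delta)$ with probability $1-\delta$. It then remains only to convert the $\widehat{\Gamma}$-weighted residual $\vcn{x}^2_{\bar{A}(Z_{T_1},\widehat{\Gamma})^{-1}}$ into the $\Gamma$-weighted one $\vcn{x}^2_{\bar{A}(Z_{T_1},\Gamma)^{-1}}$ up to the constant in the statement. For this I would use the perturbation expansion $\widehat{\Gamma}^{-\top}x = \Gamma^{-\top}x - \Gamma^{-\top}(V^{-1}Z_{T_1}^\top S)^\top\widehat{\Gamma}^{-\top}x$ from \cref{lem:(A+B)^{-1}}, which after the triangle inequality gives
\begin{align*}
  \vcn{x}_{\bar{A}(Z_{T_1},\widehat{\Gamma})^{-1}} \le \vcn{x}_{\bar{A}(Z_{T_1},\Gamma)^{-1}} + \vcn{(\Gamma^{-\top}-\widehat{\Gamma}^{-\top})x}_{V^{-1}},
\end{align*}
and then bound the last summand by \cref{lem:upperbound Gamma inv_hat} in the $V^{-1}$ weight. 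The hypothesis $T_1 \ge 4\sigma_{\eta}^2\overline{\log}(Z_{T_1},\delta)/\sigma_{\min}(A(\lambda,\Gamma)) \vee 2p$ is precisely what forces the spectral factor $\opn{\Gamma^{-1}V^{-1}\Gamma^{-\top}}\sigma_{\eta}^2\overline{\log}(Z_{T_1},\delta) \le \tfrac12$ through \cref{lem:opnorm condition} together with the ROUND guarantee, so that \cref{lem:upperbound Gamma inv_hat} makes this residual no larger than $\vcn{x}_{\bar{A}(Z_{T_1},\Gamma)^{-1}}$.

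The main obstacle is exactly this final conversion. Because $\dot{V}$ is arbitrary, the elegant self-referential recursion that closes \cref{lem:upperbound Gamma inv_hat}---where the outer and inner weights both equal $V$, allowing one to solve for the quantity of interest---no longer applies, and one must instead handle the $\widehat{\Gamma}$-to-$\Gamma$ passage separately. This is where the overall absolute constant in the bound is determined, and tracking it sharply (through the triangle inequality above and the $\opn{\Gamma^{-1}V^{-1}\Gamma^{-\top}}\sigma_{\eta}^2\overline{\log} \le \tfrac12$ estimate) is the only delicate point; all earlier steps are routine adaptations of the preceding lemma, merely carrying $\dot{V}$ through in place of $V$.
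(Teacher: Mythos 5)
Your proposal is correct and follows essentially the same route as the paper's proof: the identity $(\Gamma^{-\top}-\widehat{\Gamma}^{-\top})x=\Gamma^{-\top}S^{\top}Z_{T_1}V^{-1}\widehat{\Gamma}^{-\top}x$ from \cref{lem:estimateGamma inverse}, peeling off $\dot{V}$ as the prefactor $\vcn{\Gamma^{-1}\dot{V}^{-1}\Gamma^{-\top}}_{\mathrm{op}}$ by submultiplicativity, \cref{lem:covering+martingale} for $\vcn{V^{-1/2}Z_{T_1}^{\top}S}_{\mathrm{op}}^{2}$, and the perturbation expansion under the $T_1$ condition at $g=1$ for the $\widehat{\Gamma}$-to-$\Gamma$ conversion---indeed your residual $\vcn{(\Gamma^{-\top}-\widehat{\Gamma}^{-\top})x}^{2}_{V^{-1}}$ equals exactly the paper's internal quantity $\mathfrak{V}$, so invoking \cref{lem:upperbound Gamma inv_hat} is the same step the paper performs via its bound \eqref{ieq:frakV}. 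The only discrepancy is the constant: your (valid) triangle inequality yields $\vcn{x}_{\bar{A}(Z_{T_1},\widehat{\Gamma})^{-1}}\le 2\vcn{x}_{\bar{A}(Z_{T_1},\Gamma)^{-1}}$ and hence a factor $4$ after squaring rather than the stated $2$, but the paper itself reaches $2$ only through the step $\vcn{u-v}_2^2\le\vcn{u}_2^2+\vcn{v}_2^2$, which is not valid in general, so your accounting is in fact the more careful one and the factor-of-two difference is immaterial downstream.
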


\begin{proof}
  \begin{align*}
    &\vcn{(\Gamma^{-\top} - \widehat{\Gamma}^{-\top})x}^2_{\dot{V}^{-1}}\\
    =&\vcn{\Gamma^{-\top} S^\top Z_{T_1} V^{-1} (\Gamma + V^{-1} Z_{T_1}^\top S)^{-\top}x}^2_{\dot{V}^{-1}}\tag{Lemma \ref{lem:estimateGamma inverse}}\\
    \stke{1}&\vcn{\dot{V}^{-\fr{1}{2}}\Gamma^{-\top} S^\top Z_{T_1} V^{-1} (\Gamma + V^{-1} Z_{T_1}^\top S)^{-\top}x}^2_{2}\tag{$\vcn{Ax}_{V}^2=\vcn{V^{\fr{1}{2}}Ax}^2_2$}\\
    \le&\opn{\dot{V}^{-\fr{1}{2}}\Gamma^{-\top} S^\top Z_{T_1} V^{-\fr{1}{2}} }^2\vcn{V^{-\fr{1}{2}}(\Gamma + V^{-1} Z_{T_1}^\top S)^{-\top}x}^2_2\tag{$\vcn{Ax}\le\vcn{A}_{\mathrm{op}}\vcn{x}$}\\
    \le&\opn{\dot{V}^{-\fr{1}{2}}\Gamma^{-\top}}^2\opn{S^\top Z_{T_1} V^{-\fr{1}{2}} }^2\vcn{V^{-\fr{1}{2}}(\Gamma + V^{-1} Z_{T_1}^\top S)^{-\top}x}^2_2\tag{$\opn{AB}\le\opn{A}\opn{B}$}\\
    =&\vcn{\Gamma^{-1}\dot{V}^{-1}\Gamma^{-\top}}_{\mathrm{op}}\vcn{V^{-1/2} Z_{T_1}^\top S}_{\mathrm{op}}^2\vcn{V^{-1/2}\rbr{\Gamma+V^{-1}Z_{T_1}^\top S}^{-\top}x}^2_2\tag{$\opn{A^\top A}=\opn{A}^2$}\\
    =&\vcn{\Gamma^{-1}\dot{V}^{-1}\Gamma^{-\top}}_{\mathrm{op}}\vcn{V^{-1/2} Z_{T_1}^\top S}^2_{\mathrm{op}}\vcn{V^{-1/2}\Gamma^{-\top}x-V^{-1/2}\Gamma^{-\top}\rbr{V^{-1}Z_{T_1}^\top S}^{\top}\rbr{\Gamma+V^{-1}Z_{T_1}^\top S}^{-\top}x}^2_2\tag{Lemma \ref{lem:(A+B)^{-1}}: $\rbr{A+B}^{-1} 
      =A^{-1} - \rbr{A+B}^{-1} B A^{-1}$}\\
    \stkl{2}&\vcn{\Gamma^{-1}\dot{V}^{-1}\Gamma^{-\top}}_{\mathrm{op}}\vcn{V^{-1/2} Z_{T_1}^\top S}_{\mathrm{op}}^2\rbr{\vcn{V^{-1/2}\Gamma^{-\top}x}^2_2+\vcn{V^{-1/2}\Gamma^{-\top}\rbr{V^{-1}Z_{T_1}^\top S}^{\top}\rbr{\Gamma+V^{-1}Z_{T_1}^\top S}^{-\top}x}^2_2}.
  \end{align*}
  Given the condition \eqref{ieq:T1 condition} holds, we have with probability $1-\delta$,
  \begin{align*}
    &\vcn{(\Gamma^{-\top} - \widehat{\Gamma}^{-\top})x}^2_{\dot{V}^{-1}}\\
    \stkl{1}&\vcn{\Gamma^{-1}\dot{V}^{-1}\Gamma^{-\top}}_{\mathrm{op}}\vcn{V^{-1/2} Z_{T_1}^\top S}_{\mathrm{op}}^2\rbr{\vcn{V^{-1/2}\Gamma^{-\top}x}^2_2+\vcn{V^{-1/2}\Gamma^{-\top}x}^2_2}\\
    =&2\vcn{\Gamma^{-1}\dot{V}^{-1}\Gamma^{-\top}}_{\mathrm{op}}\vcn{V^{-1/2} Z_{T_1}^\top S}_{\mathrm{op}}^2\vcn{V^{-1/2}\Gamma^{-\top}x}^2_2\\
    \le&2\vcn{\Gamma^{-1}\dot{V}^{-1}\Gamma^{-\top}}_{\mathrm{op}}\sigma_{\eta}^2\overline{\log}\rbr{Z_{T_1},\delta}\vcn{x}^2_{(\Gamma^\top V \Gamma)^{-1}},
  \end{align*}
  where $(a_1)$ is due to \eqref{ieq:frakV} and setting $g=1$.
\end{proof}

\begin{lemma}
  \label{lem:estimateGamma inverse}
  For a least square estimate $\widehat{\Gamma}$ that is estimated through a design matrix $Z$ and $\widehat{\Gamma}$ is invertible, we have
  \begin{align*}
    \widehat{\Gamma}^{-1}-\Gamma^{-1}=-\rbr{\Gamma+V^{-1}Z^\top S}^{-1}V^{-1}Z^\top S\Gamma^{-1}.
  \end{align*}
\end{lemma}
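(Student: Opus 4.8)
The plan is to reduce this to a single application of a resolvent identity after first writing $\widehat{\Gamma}$ in closed form. I would begin by recalling that the OLS estimate through the design matrix $Z$, with $V = Z^\top Z$, is $\widehat{\Gamma} = V^{-1}Z^\top X$, and that under the non-adaptive first-stage model the data obey $X = Z\Gamma + S$, where the rows of $S$ are the stacked noise vectors $\eta_i^\top$. Substituting the model into the estimator gives
\begin{align*}
    \widehat{\Gamma} = V^{-1}Z^\top(Z\Gamma + S) = V^{-1}(Z^\top Z)\Gamma + V^{-1}Z^\top S = \Gamma + V^{-1}Z^\top S,
\end{align*}
so $\widehat{\Gamma}$ is exactly $\Gamma$ perturbed additively by the term $V^{-1}Z^\top S$.

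Next I would invoke the resolvent identity of Lemma~\ref{lem:(A+B)^{-1}}, namely $(A+B)^{-1} = A^{-1} - (A+B)^{-1}BA^{-1}$, with $A = \Gamma$ and $B = V^{-1}Z^\top S$. Since $\widehat{\Gamma} = \Gamma + V^{-1}Z^\top S$ is assumed invertible and $\Gamma$ is invertible by standing assumption, every inverse appearing is well defined, and the identity yields
\begin{align*}
    \widehat{\Gamma}^{-1} = \Gamma^{-1} - \rbr{\Gamma + V^{-1}Z^\top S}^{-1}V^{-1}Z^\top S\,\Gamma^{-1}.
\end{align*}
Rearranging to isolate $\widehat{\Gamma}^{-1} - \Gamma^{-1}$ gives precisely the claimed formula.

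There is no genuine analytic obstacle here: the statement is an algebraic identity, and the only care needed is bookkeeping — confirming the closed form $\widehat{\Gamma} = \Gamma + V^{-1}Z^\top S$ from the data model and verifying that the invertibility hypotheses legitimize each inverse in the resolvent step. If one preferred to avoid citing Lemma~\ref{lem:(A+B)^{-1}}, the same result follows from the elementary identity $\widehat{\Gamma}^{-1} - \Gamma^{-1} = -\widehat{\Gamma}^{-1}(\widehat{\Gamma} - \Gamma)\Gamma^{-1}$ combined with $\widehat{\Gamma} - \Gamma = V^{-1}Z^\top S$, which is why I expect the whole argument to occupy only a few lines.
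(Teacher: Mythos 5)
Your proposal is correct and follows exactly the paper's route: the paper likewise derives $\widehat{\Gamma} = \Gamma + V^{-1}Z^\top S$ from the least-squares form and then applies Lemma~\ref{lem:(A+B)^{-1}} with $A=\Gamma$ and $B=V^{-1}Z^\top S$. Your version just spells out the substitution $X = Z\Gamma + S$ more explicitly, which the paper leaves implicit.
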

\begin{proof}
  Since $\widehat{\Gamma}$ is a least square estimator, we have
  \begin{align*}
    \widehat{\Gamma}=\Gamma+V^{-1}Z^\top S.
  \end{align*}
  By Lemma \ref{lem:(A+B)^{-1}}, we have
  \begin{align*}
    \widehat{\Gamma}^{-1}-\Gamma^{-1}=&\rbr{\Gamma+V^{-1}Z^\top S}^{-1} - \Gamma^{-1}\\
    =&-\rbr{\Gamma+V^{-1}Z^\top S}^{-1}V^{-1}Z^\top S\Gamma^{-1}.
  \end{align*}
\end{proof}

\begin{lemma}
  \label{lem:(A+B)^{-1}}
  For two invertible matrixes $A, B\in\mathbb{R}^{d\times d}$, we have
  \begin{align*}
    \rbr{A+B}^{-1} 
    &=A^{-1} - \rbr{A+B}^{-1} B A^{-1}.
  \end{align*}
\end{lemma}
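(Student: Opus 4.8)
The plan is to verify this elementary resolvent-type identity by a direct algebraic manipulation, relying only on the invertibility of $A$ and of $A+B$ (the latter is implicit in writing $\rbr{A+B}^{-1}$). The cleanest route I would take starts from the tautology $\rbr{A+B}^{-1}\rbr{A+B}=I$ and expands the product across the sum.

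First I would write
\begin{align*}
  I = \rbr{A+B}^{-1}\rbr{A+B} = \rbr{A+B}^{-1}A + \rbr{A+B}^{-1}B,
\end{align*}
and then isolate the first summand as $\rbr{A+B}^{-1}A = I - \rbr{A+B}^{-1}B$. Right-multiplying both sides by $A^{-1}$, which exists by assumption, yields
\begin{align*}
  \rbr{A+B}^{-1} = A^{-1} - \rbr{A+B}^{-1}BA^{-1},
\end{align*}
which is exactly the claimed identity.

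As an independent sanity check, I would alternatively confirm the result by left-multiplying the proposed right-hand side $A^{-1} - \rbr{A+B}^{-1}BA^{-1}$ by $\rbr{A+B}$, obtaining $\rbr{A+B}A^{-1} - \rbr{A+B}\rbr{A+B}^{-1}BA^{-1} = I + BA^{-1} - BA^{-1} = I$, so that the right-hand side is indeed a right (hence two-sided) inverse of $A+B$. There is no substantive obstacle here; the only care required is the noncommutative bookkeeping of keeping the factor $A^{-1}$ on the right and $\rbr{A+B}^{-1}$ on the left throughout, since matrix multiplication does not commute. I note that $B$ need not actually be invertible for the identity to hold, but the stated hypothesis is harmless and I would simply adopt it as given.
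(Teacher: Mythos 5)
Your proof is correct and takes essentially the same route as the paper's: both arguments come down to the identity $\rbr{A+B}^{-1}A = I - \rbr{A+B}^{-1}B$ followed by right-multiplication by $A^{-1}$, with the paper merely reaching it by adding and subtracting $A^{-1}$ and factoring, rather than expanding $\rbr{A+B}^{-1}\rbr{A+B}=I$ as you do. Your side remark that invertibility of $B$ is not actually needed (only that of $A$ and $A+B$) is also accurate.
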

\begin{proof}
  We have
  \begin{align*}
    \rbr{A+B}^{-1} 
    &= A^{-1} + \rbr{A+B}^{-1} - A^{-1} \notag
    \\&= A^{-1} + \rbr{\rbr{A+B}^{-1} A - I} A^{-1} \notag
    \\&= A^{-1} + \rbr{A+B}^{-1}\rbr{A - \rbr{A+B}} A^{-1} \notag
    \\&= A^{-1} - \rbr{A+B}^{-1} B A^{-1}.
  \end{align*}
\end{proof}

\begin{lemma}
  \label{lem:opnorm condition}
  Suppose that we have a design matrix $Z_T$ that is sampled from a distribution $\lambda\in\Delta_{\mc{Z}}$, with the efficient rounding procedure ROUND. Let $p$ represent the cardinality of support of $\lambda$. We have, if $T\ge2p$, 
  \begin{align*}
    \opn{\rbr{\Gamma^\top Z_T^\top Z_T\Gamma}^{-1}}\le\fr{2}{T\sigma_{\min}\rbr{\sum_{z\in \mathcal{Z}}\lambda_z \Gamma^{\top} zz^{\top}\Gamma}}.
  \end{align*}
  where $\sigma_{\min}\rbr{\cdot}$ is the smallest singular value of a matrix.
\end{lemma}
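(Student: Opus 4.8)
The plan is to reduce the operator-norm bound to a positive-semidefinite (Loewner) comparison between the realized information matrix $\bar{A}(Z_T,\Gamma)=\Gamma^\top Z_T^\top Z_T\Gamma$ and the target matrix $A(\lambda,\Gamma)=\sum_{z\in\mathcal{Z}}\lambda_z\Gamma^\top zz^\top\Gamma$. First I would expand the realized matrix in terms of sampling counts: letting $N_z$ denote the number of rows of $Z_T$ equal to $z$, we have $\sum_z N_z=T$ and $\bar{A}(Z_T,\Gamma)=\sum_z N_z\,\Gamma^\top zz^\top\Gamma$, which is symmetric and positive semidefinite. Since $\|M^{-1}\|_{\mathrm{op}}=1/\sigma_{\min}(M)$ for any symmetric positive-definite $M$, the claim $\|\bar{A}(Z_T,\Gamma)^{-1}\|_{\mathrm{op}}\le 2/\big(T\,\sigma_{\min}(A(\lambda,\Gamma))\big)$ is equivalent to the single scalar inequality $\sigma_{\min}(\bar{A}(Z_T,\Gamma))\ge\tfrac{T}{2}\,\sigma_{\min}(A(\lambda,\Gamma))$.

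Next I would establish the stronger statement that $\bar{A}(Z_T,\Gamma)-\tfrac{T}{2}A(\lambda,\Gamma)$ is positive semidefinite, which yields the $\sigma_{\min}$ comparison by Weyl's inequality (for symmetric matrices the smallest eigenvalue is monotone under adding a positive-semidefinite matrix, and here $\sigma_{\min}=\lambda_{\min}\ge0$). This is exactly where the hypothesis $T\ge 2p$ and the guarantee of ROUND enter. The rounding routine is chosen so that each support atom is sampled at least $N_z\ge\tfrac{T}{2}\lambda_z$ times; such an allocation is feasible precisely because $\sum_z\lceil\tfrac{T}{2}\lambda_z\rceil\le\sum_z(\tfrac{T}{2}\lambda_z+1)=\tfrac{T}{2}+p\le T$ whenever $p\le T/2$, and distributing any surplus samples only increases the counts. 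Given $N_z\ge\tfrac{T}{2}\lambda_z$, each summand $(N_z-\tfrac{T}{2}\lambda_z)\Gamma^\top zz^\top\Gamma$ is a nonnegative multiple of a positive-semidefinite matrix, and summing over $z$ shows $\bar{A}(Z_T,\Gamma)-\tfrac{T}{2}A(\lambda,\Gamma)$ is positive semidefinite. Equivalently, one may first obtain the pointwise bound at the level of $\sum_z N_z zz^\top-\tfrac{T}{2}\sum_z\lambda_z zz^\top$ being positive semidefinite and then conjugate by the fixed invertible matrix $\Gamma$, noting that the congruence $\Gamma^\top(\cdot)\Gamma$ preserves positive semidefiniteness.

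Finally, from $\sigma_{\min}(\bar{A}(Z_T,\Gamma))\ge\tfrac{T}{2}\,\sigma_{\min}(A(\lambda,\Gamma))$ together with the identity $\|M^{-1}\|_{\mathrm{op}}=1/\sigma_{\min}(M)$—valid because $\Gamma$ is invertible and the positive definiteness of $A(\lambda,\Gamma)$ propagates to make $\bar{A}(Z_T,\Gamma)$ invertible—I would conclude the stated bound. The main obstacle is pinning down the exact ROUND guarantee under the clean threshold $T\ge 2p$: the reduction and the congruence algebra are routine, but the entire argument rests on the rounding returning an allocation with $N_z\ge\tfrac{T}{2}\lambda_z$, i.e. a factor-$2$ Loewner approximation of the target design. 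I would therefore make this rounding property explicit—either by invoking it as the defining guarantee of ROUND used elsewhere with the approximation parameter set to $\omega=1$, or via the feasibility counting argument above—and I would flag that finiteness of the bound requires $A(\lambda,\Gamma)$ to be positive definite (equivalently, that the support of $\lambda$ spans $\mathbb{R}^d$ after the $\Gamma$-transformation).
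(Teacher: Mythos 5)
Your proposal is correct and follows essentially the same route as the paper: both reduce $\opn{\rbr{\Gamma^\top Z_T^\top Z_T\Gamma}^{-1}}$ to $\sigma_{\min}$ of the realized information matrix and then use the rounding guarantee to obtain the Loewner comparison $\sum_z \xi_z \Gamma^\top zz^\top\Gamma \succeq \alpha \sum_z \lambda_z \Gamma^\top zz^\top\Gamma$ with $\alpha \ge 1/2$ when $T\ge 2p$ (the paper invokes this directly as Proposition~2 of \citet{fiez2019sequential}, with $\alpha \ge T/(T+2p)$, rather than deriving it from per-atom counts $N_z \ge \tfrac{T}{2}\lambda_z$ as you do). Your explicit flag that the argument hinges on the precise ROUND guarantee, and that $A(\lambda,\Gamma)$ must be positive definite for the bound to be finite, matches the implicit assumptions in the paper's proof.
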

\begin{proof}
  Suppose that each arm $z\in\mc{Z}$ is sampled $t_z$ times, the empirical distribution of $Z_T$ is $\xi:=\rbr{\fr{t_z}{T}}_{z\in\mc{Z}}$. Thus, we have
  \begin{align*}
    \opn{\rbr{\Gamma^\top Z_T^\top Z_T\Gamma}^{-1}}&=\fr{1}{\sigma_{\min}\rbr{\Gamma^\top Z_T^\top Z_T\Gamma}}\\
    &=\fr{1}{\sigma_{\min}\rbr{T\sum_{z\in \mathcal{Z}}\xi_z \Gamma^{\top} zz^{\top}\Gamma}}\\
    &=\fr{1}{T\sigma_{\min}\rbr{\sum_{z\in \mathcal{Z}}\xi_z \Gamma^{\top} zz^{\top}\Gamma}}.
  \end{align*}
  By \citet[Proposition 2]{fiez2019sequential}, we have 
  \begin{align*}
    \sum_{z\in \mathcal{Z}}\xi_z \Gamma^{\top} zz^{\top}\Gamma\succeq\alpha\sum_{z\in \mathcal{Z}}\lambda_z \Gamma^{\top} zz^{\top}\Gamma,
  \end{align*}
  where $\alpha\in[\fr{T}{T+2p},1]$ when $T\ge2p$. Given the fact that both of $\sum_{z\in \mathcal{Z}}\xi_z \Gamma^{\top} zz^{\top}\Gamma$ and $\sum_{z\in \mathcal{Z}}\lambda_z \Gamma^{\top} zz^{\top}\Gamma$ are positive definite, we have $\sigma_{\min}\rbr{\sum_{z\in \mathcal{Z}}\xi_z \Gamma^{\top} zz^{\top}\Gamma}\ge\alpha\sigma_{\min}\rbr{\sum_{z\in \mathcal{Z}}\lambda_z \Gamma^{\top} zz^{\top}\Gamma}$. Thus, we have
  \begin{align*}
    \opn{\rbr{\Gamma^\top Z_T^\top Z_T\Gamma}^{-1}}\le\fr{1}{\alpha T\sigma_{\min}\rbr{\sum_{z\in \mathcal{Z}}\lambda_z \Gamma^{\top} zz^{\top}\Gamma}}.
  \end{align*}
  When $T\ge2p$, we have $\alpha\ge1/2$, which implies the result.
\end{proof}

\section{Estimating \texorpdfstring{$\lammin(\Gam)$}{}}
\label{sec:estimating-lammin-Gam}

In this section, we introduce a simple adaptive procedure that finds a high probability lower bound on $\gammin^* := \lammin(\Gam)$ that is sufficiently accurate (i.e., within a constant factor of $\gammin^*$).
For simplicity, we assume $\|z_t\|\le 1, \forall t$ in this section.

Our algorithm leverages confidence bounds to adaptively determine how many samples we like to take.
Let $\hGam_t := (Z^\T Z)^{-1} Z^\T X$ be the least square estimate of $\Gam$ after sampling $t$ times to obtain $\{(z_s, x_s)\}_{s=1}^t$ where $Z \in \RR^{t\times d}$ and $X \in \RR^{t \times d}$ are the design matrices.
Let $V_t := \sum_{s=1}^t z_s z_s^\T$.
We define the lower and upper confidence bound for $\gammin^*$ as follows:
\begin{align*}
  \LCB(t) := \lammin(\hGam_t) - \sqrt{\fr{\psi_t}{t} },~~~~~
  \UCB(t) := \lammin(\hGam_t)  + \sqrt{\fr{\psi_t}{t} }
\end{align*}
where
\begin{align*}
  \psi_t = \sigmin^{-1}\del[2]{\fr1t V_t} \cd \del[4]{8d \ln\del{1 + \fr{2 t }{d(2\wed \sigmin(V_t))} } + 16\ln\del{\fr{2\cd6^d}{\dt}\cd \log^2_2\del{\fr{4}{2\wed \sigmin(V_t)} }  } }~.
\end{align*}
and $\LCB(0) := -\infty$ and $\UCB(0) := \infty$.

The following lemma shows that $\LCB(t)$ and $\UCB(t)$ form a valid anytime confidence bound for $\gammin^*$.
\begin{lemma}\label{lem:warmup-cb-correctness}
  (Correctness of the confidence bounds)
  \begin{align*}
    1-\dt \le \PP(\forall t\ge1, \LCB(t) \le  \gammin^* \le \UCB(t))
  \end{align*}
\end{lemma}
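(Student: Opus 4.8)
The plan is to reduce the two-sided, time-uniform statement to a single anytime operator-norm concentration inequality. Since $\LCB(t)$ and $\UCB(t)$ are symmetric about $\lammin(\hGam_t)$ with common radius $\sqrt{\psi_t/t}$, the event $\{\forall t\ge1,\ \LCB(t)\le\gammin^*\le\UCB(t)\}$ is identical to $\{\forall t\ge1,\ |\lammin(\hGam_t)-\gammin^*|\le\sqrt{\psi_t/t}\}$, so it suffices to establish the latter with probability at least $1-\dt$.

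First I would convert the singular-value gap into a matrix perturbation. Reading $\lammin(\cdot)$ as the smallest singular value, Weyl's inequality gives the $1$-Lipschitz bound $|\lammin(\hGam_t)-\lammin(\Gam)|\le \|\hGam_t-\Gam\|_{\op}$. Using the data model $X=Z\Gam+S$ for the first-stage samples (where $S$ stacks the noise rows $\eta_s^\top$) together with the OLS identity $\hGam_t=\Gam+V_t^{-1}Z^\top S$ (as in the proof of \cref{lem:estimateGamma inverse}), I would write $\hGam_t-\Gam=V_t^{-1}Z^\top S$ and factor
\begin{align*}
  \|\hGam_t-\Gam\|_{\op}=\|V_t^{-1/2}(V_t^{-1/2}Z^\top S)\|_{\op}\le \sigmin(V_t)^{-1/2}\,\|V_t^{-1/2}Z^\top S\|_{\op}.
\end{align*}
This isolates the self-normalized random quantity $\|V_t^{-1/2}Z^\top S\|_{\op}$, which is precisely the object controlled by \cref{lem:covering+martingale}.

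The remaining step is to upgrade \cref{lem:covering+martingale} from a fixed horizon to an anytime guarantee. I would fix a single $1/2$-covering $\cC$ of the unit sphere (independent of $t$, with $|\cC|\le 6^d$ by \cref{lem:covering}), and for each direction $x\in\cC$ apply the anytime self-normalized bound of \cref{lem:self-normalized-kj} to the scalar sequence $\langle\eta_s,x\rangle$, which is $\sig_\eta^2$-sub-Gaussian by \cref{assump:noise}. A union bound over the finite set $\cC$ (splitting $\dt$ across the $6^d$ directions) yields, with probability at least $1-\dt$, a bound on $\|Z^\top S x\|_{V_t^{-1}}$ simultaneously for all $x\in\cC$ and all $t\ge1$; the deterministic covering-to-supremum conversion (absorbing the $\frac{1}{1-\eps}$ factor exactly as in the proof of \cref{lem:covering+martingale}) then gives, pathwise on this event and for every $t$,
\begin{align*}
  \|V_t^{-1/2}Z^\top S\|_{\op}\le \sig_\eta\sqrt{\overline{\log}\rbr{Z_t,\dt}}.
\end{align*}
Combining this with the factorization above and recalling $\sigmin^{-1}(\tfrac{1}{t}V_t)=t/\sigmin(V_t)$, so that $\psi_t/t=\overline{\log}\rbr{Z_t,\dt}/\sigmin(V_t)$ (with $L_z=1$ here), yields $|\lammin(\hGam_t)-\gammin^*|\le\sqrt{\psi_t/t}$ for all $t$ on this event, where the sub-Gaussian scale $\sig_\eta$ is normalized to one consistent with the stated form of $\psi_t$.

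The main obstacle is this last step: making the covering argument compatible with a time-uniform conclusion. The key observation that makes it go through is that the covering set $\cC$ may be chosen once and for all, independent of $t$, so the union bound is over a finite set and the per-direction guarantee is already anytime via \cref{lem:self-normalized-kj}; only the deterministic covering-to-sup step is re-applied for each $t$ on the good event. A secondary point requiring care is the perturbation step, which relies on interpreting $\lammin$ as the smallest singular value (or on $\Gam$ being symmetric) so that $A\mapsto\lammin(A)$ is genuinely $1$-Lipschitz in $\|\cdot\|_{\op}$.
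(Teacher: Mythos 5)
Your proof is correct and follows essentially the same route as the paper's: the OLS identity $\hGam_t-\Gam=V_t^{-1}Z^\top S$, factoring out $\sigmin(V_t)^{-1/2}$ to isolate $\|V_t^{-1/2}Z^\top S\|_{\op}$, controlling it via the covering-plus-self-normalized-martingale machinery of \cref{lem:covering+martingale}, and finishing with Weyl's inequality for singular values. If anything you are more careful than the paper, which cites the fixed-horizon statement of \cref{lem:covering+martingale} as if it were time-uniform; your explicit upgrade (a single $t$-independent covering set combined with the anytime bound of \cref{lem:self-normalized-kj}) is exactly the argument needed to justify that step.
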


Equipped with the confidence bounds, we are now ready to describe our algorithm for learning $\gammin^*$ (see \cref{alg:warmup}).
Since the tightness of the confidence bounds depends on the smallest eigenvalue of $V_t$, it is natural to use the E-optimal design as defined in~\cref{sec:unknown}.
Recall that the solution of the E-optimal design is $\lam_E^*$ and $\kappa_0$ is the smallest singular value achieved by $\lam_E^*$.
We take in a rounding procedure for the E-optimal design $\text{ROUND}_E(\lam, t)$ that takes in $t$ samples and design $\lam$ and outputs integer sample count assignments $\{N_z\}_{z\in\cZ}$ so that if we sample according to these counts then we have
\begin{align}\label{eq:rounding-guarantee}
\sigmin^{-1}(\fr1t V_t) \le (1+\omega)\kappa_0^{-1} 
\end{align} 
After determining the base sample counts $\{m_z\}_{z\in\cZ}$ by $\text{ROUND}(\lam^*_E, \lcl r(\omega) \rcl, \omega)$, we start doubling the sample size until we satisfy the condition $\LCB(t) > \fr12 \UCB(t)$.
Note that the sampling scheme in the while loop is designed such that the total number of samples collected up to (and including) $j$-th iteration is $2^{j-1} \lcl r(\omega) \rcl$.
Once the loop stops, we return $\LCB(t)$ as the claimed lower approximation of the $\gammin^*$.

\begin{algorithm}[t]
  \caption{Learning $\lammin(\Gam)$}
  \label{alg:warmup}
  \begin{algorithmic}
    \State \textbf{Input:} Arm set $\cZ$, rounding procedure $\text{ROUND}_E$ for the E-optimal design, rounding accuracy $\omega$
    \State \textbf{Initialize:} $j = 1$, $t=0$.
    \State Compute the E-optimal design $\lam^*_E$ for $\cZ$.
    \State Compute $\{m_z\}_{z\in \cZ}$ by $\text{ROUND}_E(\lam^*_E, \lcl r(\omega) \rcl)$.
    \While{$\LCB(t) \le \fr12 \UCB(t)$}
      \State $t \larrow 2^{j-1} \lcl r(\omega) \rcl$.
      \State For each arm $z\in\cZ$, sample $2^{j-1}m_z - \onec{j \neq 1} 2^{j-2}m_z$ times.
      \State Estimate $\hGam_t$ using all samples collected so far (total $t$ data points)
      \State $j \larrow j + 1$.
    \EndWhile
    \State \textbf{Output}: $\LCB(t)$
  \end{algorithmic}
\end{algorithm}
Let $N_w$ be the total number of samples we used in \cref{alg:warmup}.
Then, the next theorem shows that the estimate returned by our algorithm is both a valid lower bound to $\gammin^*$ and sufficiently accurate. 

\begin{theorem}\label{thm:warmup-correctness}
  (Correctness of \cref{alg:warmup})
  The total number of samples denoted by $N_w$ used in \cref{alg:warmup} satisfies that, with probability at least $1-\dt$, 
  \begin{align*}
    \fr{\gammin^*}{2} < \LCB(N_w) \le \gammin^*
  \end{align*}
\end{theorem}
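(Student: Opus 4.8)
The plan is to condition throughout on the good event $\mathcal{E} := \{\forall t \ge 1,\ \LCB(t) \le \gammin^* \le \UCB(t)\}$, which holds with probability at least $1-\delta$ by Lemma~\ref{lem:warmup-cb-correctness}. On $\mathcal{E}$ both claimed inequalities fall out of the stopping rule almost immediately, so the real content is to establish that the while-loop terminates and hence that $N_w$ is finite.

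First I would dispose of the two bounds assuming termination. The upper bound $\LCB(N_w) \le \gammin^*$ is exactly the left half of $\mathcal{E}$ instantiated at $t = N_w$. For the lower bound, observe that the loop exits only when its guard fails, so at $t = N_w$ we have $\LCB(N_w) > \frac{1}{2}\UCB(N_w)$; chaining this with the right half of $\mathcal{E}$, namely $\UCB(N_w) \ge \gammin^*$, yields $\LCB(N_w) > \frac{1}{2}\UCB(N_w) \ge \frac{1}{2}\gammin^*$. Together these give $\frac{1}{2}\gammin^* < \LCB(N_w) \le \gammin^*$ on $\mathcal{E}$, which is the claim.

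The main obstacle, and the step I expect to require the most care, is termination. Rearranging, the guard $\LCB(t) \le \frac{1}{2}\UCB(t)$ is equivalent to $\lammin(\hGam_t) \le 3\sqrt{\psi_t / t}$, so it suffices to show the confidence half-width $\sqrt{\psi_t/t}$ vanishes while $\lammin(\hGam_t)$ stays bounded away from $0$. The key structural fact is that the doubling scheme freezes the empirical design along the sampled sequence: at the $j$-th iteration each arm $z$ has been pulled $2^{j-1} m_z$ times, so the empirical distribution is constant in $j$ and $\frac{1}{t} V_t = \big(\sum_z m_z z z^\top\big) / \big(\sum_z m_z\big)$ does not depend on $j$. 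Consequently the rounding guarantee \eqref{eq:rounding-guarantee} applies uniformly: $\sigmin^{-1}(\frac{1}{t} V_t) \le (1+\omega)\kappa_0^{-1}$ for every $t$ in the sequence, and moreover $\sigmin(V_t) = t\,\sigmin(\frac{1}{t}V_t) \ge t\kappa_0/(1+\omega)$ grows linearly in $t$.

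Given these facts I would bound $\psi_t$: its prefactor $\sigmin^{-1}(\frac{1}{t}V_t)$ is a fixed constant, and once $\sigmin(V_t) \ge 2$ the logarithmic factor collapses to $8d\ln(1 + t/d) + 16\ln(2 \cdot 6^d/\delta) = O(d\ln t + \log(1/\delta))$, so $\psi_t / t \to 0$ and hence $\sqrt{\psi_t/t} \to 0$ along the doubling sequence. On $\mathcal{E}$ we simultaneously have $\lammin(\hGam_t) \ge \gammin^* - \sqrt{\psi_t/t}$, and since $\gammin^* = \lammin(\Gam) > 0$ by invertibility of $\Gam$, once $\sqrt{\psi_t/t} < \gammin^*/4$ (which happens after finitely many doublings) we obtain $\lammin(\hGam_t) \ge \frac{3}{4}\gammin^* > 3\sqrt{\psi_t/t}$. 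This violates the guard, so the loop stops and $N_w < \infty$ on $\mathcal{E}$, completing the argument. The one subtlety to flag is that termination is only asserted on $\mathcal{E}$, which is all that is needed since the probabilistic guarantee is itself conditioned on $\mathcal{E}$.
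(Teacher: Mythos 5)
Your proof is correct and takes essentially the same route as the paper's: condition on the good event of \cref{lem:warmup-cb-correctness}, read off $\LCB(N_w) \le \gammin^*$ directly from the left half of that event, and chain the negated loop guard $\LCB(N_w) > \tfrac12\UCB(N_w)$ with $\UCB(N_w) \ge \gammin^*$ for the strict lower bound. Your explicit termination argument (frozen empirical design under doubling, the rounding guarantee making $\sigmin^{-1}(\tfrac1t V_t)$ uniformly bounded, hence $\sqrt{\psi_t/t} \to 0$ while $\lammin(\hGam_t) \ge \gammin^* - \sqrt{\psi_t/t}$ stays bounded away from zero) is sound and actually more careful than the paper, whose two-line proof leaves finiteness of $N_w$ implicit and only effectively addresses it through the sample-complexity bound in \cref{thm:warmup-sc}.
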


We next analyze the sample complexity of the algorithm, which essentially shows the scaling of $\frac{1}{(\gammin^*)^{2} \kappa_0}$ even if the algorithm does not need knowledge of $\gammin^*$.
\begin{theorem}\label{thm:warmup-sc}
  (Sample complexity of \cref{alg:warmup}) Then, with $\omega = 1$, we have, with probability at least $1-\dt$,
  \begin{align*}
    N_w = O\del{r(1) + (\gammin^*)^{-2} \kappa_0^{-1} \cd \del{d \cd \polylog((\gammin^*)^{-2},\kappa_0^{-1},d) + \ln(2/\dt)} }
  \end{align*}
\end{theorem}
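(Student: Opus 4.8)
The plan is to condition on the event $\cE := \{\forall t\ge1,\ \LCB(t)\le \gammin^* \le \UCB(t)\}$, which holds with probability at least $1-\dt$ by Lemma~\ref{lem:warmup-cb-correctness}, and then argue deterministically about when the doubling loop must stop. Writing $w_t := \sqrt{\psi_t/t}$ for the confidence half-width, on $\cE$ we have $|\lammin(\hGam_t)-\gammin^*|\le w_t$. Substituting $\LCB(t)=\lammin(\hGam_t)-w_t$ and $\UCB(t)=\lammin(\hGam_t)+w_t$, the stopping test $\LCB(t)>\tfrac12\UCB(t)$ is equivalent to $\lammin(\hGam_t)>3w_t$. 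I would first show that the cleaner sufficient condition $w_t < \gammin^*/4$ already forces termination: on $\cE$ it gives $\lammin(\hGam_t)\ge \gammin^*-w_t > \tfrac34\gammin^* > 3w_t$. Hence it suffices to upper bound the first $t$ in the doubling schedule at which $w_t<\gammin^*/4$, i.e. at which $t > 16\,\psi_t/(\gammin^*)^2$.

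The next step is to bound $\psi_t$ explicitly via the E-optimal design. With $\omega=1$, the rounding guarantee \eqref{eq:rounding-guarantee} yields $\sigmin^{-1}(\tfrac1t V_t)\le 2\kappa_0^{-1}$, and therefore $\sigmin(V_t)=t\,\sigmin(\tfrac1t V_t)\ge t\kappa_0/2$. Thus once $t\ge 4/\kappa_0$ we have $\sigmin(V_t)\ge 2$, so $2\wedge\sigmin(V_t)=2$ and the logarithmic bracket in $\psi_t$ collapses to $8d\ln(1+t/d)+16\ln(2\cdot 6^d/\dt)$. Combining, for all such $t$,
\begin{align*}
  \psi_t \le 2\kappa_0^{-1}\Big(8d\ln(1+t/d)+16\ln(2\cdot 6^d/\dt)\Big) = O\!\left(\kappa_0^{-1}\big(d\ln(1+t/d)+d+\ln(1/\dt)\big)\right),
\end{align*}
using $\ln(6^d)=d\ln 6$.

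Plugging this into $t > 16\,\psi_t/(\gammin^*)^2$ gives a self-referential inequality of the form $t \ge c\,\tfrac{\kappa_0^{-1}}{(\gammin^*)^2}\big(d\ln(1+t/d)+d+\ln(1/\dt)\big)$ for an absolute constant $c$, where $t$ appears inside a logarithm on the right-hand side. I would resolve this with the log-inversion Lemma~\ref{lem:invert log 2} (the standard ``$x\ge A\ln(B+x)+C \Rightarrow x = O(A\ln(AB)+C)$'' argument), obtaining an explicit threshold $t^\ast = O\!\big(\tfrac{\kappa_0^{-1}}{(\gammin^*)^2}(d\,\polylog((\gammin^*)^{-2},\kappa_0^{-1},d)+\ln(2/\dt))\big)$. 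Since the loop only doubles the sample count, the realized $N_w$ overshoots $t^\ast$ by at most a factor of two; the initial rounding contributes the additive $r(1)$, and the $O(1/\kappa_0)$ burn-in needed to enter the regime $\sigmin(V_t)\ge 2$ is absorbed into the leading term, giving the claimed bound.

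The main obstacle I anticipate is the bookkeeping around the self-referential inequality together with the early regime $\sigmin(V_t)<2$: before $t$ exceeds roughly $4/\kappa_0$ the factor $\log_2^2(4/(2\wedge\sigmin(V_t)))$ in $\psi_t$ is active and must be shown to be dominated, and the inversion lemma must be applied so that the $\ln(1+t/d)$ term appearing on both sides inflates the leading $\kappa_0^{-1}/(\gammin^*)^2$ factor by only a $\polylog$ amount rather than raising its order. Verifying that the doubling overshoot and the uniform-in-$t$ validity of the rounding guarantee \eqref{eq:rounding-guarantee} (inherited from the fixed base counts $\{m_z\}$) both hold is otherwise routine.
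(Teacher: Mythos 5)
Your proposal is correct and follows essentially the same route as the paper's proof: condition on the anytime confidence event of Lemma~\ref{lem:warmup-cb-correctness}, reduce the stopping test $\LCB(t)>\tfrac12\UCB(t)$ to a comparison between $\gammin^*$ and the half-width $\sqrt{\psi_t/t}$, bound $\psi_t$ via the E-optimal rounding guarantee \eqref{eq:rounding-guarantee} (which gives $\sigmin^{-1}(\tfrac1t V_t)\le 2\kappa_0^{-1}$ at every check, since the doubled counts preserve the rounded design), and invert the resulting self-referential inequality with one of the paper's log-inversion lemmas. The only difference is a mirror-image of the doubling argument: you argue forward from the first check at which the sufficient condition $w_t<\gammin^*/4$ forces termination (paying an overshoot factor of two), while the paper argues backward from the failure of the stopping test at $t=N_w/2$, obtaining $N_w\le 72(\gammin^*)^{-2}\psi_{N_w/2}$ directly; both yield the claimed bound up to absolute constants.
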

We remark that~\citet{allen2021near} provides a rounding procedure with $r(\eps) = O(d/\eps^2)$.

\begin{proof}[\textbf{Proof of \cref{lem:warmup-cb-correctness}}] 
  Note that $\hGam = \Gam + V_t^{-1} Z^\T S$ where $Z,S \in \RR^{t\times d}$ is the design matrices with $s$-th row being $z_s^\T$ and $\eta_s^\T$ respectively.
  Using \cref{lem:covering+martingale}, we have, with probability at least $1-\dt$, 
  \begin{align*}
    \forall t\ge 1, \normz{\hGam_t - \Gam}^2_\op 
    &= \normz{V_t^{-1} Z^\T S}^2_\op 
    \\&\le \normz{V_t^{-1/2}}^2_\op \normz{V_t^{-1/2} Z^\T S}^2_\op 
    \\&= \fr{1}{t \sigmin(\fr1t V_t)} \normz{V_t^{-1/2} Z^\T S}_\op ^2
    \\&\le \fr{1}{t \sigmin(\fr1t V_t)} \cd \del{8d \ln\del{1 + \fr{2 t}{d(2\wed \sigmin(V_t))} } + 16\ln\del{\fr{2\cd6^d}{\dt}\cd \log^2_2\del{\fr{4}{2\wed \sigmin(V_t)} }  } } \tag{  \cref{lem:covering+martingale}}
    \\&= \fr{1}{t} \psi_t 
  \end{align*}
  The well-known Weyl's theorem implies that $\max_k |\lam_k(\hGam_t) - \lam_k(\Gam)| \le \normz{\hGam_t - \Gam}_\op$ where $\lam_k(A)$ is the $k$-th largest singular value of the matrix $A$.
  Choosing $k=d$ and combining it with the display above conclude the proof.
\end{proof}

\begin{proof}[\textbf{Proof of \cref{thm:warmup-correctness}}]
  We assume $\forall t\ge1, \LCB(t) \le  \gammin^* \le \UCB(t)$, which happens with probability at least $1-\dt$.
  Then, it is trivial to see that $\LCB(N_w) \le \gammin^*$.
  
  For the other inequality, we use the fact that the stopping condition was satisfied with $N_w$:
  \begin{align*}
    \LCB(N_w) > \fr12\UCB(N_w) \ge \fr{\gammin^*}{2} ~.
  \end{align*}
  This concludes the proof.
\end{proof}

\begin{proof}[\textbf{Proof of \cref{thm:warmup-sc}}] 
  We assume $\forall t\ge1, \LCB(t) \le  \lammin(\Gam) \le \UCB(t)$, which happens with probability at least $1-\dt$.
  In this proof, we let $\hgammin := \lammin(\hGam_{N_w})$ and $\gammin^* := \lammin(\Gam)$ for brevity.
  
  If $N_w = \lcl r(1) \rcl$, then there is nothing to prove.
  Otherwise, the loop in the algorithm was iterated more than once.
  Then, since the stopping condition was satisfied with $N_w$, we have that in the previous iteration where $t = N_w/2$ the stopping condition was not satisfied.
  Thus,
  \begin{align*}
    2 \LCB(N_w/2) \le \UCB(N_w / 2)~.
  \end{align*}
  Using the following two inequalities:
  \begin{align*}
    2 \LCB(N_w/2) 
    &\ge 2\del{\hgammin - \sqrt{\fr{\psi_{N_w/2}}{N_w/2}}}
    \ge 2\del{\gammin^* - 2\sqrt{\fr{\psi_{N_w/2}}{N_w/2}}}
    \\ 
    \UCB(N_w/2)
    &\le \hgammin + \sqrt{\fr{\psi_{N_w/2}}{N_w/2}}
    \le \gammin^* + 2 \sqrt{\fr{\psi_{N_w/2}}{N_w/2}}~,
  \end{align*}
  we have
  \begin{align*}
    \gammin^* \le 6 \sqrt{\fr{\psi_{N_w/2}}{N_w/2}}
    \implies
    N_w \le \fr{72}{(\gammin^*)^2} \psi_{N_w/2} ~.
  \end{align*}
  
  On the other hand, with the rounding procedure, we have
  \begin{align*}
    \sig^{-1}_\tmin\del[2]{\sum_{t=1}^N z_t z_t^\T}
    = \fr1N\sig^{-1}_\tmin\del[2]{\fr1N\sum_{t=1}^N z_t z_t^\T} 
    = \fr1N\sigmax\del[3]{\del[3]{\fr1N\sum_{t=1}^N z_t z_t^\T}^{-1}} 
    \le \fr1N(1+\omega)\kappa_0^{-1}
    = \fr{2}{N} \kappa_0^{-1}
  \end{align*}
  since $\omega = 1$.
  Using this and the fact that $N_w \ge d$, it is easy to see that there exists an absolute constant $c_1$ such that 
  \begin{align*}
    \psi_{N_w/2} \le c_1\kappa_0^{-1}\del{d \ln \del{1 + N_w + \kappa_0^{-1}} +  \ln\del{\fr{2}{\dt} } }~.
  \end{align*}
  Then, there exists an absolute constant $c_2$ such that
  \begin{align*}
    N_w \le (\gammin^*)^{-2} \cd c_2 \kappa_0^{-1} \del{d \ln \del{1 + N_w + \kappa_0^{-1}} +  \ln\del{\fr{2}{\dt} } }.
  \end{align*}
  We have $N_w$ on both sides.
  We invoke \cref{lem: xlnx} with $r = 1 + N_w$ to obtain
  \begin{align*}
    N_w \le 1 + 2 c_2 (\gammin^*)^{-2}\kappa_0^{-1} \del{d \ln(1 + 2\kappa_0^{-1}(1 + c_2 d (\gammin^*)^{-2})) + \ln\del{\fr{2}{\dt}}}~.
  \end{align*}
\end{proof}

\begin{lemma}
  Let $A, \Gam \in \RR^{d\times d}$ where $A$ is symmetric positive semi-definite.
  Then,
\begin{align*}
    \sigmin\rbr{\Gamma^\top A\Gamma}\ge\sigmin\rbr{\Gamma}^2\sigmin\rbr{A}.
\end{align*}
\end{lemma}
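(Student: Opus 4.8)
The plan is to exploit the fact that, since $A$ is symmetric positive semi-definite, the matrix $\Gamma^\top A \Gamma$ is also symmetric positive semi-definite; hence its smallest singular value coincides with its smallest eigenvalue and admits the Rayleigh-quotient characterization $\sigmin(\Gamma^\top A \Gamma) = \min_{\|x\|_2 = 1} x^\top \Gamma^\top A \Gamma x$. First I would rewrite the quadratic form as $x^\top \Gamma^\top A \Gamma x = (\Gamma x)^\top A (\Gamma x)$ and apply the variational lower bound for $A$, namely $y^\top A y \ge \sigmin(A)\|y\|_2^2$ for every $y \in \RR^d$, taking $y = \Gamma x$.

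This reduces the problem to lower bounding $\|\Gamma x\|_2^2$ over unit vectors $x$. The key step is the standard inequality $\|\Gamma x\|_2 \ge \sigmin(\Gamma)\|x\|_2$, which follows directly from the singular value decomposition of $\Gamma$ (equivalently, from the definition $\sigmin(\Gamma) = \min_{\|x\|_2=1}\|\Gamma x\|_2$). Combining the two bounds gives $x^\top \Gamma^\top A \Gamma x \ge \sigmin(A)\,\sigmin(\Gamma)^2\,\|x\|_2^2$ for every $x$, and minimizing over unit $x$ yields the claim.

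An equivalent route, if one prefers to avoid the Rayleigh quotient, is to factor $A = A^{1/2}A^{1/2}$ through the symmetric positive semi-definite square root, write $\Gamma^\top A \Gamma = (A^{1/2}\Gamma)^\top (A^{1/2}\Gamma)$, and use $\sigmin(B^\top B) = \sigmin(B)^2$ together with the submultiplicativity $\sigmin(A^{1/2}\Gamma) \ge \sigmin(A^{1/2})\sigmin(\Gamma)$ and the identity $\sigmin(A^{1/2}) = \sqrt{\sigmin(A)}$. Both arguments are elementary; the only point requiring mild care is justifying that for a symmetric positive semi-definite matrix the smallest singular value equals the smallest eigenvalue, so that the Rayleigh characterization applies—this is immediate from the spectral theorem. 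I do not anticipate any genuine obstacle, as the result is a routine consequence of the variational definitions of singular values.
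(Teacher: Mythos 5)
Your proof is correct, and it takes a genuinely different route from the paper's. The paper argues on inverses: it writes $\rbr{\sigmin(\Gamma^\top A\Gamma)}^{-1} = \|\Gamma^{-1}A^{-1}\Gamma^{-\top}\|_{\mathrm{op}}$, bounds this by submultiplicativity of the operator norm as $\sigmin(\Gamma)^{-2}\sigmin(A)^{-1}$, and inverts both sides. You instead work in the forward direction via the Rayleigh-quotient characterization, chaining $x^\top\Gamma^\top A\Gamma x = (\Gamma x)^\top A(\Gamma x) \ge \sigmin(A)\|\Gamma x\|_2^2 \ge \sigmin(A)\sigmin(\Gamma)^2\|x\|_2^2$ (your square-root factorization $\Gamma^\top A\Gamma = (A^{1/2}\Gamma)^\top(A^{1/2}\Gamma)$ is an equally valid variant). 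The practical difference: the paper's argument silently requires both $A$ and $\Gamma$ to be invertible, since it manipulates $A^{-1}$ and $\Gamma^{-1}$, whereas the lemma as stated only assumes $A$ symmetric positive semi-definite; your variational argument needs no invertibility at all and covers the degenerate cases (singular $A$ or $\Gamma$, where both sides can be zero) directly. In the paper's application $\Gamma$ is assumed invertible and the relevant $A$ is positive definite, so their shortcut is harmless there, but your proof establishes the lemma exactly at the stated level of generality and is, if anything, the cleaner justification. One small point of care you already flagged correctly: for a symmetric PSD matrix the smallest singular value equals the smallest eigenvalue, which is what licenses the Rayleigh characterization of $\sigmin(\Gamma^\top A\Gamma)$.
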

\begin{proof}
  \begin{align*}
    (\sigmin(\Gam^\T A \Gam))^{-1}
    &= \normz{\Gam^{-1} A^{-1} \Gam^{-T}}_\op 
  \\&\le \normz{\Gam^{-1}}_\op \normz{A^{-1}}_{\op} \normz{\Gam^{-T}}_\op \tag{submultiplicity of the operator norm}
  \\&= \sigmin(\Gam)^{-2} \sigmin(A)^{-1}.
  \end{align*}
  Taking the inverse on both sides concludes the proof.
\end{proof}

\subsection{Proof of \cref{lem:compliance}}\label{sec:11t}

Define $\Lambda = \sum_{i=1}^d \lambda_i e_ie_i^{\top}$, i.e. the diagonal matrix with $\lambda$ on the diagonal.

Note that
\begin{align*}
\min_{\lambda} \max_{i,j} (e_i^{\top} - e_j)^{\top}(\Gamma \Lambda \Gamma^{\top})^{-1} (e_i - e_j)
&= \min_{\lambda}\max_{i,j} \sum_{k=1}^d \frac{(\Gamma_{k,i}^{-1} - \Gamma_{k,j}^{-1})^2}{\lambda_{k}}.
\end{align*}

So for an upper bound, we consider $\lambda = 1/d 1$,
\begin{align*}
    \min_{\lambda} \max_{i,j} (e_i^{\top} - e_j)^{\top}(\Gamma \Lambda \Gamma^{\top})^{-1} (e_i - e_j)
    &\leq d \max_{i,j} \|\Gamma^{-1}(e_i - e_j)\|_2^2.
\end{align*}
The result about $\rho^{\ast}$ follows immediately.

When $\Gamma = (1-\epsilon)/d 11^{\top} + \epsilon I$, a computation using Sherman-Morrison shows that $\Gamma^{-1} = 1/\epsilon[I - (1-\epsilon)/d 11^{\top}]$. Thus 
\begin{align*}
    \min_{\lambda} \max_{i,j} (e_i^{\top} - e_j)^{\top}(\Gamma \Lambda \Gamma^{\top})^{-1} (e_i - e_j)
    &= \epsilon^{-2}\min_{\lambda} \max_{i,j} (e_i - e_j)^{\top}\Lambda^{-1}(e_i- e_j)\\
    &= \epsilon^{-2}\min_{\lambda} \max_{i,j} e_i{\top}\Lambda^{-1}e_i+ e_j\Lambda^{-1}e_j\\
    &= \epsilon^{-2}\min_{\lambda} \max_{i,j} e_i{\top}\Lambda^{-1}e_i+ e_j\Lambda^{-1}e_j\\
    &\geq \epsilon^{-2}\min_{\lambda}\max_i e_i{\top}\Lambda^{-1}e_i\\
    & = \epsilon^{-2}d,
\end{align*}
where the last line follows from the Kiefer-Wolfowitz Theorem~\cite{lattimore2020bandit}.

\subsection{Lemma for solving x less than ln(x)}
\begin{lemma}
  \label{lem: xlnx}
  Let $a,b,c,d>0$.
  Then, for every $r$, 
  \begin{align*}
    r \le a + b\ln(c + dr) \implies  r \le 2a + 2b \ln(1 + 2c + 2bd).
  \end{align*}
\end{lemma}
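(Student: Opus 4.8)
The plan is to invert the implicit bound $r \le a + b\ln(c+dr)$ by linearizing the logarithm around a well-chosen point. First I would dispose of the trivial regime: since $a,b,c,d>0$ we have $1+2c+2bd>1$, so the claimed upper bound $2a+2b\ln(1+2c+2bd)$ is strictly positive; hence the implication holds automatically whenever $r\le 0$, and I may assume $r>0$. This assumption also guarantees that the argument $c+dr$ of the logarithm is positive, so all the estimates below are legitimate.

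The engine of the proof is the tangent-line (concavity) inequality $\ln u \le \ln s + \tfrac{u}{s} - 1$, valid for all $u,s>0$, which I would apply with $u=c+dr$ and a free parameter $s>0$ to be chosen later. This gives
\[
 r \le a + b\ln(c+dr) \le a + b\ln s + \frac{bc}{s} + \frac{bd}{s}\,r - b,
\]
and collecting the terms involving $r$ yields
\[
 r\Bigl(1 - \tfrac{bd}{s}\Bigr) \le a + b\ln s + \frac{bc}{s} - b .
\]
The decisive step is to pick $s$ large enough to simultaneously keep the coefficient $1-\tfrac{bd}{s}$ bounded away from $0$ and absorb the stray additive term $\tfrac{bc}{s}$. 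Choosing $s = 2(bd+c) = 2bd+2c$ does exactly this: since $2bd+2c\ge 2bd$ we get $\tfrac{bd}{s}\le \tfrac12$, and since $2bd+2c\ge 2c$ we get $\tfrac{bc}{s}\le \tfrac{b}{2}$.

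With this choice, using $r>0$ and $1-\tfrac{bd}{s}\ge\tfrac12$ so that the left-hand side is at least $r/2$, I obtain
\[
 \frac{r}{2} \le a + b\ln\bigl(2bd+2c\bigr) + \frac{b}{2} - b \le a + b\ln\bigl(2bd+2c\bigr),
\]
whence $r \le 2a + 2b\ln(2bd+2c) \le 2a + 2b\ln(1+2c+2bd)$ by monotonicity of $\ln$ and $2bd+2c\le 1+2c+2bd$. The only thing that requires care is the selection of the linearization point $s$: the naive choice $s=2bd$ controls the coefficient of $r$ but leaves a residual $\tfrac{bc}{s}=\tfrac{c}{2d}$ that is not of the advertised logarithmic form, so $s$ must be taken to dominate $c$ as well as $bd$. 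Everything else is routine algebra, with no case analysis beyond the sign of $r$.
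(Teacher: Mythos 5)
Your proof is correct. Every step checks out: the reduction to $r>0$ is valid because $a,b>0$ and $1+2c+2bd>1$ make the conclusion's right-hand side positive; the tangent-line inequality $\ln u\le \ln s+\tfrac{u}{s}-1$ is applied legitimately since $c+dr>0$ when $r>0$; and with $s=2bd+2c$ the bounds $\tfrac{bd}{s}\le\tfrac12$ and $\tfrac{bc}{s}\le\tfrac{b}{2}$ give $\tfrac{r}{2}\le a+b\ln(2bd+2c)$, from which the stated bound follows by monotonicity of $\ln$.

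Your route differs from the paper's in a mild but genuine way. The paper splits into the cases $dr\le c$ and $dr>c$: in the first case it bounds $c+dr\le 2c$ directly, and in the second it writes $\ln(2dr)=\ln\bigl(\tfrac{r}{2b}\bigr)+\ln(4bd)$ and applies $\ln x\le x-1$ only to the factor $\tfrac{r}{2b}$, then takes the maximum of the two resulting bounds. You instead linearize the \emph{entire} logarithm once, at a single center $s=2(bd+c)$ chosen to dominate both $c$ and $bd$ simultaneously, which eliminates the case analysis entirely. Both arguments ultimately rest on the same elementary inequality $\ln x\le x-1$, and both yield the stated constants. What the paper's case split buys is a marginally sharper intermediate bound, $r\le 2a+2b\ln\bigl((1+2c)\vee 2bd\bigr)$, with a max rather than a sum inside the logarithm before relaxing; what your one-parameter version buys is a shorter, unified derivation that makes transparent \emph{why} the linearization point must dominate both terms — as you correctly observe, the naive choice $s=2bd$ leaves a residual $\tfrac{c}{2d}$ that is not of logarithmic form. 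Your observation handling $r\le 0$ is also slightly more careful than the paper, which leaves that degenerate regime implicit.
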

\begin{proof}
  If $dr \le c$, then
  \begin{align*}
    r 
     \le a + b\ln(2c)
    &\le a + b\ln(1 + 2c).
  \end{align*}
  If $dr > c$, then,
  \begin{align*}
    r 
    &\le a + b\ln(2dr)
  \\&= a + b\ln(2d\fr{r}{2b} \cd 2b)
  \\&\le a + b\del{\fr{r}{2b} - 1 + \ln(4bd)  } \tag{$\ln(x) \le x - 1$}
  \\ \implies
    r 
    &\le 2a + 2b\ln\del{\fr{4}{e} bd}
  \\&\le 2a + 2b\ln(2 bd).
  \end{align*}
  Either case, we have
  \begin{align*}
    r 
     \le 2a + 2b \ln((1 + 2c) \vee 2bd)
     \le 2a + 2b \ln(1 + 2c + 2bd).
  \end{align*}
\end{proof}

\begin{lemma}
  Let $\alpha,\beta > 0$.
  Then, for any $r$,
  \begin{align*}
    r \le \alpha \ln(1 + r) + \beta 
    \implies r \le 2\alpha\ln(e + \alpha) + 2\beta.
  \end{align*}
\end{lemma}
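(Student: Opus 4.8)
The plan is to reduce this self-referential bound to a single application of the elementary inequality $\ln y \le y-1$, linearizing the logarithm about a carefully chosen anchor point, and then to verify one scalar inequality. First I would dispose of the degenerate case $r \le 0$: the right-hand side $2\alpha\ln(e+\alpha)+2\beta$ is strictly positive (since $\alpha,\beta>0$ and $\ln(e+\alpha)\ge 1$), so the implication is immediate, and I may henceforth assume $r>0$, so that $1+r>0$ and $\ln(1+r)$ is well defined.

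The engine is the anchored tangent bound: for any $v>0$, $\ln(1+r) = \ln v + \ln\frac{1+r}{v} \le \ln v + \frac{1+r}{v} - 1$. I would substitute this into the hypothesis $r \le \alpha\ln(1+r)+\beta$ with the specific choice $v = 2(e+\alpha)$, collect the terms proportional to $r$, and divide through by the resulting coefficient $1 - \frac{\alpha}{2(e+\alpha)} = \frac{2e+\alpha}{2(e+\alpha)}$, which is bounded below by $\tfrac12$ for every $\alpha>0$. This yields
\[
r \le \frac{2(e+\alpha)}{2e+\alpha}\Big(\alpha\big(\ln(2(e+\alpha)) - 1\big) + \beta\Big) + \frac{\alpha}{2e+\alpha}.
\]
The point of anchoring at $2(e+\alpha)$ rather than at $2\alpha$ is that the amplification factor $\frac{2(e+\alpha)}{2e+\alpha}$ is at most $2$ uniformly in $\alpha>0$ (equivalently $0\le 2e$), so the $\beta$-term is immediately controlled by $2\beta$, matching the target.

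It then remains to show the deterministic inequality $\frac{2(e+\alpha)}{2e+\alpha}\,\alpha\big(\ln(2(e+\alpha))-1\big) + \frac{\alpha}{2e+\alpha} \le 2\alpha\ln(e+\alpha)$. Cancelling $\alpha$, clearing the denominator $2e+\alpha$, and writing $u := e+\alpha \ge e$, this collapses to $2u(\ln 2 - 1) + 1 \le 2e\ln u$. Here the left side is affine and decreasing in $u$ (as $\ln 2 - 1 < 0$) while the right side is increasing in $u$, and the inequality already holds with room to spare at the endpoint $u=e$ (there the left side is negative and the right side equals $2e$), hence for all $u \ge e$. Combining this with the display above gives $r \le 2\alpha\ln(e+\alpha)+2\beta$.

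I expect the only real subtlety to be the choice of anchor. Anchoring at $2\alpha$ (as in the proof of Lemma~\ref{lem: xlnx}) extracts the factor of $2$ on $\beta$ but leaves a residual constant of order $1-2\alpha$ that cannot be absorbed into $2\alpha\ln(e+\alpha)$ for small $\alpha$, whereas anchoring at $e+\alpha$ controls small $\alpha$ but blows up like $\alpha^2\ln\alpha$ for large $\alpha$; the anchor $2(e+\alpha)$ is precisely what simultaneously caps the $\beta$-coefficient at $2$ and tames the logarithm uniformly over all $\alpha>0$. An alternative, should the single-anchor computation prove awkward, is a two-case split on whether $1+r\le e+\alpha$ (bounding $\ln(1+r)\le\ln(e+\alpha)$ directly) or $1+r>e+\alpha$ (which forces $\alpha$ to be large enough that the $2\alpha$-anchored bound closes), but the single tangent at $2(e+\alpha)$ is cleaner and avoids the case analysis.
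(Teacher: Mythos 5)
Your proof is correct, and it takes a route that differs meaningfully in execution from the paper's, though both rest on the same tangent-line trick $\ln x \le x - 1$. The paper factors $1+r = r\left(\frac{1}{r}+1\right)$ and anchors the linearization at $2\alpha$, applying $\ln x \le x-1$ only to the factor $\frac{r}{2\alpha}$; this self-improves the hypothesis to $r \le 2\alpha \ln\left(\frac{2}{e}\alpha\left(\frac{1}{r}+1\right)\right) + 2\beta$, but the residual $\frac{1}{r}$ forces a case split: for $r \le 2\alpha$ the conclusion is immediate from $\ln(e+\alpha) \ge 1$, and for $r > 2\alpha$ one bounds $\frac{1}{r}+1 < \frac{1+2\alpha}{2\alpha}$ to close. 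You instead anchor at $v = 2(e+\alpha)$ and apply the tangent bound to $1+r$ directly, which makes the inequality linear in $r$ with coefficient $\frac{2e+\alpha}{2(e+\alpha)} \ge \frac{1}{2}$; the amplification factor $\frac{2(e+\alpha)}{2e+\alpha} \le 2$ then caps the $\beta$-term at $2\beta$ uniformly in $\alpha$, and the remaining deterministic inequality collapses, after substituting $u = e+\alpha$ and cancelling the common $2u\ln u$, to $2u(\ln 2 - 1) + 1 \le 2e\ln u$ for $u \ge e$ --- which I verified holds by your monotonicity argument (the left side is affine decreasing since $\ln 2 < 1$, the right side increasing, and at $u=e$ the left side is negative while the right side equals $2e$). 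What your approach buys is the elimination of both the case analysis and the $\frac{1}{r}$ bookkeeping: everything reduces to one linear solve in $r$ plus one monotone scalar inequality in $u$, at the cost of a less obvious anchor choice, which you motivate convincingly. Incidentally, your route also sidesteps a harmless slip in the paper's intermediate display, where the implication should carry $+2\beta$ rather than $+\beta$; the paper's final conclusion is unaffected.
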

\begin{proof}
  \begin{align*}
    r 
    &\le \alpha \ln(r(\fr1r + 1)) + \beta
  \\&\le \alpha \ln(\fr{r}{2\alpha}\cd 2\alpha(\fr1r + 1)) + \beta
  \\&\le \alpha \del{\fr{r}{2\alpha} - 1 + \ln(2 \alpha(\fr1r + 1 ))} + \beta \tag{$\forall x, \ln(x) \le x - 1$}
  \\\implies
  r &\le 2\alpha \ln(\fr{2}{e} \alpha(\fr1r + 1)) + \beta.
  \end{align*}
  If $r \le 2\alpha$, then there is nothing to prove.
  If $r > 2\alpha$, then $r \le 2\alpha \ln(1 + \alpha) + 2\beta$.
  Either case, we have $r \le 2\alpha \ln(e + \alpha) + 2\beta$.
\end{proof}


\end{document}